\newif\ifcomments
  \newcommand{\colornote}[3]{{\color{#1}\bf{#2: #3}\normalfont}}
  \newcommand{\colornote}[3]{}
\newcommand{\maximize}{\operatorname{maximize}}
\newcommand{\minimize}{\operatorname{minimize}}
\newcommand{\argmin}{\operatorname{argmin}}
\newcommand{\E}{\mathbb{E}}
\newcommand{\R}{\mathbb{R}}
\newcommand\independent{\protect\mathpalette{\protect\independenT}{\perp}}
\def\independenT#1#2{\mathrel{\rlap{$#1#2$}\mkern2mu{#1#2}}}
\newcommand{\argmax}{\operatorname{argmax}}
 \newcommand{\mybm}[1]{\scalebox{0.9}[1]{$\bm{#1}$}}
\newcommand{\indep}{\independent}
\newcommand{\given}{\,|\,}
\newcommand{\DO}{\textrm{do}}
\renewcommand{\S}{\mathcal{S}}
\newcommand{\M}{\mathcal{M}}
\renewcommand{\L}{\mathcal{L}}
\newcommand{\A}{\mathcal{A}}
\newcommand{\F}{\mathcal{F}}
\newcommand{\G}{\mathcal{G}}
\newcommand{\parents}{\textrm{Pa}}
\newcommand{\indicator}[1]{\mathds{1} (#1)}  
\newcommand\restrict[1]{\raisebox{-.2ex}{$|$}_{#1}}
\newcommand\mydots{\ifmmode\ldots\else\makebox[1em][c]{.\hfil.\hfil.}\thinspace\fi}
\newcommand{\localSA}{\mathcal{L}^{(j\indep i)}}
\newtheorem*{assumption}{Assumption}
\newtheorem*{definition}{Definition}
\newtheorem{proposition}{Proposition}
\newtheorem{lemma}{Lemma}
\newtheorem{corollary}{Corollary}
\newtheorem{remark}{Remark}[section]
\newenvironment{appdxProp}[1]
  {\count@\c@proposition
   \global\c@proposition#1 %
    \global\advance\c@proposition\m@ne
   \proposition}
  {\endproposition
   \global\c@proposition\count@}
\newcolumntype{L}[1]{>{\minwd l{#1}}l<{\endminwd}}
\newcolumntype{C}[1]{>{\minwd c{#1}}c<{\endminwd}}
\newcolumntype{R}[1]{>{\minwd r{#1}}r<{\endminwd}}
\newcommand{\cmark}{\ding{51}}%
\newcommand{\xmark}{\ding{55}}%
\def\independenT#1#2{\mathrel{\rlap{$#1#2$}\mkern2mu{#1#2}}}
\newcommand{\lone}{\ell_1}
 \newcommand{\ad}[1]{\textsc{#1}}
\newcommand{\env}[1]{\texttt{#1}}
\newcommand{\lowerl}{{\Large \textbf{$\lrcorner$} }}
\newtheorem{theorem}{Theorem}
\newenvironment{appdxTheorem}[1]
  {\count@\c@theorem
   \global\c@theorem#1 %
    \global\advance\c@theorem\m@ne
   \theorem}
  {\endproposition
   \global\c@theorem\count@}
\newenvironment{appdxCorollary}[1]
  {\count@\c@corollary
   \global\c@corollary#1 %
    \global\advance\c@corollary\m@ne
   \corollary}
  {\endproposition
   \global\c@corollary\count@}
\title{
  Robust Machine Learning by Transforming and Augmenting Imperfect Training Data
}
\author{Elliot Creager}
\date{May 2023}
\begin{document}

  \frontmatter
  \maketitle

  \begin{abstract}
    Machine Learning (ML) is an expressive framework for turning data into computer programs.
Across many problem domains---both in industry and policy settings---the types of computer programs needed for accurate prediction or optimal control are difficult to write by hand.
On the other hand, collecting instances of desired system behavior may be relatively more feasible.
This makes ML broadly appealing, but also induces data sensitivities that often manifest as unexpected failure modes during deployment.
In this sense, the training data available tend to be imperfect for the task at hand.
This thesis explores several data sensitivities of modern machine learning and how to address them.
We begin by discussing how to prevent ML from codifying prior human discrimination measured in the training data, where we take a fair representation learning approach.
We then discuss the problem of learning from data containing spurious features, which provide predictive fidelity during training but are unreliable upon deployment.
Here we observe that insofar as standard training methods tend to learn such features, this propensity can be leveraged to search for partitions of training data that expose this inconsistency, ultimately promoting learning algorithms invariant to spurious features.
Finally, we turn our attention to reinforcement learning from data with insufficient coverage over all possible states and actions.
To address the coverage issue, we discuss how causal priors can be used to model the single-step dynamics of the setting where data are collected.
This enables a new type of data augmentation where observed trajectories are stitched together to produce new but plausible counterfactual trajectories.

  \end{abstract}

  \begin{dedication}
  This manuscript is dedicated to the memory of William B. Hooper.
  \end{dedication}

    \begin{acknowledgements}
   The research described in this thesis was conducted in the city now known as Toronto, Canada.
The Mohawk name for this place, ``Tkaronto'', indicates a meeting point between the trees and the water.
The land, water, and air that make up the surrounding region are governed by the Dish with One Spoon Wampum; throughout the generations, they have been cared for by many Indigenous peoples, who in turn have been subjected to material and cultural dispossession through a (still ongoing) project of colonization.
This is a human tragedy, and also an intellectual tragedy.
For example, Leanne Betasamosake Simpson argues that, by failing to engage with Indigenous ontologies that emphasize relationality, Western Scholars often unwittingly use concepts like ``abstraction'' (the bread and butter of Computer Science) towards extractive ends~\citep{simpson2017we}.
I have seen this pattern emerge frequently in research on technical approaches to mitigate harms in socio-technical systems, which typically exhibit nuanced and context-specific complexities.
In the Academy and beyond, much remains to be done when it comes to engaging with Indigenous (and other non-Western) ways of producing and preserving knowledge.
These efforts can be understood as a small part of the broader movement towards reconciliation.

On a personal level, I am grateful to this land, and the animal and plant life it supports, for providing a continual source of inspiration throughout my studies here.

\begin{center}
  \rule{0.7 \textwidth}{0.4pt}
\end{center}

Writing about gratitude is difficult for me because the readily available language tends to evoke an outstanding debt or obligation.
But these sentiments fail to capture my experience of collaborative research during grad school.
When I look back on things now and think of the many people who have helped me complete my studies, it does not suggest some unpaid debt of gratitude for this lesson or that favor.
However, I can say that these relationships, experiences, and memories have left a deep imprint on the person that I am today.
With humility and deep appreciation, I wish to acknowledge a few people who have left their mark on this thesis and its author.

My supervisor Rich Zemel has been enormously supportive of my research over these past six years, and has managed to do so with care, consistency, and a certain inscrutable deftness. 
Rich, you established our working relationship on a firm foundation of mutual respect, and always encouraged me to be curious and critical throughout my doctoral work.
I will try my best to do the same for my students.
Toni Pitassi, you are the paragon of modest brilliance.
Thank you for showing me the value of clearly articulating technical assumptions at the outset of a project, which I know see can lead to important insights in the research later on.
David Duvenaud, thank you for rounding out my thesis committee with a generous and convivial spirit.
You encouraged me to interpret my research against the backdrop of a bigger picture, and showed me the value in drawing connections to other fields.

Thank you to my many mentors throughout the years.
David Wingate and Noah Stein, you took a chance on me as a young researcher, and showed me the critical importance of patience and persistence in research.
Philippe Depalle, you taught me how to craft a research artifact, and showed me the small pleasures of discussing Fourier analysis over afternoon espressos.
Kevin Swersky, you made me feel welcome in a new research lab that was both impressive and intimidating.
I am amazed by your ability to bring a smile to every meeting, even when the work approaches drudgery.
Sheila McIlraith, more than anyone, you have taught me how to teach.
Thank you also for modeling the importance of compassion in building a research community, and for reminding me that impactful work need not adhere to the trends of the day.
Thank you to my guitar teacher Adam Solomon, whose music was a true source of joy for me during weary times, reminding me that all of us are simultaneously teachers and students.

Will Grathwohl, I am glad we could convince each other to take a leap of faith and move to Toronto for grad school.
Your ambition and scientific rigor are impressive, but I most appreciate your eagerness to feed a dozen people or more at the drop of a hat.
David Madras, I have learned so much from you: how to do the translation work that illuminates commonalities between disciplines, how to stay cool under deadline pressure, and how to find joy in creative pursuits outside of work.
J\"orn Jacobsen, you echoed this last lesson, and it was a treat to explore the culture of Toronto with you during your brief stay here.
At a time when I needed more guidance than I realized, your reminder, that life is slow, was well received.
Jesse Bettencourt, you endeared yourself to me starting from our very first semester.
As a teacher, your commitment to your students' success and well-being is admirable.
Your brilliance, wit, and kindness follow you outside the classroom and wherever you choose to go.
Your camaraderie is a treasure to me.

Thank you to the students who I have had the honor of mentoring, especially Robert Adragna, Frederik Tra\"uble, Dilys Dickson, Aisha Alaagib, Arjun Mani, Parand Alizadeh Alamdari, and Benjamin Eyre.
These were formative experiences for me; any guidance I may have offered you has already been reciprocated, and I look forward to following your work in the future.
Thank you to my wonderful lab mates Renjie Liao, Mengye Ren, Jake Snell, James Lucas, Jackson Wang, Kamyar Ghasemipour, and Eleni Triantafillou.
All of you helped keep spirits high in the lab, even when training the most unwieldy neural nets.
Thank you to Silviu Pitis and Animesh Garg for patiently explaining the fundamental algorithms of reinforcement learning to me time and again.
Thank you Shems Saleh, Yasmin McDowell, Mattie Tesfaldet and Taylor Killian for making Vector Institute a welcoming place for me to work.
Thank you to Ria Kalluri, Willie Agnew, Natalia Bilenko, Manuel Sabin, Agata Foryciarz, Micah Carroll, Susanne Kite, Sayash Kapoor, Blaine Harper, and everyone else who has helped me to understand the limits of approaching research from single institutional or disciplinary perspective.

To Tram Ngheim, Khashayar Mohammadi, and all of my close friends in Toronto and beyond, thank you for grounding me during those eight hours of each day not claimed by work or rest.
To my partner Claudia Edwards, the depth of your love and support over the years goes beyond a simple summary.
Needless to say, thank you, and I reciprocate.
Let us continue forward, helping one another along in the strange process of making dreams come true.
To my cats, Emu and Roti, thank you for keeping me warm when the work felt alienating or isolating.

Thank you to my family now and always for your endless love and support.
  \end{acknowledgements}
  
  \tableofcontents
  \listoftables
  \listoffigures
  
  \mainmatter

  \chapter{Introduction}\label{chap:intro}

\epigraph{``The purpose of a system is what it does.''
}
{Stafford Beer~\citep{beer_what_2002}}

\section{Turning Data Into Computer Programs}

Writing computer programs can be tedious.
Writing computer programs that work well can be difficult.
Machine Learning (ML) provides an attractive approach to writing computer programs that centers the role of data. 
For the purposes of ML, a good dataset is one that measures desired system behavior across the many contexts where the system will be deployed and expected to work reliably.
This thesis addresses the unfortunate reality that the available training data are, in this sense, typically never ``good'' enough.

To be sure: just like standard programming, ML also involves writing a lot of computer programs.
But \emph{these} programs---called learning algorithms---have a distinct role.
Say we have an ML ``task'', that is, an open-ended problem such as detecting objects within a visual scene or translating text from one language to another.
Rather than directly implementing a solution, the learning algorithm consumes training data to produce a ``model'' capable (hopefully) of tackling the task at hand.

ML has garnered much interest recently, resulting in its deepening integration into the social fabric.
A major catalyst of commercial and academic research on ML was the successful application of deep learning to challenging benchmark tasks, especially in the domain of Computer Vision~\citep{krizhevsky_imagenet_2012,rajpurkar_chexnet_2017}.
The connectionist paradigm of using simple learning algorithms to train very ``deep'' neural networks (i.e. models containing many free ``parameters'' to be tuned during the learning process) proved to be incredibly effective in practice, typically out-performing more bespoke learning algorithms that relied on hand-specified features or probabilistic assumptions.
The subsequent successful application of novel neural network architectures to tasks in Natural Language Processing~\citep{vaswani2017attention,devlin2018bert,radford_language_2019} added to the fervor around deep learning as a kind of panacea for high-quality predictive models.

\section{Data Imperfections}

The sheer pace of recent advancements in deep learning is staggering.
And yet the models that deep learning produces can be surprisingly brittle in practice, often failing in unexpected or problematic ways upon deployment~\citep{buolamwini_gender_2018,geirhos2020shortcut}.
In a sense, when any learning algorithm produces a model, it encapsulates certain statistical understandings about past observations.
Intuitively, this model is dataset-dependent, so a failure to acquire adequate training data (in terms of quantity, quality, or both) could have dire consequences, especially considering that people across the globe already interact with ML models as a matter of daily routine.

Deep learning has now moved into the forefront of many application areas, such as online content moderation and recommendation, automotive driver-assistance systems, and machine translation, just to name a few.
This trend reveals several curious tensions around data dependency in modern machine learning, which will be the focus of our attention in this manuscript.
Before we catalog these issues more completely, let us consider just one such tension: the \emph{underspecification} problem.
ML tasks often require predictions to be made using high-dimensional inputs such as images or text sequences.
In such cases, while the learning algorithm produces just one model during training, it turns out that there are many possible models capable of achieving the same degree of predictive performance.
However, upon deployment, especially if the statistics of the deployment differ from what was seeing during training, these models could behave very differently~\citep{damour_underspecification_2020,fisher2019all}.
So we say that the prediction problem itself, when instantiated with a specific training dataset, is ``underspecified''.
To make matters worse, making the network deeper exacerbates this issue~\citep{sagawa_investigation_2020}.
But the heuristic of making the network deeper and deeper is precisely what lead to improvements in predictive performance on benchmark tasks used by the research community, thus catalyzing the (contemporary) success of deep learning in the first place!

What can we make of this apparent contradiction?
Is it the case that as learning algorithms get ``deeper'', the resulting models become more shallow?
There is evidence to suggest that when it comes to aggregate performance, this tends not to be the case~\citep{bartlett_benign_2020,kaplan_scaling_2020}.
However, this thesis takes the position that aggregate measures of performance are poor indicators of the \emph{actual} reliability of ML methods.
In this regard, we can take inspiration from the cyberneticist position that ``the purpose of a system is what it does.''~\citep{beer_what_2002}.
ML models will be deployed in real-world scenarios, possibly involving direct interaction with human subjects, or indirect influence on the well-being of human subjects through automated decisions.
Thus, unexpected model failures can cause real, material harm to people, in spite of any better intentions during the design and training of the ML model.
Accordingly, ML researchers have a responsibility to address these failure modes and work towards ensuring equitable outcomes of emerging ML technologies.

There are many possible avenues for reducing the risk of such real-world harms, most notably in policy, regulation, investigative journalism, and community organizing.
To be certain, effort along all of these fronts is needed: ensuring that ML works reliably is not merely a technical concern.
Many of the most important questions are simultaneously technical \emph{and} non-technical in nature, such formulating prediction tasks (and articulating whether certain tasks should \emph{not} be solved~\citep{barocas_when_2020,hamidi_gender_2018}), and determining which variables to measure as inputs and targets for prediction~\citep{obermeyer2019dissecting}.
However, in this thesis, we narrow our focus and take a more technical approach to characterizing and mitigating failure modes of ML systems.
In particular, we focus on the sensitivity of learning algorithms to their training data.
We use the term ``data imperfections'' to describe insufficiencies in the training data that cause unexpected failure modes in ML learning models.

Many types of (possibility overlapping) data imperfections exist in practical settings.
When data are collected about human subjects, historically marginalized groups are often underrepresented (or overrepresented in association with negative outcomes), raising concerns that the model may produce unfair outcomes~\citep{blodgett2017racial,buolamwini2018gender,tomasev_2021_fairness,hutchinson-etal-2020-social}.
Other data imperfections can involve more subtle aspects of how inputs and targets are measured.
For example, the presence during training of ``spurious features'', such as imperceptible pixel-level patterns in high dimensional images, can lead to brittle models upon deployment~\citep{beery2018recognition,geirhos2020shortcut,degrave_ai_2021}.
In this thesis, the term ``data imperfections'' is used rather loosely.
Rather than providing a precise diagnosis about a specific dataset, it motivates a particular approach to designing learning algorithms, which starts by recognizing that the available training data cannot fully capture intended system behavior.

\section{Outline}
Throughout this manuscript, we will explore three variations on the common theme of training data imperfection.
    Each of the three central Chapters discusses to a unique way that available training data can be insufficient for the task at hand, and presents learning algorithms aimed at mitigating these insufficiencies.
In particular, we cover the following types of data imperfection: 
\begin{itemize}
  \item 
  \textbf{(Chapter \ref{chap:fair-reps}) The training data encodes prior human discrimination.}
  For ethical and/or legal reasons we seek a prediction model that avoids compounding these social biases, which we will do by incorporating demographic information in the learning process.
  In particular, we follow the approach of \emph{Fair Representation Learning} in order to mitigate algorithmic discrimination, and propose novel learning algorithms that promote structure in the learned representations conducive to this goal.
  This Chapter contains work previously published in two conference papers.
  The first is ``Learning Adversarially Fair and Transferable Representations''~\citep{madras2018learning}, which was co-authored with David Madras, Toniann Pitassi, and Richard Zemel.
  The second is ``Flexibly Fair Representation Learning by Disentanglement.''~\citep{creager_flexibly_2019}, which was co-authored with David Madras, J\"orn-Henrik Jacobsen, Marissa Weis, Kevin Swersky, Toniann Pitassi, and Richard Zemel.
  \item
  \textbf{(Chapter \ref{chap:eiil})The training data entails an \emph{underspecified} prediction problem.} There are many predictive functions achieving good predictive accuracy given the available data.
  Some such solutions may rely on \emph{spurious} features that are unreliable in deployment settings.
  Where available, metadata indicating the ``environment'' where each training example was collected can be incorporated as side information into the learning process to mitigate this issue.
  In order to address the more challenging but realistic setting where this side information is not directly available, we introduce a framework of \emph{Environment Inference} that searches for useful side information (environment labels) directly in the training data.
  This Chapter was adapted from the conference paper ``Environment Inference for Invariant Learning''~\citep{creager_environment_2021}, which was co-authored by J\"orn-Henrik Jacobsen and Richard Zemel.
  \item
  \textbf{(Chapter \ref{chap:coda}) The training data does not have sufficient support to solve the task at hand.}
  We shift our focus away from prediction and towards sequential decision making.
  In this context, dynamic programming approaches such as Q-learning provably converge to optimal policies, but only with sufficient coverage over state-action space.
  Many practical RL settings are data-constrained, and so these strict assumptions around coverage do not hold.
  We present data augmentation techniques that rely on prior knowledge over the interaction between objects to improve support of the training data, which can be used a pre-processing step to improve sample efficiency and out-of-distribution robustness of reinforcement learning (RL) agents.
This Chapter contains work previously published in two conference papers.
  The first is ``MoCoDA: Model-based Counterfactual Data Augmentation''~\citep{pitis_mocoda_2022}, which was co-authored with Silviu Pitis, Ajay Mandlekar, and Animesh Garg.
\end{itemize}

We then conclude in Chapter~\ref{chap:conc} by reflecting on the state of contemporary machine learning research and suggesting some fruitful directions for long-term future work.

This thesis comprises my own original research conducted at the University of Toronto.
Yet the ideas described herein were profoundly shaped, through a process of collaboration and cooperation, by the co-authors mentioned above.
Special commendation is due to Silviu Pitis for his leadership of the projects described in Chapter~\ref{chap:coda}, and David Madras for co-leading the LAFTR project described in Chapter~\ref{sec:laftr:laftr}.

\section{Topics Covered in This Manuscript}

\subsection{Robustness}
Our goal then is clear: we wish to derive new learning algorithms capable of producing reliable models in spite of training data imperfections.
In this thesis we will tend to describe such models as ``robust'', a term we use loosely to evoke sturdiness upon deployment in novel contexts.
Because the statistical properties of deployment data often differ (perhaps subtly) from training data, some explicit notion of out-of-distribution (OOD) generalization is often required for a model to be considered robust.
However, within the fields of ML and statistics, ``robust'' learning has several other (related) meanings.
Adversarial robustness~\citep{madry_towards_2019} seeks to the address known sensitivities of deep neural classifiers to small perturbations in the input space, such as imperceptible additive pixel noise for high-dimensional images~\citep{goodfellow_explaining_2015}.
Robust statistics~\citep{rousseeuw1991tutorial} is concerned with deriving inference methods capable of handling outliers or deviations from the assumed parametric form of the data distribution.
Robust optimization formulates mathematical programs where the resulting solution can tolerate measurement noise in the observed variables~\citep{ben-tal_robust_2009}.
In a similar vein, Distributionally Robust Optimization seeks to minimize expected risk over a family of related distributions~\citep{duchi2021statistics}, and has been applied to training deep neural networks~\citep{sagawa_distributionally_2020}.
In causal inference (see below for an overview), ``doubly robust'' estimators are a popular way to use both parametric methods (learned regressors) and non-parametric models (importance weights) to estimate interventional quantities from observational data.
The estimator is called \emph{doubly} robust in the sense that it remains unbiased (in the statistical sense) even if \emph{either} of these two components is biased (although this claim is nullified when both components contain bias).

Robustness serves as a unifying theme throughout the manuscript, and is discussed in each Chapter.
However, the precise design goal---what it means for the learning algorithm to produce a robust model---varies Chapter by Chapter according to the contexts described therein.
\subsection{Domain Generalization}
Learning models capable of OOD generalization is an exceedingly ambitious goal.
It is unrealistic to ensure the model will behave as expected in a deployment context whose properties are entirely unknown during training.
Domain Generalization (DG) is an attempt to make OOD-robust learning more tractable via a more structured training process~\citep{blanchard_generalizing_2011,muandet2013domain}.
In DG, data are collected from a variety of ``domains'', and the learning algorithms is given access to auxiliary labels indicating the domain origin of each training example.
The training domains are distinct from the deployment setting; otherwise no OOD generalization would be required.
Hopefully the statistical variation across training domains is representative of the \emph{type} of distribution shift expected during evaluation.
In this case, the domain labels can provide clues as to which features are unreliable (not predictive in all domains) and thus not worthy of including in the final model.
Domain Adaptation is a related problem setting allows for the model to be updated upon deployment using unlabeled input examples~\citep{patel2015visual}.

DG is the central focus of Chapter~\ref{chap:eiil}.
In Chapter~\ref{chap:coda}, we discuss the MoCoDA algorithm and how it can be used for task \emph{transfer} in offline Reinforcement Learning, where data from a source task is used to learn a policy on a target task.
Transfer learning is also highlighted in Chapter~\ref{chap:fair-reps}.
Both of these settings are indeed related to the goal of OOD generalization.
However, unlike most DG settings, they assume some degree of information about the target distribution (a target task reward function in the former case, and limited labeled target data in the latter).

\subsection{Algorithmic Fairness}
It is also useful to think about robustness not directly in statistical terms, but instead in terms of whether the model adheres to social and legal norms.
This conceptual framing is especially useful for addressing a more insidious type of data imperfection where ML models predict based on immutable characteristics of individuals (or their proxies) such as demographic group membership.
This is a tricky technical issue because we live in a stratified society where social status often provides a very strong predictive signal.
So this is not simply a matter of solving the underspecification problem; we may have to compromise on aggregate accuracy in order to find a model that does not rely on demographic information.
Algorithmic fairness, which seeks various technical mitigations to these issues---is the focus of Chapter~\ref{chap:fair-reps}, and also receives a cursory discussion in Chapter~\ref{chap:eiil}.

\newcommand{\DP}{DP\xspace}
\newcommand{\EqOdds}{EOdds\xspace}
\newcommand{\EqOpp}{EOpp\xspace}
\newcommand{\Adult}{Adult\xspace}
\newcommand{\Diabetes}{Diabetes\xspace}
\newcommand{\German}{German\xspace}
\newcommand{\HeritageHealth}{HeritageHealth\xspace}
\newcommand{\Yale}{Yale-B\xspace}

\paragraph{Concerns of Automated Discrimination}
The use of algorithms to automate high-stakes decision making raises a serious concern of automated discrimination against vulnerable populations~\citep{barocas2014datas}.
This concern is especially relevant for machine learning systems, as historical disadvantages at the population level are often reflected in data used for training~\citep{mayson2019bias,goel2021accuracy}.
Research on characterizing automated discrimination in ML systems has been greatly influenced by legal precedent in US case law, where concepts such as ``disparate treatment'' and ``disparate impact'' were introduced to justify rulings on Title VII disputes~\citep{barocas2014datas}.
ML systems may also produce predictions that reproduce social inequities even if they do not meet the strict requirement of discrimination in the legal sense; examples include unequal allocation of resources between demographic groups, or replication of harmful stereotypes~\citep{barocas2017problem}.
Thus, we find a variety of potentially problematic ML system behaviors, which have subsequently informed proposals for how ML systems can be made ``fair'' in various ways (discussed briefly below).
Interestingly, a post-hoc analysis of these fairness definitions suggests that each is underpinned with a distinct set of normative assumptions about when it is acceptable (or not) to treat people differently~\citep{binns2018fairness}.

Addressing algorithmic discrimination requires a nuanced look at the data collection process since, since models may produce ``unfair'' predictions even if they do not directly train on explicit instances of human discrimination.
For example, demographic subpopulations may be under-sampled in the training data, which can produce dramatic performance disparities across groups~\citep{buolamwini2018gender}.
It is also important to note that algorithmic systems can have malignant social effects even if they do not directly use machine learning; examples include harmful stereotyping in web search~\citep{noble2018algorithms} and matching algorithms that uphold the status quo in homeless services~\citep{eubanks2018automating}.
While the potential for automated discrimination in ML systems remains a serious concern, algorithmic systems sometimes enable a higher degree of scrutiny than human decision makers\footnote{Crucially, this relies on a transparent training process and interpretable model predictions, which are not found in every ML system.}, suggesting that detecting discrimination (and subsequent legal recourse) could be more tractable with some algorithmic systems~\citep{kleinberg2018discrimination}.

\paragraph{Definitions of ``Fairness''}
As suggested earlier, there are many non-technical means (community organizing, policymaking, etc.) by which algorithmic discrimination can be effectively addressed.
These efforts can be complemented by the pursuit of (statistically) ``fair'' machine learning models, since some forms of algorithmic discrimination manifest as performance disparities in data-driven prediction.
Accordingly, the ML community has proposed many formal definitions of ``fairness''\footnote{
    Interestingly, the standardized testing industry independently discovered
    many of these notions in the 1960s and 1970s \citep{hutchinson201950}.
  } in order to characterize and address disparities in data-driven prediction.
Broadly speaking, these definitions fall under the categories of \emph{Group Fairness} and \emph{Individual Fairness}.
Group fairness criterion can be understood as statistical independence conditions
  expressed via the target labels $Y \in \mathcal{Y}$, 
  sensitive attribute labels $A \in \mathcal{A}$, and the predictions---either 
  hard predictions $\hat Y \in \mathcal{Y}$ or soft scores $R \in \mathbb{R}$ (e.g.,
  $R = p(\hat Y = 1|X)$) where $\hat Y = f(X)$ or $R = f(X)$ are the model
  outputs.
Analyses are typically carried out in the setting of binary target labels and
  sensitive attributes, i.e., $\mathcal{Y} = \mathcal{A} = \{0, 1\}$.
\emph{Demographic Parity}\footnote{
    Also called \emph{statistical parity}.
  } (\DP) requires $\hat Y \perp A$ 
  \citep{calders2010three,kamishima2011fairness,dwork2012fairness}.
\citet{hardt2016equality} note that this requirement may be overly conservative
  from an accuracy perspective when the base rates of the target label differ
  between groups ($p(Y|A=0) \neq p(Y|A=1)$), and propose \emph{Equalized Odds}
  (\EqOdds) as an alternative, requiring $\hat Y \perp A | Y$.
One-sided Equalized Odds---$\hat Y \perp A | Y=1$ or $\hat Y \perp A | Y=0$---
  is called Equal Opportunity (\EqOpp), and is sometimes more precisely referred
  to as Equalized False Positive Rates for the $Y=1$ case and Equalized False
  Negative Rates for the $Y=0$ case.

Another category of important group fairness criteria relate to classifier
  \emph{calibration}, which describes models that output scores consistent with
  the true label probability: $p(Y|A=a, R=r)=r \ \forall \ \{a, r\}$.
Calibration was shown to be
  incompatible with certain parity-based fairness criteria in two influential
  and concurrent works \citet{kleinberg2016inherent, chouldechova2017fair}.

\citet{dwork2012fairness} proposed the canonical notion of individual fairness,
  whereby the classifier must ``treat similar individuals similarly''.
Formally, this entails specifying a similarity metric $d(X, X')$ for pairs of
  individuals, then enforcing a Lipschitz smoothness constraint on the
  classifier output $f(X)$ and $f(X')$ w.r.t. this metric $\forall X, X'$.
While intuitive and flexible\footnote{
    Given a sensitive attribute labels, this framework can also be made to 
    express demographic parity.
  }, this definition shunts the difficulty of defining ``fairness'' onto the
  difficulty of defining the metric $d$, and can thus be difficult to implement in practice.

\paragraph{Fair Classification}
Given a group fairness criteria and a classification task, learning a fair
  classifier can be naturally formulated as a constrained or regularized
  optimization problem.
\citet{calders2010three} describe how to modify naive Bayes with to achieve \DP.
\citet{kamishima2011fairness} describe a more general \DP \ regularization approach
  suitable for classifiers that output a probability score $R=f(X)$ with
  $R=p(\hat Y = Y|X)$.
\citet{hardt2016equality} describe a post-hoc method for converting a naive
  classifier into a \EqOdds \ or \EqOpp \ classifier.

\paragraph{Fair Representation Learning}
The goal of representation learning is broader than that of classification: it
  seeks a \emph{general purpose} representation of the data that admits
  efficient and accurate classification for a \emph{variety} of downstream tasks
  \citep{bengio2013representation}.
The list of admissible representation learning methodologies is also broad,
  encompassing unsupervised learning with autoencoders and variational
  autoencoders \citep{vincent2008extracting,kingma2013auto}, to fine-tuning the
  neural activations of a deep discriminative net \citep{huh2016makes}.

In the context of fair machine learning, group fairness constraints can be
  incorporated into representation learning algorithms to learn an encoding of
  the data that is invariant to the sensitive attributes.
If such an encoding is learned, it serves as a fairness bottleneck that guarantees
  any downstream predictions computed from the representation alone will satisfy
  the desired fairness criteria.

\subsection{Causal Inference}
Causal inference is at attempt to reason about fictitious data in a principled way.
While machine learning provides many expressive tools for modeling correlational patterns in data---from which accurate predictors can be derived---there are many compelling inference questions (typically concerning causes and effects) that cannot be directly answered using correlation patterns in the data.
For example, in order to improve patient health outcomes, we may wish to know:
``given the patient health outcomes observed in a clinical setting, how effective was a particular treatment in the patient population?''
This requires reasoning about events that did not happen, because in practical settings  the treatment was not administered in a uniform or random way.
Rather, each patient received an individualized treatment.
As it turns out, estimating treatment efficacy from standard conditional probabilities can point us in the wrong direction if there were confounding factors in the data causing some patient subpopulations to take the treatment at higher rates.

Fortunately there are methods for answering such questions by directly using real-world data, although they typically rely on strong (and often untestable) assumptions about the problem setting.
There are several influential frameworks for causal inference, most notably the causal graphical models of Pearl and colleagues~\citep{pearl2009causal} and the potential outcomes framework of Rubin and colleagues~\citep{rubin2005causal}.
Chapter \ref{chap:coda} makes use of causal graphs to encode domain knowledge about sequential decision-making problems.
\subsection{Reinforcement Learning}
Reinforcement Learning (RL) provides a data-driven approach to sequential decision making, suitable for producing models that must be deployed in a dynamically-changing setting.
An RL model, called ``policies'', is a mapping from observed state of the setting to an appropriate action.
Crucially, the learning algorithm receives a scalar reward signal every time an action is taken at a given state.
The overall goal is to learn a policy that maximizes long-term reward, there are algorithms that provably converge to an optimal policy under certain assumptions, such as a coverage requirement that all states and actions be visited with some frequency.
One of the main appeals of RL is its broad applicability, as the learning algorithm can learn an effective policy even without a detailed model of the physical domain (or other dynamics) where the policy is learning.
On the other hand, in settings where dynamics \emph{are} known, methods from planning or control theory may yield a solution more efficiently than RL

While there are many RL algorithms of theoretical and practical interest, two general approaches are worth mentioning:
The first approach is value iteration (e.g. Q-learning), where intermediate \emph{value functions}, which encapsulate how much reward a policy can accrue starting from a given state, are estimated using dynamic programming.
Chapter~\ref{chap:coda} addresses the coverage assumption of value iteration algorithms head-on, introducing causal graphical priors as a way to assist learning when coverage is lacking.
Policy gradient methods provide an alternate approach (not discussed in thesis) where the policy function is optimized directly using unbiased estimates of the long-term rewards.
Notably, neural networks have been applied in each of these general approaches as function approximators for the key mathematical objects---value functions, and policies, respectively---to derive useful policies in challenging settings ~\citep{maddison2014move}.
  \chapter{Representation Learning Approaches to Algorithmic Fairness}
\label{chap:fair-reps}
\section{The Role of Data Representations in Fair Machine Learning}
Is it possible for ML to produce models that provide a social utility without replicating inequities in the social context where data was collected?
That is our focus in this chapter, where we discuss algorithmic fairness in the context of classification problems~\citep{pedreshi_discrimination-aware_2008,kamiran_classifying_2009,calders2010three,kamishima2011fairness,hardt2016equality}, and ask \emph{how the data should be formatted} so that prior human discrimination is not directly automated in the learning process~\citep{dwork_fairness_2011,barocas_big_2016,obermeyer2019dissecting}.
We attempt to address this issue by playing to one of ML's strengths: its ability to synthesize a wide variety of data formats automatically by training neural networks on top of the raw data.
The idea is to formulate the learning problem in terms of both predictive accuracy and a coarse-grained notion of statistical fairness across known demographic groups during training.

Crucially, we emphasize that the model's internal \emph{representation} of the data---or example, the internal activations within a deep neural network~\citep{bengio_representation_2013}--- should be fair in this sense.
This is attractive because the learned representation provides a mapping (from the original data space to a new vector space with group fairness properties) that can be reused in new settings~\citep{zemel2013learning,louizos_variational_2016}.
We hope that this mapping will serve as a fairness bottleneck, enabling the use of a single large dataset to train a variety of ML models, each with a distinct prediction task.
It also opens up the possibility of interaction between entities (firms, research groups, or other data-owning collectives) to solve machine learning problems cooperatively.
Typically, these types of collaborations would entail some sort of data asymmetry: despite all parties sharing an interest in solving the task, only one party owns and maintains the data.
Fair representation learning enables the data owner to automatically format their data in a way that is suitable for training machine learning models, while also encouraging fairness in these downstream models.
\section{Encouraging Statistical Parity in Learned Representations with Adversarial Training}
\label{sec:laftr:laftr}

There are two implicit steps involved in every prediction task: acquiring data in a suitable form, and specifying an algorithm that learns to predict well given the data.
In practice these two responsibilities are often assumed by distinct parties.
For example, in online advertising the so-called \emph{prediction vendor} profits by selling its predictions (e.g., person $X$ is likely to be interested in product $Y$) to an advertiser, while the \emph{data owner} profits by selling a predicatively useful dataset to the prediction vendor \cite{dwork2012fairness}. 

Because the prediction vendor seeks to maximize predictive accuracy, it may (intentionally or otherwise) bias the predictions to unfairly favor certain groups or individuals.
The use of machine learning in this context is especially concerning because of its reliance on historical datasets that may include patterns of previous discrimination or other societal bias.
Thus there has been a flurry of recent work from the machine learning community focused on defining and quantifying these biases and proposing new prediction systems that mitigate their impact.

Meanwhile, the data owner also faces a decision that critically affects the predictions: what is the correct \emph{representation} of the data?
Often, this choice of representation is made at the level of data collection: feature selection and 
measurement. 
If we want to maximize the prediction vendor's utility, then the right choice is to simply collect and provide the prediction vendor with as much data as possible. 
However, assuring that prediction vendors learn only \textit{fair} predictors complicates the data owner's choice of representation, which must yield predictors that are never unfair but nevertheless have relatively high utility.

In this Section, we frame the data owner's choice as a representation learning problem with an adversary criticizing potentially unfair solutions.
Our contributions are as follows:
We connect common group fairness metrics (demographic parity, equalize odds, and equal opportunity) to adversarial learning by providing appropriate adversarial objective functions for each metric that upper bounds the unfairness of arbitrary downstream classifiers in the limit of adversarial training;
we distinguish our algorithm from previous approaches to adversarial fairness and discuss its suitability to fair classification due to the novel choice of adversarial objective and emphasis on representation as the focus of adversarial criticism;
we validate experimentally that classifiers trained naively (without fairness constraints) from representations learned by our algorithm achieve their respective fairness desiderata;
furthermore, we show empirically that these representations achieve \textit{fair transfer} --- they admit fair predictors on unseen tasks, even when those predictors are not explicitly specified to be fair. 
\subsection{Background: Fair Classification}
\label{sec:laftr:fair-classification}
In fair classification we have some data $X \in \mathbb{R}^n$, labels $Y \in \{0, 1\}$, and sensitive attributes $A \in \{0, 1\}$. 
The predictor outputs a prediction $\hat{Y} \in \{0, 1\}$. 
We seek to learn to predict outcomes that are accurate with respect to $Y$ but fair with respect to $A$; that is, the predictions are accurate but not biased in favor of one group or the other.

There are many possible criteria for group fairness in this context.
One is \textit{demographic parity}, which ensures that the positive outcome is given to the two groups at the same rate, i.e. $\Pr(\hat{Y} = 1 | A = 0) = \Pr(\hat{Y} = 1 | A = 1)$. 
However, the usefulness of demographic parity can be limited if the \textit{base rates} of the two groups differ, i.e. if $\Pr(Y = 1| A = 0) \neq  \Pr(Y = 1| A = 1)$. 
In this case, we can pose an alternate criterion by conditioning the metric on the ground truth $Y$, yielding \textit{equalized odds} and \textit{equal opportunity} \citep{hardt2016equality}; the former requires equal false positive and false negative rates between the groups while the latter requires only one of these equalities. Equal opportunity is intended to match errors in the ``advantaged'' outcome across groups; whereas \citet{hardt2016equality} chose $Y=1$ as the advantaged outcome, the choice is domain specific and we here use $Y=0$ instead without loss of generality.
Formally, this is $\Pr(\hat{Y} \neq Y | A = 0, Y = y) = \Pr(\hat{Y} \neq Y | A = 1, Y = y)\ \forall \ y \in \{0, 1\}$ (or just $y = 0$ for equal opportunity).

Satisfying these constraints is known to conflict with learning well-calibrated classifiers \citep{chouldechova2017fair,kleinberg2016inherent,pleiss2017fairness}.
It is common to instead optimize a relaxed objective \citep{kamishima2012fairness}, whose hyperparameter values negotiate a tradeoff between maximizing utility (usually classification accuracy) and fairness.

\subsection{Background: Adversarial Learning} \label{adv-background}

Adversarial learning is a popular method of training neural network-based models. 
\citet{goodfellow2014generative} framed learning a deep generative model as a two-player game between a generator $G$ and a discriminator $D$. 
Given a dataset $X$, the generator aims to fool the discriminator by generating convincing synthetic data, i.e., starting from random noise $z \sim p(z)$, $G(z)$ resembles $X$. 
Meanwhile, the discriminator aims to distinguish between real and synthetic data by assigning $D(G(z)) = 0$ and $D(X) = 1$.
Learning proceeds by the max-min optimization of the joint objective
\begin{equation}\nonumber
    V(D, G) \triangleq \mathbb{E}_{p(X)} [\log(D(X))] + \mathbb{E}_{p(z)}  [\log(1 - D(G(z)))]
,
\end{equation}
where $D$ and $G$ seek to maximize and minimize this quantity, respectively.

\subsection{Learning Adversarially Fair and Transferable Representations} \label{gen-model}

We assume a generalized model (Figure \ref{modelpic}), which seeks to learn a data representation $Z$ capable of reconstructing the inputs $X$, classifying the target labels $Y$, and protecting the sensitive attribute $A$ from an adversary.
Either of the first two requirements can be omitted by setting hyperparameters to zero, so the model easily ports to strictly supervised or unsupervised settings as needed.
This general formulation was originally proposed by \citet{edwards2015censoring}; below we address our specific choices of adversarial objectives and explore their fairness implications, which distinguish our work as more closely aligned to the goals of fair representation learning.

\newcommand{\arrowWidth}{1.7pt}
\newcommand{\rectWidth}{0.5cm}
\newcommand{\circleSize}{0.9cm}
\begin{figure}[ht!]
\vskip 0.2in
\begin{center}
\begin{tikzpicture}
\path  (6.6,3.3) node[circle,draw,minimum size=\circleSize,align=center](a) {A}
(3,3.3) node[circle,draw,minimum size=\circleSize,align=center](z) {Z}
(-0.6,3.3) node[circle,draw,minimum size=\circleSize](y) {Y}
(3,0) node[circle,draw,minimum size=\circleSize,align=center](x) {X}
(2,1.6) node[rectangle,draw,minimum width=\rectWidth,minimum height=1cm](encoder) {\footnotesize \begin{tabular}{c} Encoder \\ $f(X)$ \end{tabular}}
(4,1.6) node[rectangle,draw,minimum width=\rectWidth,minimum height=1cm](decoder) {\footnotesize \begin{tabular}{c} Decoder \\ $k(Z, A)$ \end{tabular}}
(1.2,3.3) node[rectangle,draw,minimum width=\rectWidth,minimum height=1cm](classifier) {\footnotesize \begin{tabular}{c} Classifier \\ $g(Z)$ \end{tabular}}
(4.8,3.3) node[rectangle,draw,minimum width=\rectWidth,minimum height=1cm](adversary) {\footnotesize \begin{tabular}{c} Adversary \\ $h(Z)$ \end{tabular}};
\draw[->,>=stealth, line width=\arrowWidth] (x) -- (encoder);
\draw[->,>=stealth, line width=\arrowWidth] (decoder) -- (x);
\draw[->,>=stealth, line width=\arrowWidth] (z) -- (decoder);
\draw[->,>=stealth, line width=\arrowWidth] (encoder) -- (z);
\draw[->,>=stealth, line width=\arrowWidth] (z) -- (classifier);
\draw[->,>=stealth, line width=\arrowWidth] (classifier) -- (y);
\draw[->,>=stealth, line width=\arrowWidth] (z) -- (adversary);
\draw[->,>=stealth, line width=\arrowWidth] (adversary) -- (a);
\draw[->,>=stealth, line width=\arrowWidth] (a) to [out=-90,in=0,looseness=1.6] (decoder);
\end{tikzpicture}
\caption{
    Model for learning adversarially fair representations. The variables are data $X$, latent representations $Z$, sensitive attributes $A$, and labels $Y$. The encoder $f$ maps $X$ (and possibly $A$ - not shown) to $Z$, the decoder $k$ reconstructs $X$ from $(Z, A)$, the classifier $g$ predicts $Y$ from $Z$, and the adversary $h$ predicts $A$ from $Z$ (and possibly $Y$ - not shown). 
}
\label{modelpic}
\end{center}
\vskip -0.2in
\end{figure}
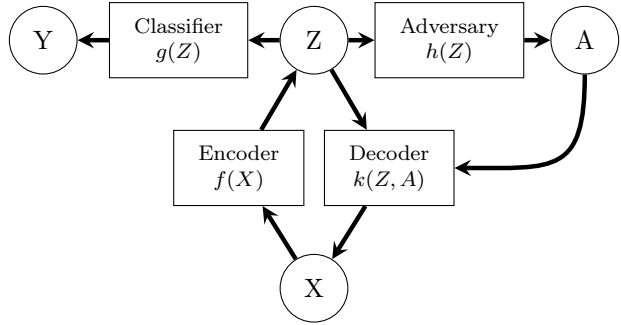

The dataset consists of tuples $(X, A, Y)$ in $\mathbb{R}^{n}$, $\{0, 1\}$ and $\{0, 1\}$, respectively.
The encoder $f: \mathbb{R}^{n} \rightarrow \mathbb{R}^m$ yields the representations $Z$.
The encoder can also optionally receive $A$ as input.
The classifier and adversary\footnote{
In learning equalized odds or equal opportunity representations, the adversary $h: \mathbb{R}^{m} \times \{0, 1\} \rightarrow \{0, 1\}$ also takes the label $Y$ as input. 
} $g, h: \mathbb{R}^m \rightarrow \{0, 1\}$ each act on $Z$ and attempt to predict $Y$ and $A$, respectively.
Optionally, a decoder $k: \mathbb{R}^{m} \times \{0, 1\} \rightarrow \mathbb{R}^n$ attempts to reconstruct the original data from the representation and the sensitive variable.

The adversary $h$ seeks to maximize its objective $L_{Adv} (h(f(X, A)), A)$.
We discuss a novel and theoretically motivated adversarial objective in Sections \ref{gen-model}, whose exact terms are modified according to the fairness desideratum.

Meanwhile, the encoder, decoder, and classifier jointly seek to minimize classification loss and reconstruction error, and also minimize the adversary's objective. 
Let $L_C$ denote a suitable classification loss (e.g., cross entropy, $\ell_1$), and $L_{Dec}$ denote a suitable reconstruction loss (e.g., $\ell_2$).
Then we train the generalized model according to the following min-max procedure:
\begin{equation} \label{minmax}
\begin{aligned}
\underset{f, g, k}{\minimize} \medspace &\underset{h}{\maximize} \medspace \E_{X,Y,A} \left[ L(f, g, h, k) \right]
,
\end{aligned}
\end{equation}
with the combined objective expressed as
\begin{equation} \label{eq:objective}
\begin{aligned}
L(f, g, h, k) &= \alpha L_C (g(f(X, A)), Y)\\
&\quad + \beta L_{Dec}(k(f(X, A), A), X)\\
    &\quad\quad+ \gamma L_{Adv} (h(f(X, A)), A)
\end{aligned}
\end{equation}

The hyperparameters $\alpha, \beta, \gamma$ respectively specify a desired balance between utility, reconstruction of the inputs, and fairness.

Due to the novel focus on fair transfer learning, we call our model Learned Adversarially Fair and Transferable Representations (LAFTR).

\paragraph{Learning Objective}\label{sec:learning}
We realize $f$, $g$, $h$, and $k$ as neural networks and alternate gradient decent and ascent steps to optimize their parameters according to (\ref{eq:objective}). 
First the encoder-classifier-decoder group ($f,g,k$) takes a gradient step to minimize $L$ while the adversary $h$ is fixed, then $h$ takes a step to maximize $L$ with fixed ($f,g,k$).
Computing gradients necessitates relaxing the binary functions $g$ and $h$, the details of which are discussed below.

For shorthand we denote the adversarial objective $L_{Adv} (h(f(X, A)), A)$---whose functional form depends on the desired fairness criteria---as $L_{Adv}(h)$.
For demographic parity, we take the average absolute difference on each sensitive group $\mathcal{D}_0, \mathcal{D}_1$:
\begin{equation}\label{eq:adv-obj}
    L_{Adv}^{DP}(h) = 1 - \sum_{i \in \{0, 1\}} \frac{1}{|\mathcal{D}_i|}\sum_{(x, a) \in \mathcal{D}_i} | h(f(x, a)) - a |\\ 
\end{equation}
For equalized odds, we take the average absolute difference on each sensitive group-label combination $\mathcal{D}_0^0, \mathcal{D}_1^0, \mathcal{D}_0^1, \mathcal{D}_1^1$, where $\mathcal{D}_i^j = \{(x, y, a) \in \mathcal{D} | a = i, y = j\}$:
\begin{equation}\label{eq:adv-eo}
    L_{Adv}^{EO}(h) = 2 - \sum_{(i, j) \in \{0, 1\}^2} \frac{1}{|\mathcal{D}_i^j|}\sum_{(x, a) \in \mathcal{D}_i^j}  | h(f(x, a)) - a |\\ 
\end{equation}
To achieve equal opportunity, we need only sum terms corresponding to $Y = 0$. 

\paragraph{Motivation}
For intuition on this approach and the upcoming theoretical section, we return to the \emph{owner-vendor} framework previously discussed, with a data \textit{owner} who sells representations to a (prediction) \textit{vendor}.
Suppose the data owner is concerned about the unfairness in the predictions made by vendors who use their data. Given that vendors are strategic actors with goals, the owner may wish to guard against two types of vendors:
\begin{itemize}
\item The \textit{indifferent} vendor: this vendor is concerned with utility maximization, and doesn't care about the fairness or unfairness of their predictions.
\item The \textit{adversarial} vendor: this vendor will attempt to actively discriminate by the sensitive attribute.
\end{itemize}
In the adversarial model defined in Section \ref{gen-model}, the encoder is what the data owner really wants; this yields the representations which will be sold to vendors.
When the encoder is learned, the other two parts of the model ensure that the representations respond appropriately to each type of vendor: the classifier ensures utility by simulating an indifferent vendor with a prediction task, and the adversary ensures fairness by simulating an adversarial vendor with discriminatory goals.
It is important to the data owner that the model's adversary be as strong as possible---if it is too weak, the owner will underestimate the unfairness enacted by the adversarial vendor.

However, there is another important reason why the model should have a strong adversary, which is key to our theoretical results.
Intuitively, the degree of unfairness achieved by the adversarial vendor (who is optimizing for unfairness) will not be exceeded by the indifferent vendor.
Beating a strong adversary $h$ during training implies that downstream classifiers naively trained on the learned representation $Z$ must also act fairly.
Crucially, this fairness bound depends on the discriminative power of $h$; this motivates our use of the representation $Z$ as a direct input to $h$, because it yields a strictly more powerful $h$ and thus tighter unfairness bound than adversarially training on the predictions and labels alone as in \citet{zhang2018mitigating}.

\paragraph{Theoretical Properties} \label{theory}
We now draw a connection between our choice of adversarial objective and several common metrics from the fair classification literature.
We derive adversarial upper bounds on unfairness that can be used in adversarial training to achieve either demographic parity, equalized odds, or equal opportunity.

We are interested in quantitatively comparing two distributions corresponding to the learned group representations, so consider two distributions $\mathcal{D}_0$ and $\mathcal{D}_1$ over the same sample space $\Omega_{\mathcal{D}}$, as well as a binary test function $\mu: \Omega_{\mathcal{D}} \rightarrow \{0, 1\}$. 
$\mu$ is called a test since it can distinguish between samples from the two distributions according to the absolute difference in its expected value.
We call this quantity the \emph{test discrepancy} and express it as
\begin{equation}
    d_\mu(\mathcal{D}_0, \mathcal{D}_1) \triangleq | \underset{x \sim \mathcal{D}_0}{\E} \left[ \mu(x) \right] - \underset{x \sim \mathcal{D}_1}{\E} \left[ \mu(x) \right] |
.
\end{equation}
The \textit{statistical distance} (a.k.a. total variation distance) between distributions is defined as the maximum attainable test discrepancy \citep{cover2012elements}:
\begin{equation} \label{stat_dist}
\begin{aligned}
    \Delta^*(\mathcal{D}_0, \mathcal{D}_1) \triangleq \sup_{\mu} d_\mu(\mathcal{D}_0, \mathcal{D}_1)
.
\end{aligned}
\end{equation}

When learning fair representations we are interested in the distribution of $Z$ conditioned on a specific value of group membership $A \in \{0, 1\}$.
As a shorthand we denote the distributions $p(Z|A=0)$ and $p(Z|A=1)$ as $\mathcal{Z}_0$ and $\mathcal{Z}_1$, respectively.

\paragraph{Bounding Demographic Parity}
In supervised learning we seek a $g$ that accurately predicts some label $Y$; in fair supervised learning we also want to quantify $g$ according to the fairness metrics discussed in Section \ref{sec:laftr:fair-classification}.
For example, the demographic parity distance is expressed as the absolute expected difference in classifier outcomes between the two groups:
\begin{equation}
    \Delta_{DP}(g) \triangleq d_{g}(\mathcal{Z}_0,\mathcal{Z}_1) = | \E_{\mathcal{Z}_0}[g] - \E_{\mathcal{Z}_1}[g] |
    .
\end{equation}
Note that $\Delta_{DP}(g) \leq \Delta^*(\mathcal{Z}_0,\mathcal{Z}_1)$, and also that $\Delta_{DP}(g) = 0$ if and only if $g(Z) \independent A$, i.e., demographic parity has been achieved.

Now consider an adversary $h: \Omega_{\mathcal{Z}} \rightarrow \{0, 1\}$ whose objective (negative loss) function\footnote{
    This is equivalent to the objective expressed by Equation \ref{eq:adv-obj} when the expectations are evaluated with finite samples.    
} is expressed as 
\begin{equation}\label{eq:adv-reward}
    L_{Adv}^{DP}(h) \triangleq \E_{\mathcal{Z}_0}[1-h] + \E_{\mathcal{Z}_1}[h] - 1 
    .
\end{equation}
Given samples from $\Omega_{\mathcal{Z}}$ the adversary seeks to correctly predict the value of $A$, and learns by maximizing $L_{Adv}^{DP}(h)$.
Given an optimal adversary trained to maximize (\ref{eq:adv-reward}), the adversary's loss will bound $\Delta_{DP}(g)$ from above for any function $g$ learnable from $Z$.
Thus a sufficiently powerful adversary $h$ will expose through the value of its objective the demographic disparity of any classifier $g$.
We will later use this to motivate learning fair representations via an encoder $f: \mathcal{X} \rightarrow \mathcal{Z}$ that simultaneously minimizes the task loss and minimizes the adversary objective.

\textbf{Theorem.} \textit{
Consider a classifier $g:\Omega_{\mathcal{Z}}\rightarrow \Omega_{\mathcal{Y}}$ and adversary $h:\Omega_{\mathcal{Z}}\rightarrow \Omega_{\mathcal{A}}$ as binary functions, i.e,. $\Omega_{\mathcal{Y}}=\Omega_{\mathcal{A}}= \{0, 1\}$. 
Then $L_{Adv}^{DP}(h^*) \geq \Delta_{DP}(g)$: the demographic parity distance of $g$ is bounded above by the optimal objective value of $h$.
}

\textbf{Proof.} 
By definition $\Delta_{DP}(g) \geq 0$.
Suppose without loss of generality (WLOG) that $\E_{\mathcal{Z}_0}[g] \geq \E_{\mathcal{Z}_1}[g]$, i.e., the classifier predicts the ``positive'' outcome more often for group $A_0$ in expectation. 
Then, an immediate corollary is $\E_{\mathcal{Z}_1}[1-g] \geq \E_{\mathcal{Z}_0}[1-g]$, and we can drop the absolute value in our expression of the disparate impact distance:
\begin{equation}
\begin{aligned}
\Delta_{DP}(g) = \E_{\mathcal{Z}_0}[g] - \E_{\mathcal{Z}_1}[g] = \E_{\mathcal{Z}_0}[g] + \E_{\mathcal{Z}_1}[1 - g] - 1 
\end{aligned}
\end{equation}
where the second equality is due to $\E_{\mathcal{Z}_1}[g] = 1 - \E_{\mathcal{Z}_1}[1-g]$.
Now consider an adversary that guesses the opposite of $g$
, i.e., $h = 1 - g$. 
Then\footnote{
Before we assumed WLOG $\E_{\mathcal{Z}_0}[g] \geq \E_{\mathcal{Z}_1}[g]$. 
If instead $\E_{\mathcal{Z}_0}[g] < \E_{\mathcal{Z}_1}[g]$ then we simply choose $h = g$ instead to achieve the same result.
}, we have 
\begin{equation} \label{tight-DP-bound}
\begin{aligned}
    L_{Adv}^{DP}(h) = L_{Adv}^{DP}(1-g) &= \E_{\mathcal{Z}_0}[g] + \E_{\mathcal{Z}_1}[1-g] - 1 \\
    &= \Delta_{DP}(g)
\end{aligned}
\end{equation}
The optimal adversary $h^\star$ does at least as well as any arbitrary choice of $h$, therefore $L_{Adv}^{DP}(h^\star) \geq L_{Adv}^{DP}(h) = \Delta_{DP}$.
\hfill$\blacksquare$

In Appendix \ref{app:laftr:theory-eq-odds} we provide an analogous derivation for equalized odds~\citep{hardt2016equality}.

\paragraph{Additional points}
One interesting note is that in each proof, we provided an example of an adversary which was calculated only from the joint distribution of $Y, A$, and $\hat Y = g(Z)$---we did not require direct access to $Z$---and this adversary achieved a loss exactly equal to the quantity in question ($\Delta_{DP}$ or $\Delta_{EO}$). Therefore, if we only allow our adversary access to those outputs, our adversarial objective (assuming an optimal adversary), is equivalent to simply adding either $\Delta_{DP}$ or $\Delta_{EO}$ to our classification objective, similar to common regularization approaches  \citep{kamishima2012fairness,bechavod2017learning,madras2017predict,zafar2017fairness}.
Below we consider a stronger adversary, with direct access to the key intermediate learned representation $Z$. This allows for a potentially greater upper bound for the degree of unfairness, which in turn forces any classifier trained on $Z$ to act fairly.

In our proofs we have considered the classifier $g$ and adversary $h$ as binary functions.
In practice we want to learn these functions by gradient-based optimization, so we instead substitute their continuous relaxations $\tilde g, \tilde h: \Omega_{\mathcal{Z}} \rightarrow [0,1]$.
By viewing the continuous output as parameterizing a Bernoulli distribution over outcomes we can follow the same steps in our earlier proofs to show that in both cases (demographic parity and equalized odds)  $\E[L(\bar h^*)] \geq \E[\Delta(\bar g)]$, where $\bar h^*$ and $\bar g$ are randomized binary classifiers parameterized by the outputs of $\tilde h^*$ and $\tilde g$.

\paragraph{Comparison to \citet{edwards2015censoring}}
An alternative to optimizing the expectation of the randomized classifier $\tilde h$ is to minimize its negative log likelihood (NLL - also known as cross entropy loss), given by
\begin{equation}\label{eq:bernoulli}
L(\tilde h) = -\E_{Z,A} \left[ A \log \tilde h(Z) + (1-A) \log (1-\tilde h(Z)) \right]
.
\end{equation}
This is the formulation adopted by
\citet{ganin2016domain} and \citet{edwards2015censoring}, which propose maximizing (\ref{eq:bernoulli}) as a proxy for computing the statistical distance\footnote{
These papers discuss the bound on $\Delta_{DP}(g)$ in terms of the $\mathcal{H}$-divergence \cite{blitzer2006domain}, which is simply the statistical distance $\Delta^*$ up to a multiplicative constant.
}
$\Delta^*(\mathcal{Z}_0, \mathcal{Z}_1)$ during adversarial training. 

The adversarial loss we adopt here instead of cross-entropy is group-normalized $\ell_1$, defined in Equations $\ref{eq:adv-obj}$ and $\ref{eq:adv-eo}$.
The main problems with cross entropy loss in this setting arise from the fact that the adversarial objective should be calculating the test discrepancy.
However, the cross entropy objective sometimes fails to do so, for example when the dataset is imbalanced.
Furthermore, group normalized $\ell_1$ corresponds to a more natural relaxation of the fairness metrics in question.
It is important that the adversarial objective incentivizes the test discrepancy, as group-normalized $\ell_1$ does; this encourages the adversary to get an objective value as close to $\Delta^\star$ as possible, which is key for fairness.
In practice, optimizing $\ell_1$ loss with gradients can be difficult, so while we suggest it as a suitable theoretically-motivated continuous relaxation for our model (and present experimental results), there may be other suitable options beyond those considered in this work.

\subsection{Experiments}\label{sec:laftr:experiments}

\paragraph{Fair classification}\label{sec:fair-classification}
\begin{figure*}[ht!]
\centering
\begin{subfigure}[t]{0.33\textwidth}
\centering
\includegraphics[width=\textwidth]{./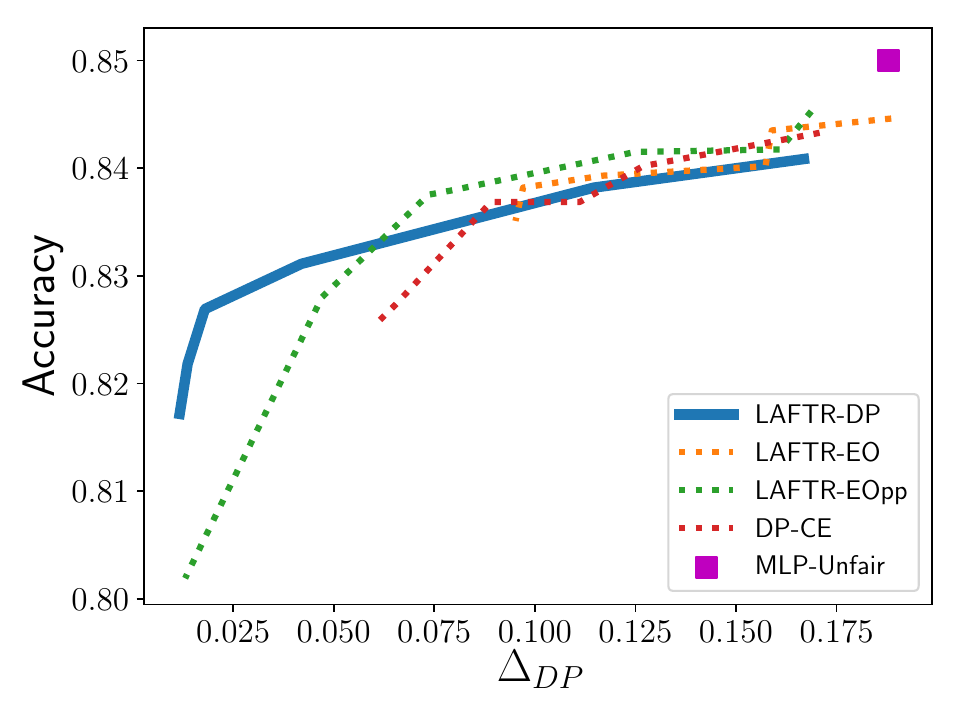}
\caption{Tradeoff between accuracy and $\Delta_{DP}$}
\label{results:pareto-DP}
\end{subfigure}%
\hfill
\begin{subfigure}[t]{0.33\textwidth}
\centering
\includegraphics[width=\textwidth]{./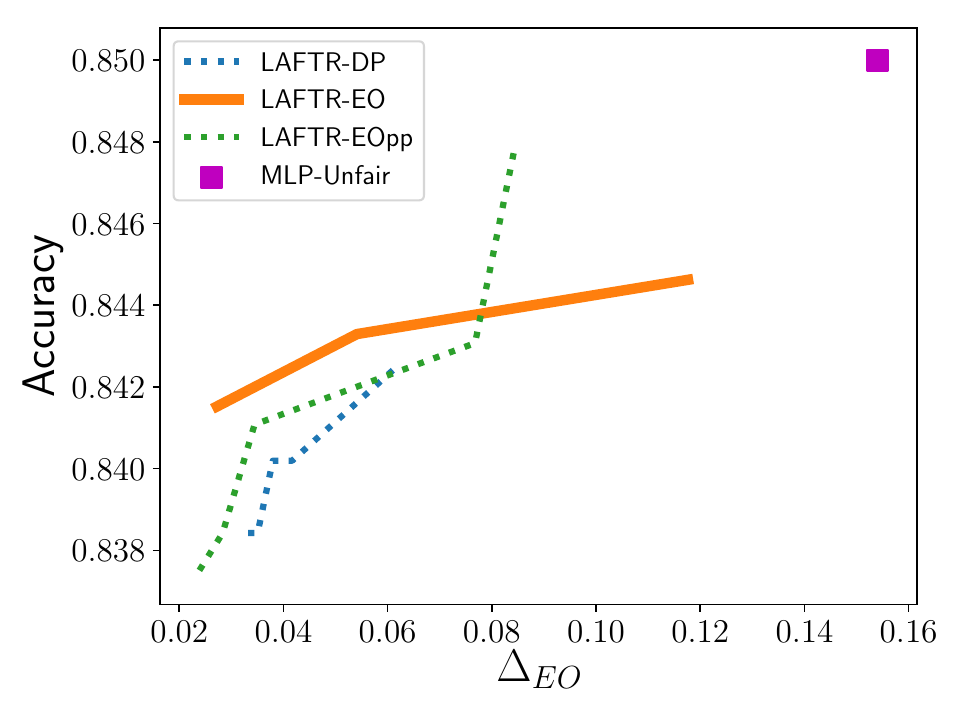}
\caption{Tradeoff between accuracy and $\Delta_{EO}$}
\label{results:pareto-EO}
\end{subfigure}%
\hfill
\begin{subfigure}[t]{0.33\textwidth}
\centering
\includegraphics[width=\textwidth]{./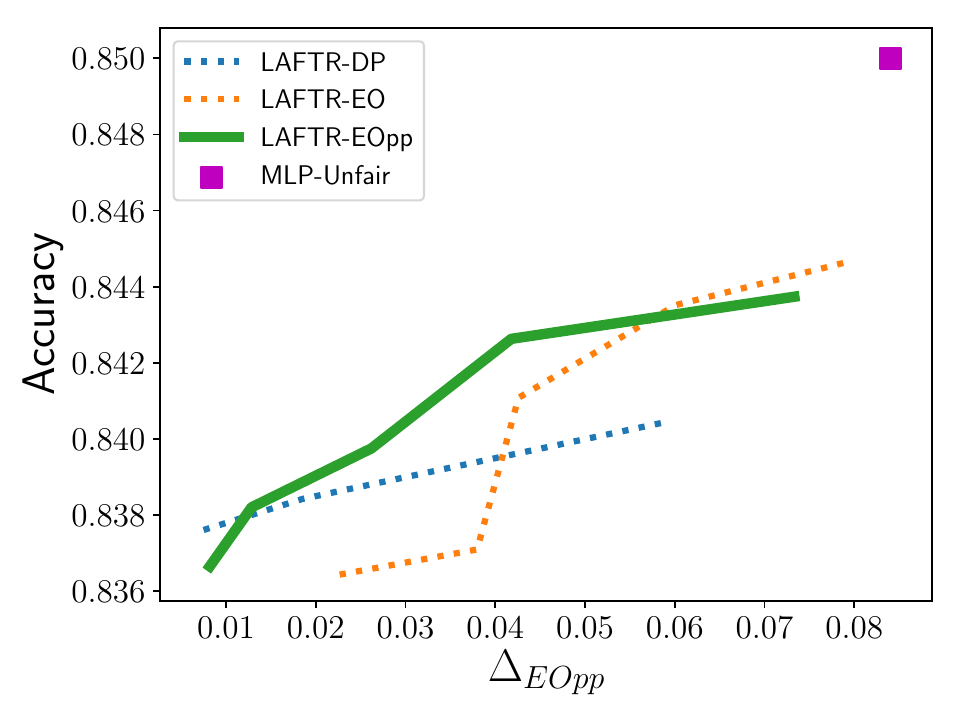}
\caption{Tradeoff between accuracy and $\Delta_{EOpp}$}
\label{results:pareto-EOpp}
\end{subfigure}%
    \caption{
        Accuracy-fairness tradeoffs for various fairness metrics ($\Delta_{DP}$, $\Delta_{EO}$, $\Delta_{EOpp}$), and LAFTR adversarial objectives $(L_{Adv}^{DP}, L_{Adv}^{EO}, L_{Adv}^{EOpp})$ on fair classification of the Adult dataset.
Upper-left corner (high accuracy, low $\Delta$) is preferable.
Figure \ref{results:pareto-DP} also compares to a cross-entropy adversarial objective \citep{edwards2015censoring}, denoted DP-CE.
Curves are generated by sweeping a range of fairness coefficients $\gamma$, taking the median across 7 runs per $\gamma$, and computing the Pareto front.
In each plot, the bolded line is the one we expect to perform the best.
Magenta square is a baseline MLP with no fairness constraints.
see Algorithm \ref{alg:laftr} and Appendix \ref{sec:training-details}.
    }
\label{results:pareto-fairness}

\end{figure*}

LAFTR seeks to learn an encoder yielding fair representations, i.e., the encoder's outputs can be used by third parties with the assurance that their naively trained classifiers will be reasonably fair and accurate.
Thus we evaluate the quality of the encoder according to the following training procedure, also described in pseudo-code by Algorithm 1.
Using labels $Y$, sensitive attribute $A$, and data $X$, we train an encoder using the adversarial method outlined in Section \ref{gen-model}, receiving both $X$ and $A$ as inputs.
We then freeze the learned encoder; from now on we use it only to output representations $Z$. 
Then, using unseen data, we train a classifier on top of the frozen encoder. The classifier learns to predict $Y$ from $Z$ --- note, this classifier is not trained to be fair. 
We can then evaluate the accuracy and fairness of this classifier on a test set to assess the quality of the learned representations.

During Step 1 of Algorithm 1, the learning algorithm is specified either as a baseline (e.g., unfair MLP) or as LAFTR, i.e., stochastic gradient-based optimization of (Equation \ref{minmax}) with one of the three adversarial objectives described in Section \ref{gen-model}.
When LAFTR is used in Step 1, all but the encoder $f$ are discarded in Step 2.
For all experiments we use cross entropy loss for the classifier (we observed training instability with other classifier losses).
The classifier $g$ in Step 3 is a feed-forward MLP trained with SGD.
See Appendix \ref{sec:training-details} for details.
\begin{algorithm}[tb]\captionsetup{labelfont={sc,bf}}
    \caption{Evaluation scheme for fair classification ($Y'=Y$) \& transfer learning ($Y' \neq Y$).}
   \label{alg:laftr}
\begin{algorithmic}
   \State {\bfseries Input:} data $X \in \Omega_X$, sensitive attribute $A \in \Omega_A$, labels $Y, Y' \in \Omega_Y$, representation space $\Omega_Z$.
   \State {\bfseries Step 1:} Learn an encoder $f: \Omega_X \rightarrow \Omega_Z$ using data $X$, task label $Y$, and sensitive attribute $A$.
   \State {\bfseries Step 2:} Freeze $f$.
   \State {\bfseries Step 3:} Learn a classifier (without fairness constraints) $g: \Omega_Z \rightarrow \Omega_Y$ on top of $f$, using  data $f(X')$, task label $Y'$, and sensitive attribute $A'$.
   \State {\bfseries Step 3:} Evaluate the fairness and accuracy of the composed classifier $g \circ f: \Omega_X \rightarrow \Omega_Y$ on held out test data, for task $Y'$.
\end{algorithmic}
\end{algorithm}

We evaluate the performance of our model\footnote{See \href{https://github.com/VectorInstitute/laftr}{https://github.com/VectorInstitute/laftr} for code.
}on fair classification on the UCI Adult dataset\footnote{https://archive.ics.uci.edu/ml/datasets/adult}, which contains over 40,000 rows of information describing adults from the 1994 US Census. 
We aimed to predict each person's income category (either greater or less than 50K/year). 
We took the sensitive attribute to be gender, which was listed as Male or Female. 

Figure \ref{results:pareto-fairness} shows classification results on the Adult dataset.
Each sub-figure shows the accuracy-fairness trade-off (for varying values of $\gamma$; we set $\alpha = 1, \beta = 0$ for all classification experiments) evaluated according to one of the group fairness metrics: $\Delta_{DP}$, $\Delta_{EO}$, and $\Delta_{EOpp}$.
For each fairness metric, we show the trade-off curves for LAFTR trained under three adversarial objectives: $L_{Adv}^{DP}$, $L_{Adv}^{EO}$, and $L_{Adv}^{EOpp}$.
We observe, especially in the most important regiment for fairness (small $\Delta$), that the adversarial objective we propose for a particular fairness metric tends to achieve the best trade-off.
Furthermore, in Figure \ref{results:pareto-DP}, we compare our proposed adversarial objective for demographic parity with the one proposed in \citep{edwards2015censoring}, finding a similar result.

For low values of un-fairness, i.e., minimal violations of the respective fairness criteria, the LAFTR model trained to optimize the target criteria obtains the highest test accuracy.
While the improvements are somewhat uneven for other regions of fairness-accuracy space (which we attribute to instability of adversarial training), this demonstrates the potential of our proposed objectives.
However, the fairness of our model's learned representations are not limited to the task it is trained on.
We now turn to experiments which demonstrate the utility of our model in learning fair representations for a variety of tasks.

\paragraph{Transfer Learning}
In this section, we show the promise of our model for \textit{fair transfer learning}.
As far as we know, beyond a brief introduction in \citet{zemel2013learning}, we provide the first in-depth experimental results on this task, which is pertinent to the common situation where the data owner and vendor are separate entities.

We examine the Heritage Health dataset\footnote{https://www.kaggle.com/c/hhp}, which comprises insurance claims and physician records relating to the health and hospitalization of over 60,000 patients.
We predict the Charlson Index, a comorbidity indicator that estimates the risk of patient death in the next several years. 
We binarize the (non negative) Charlson Index as zero/nonzero. 
We took the sensitive variable as binarized age (thresholded at 70 years old). 
This dataset contains information on sex, age, lab test, prescription, and claim details. 

The task is as follows: using data $X$, sensitive attribute $A$, and labels $Y$, learn an encoding function $f$ such that given unseen $X'$, the representations produced by $f(X', A)$ can be used to learn a fair predictor for new task labels $Y'$, even if the new predictor is being learned by a vendor who is indifferent or adversarial to fairness.
This is an intuitively desirable condition: if the data owner can guarantee that predictors learned from their representations will be fair, then there is no need to impose fairness restrictions on vendors, or to rely on their goodwill.

The original task is to predict Charlson index $Y$ fairly with respect to age $A$.
The transfer tasks relate to the various primary condition group (PCG) labels, each of which indicates a patient's insurance claim corresponding to a specific medical condition.
PCG labels $\{Y'\}$ were held out during LAFTR training but presumably correlate to varying degrees with the original label $Y$.
The prediction task was binary: did a patient file an insurance claim for a given PCG label in this year? For various patients, this was true for zero, one, or many PCG labels. 
There were 46 different PCG labels in the dataset; we considered only used the 10 most common---whose positive base rates ranged from 9-60\%---as transfer tasks.

Our experimental procedure was as follows. 
To learn representations that transfer fairly, we used the same model as described above, but set our reconstruction coefficient $\beta = 1$.
Without this, the adversary will stamp out any information not relevant to the label from the representation, which will hurt transferability.
We can optionally set our classification coefficient $\alpha$ to 0, which worked better in practice.
Note that although the classifier $g$ is no longer involved when $\alpha = 0$, the target task labels are still relevant for either equalized odds or equal opportunity transfer fairness.

We split our test set ($\sim20,000$ examples) into transfer-train, -validation, and -test sets. 
We trained LAFTR ($\alpha = 0, \beta = 1$, $\ell_2$ loss for the decoder) on the full training set, and then only kept the encoder. 
In the results reported here, we trained using the equalized odds adversarial objective described in Section \ref{gen-model}; similar results were obtained with the other adversarial objectives.
Then, we created a feed-forward model which consisted of our frozen, adversarially-learned encoder followed by an MLP with one hidden layer, with a loss function of cross entropy with no fairness modifications. 
Then, $\forall \ i \in 1 \dots 10$, we trained this feed-forward model on PCG label $i$ (using the transfer-train and -validation) sets, and tested it on the transfer-test set.
This procedure is described in Algorithm \ref{alg:laftr}, with $Y'$ taking 10 values in turn, and $Y$ remaining constant ($Y \neq Y'$).

We trained four models to test our method against. 
The first was an MLP predicting the PCG label directly from the data (Target-Unfair), with no separate representation learning involved and no fairness criteria in the objective---this provides an effective upper bound for classification accuracy. 
The others all involved learning separate representations on the original task, and freezing the encoder as previously described; the internal representations of MLPs have been shown to contain useful information \cite{hinton2006reducing}.
These (and LAFTR) can be seen as the values of \textsc{ReprLearn} in Alg. \ref{alg:laftr}.
In two models, we learned the original $Y$ using an MLP (one regularized for fairness \citep{bechavod2017learning}, one not; Transfer-Fair and -Unfair, respectively) and trained for the transfer task on its internal representations. 
As a third baseline, we trained an adversarial model similar to the one proposed in \cite{zhang2018mitigating}, where the adversary has access only to the classifier output $\hat{Y} = g(Z)$ and the ground truth label (Transfer-Y-Adv), to investigate the utility of our adversary having access to the underlying representation, rather than just the joint classification statistics $(Y, A, \hat{Y})$.

\begin{figure}[ht!]
\vskip 0.2in
\begin{center}
\centerline{\includegraphics[width=0.5\textwidth]{./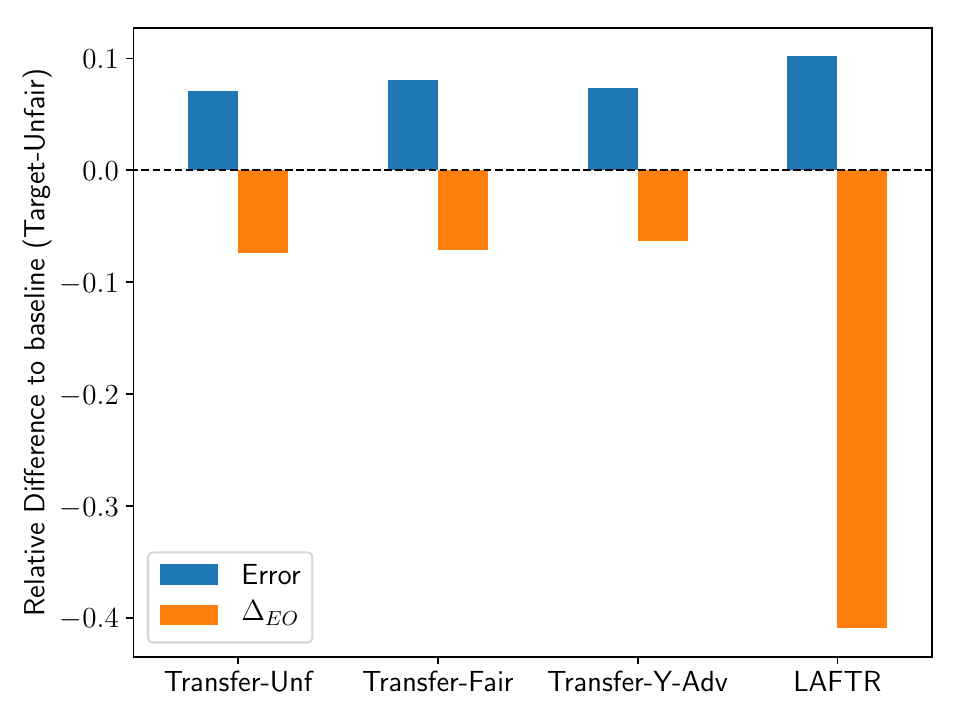}}
\caption{Fair transfer learning on Health dataset. Displaying average across 10 transfer tasks of relative difference in error and $\Delta_{EO}$ unfairness (the lower the better for both metrics), as compared to a baseline unfair model learned directly from the data. -0.10 means a 10\% decrease. Transfer-Unf and -Fair are MLP's with and without fairness restrictions respectively, Transfer-Y-Adv is an adversarial model with access to the classifier output rather than the underlying representations, and LAFTR is our model trained with the adversarial equalized odds objective.}
\label{transfer-learn-results-pic}
\end{center}
\vskip -0.2in
\end{figure}

We report our results in Figure \ref{transfer-learn-results-pic} and Table \ref{health-transfer}.
In Figure \ref{transfer-learn-results-pic}, we show the relative change from the high-accuracy baseline learned directly from the data for both classification error and $\Delta_{EO}$.
LAFTR shows a clear improvement in fairness; it improves $\Delta_{EO}$ on average from the non-transfer baseline, and the relative difference is an average of $\sim$20\%, which is much larger than other baselines.
We also see that LAFTR's loss in accuracy is only marginally worse than other models.

\begin{table}[!h]
\caption{Results from Figure \ref{transfer-learn-results-pic} broken out by task. $\Delta_{EO}$ for each of the 10 transfer tasks is shown, which entails identifying a primary condition code that refers to a particular medical condition. Most fair on each task is bolded. All model names are abbreviated from Figure \ref{transfer-learn-results-pic}; ``TarUnf'' is a baseline, unfair predictor learned directly from the target data without a fairness objective.}
\vskip 0.15in
\begin{center}
\begin{small}
\begin{sc}
\tabcolsep=0.08cm
\begin{tabular}{cccccc}
\toprule
Tra. Task  & TarUnf & TraUnf & TraFair & TraY-AF& LAFTR \\
\midrule
MSC2a3 & 0.362 & 0.370 &  0.381 & 0.378 & \textbf{0.281}\\
METAB3 & 0.510 & 0.579 &  \textbf{0.436} & 0.478 &0.439\\
ARTHSPIN & 0.280 & 0.323 &  0.373 & 0.337 &\textbf{0.188}\\
NEUMENT & 0.419 & 0.419 &  0.332 & 0.450 &\textbf{0.199}\\
RESPR4 & 0.181 & 0.160 &  0.223 & 0.091 &\textbf{0.051}\\
MISCHRT & 0.217 & 0.213 &  0.171 & 0.206 &\textbf{0.095}\\
SKNAUT & 0.324 & \textbf{0.125} &  0.205 & 0.315 & 0.155\\
GIBLEED & 0.189 & 0.176 &  0.141 & 0.187 & \textbf{0.110}\\
INFEC4 & 0.106 & 0.042 &  0.026 & \textbf{0.012} & 0.044\\
TRAUMA & 0.020 & 0.028 & 0.032 & 0.032 & \textbf{0.019}\\
\bottomrule
\end{tabular}
\end{sc}
\end{small}
\end{center}
\label{health-transfer}
\vskip -0.1in
\end{table}

A fairly-regularized MLP (``Transfer-Fair'') does not actually produce fair representations during transfer; on average it yields similar fairness results to transferring representations learned without fairness constraints.
Another observation is that the output-only adversarial model (``Transfer Y-Adv'') produces similar transfer results to the regularized MLP.
This shows the practical gain of using an adversary that can observe the representations.
Since transfer fairness varied much more than accuracy, we break out the results of Fig. \ref{transfer-learn-results-pic} in Table \ref{health-transfer}, showing the fairness outcome of each of the 10 separate transfer tasks.
We note that LAFTR provides the fairest predictions on 7 of the 10 tasks, often by a wide margin, and is never too far behind the fairest model for each task.
The unfair model TraUnf achieved the best fairness on one task.
We suspect this is due to some of these tasks being relatively easy to solve without relying on the sensitive attribute by proxy.
Since the equalized odds metric is better aligned with accuracy than demographic parity \citep{hardt2016equality}, high accuracy classifiers can sometimes achieve good $\Delta_{EO}$ if they do not rely on the sensitive attribute by proxy. 
Because the data owner has no knowledge of the downstream task, however, our results suggest that using LAFTR is safer than using the raw inputs;
LAFTR is relatively fair even when TraUnf is the most fair, whereas TraUnf is dramatically less fair than LAFTR on several tasks.

\begin{table}[!h]
\caption{Transfer fairness, other metrics. Models are as defined in Figure \ref{transfer-learn-results-pic}.  MMD is calculated with a Gaussian RBF kernel ($\sigma = 1$). AdvAcc is the accuracy of a separate MLP trained on the representations to predict the sensitive attribute; due to data imbalance an adversary predicting 0 on each case obtains accuracy of approximately 0.74.}
\vskip 0.15in
\begin{center}
\begin{small}
\begin{sc}
\begin{tabular}{ccc}
\toprule
Model & MMD & AdvAcc \\
\midrule
Transfer-Unfair & $1.1 \times 10^{-2}$  & 0.787\\
Transfer-Fair & $1.4 \times 10^{-3}$ & 0.784 \\
Transfer-Y-Adv ($\beta = 1$) & $3.4 \times 10^{-5}$ & 0.787 \\
Transfer-Y-Adv ($\beta = 0$) & $1.1 \times 10^{-3}$ & 0.786 \\
LAFTR & $\mathbf{2.7 \times 10^{-5}}$ & \textbf{0.761}\\
\bottomrule
\end{tabular}
\end{sc}
\end{small}
\end{center}
\label{health-transfer-other}
\vskip -0.1in
\end{table}

We provide coarser metrics of fairness for our representations in Table \ref{health-transfer-other}. We give two metrics: maximum mean discrepancy (MMD) \citep{gretton2007kernel}, which is a general measure of distributional distance; and adversarial accuracy (if an adversary is given these representations, how well can it learn to predict the sensitive attribute?). 
In both metrics, our representations are more fair than the baselines. We give two versions of the ``Transfer-Y-Adv'' adversarial model ($\beta = 0, 1$); note that it has much better MMD when the reconstruction term is added, but that this does not improve its adversarial accuracy, indicating that our model is doing something more sophisticated than simply matching moments of distributions.

\section{Flexible Fairness: Disentanglement Priors for Subgroup-fair Prediction}
\label{sec:ffvae:ffvae}

The previous Section emphasized transferability as a key property of fair representations, which provides a type of \emph{flexibility} in terms of downstream tasks.
However, this approach is \emph{inflexible} with respect to sensitive attributes: while a single learned representation can adapt to the prediction of different task labels~$y$, the single sensitive attribute $a$ for \emph{all} tasks must be specified at train time.
Mis-specified or overly constraining train-time sensitive attributes could negatively affect performance on downstream prediction tasks.
Can we instead learn a \textit{flexibly fair} representation that can be \emph{adapted}, at test time, to be fair to a variety of protected groups and their intersections?
That is the focus of this Section.

A flexibly fair representation should satisfy two criteria.
Firstly, the structure of the latent code should facilitate \textit{simple} adaptation, allowing a practitioner to easily adapt the representation to a variety of fair classification settings, where each task may have a different task label $y$ and sensitive attributes $a$.
Secondly, the adaptations should be \textit{compositional}: the representations can be made fair with respect to conjunctions of sensitive attributes, to guard against performance disparities between demographic \emph{subgroups} (e.g., a classifier that is fair to women but not Black women over the age of 60).
This type of subgroup unfairness has been observed in commercial machine learning systems \citep{buolamwini2018gender}.

Towards these goals, we draw inspiration from the disentangled representation literature, where the goal is for each dimension of the representation (also called the ``latent code'') to correspond to no more than one semantic factor of variation in the data (for example, independent visual features like object shape and position) \citep{higgins2016beta, locatello2018challenging}.
Our method uses multiple sensitive attribute labels at train time to induce a disentangled structure in the learned representation, which allows us to easily eliminate their influence at test time.
Importantly, at test time our method does not require access to the sensitive attributes, which can be difficult to collect in practice due to legal restrictions \citep{elliot2008new,division1983}.
The trained representation permits simple and composable modifications at test time that eliminate the influence of sensitive attributes, enabling a wide variety of downstream tasks.

\subsection{Background: Variational Autoencoders}
\label{sec:ffvae:background-vae}
The vanilla Variational Autoencoder (VAE) \citep{kingma2013auto} is typically implemented with an isotropic Gaussian prior $p(z) = \mathcal{N}(0, I)$.
The objective to be maximized is the Evidence Lower Bound (a.k.a., the ELBO), 
\begin{align}
    L_{\text{VAE}}(p,q) = \E_{q(z|x)} \left[ \log p(x|z) \right] - D_{KL} \left[ q(z|x) || p(z)  \right], \nonumber
\end{align}
which bounds the data log likelihood $\log p(x)$ from below for any choice of $q$.
The encoder and decoder are often implemented as Gaussians 
\begin{align}
    q(z|x) &= \mathcal{N}(z|\mu_q(x),  \Sigma_q(x)) \nonumber \\
    p(x|z) &= \mathcal{N}(x|\mu_p(z), \Sigma_p(z)) \nonumber
\end{align}
whose distributional parameters are the outputs of neural networks $\mu_q(\cdot)$, $\Sigma_q(\cdot)$, $\mu_p(\cdot)$, $\Sigma_p(\cdot)$, with the $\Sigma$ typically exhibiting diagonal structure.
For modeling binary-valued pixels, a Bernoulli decoder $p(x|z)=\text{Bernoulli}(x|\theta_p(z))$ can be used.
The goal is to maximize $L_{\text{VAE}}$---which is made differentiable by re-parameterizing samples from $q(z|x)$---w.r.t. the network parameters.

\paragraph{$\beta$-VAE}
\citet{higgins2016beta} modify the VAE objective:
\begin{align}
    L_{\beta\text{VAE}}(p,q) = \E_{q(z|x)} \left[ \log p(x|z) \right] - \beta D_{KL} \left[ q(z|x) || p(z) \right ]. \nonumber
\end{align}
The hyperparameter $\beta$ allows the practitioner to encourage the variational distribution $q(z|x)$ to reduce its KL-divergence to the isotropic Gaussian prior $p(z)$.
With $\beta>1$ this objective is a valid lower bound on the data likelihood. 
This gives greater control over the model's adherence to the prior.
Because the prior factorizes per dimension $p(z) = \prod_j p(z_j)$, \citet{higgins2016beta} argue that increasing $\beta$ yields ``disentangled'' latent codes in the encoder distribution $q(z|x)$.
Broadly speaking, each dimension of a properly disentangled latent code should capture no more than one semantically meaningful factor of variation in the data.
This allows the factors to be manipulated in isolation by altering the per-dimension values of the latent code.
Disentangled autoencoders are often evaluated by their sample quality in the data domain, but we instead emphasize the role of the encoder as a representation learner to be evaluated on downstream fair classification tasks.

\paragraph{FactorVAE and $\beta$-TCVAE}
\citet{kim2018disentangling} propose a different variant of the VAE objective:
\begin{align}
    L_{\text{FactorVAE}}(p,q) = L_{\text{VAE}}(p,q) - \gamma D_{KL}(q(z)||\prod_{j} q(z_j)). \nonumber
\end{align}
The main idea is to encourage factorization of the aggregate posterior $q(z)=\E_{p^{\text{data}}(x)} \left[ q(z|x) \right]$ so that $z_i$ correlates with $z_j$ if and only if $i=j$.
The authors propose a simple trick to generate samples from the aggregate posterior $q(z)$ and its marginals $\{q(z_j)\}$ using shuffled minibatch indices, then approximate the $D_{KL}(q(z)||\prod_{j} q(z_j))$ term using the cross entropy loss of a classifier that distinguishes between the two sets of samples, which yields a mini-max optimization.

\citet{chen2018isolating} show that the $D_{KL}(q(z)||\prod_{j} q(z_j))$ term above---a.k.a. the ``total correlation'' of the latent code---can be naturally derived by decomposing the expected KL divergence from the variational posterior to prior:
\begin{align} \nonumber
    \E_{p^{\text{data}}(x)}[&D_{KL}(q(z|x)||p(z))] = \\
    &\quad\quad D_{KL}(q(z|x)p^{\text{data}}(x)||q(z)p^{\text{data}}(x)) \nonumber \\
        &\quad\quad+ D_{KL}(q(z)||\prod_j q(z_j))
     \nonumber\\
    &\quad\quad + \sum_j D_{KL} \left[ q(z_j) || p(z_j) \right]. \nonumber
\end{align}
They then augment the decomposed ELBO to arrive at the same objective as \citet{kim2018disentangling}, but optimize using a biased estimate of the marginal probabilities $q(z_j)$ rather than with the adversarial bound on the KL between aggregate posterior and its marginals.

\subsection{Flexibly Fair VAE} \label{sec:ffvae:method}

\renewcommand{\arrowWidth}{1.7pt}
\renewcommand{\rectWidth}{0.4cm}
\renewcommand{\circleSize}{0.7cm}
\begin{figure*}[htp]
\begin{center}
\begin{subfigure}[t]{0.45\textwidth}
\begin{tikzpicture}
\path (0.0,0.0) node[rectangle,draw,minimum size=\circleSize,align=center](x) {
    $x$ \includegraphics[scale=0.2]{./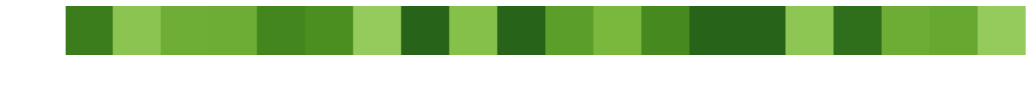}\\\small non-sensitive observations
}
(4.2,0.0) node[rectangle,draw,minimum size=\circleSize,align=center](a) {
    $a$ \includegraphics[scale=0.06]{./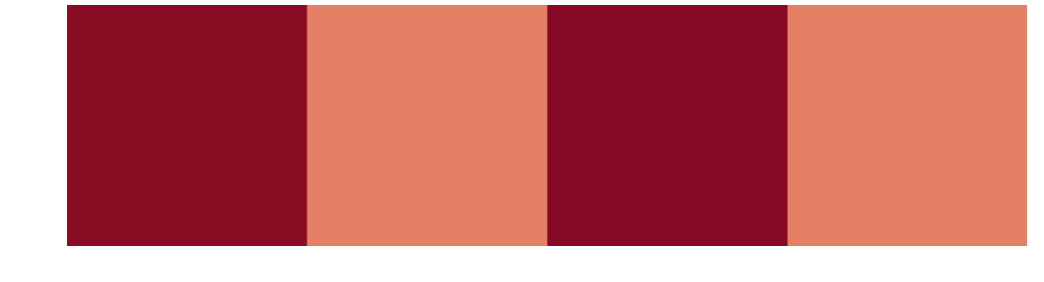}\\\small sensitive observations
}
(0.0,3.0) node[rectangle,draw,minimum size=\circleSize,align=center](z) {
    $z$ \includegraphics[scale=0.16]{./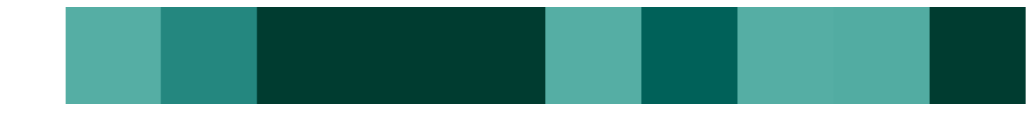}\\\small non-sensitive latents
}
(4.2,3.0) node[rectangle,draw,minimum size=\circleSize,align=center](b) {
    $b$ \includegraphics[scale=0.06]{./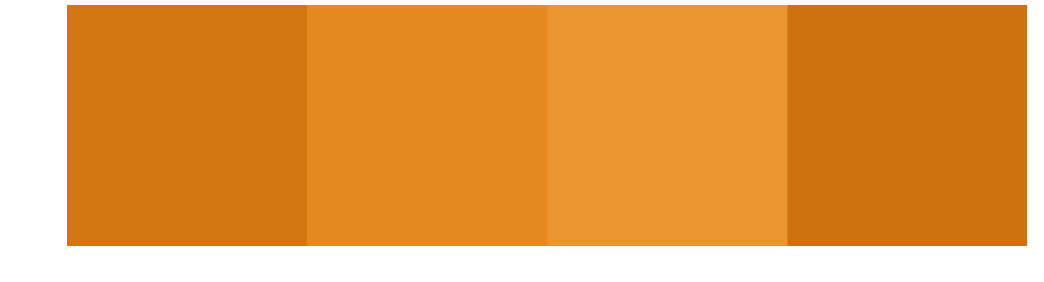}\\\small sensitive latents
};
\draw[->,>=stealth, line width=\arrowWidth] (x) to [out=45,in=-45,looseness=1.0] (z);
\draw[->,>=stealth, line width=\arrowWidth] (z) to [out=-135,in=135,looseness=1.0] (x);
\draw[->,>=stealth, line width=\arrowWidth] (x) to [out=20,in=-110,looseness=1.0] (b);
\draw[->,>=stealth, line width=\arrowWidth] (b) to [out=-130,in=40,looseness=1.0] (x);
\draw[->,>=stealth, line width=\arrowWidth] (b) -- (a);
\end{tikzpicture}
    \caption{
    FFVAE learns the encoder distribution $q(z,b|x)$ and decoder distributions $p(x|z,b)$, $p(a|b)$ from inputs $x$ and multiple sensitive attributes $a$.
    The disentanglement prior structures the latent space by encouraging low $\text{MI}(b_i,a_j)\forall i \neq j$ and low $\text{MI}(b,z)$ where $\text{MI}(\cdot)$ denotes mutual information.
    }
\label{fig:1a}
\end{subfigure}
\hfill
\begin{subfigure}[t]{0.45\textwidth}
\begin{tikzpicture}
\path
(0.0-1.8,0.0) node[rectangle,draw,minimum size=\circleSize,align=center](x) {
    $x$ \includegraphics[scale=0.2]{./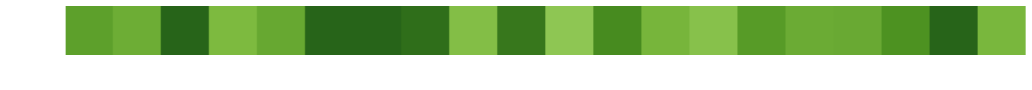}
}
(0.0-1.8,2.2) node[rectangle,draw,minimum size=\circleSize,align=center](z) {
    $z$ \includegraphics[scale=0.16]{./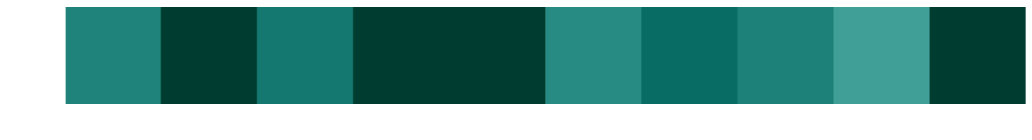}
}
(4.0-2.5,1.1) node[rectangle,draw,minimum size=\circleSize,align=center](b) {
    $b$ \includegraphics[scale=0.06]{./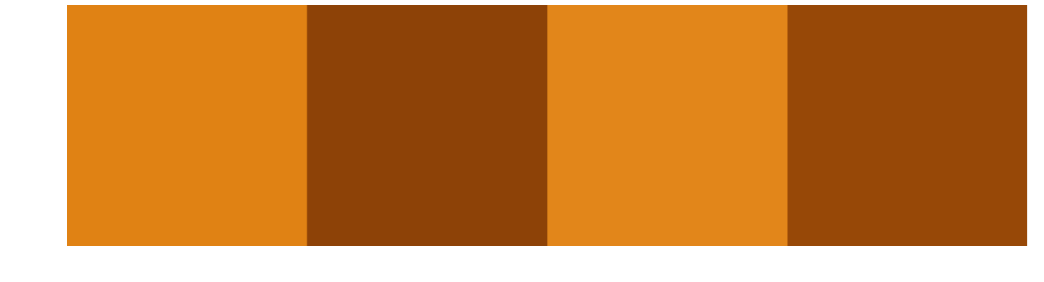}
}
(4.0-2.5,2.5) node[rectangle,draw,minimum size=\circleSize,align=center](bprime) {
    $b'$ \includegraphics[scale=0.06]{./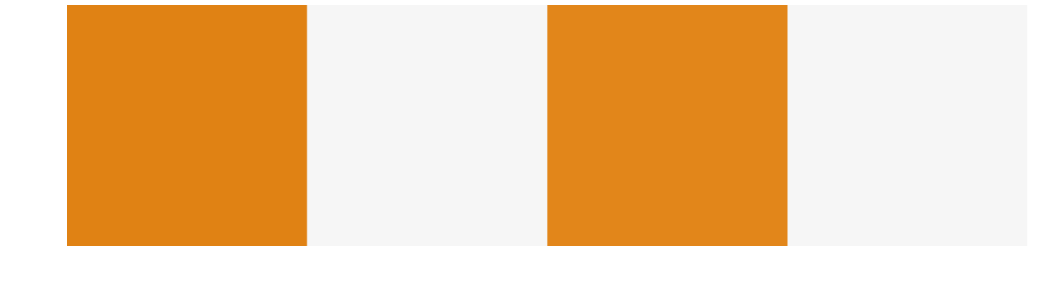}\\\tiny modified sens. latents
}
(2.0-1.8,4.0) node[rectangle,draw,minimum size=\circleSize,align=center](y) {
    $y$ \includegraphics[scale=0.16]{./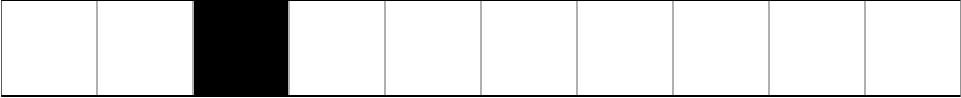}\\\small target label
};
\draw[->,>=stealth, line width=\arrowWidth] (x) -- (z);
\draw[->,>=stealth, line width=\arrowWidth] (x) to [out=0,in=-90,looseness=1.6] (b);
\draw[->,>=stealth, line width=\arrowWidth] (b) -- (bprime);
\draw[->,>=stealth, line width=\arrowWidth] (bprime) -- (y);
\draw[->,>=stealth, line width=\arrowWidth] (z) -- (y);
\end{tikzpicture}
    \caption{The FFVAE latent code $[z,b]$ can be modified by discarding or noising out sensitive dimensions $\{b_j\}$, which yields a latent code $[z, b']$ independent of groups and subgroups derived from sensitive attributes $\{a_j\}$.
A held out label $y$ can then be predicted with subgroup demographic parity.
}
\label{fig:1b}
\end{subfigure}
\caption{
    Data flow at train time (\ref{fig:1a}) and test time (\ref{fig:1b}) for our model, Flexibly Fair VAE (FFVAE).
}
\label{modelpic}
\end{center}
\vskip -0.2in
\label{fig:1}
\end{figure*}

We want to learn fair representations that---beyond being useful for predicting many test-time task labels $y$---can be adapted \textit{simply} and \textit{compositionally} for a variety of sensitive attributes settings $a$ after training.
We call this property \textit{flexible fairness}.
Our approach to this problem involves inducing structure in the latent code that allows for easy manipulation.
Specifically, we isolate information about each sensitive attribute to a specific subspace, while ensuring that the latent space factorizes these subspaces independently. 

\paragraph{Notation}
We employ the following notation:
\begin{itemize}
    \item $x \in \mathcal{X}$: a vector of non-sensitive attributes, for example, the pixel values in an image or row of features in a tabular dataset;
    \item $a \in \{0,1\}^{N_a}$: a vector of binary sensitive attributes;
    \item $z \in \R^{N_z}$: non-sensitive subspace of the latent code;
    \item $b \in \R^{N_b}$: sensitive subspace of the latent code\footnote{
    In our experiments we used $N_b = N_a$ (same number of sensitive attributes as sensitive latent dimensions) to model binary sensitive attributes. 
    But categorical or continuous sensitive attributes can also be accommodated. 
    }.
\end{itemize}
For example, we can express the VAE objective in this notation as
\begin{align}
    L_{\text{VAE}}(p,q) &= \E_{q(z,b|x,a)} \left[ \log p(x,a|z,b) \right] \nonumber \\
    &\quad - D_{KL} \left[ q(z,b|x,a) || p(z,b)  \right]. \nonumber
\end{align}

In learning a flexibly fair representations $[z,b]=f([x,a])$, we aim to satisfy two general properties: \emph{disentanglement} and \emph{predictiveness}.
We say that $[z,b]$ is \emph{disentangled} if its aggregate posterior factorizes as $q(z,b)=q(z)\prod_j q(b_j)$ and is \emph{predictive} if each $b_i$ has high mutual information with the corresponding $a_i$.
Note that under the disentanglement criteria the dimensions of $z$ are free to co-vary together, but must be independent from all sensitive subspaces ${b_j}$.
We have also specified factorization of the latent space in terms of the aggregate posterior $q(z,b)=\E_{p^{\text{data}}(x)} [q(z,b|x)]$, to match the global independence criteria of group fairness.

\paragraph{Desiderata}
We can formally express our desiderata as follows:
\begin{itemize}
    \item $z \perp b_j \medspace \forall \medspace j$ (disentanglement of the non-sensitive and sensitive latent dimensions);
    \item $b_i \perp b_j \medspace \forall \medspace i\neq j$ (disentanglement of the various different sensitive dimensions);
    \item $\text{MI}(a_j,  b_j)$ is large $\forall \medspace j$ (predictiveness of each sensitive dimension);
\end{itemize}
where $\text{MI}(u,v)=\E_{p(u,v)} \log \frac{p(u,v)}{p(u)p(v)}$ represents the mutual information between random vectors $u$ and $v$.
We note that these desiderata differ in two ways from the standard disentanglement criteria.
The predictiveness requirements are stronger: they allow for the injection of external information into the latent representation, requiring the model to structure its latent code to align with that external information.
However, the disentanglement requirement is less restrictive since it allows for correlations between the dimensions of $z$.
Since those are the non-sensitive dimensions, we are not interested in manipulating those at test time, and so we have no need for constraining them.

If we satisfy these criteria, then it is possible to achieve demographic parity with respect to some $a_i$ by simply removing the dimension $b_i$ from the learned representation i.e. use instead $[z,b]\backslash b_i$.
We can alternatively replace $b_i$ with independent noise.
This adaptation procedure is simple and compositional: if we wish to achieve fairness with respect to a conjunction of binary attributes\footnote{
$\wedge$ and $\vee$ represent logical \emph{and} and \emph{or} operations, respectively.
} $a_i \wedge a_j \wedge a_k$, we can simply use the representation $[z,b]\backslash \{b_i, b_j, b_k\}$.

By comparison, while FactorVAE may disentangle dimensions of the aggregate posterior---$q(z)=\prod_j q(z_j)$---it does not automatically satisfy flexible fairness, since the representations are not predictive, and cannot necessarily be easily modified along the attributes of interest.

\paragraph{Distributions}
We propose a variation to the VAE which encourages our desiderata, building on methods for disentanglement and encouraging predictiveness. 
Firstly, we assume assume a variational posterior that factorizes across $z$ and $b$:
\begin{align}
    q(z,b|x) &= q(z|x)q(b|x).
\end{align}
The parameters of these distributions are implemented as neural network outputs, with the encoder network yielding a tuple of parameters for each input: $(\mu_q(x), \Sigma_q(x), \theta_q(x))=\text{Encoder}(x)$.
We then specify $q(z|x)=\mathcal{N}(z|\mu_q(x), \Sigma_q(x))$ and $q(b|x)=\delta(\theta_q(x))$ (i.e., $b$ is non-stochastic)\footnote{
We experimented with several distributions for modeling $b|x$ stochastically, but modeling this uncertainty did not help optimization or downstream evaluation in our experiments.
}.

Secondly, we model reconstruction of $x$ and prediction of $a$ separately using a factorized decoder:
\begin{align}
    p(x,a|z,b)=p(x|z,b)p(a|b)
\end{align}
where $p(x|z,b)$ is the decoder distribution suitably chosen for the inputs $x$, and 
\begin{align}
    p(a|b)=\prod_j\text{Bernoulli}(a_j|\sigma(b_j))
\end{align}
is a factorized binary classifier that uses $b_j$ as the logit for predicting $a_j$ ($\sigma(\cdot)$ represents the sigmoid function).
Note that the $p(a|b)$ factor of the decoder requires no extra parameters.

Finally, we specify a factorized prior $p(z,b)=p(z)p(b)$ with $p(z)$ as a standard Gaussian and $p(b)$ as Uniform.

\paragraph{Learning Objective}
Using the encoder and decoder as defined above, we present our final objective:
\begin{align} \label{eq:ffvae}
    \nonumber L_{\text{FFVAE}}(p,q) &= \E_{q(z,b|x)} [ \log p(x|z,b) + \alpha \log p(a|b)] \nonumber \\
    &\quad\quad - \gamma D_{KL}(q(z,b)||q(z)\prod_j q(b_j))
     \nonumber\\
    &\quad\quad - D_{KL} \left[ q(z,b|x) || p(z,b) \right].
\end{align}
It comprises the following four terms, respectively:
a \textit{reconstruction} term which rewards the model for successfully modeling non-sensitive observations;
a \textit{predictiveness} term which rewards the model for aligning the correct latent components with the sensitive attributes; 
a \textit{disentanglement} term which rewards the model for decorrelating the latent dimensions of $b$ from each other and $z$;
and a \textit{dimension-wise KL} term which rewards the model for matching the prior in the latent variables.
We call our model \mbox{FFVAE} for Flexibly Fair VAE (see Figure \ref{modelpic} for a schematic representation).

The hyperparameters $\alpha$ and $\gamma$ control aspects relevant to flexible fairness of the representation. $\alpha$ controls the alignment of each $a_j$ to its corresponding $b_j$ (predictiveness), whereas $\gamma$ controls the aggregate independence in the latent code (disentanglement).

The $\gamma$-weighted total correlation term is realized by training a binary adversary  to approximate the log density ratio $\log \frac{q(z,b)}{q(z)\prod_j q(b_j)}$.
The adversary attempts to classify between ``true'' samples from the aggregate posterior $q(z,b)$ and ``fake'' samples from the product of the marginals $q(z)\prod_j q(b_j)$ (see Appendix \ref{sec:disc_approx} for further details).
If a strong adversary can do no better than random chance, then the desired independence property has been achieved.

We note that our model requires the sensitive attributes $a$ at training time but not at test time.
This is advantageous, since often these attributes can be difficult to collect from users, due to practical and legal restrictions, particularly for sensitive information \citep{elliot2008new,division1983}.

\subsection{Experiments}
\label{sec:ffvae:experiments}
\paragraph{Evaluation Criteria}

We evaluate the learned encoders with an ``auditing'' scheme on held-out data. 
The overall procedure is as follows:
\begin{enumerate}
    \item \textbf{Split data} into a \textit{training} set (for learning the encoder) and an \textit{audit} set (for evaluating the encoder).
    \item \textbf{Train} an encoder/representation using the training set.
    \item \textbf{Audit} the learned encoder. Freeze the encoder weights and train an MLP to predict some task label given the (possibly modified) encoder outputs on the audit set.
\end{enumerate}

To evaluate various properties of the encoder we conduct three types of auditing tasks---\emph{fair classification}, \emph{predictiveness}, and \emph{disentanglement}---which vary in task label and representation modification.
The \emph{fair classification audit} \citep{madras2018learning} trains an MLP to predict $y$ (held-out from encoder training) given $[z,b]$ with appropriate sensitive dimensions removed, and evaluates accuracy and $\Delta_{DP}$ on a test set.
We repeat for a variety of demographic subgroups derived from the sensitive attributes.
The \emph{predictiveness audit} trains classifier $C_i$ to predict sensitive attribute $a_i$ from $b_i$ alone.
The \emph{disentanglement audit} trains classifier $C_{\backslash i}$ to predict sensitive attribute $a_i$ from the representation with $b_i$ removed (e.g. $[z, b]\backslash b_i$).
If $C_i$ has low loss, our representation is predictive; if $C_{\backslash i}$ has high loss, it is disentangled.

\paragraph{Synthetic Data} \label{sec:synthetic}

\newcommand{\figWidth}{0.24\textwidth}
\begin{figure*}[ht!]
\begin{subfigure}[t]{\figWidth}
\includegraphics[width=\textwidth]{./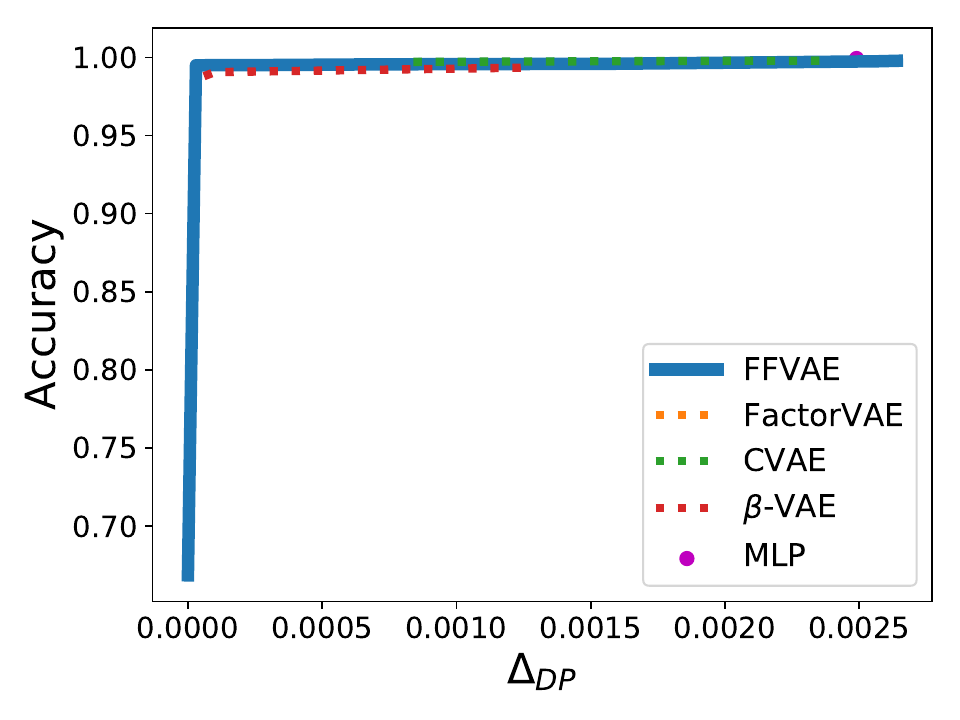}
\caption{$a$ = Scale}
\label{fig:dsprites-pareto-a2-y4}
\end{subfigure}
\hfill
\begin{subfigure}[t]{\figWidth}
\includegraphics[width=\textwidth]{./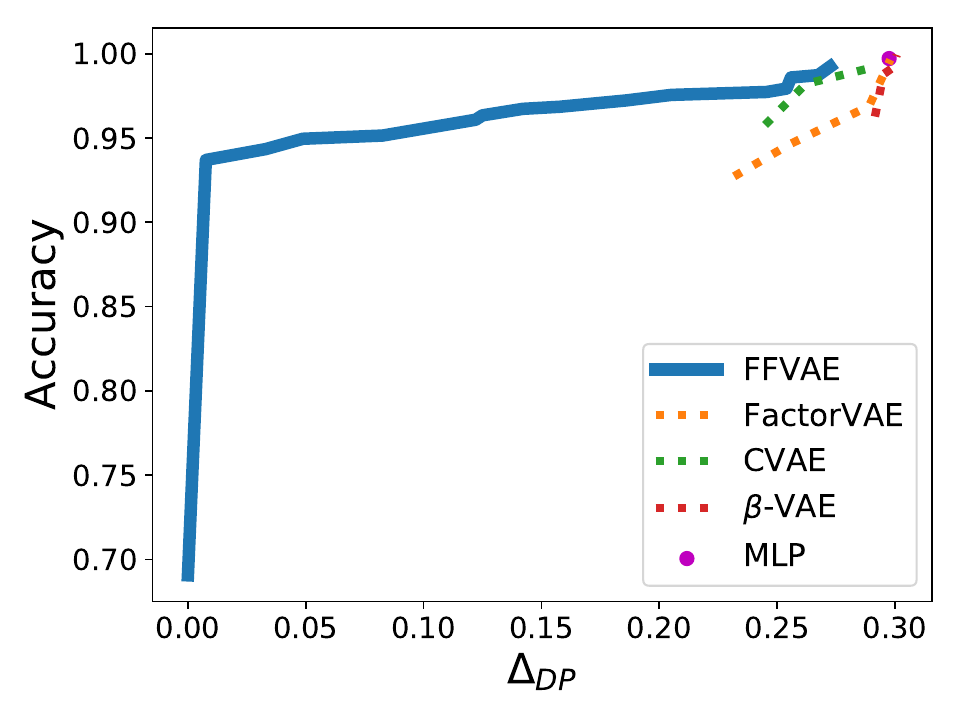}
\caption{$a$ = Shape}
\label{fig:dsprites-pareto-a1-y4}
\end{subfigure}
\hfill
\begin{subfigure}[t]{\figWidth}
\includegraphics[width=\textwidth]{./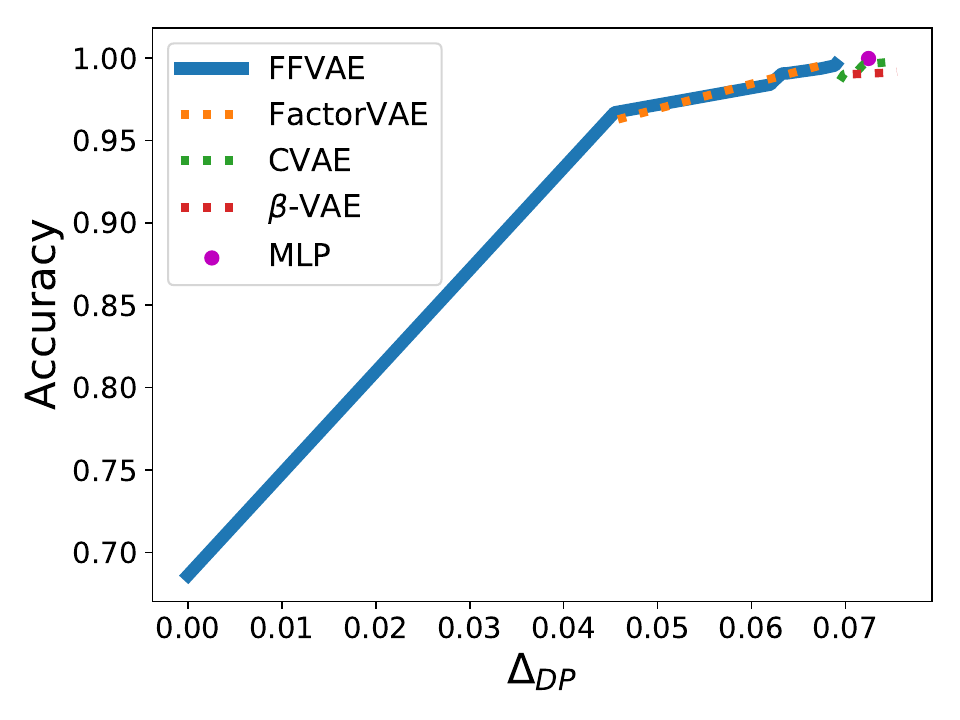}    
\caption{$a$ = Shape $\wedge$ Scale}
\label{fig:dsprites-pareto-a1and2-y4}
\end{subfigure}
\hfill
\begin{subfigure}[t]{\figWidth}
\includegraphics[width=\textwidth]{./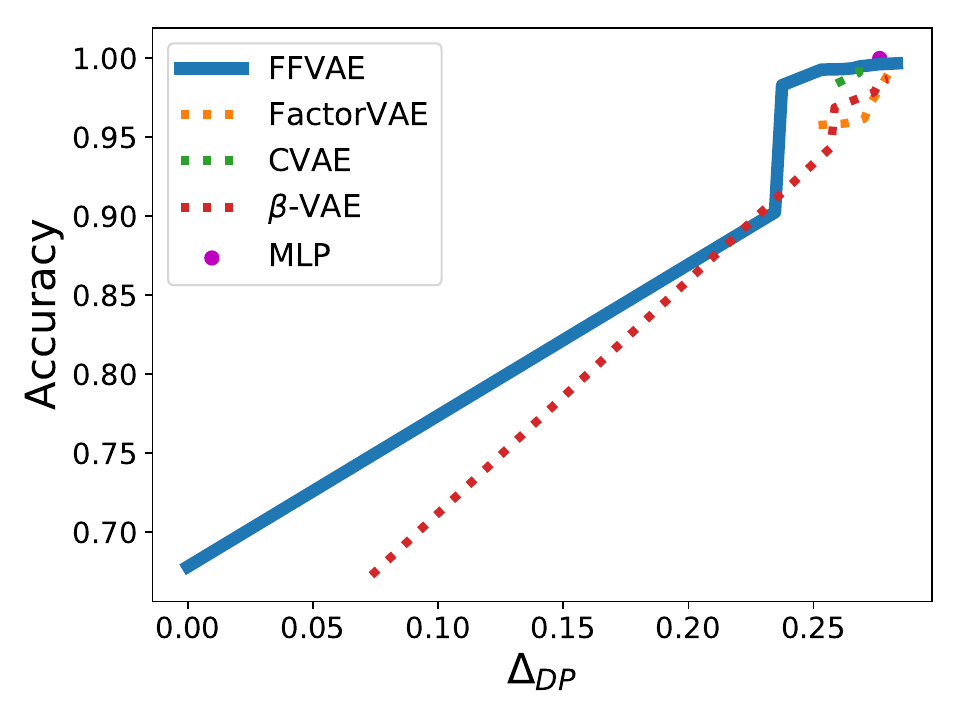}
\caption{$a$ = Shape $\vee$ Scale}
\label{fig:dsprites-pareto-a1or2-y4}
\end{subfigure}
\caption{
    Fairness-accuracy tradeoff curves, DSpritesUnfair dataset.
    We sweep a range of hyperparameters for each model and report Pareto fronts.
    Optimal point is the top left hand corner --- this represents perfect accuracy and fairness.
    MLP is a baseline classifier trained directly on the input data.
    For each model, encoder outputs are modified to remove information about $a$.
    $y$ = XPosition for each plot.
    }
    \label{fig:dsprites-pareto}
\end{figure*}

\subparagraph{DSpritesUnfair Dataset}

\begin{figure}[ht!]
\centering
\includegraphics[width=0.4\textwidth]{./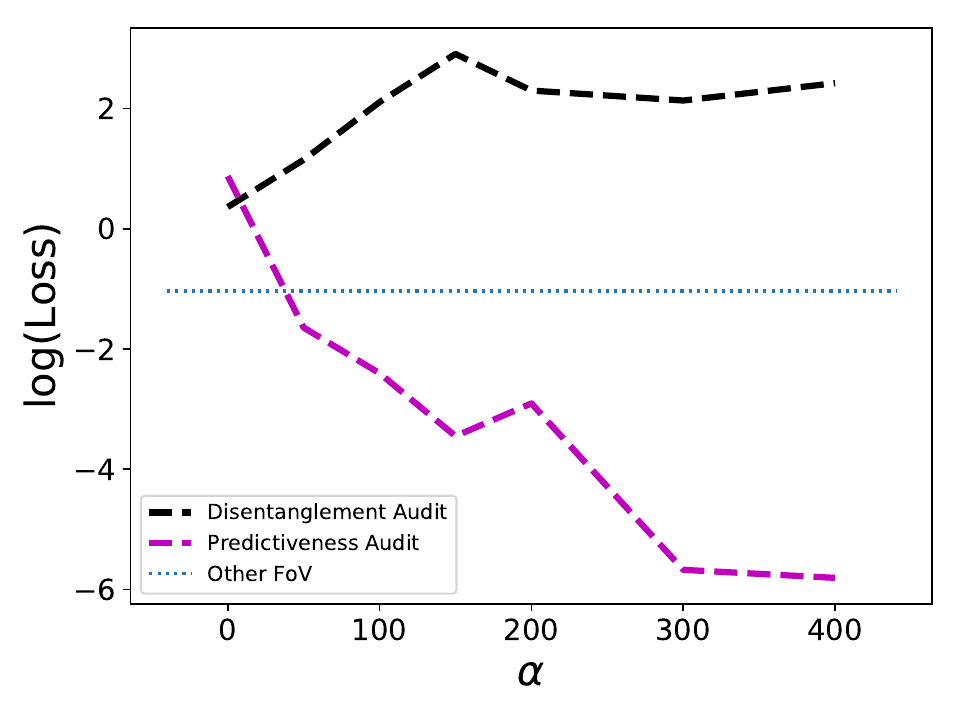}
\vspace{-0.2cm}
\caption{
    Black and pink dashed lines respectively show FFVAE disentanglement audit (the higher the better) and predictiveness audit (the lower the better) as a function of $\alpha$.
    These audits use $A_i$=Shape (see text for details).
    The blue line is a reference value---the log loss of a classifier that predicts $A_i$ from the other 5 DSprites factors of variation (FoV) alone, ignoring the image---representing the amount of information about $A_i$ inherent in the data.
    }
    \label{fig:dsprites-audit}
\end{figure}

The DSprites dataset\footnote{\scriptsize \url{https://github.com/deepmind/dsprites-dataset}} contains $64 \times 64$-pixel images of white shapes against a black background, and was designed to evaluate whether learned representations have disentangled sources of variation.
The original dataset has several categorical factors of variation---Scale, Orientation, XPosition, YPosition---that combine to create $700,000$ unique images.
We binarize the factors of variation to derive sensitive attributes and labels, so that many images now share any given attribute/label combination 
(See Appendix \ref{sec:dsprites-training-details} for details).
In the original DSprites dataset, the factors of variation are sampled uniformly.
However, in fairness problems, we are often concerned with correlations between attributes and the labels we are trying to predict (otherwise, achieving low $\Delta_{DP}$ is aligned with standard classification objectives).
Hence, we sampled an ``unfair'' version of this data (DSpritesUnfair) with correlated factors of variation; in particular Shape and XPosition correlate positively.
Then a non-trivial fair classification task would be, for instance, learning to predict shape without discriminating against inputs from the left side of the image.

\subparagraph{Baselines}
To test the utility of our predictiveness prior, we compare our model to $\beta$-VAE (VAE with a coefficient $\beta \geq 1$ on the KL term) and FactorVAE, which have disentanglement priors but no predictiveness prior.
We can also think of these as FFVAE with $\alpha = 0$.
To test the utility of our disentanglement prior, we also compare against a version of our model with $\gamma = 0$, denoted CVAE.
This is similar to the class-conditional VAE \citep{kingma2014semi}, with sensitive attributes as labels --- this model encourages predictiveness but no disentanglement.

\subparagraph{Fair Classification}
We perform the fair classification audit using several group/subgroup definitions for models trained on DSpritesUnfair (see Appendix \ref{sec:dsprites-training-details} for training details), and report fairness-accuracy tradeoff curves in Fig. \ref{fig:dsprites-pareto}.
In these experiments, we used Shape and Scale as our sensitive attributes during encoder training.
We perform the fair classification audit by training an MLP to predict $y=$``XPosition''---which was not used in the representation learning phase---given the modified encoder outputs, and repeat for several sensitive groups and subgroups.
We modify the encoder outputs as follows:
When our sensitive attribute is $a_i$ we remove the associated dimension $b_i$ from $[z,b]$; when the attribute is a conjunction of $a_i$ and $a_j$, we remove both $b_i$ and $b_j$.
For the baselines, we simply remove the latent dimension which is most correlated with $a_i$, or the two most correlated dimensions with the conjunction.
We sweep a range of hyperparameters to produce the fairness-accuracy tradeoff curve for each model.
In Fig. \ref{fig:dsprites-pareto}, we show the ``Pareto front'' of these models: points in ($\Delta_{DP}$, accuracy)-space for which no other point is better along both dimensions.
The optimal result is the top left hand corner (perfect accuracy and $\Delta_{DP} = 0$).

Since we have a 2-D sensitive input space, we show results for four different sensitive attributes derived from these dimensions: $\{a=\text{``Shape''}, a=\text{``Scale''}, a=\text{``Shape''}\vee\text{``Scale''}, a=\text{``Shape''}\wedge\text{``Scale''}\}$.
Recall that Shape and XPosition correlate in the DSpritesUnfair dataset.
Therefore, for sensitive attributes that involve Shape, we expect to see an improvement in $\Delta_{DP}$.
For sensitive attributes that do not involve Shape, we expect that our method does not hurt performance at all --- since the attributes are uncorrelated in the data, the optimal predictive solution also has $\Delta_{DP} = 0$.

When group membership $a$ is uncorrelated with label $y$ (Fig. \ref{fig:dsprites-pareto-a2-y4}), all models achieve high accuracy and low $\Delta_{DP}$ ($a$ and $y$ successfully disentangled).
When $a$ correlates with $y$ by design (Fig. \ref{fig:dsprites-pareto-a1-y4}), we see the clearest improvement of the FFVAE over the baselines, with an almost complete reduction in $\Delta_{DP}$ and very little accuracy loss.
The baseline models are all unable to improve $\Delta_{DP}$ by more than about 0.05, indicating that they have not effectively disentangled the sensitive information from the label.
In Figs. \ref{fig:dsprites-pareto-a1and2-y4} and \ref{fig:dsprites-pareto-a1or2-y4}, we examine conjunctions of sensitive attributes, assessing FFVAE's ability to flexibly provide multi-attribute fair representations.
Here FFVAE exceeds or matches the baselines accuracy-at-a-given-$\Delta_{DP}$ almost everywhere;
by disentangling information from multiple sensitive attributes, FFVAE enables flexibly fair downstream classification.

\subparagraph{Disentanglement and Predictiveness}
Fig. \ref{fig:dsprites-audit} shows the FFVAE disentanglement and predictiveness audits (see above for description of this procedure).
This result aggregates audits across all FFVAE models trained in the setting from Figure \ref{fig:dsprites-pareto-a1-y4}.
The classifier loss is cross-entropy, which is a lower bound on the mutual information between the input and target of the classifier.
We observe that increasing $\alpha$ helps both predictiveness and disentanglement in this scenario.
In the disentanglement audit, larger $\alpha$ makes predicting the sensitive attribute from the modified representation (with $b_i$ removed) more difficult.
The horizontal dotted line shows the log loss of a classifier that predicts $a_i$ from the other DSprites factors of variation (including labels not available to FFVAE); this baseline reflects the correlation inherent in the data.
We see that when $\alpha = 0$ (i.e. FactorVAE), it is slightly more difficult than this baseline to predict the sensitive attribute.
This is due to the disentanglement prior.
However, increasing $\alpha > 0$ increases disentanglement benefits in FFVAE beyond what is present in FactorVAE.
This shows that encouraging predictive structure can help disentanglement through isolating each attribute's information in particular latent dimensions.
Additionally, increasing $\alpha$ improves predictiveness, as expected from the objective formulation.
We further evaluate the disentanglement properties of our model in Appendix \ref{sec:mig} using the Mutual Information Gap metric \cite{chen2018isolating}.

\paragraph{Celebrity Faces} \label{sec:celeba}

\newcommand{\celebAFigWidth}{0.15\textwidth}
\begin{figure}[ht!]
\begin{subfigure}[t]{\celebAFigWidth}
\includegraphics[width=\textwidth]{./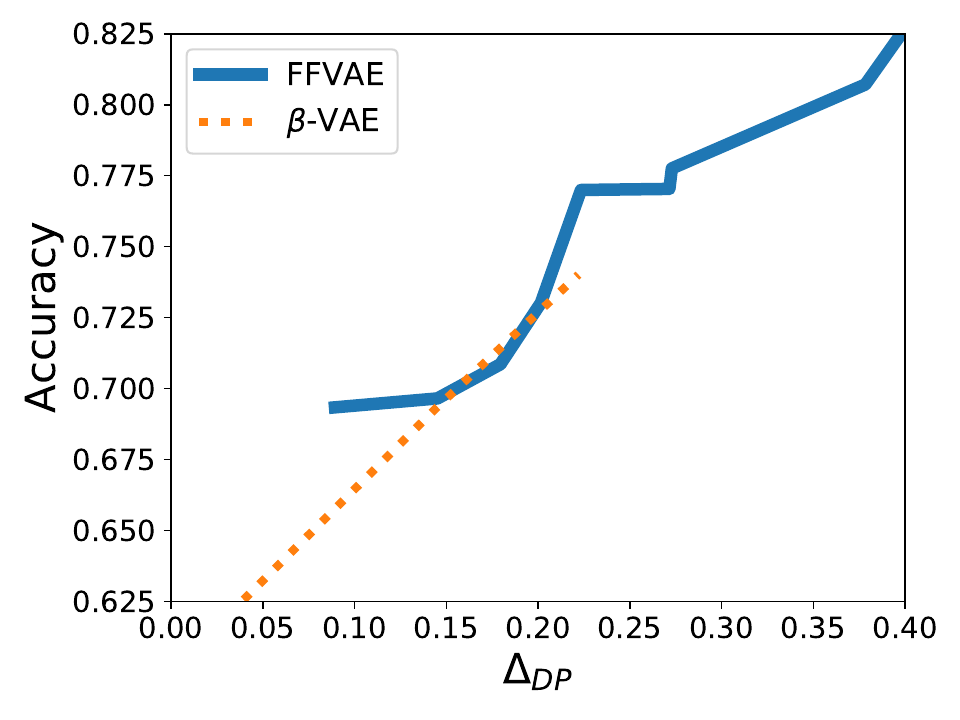}
\caption{$a$ = C}
\label{fig:predict_C}
\end{subfigure}
\hfill
\begin{subfigure}[t]{\celebAFigWidth}
\includegraphics[width=\textwidth]{./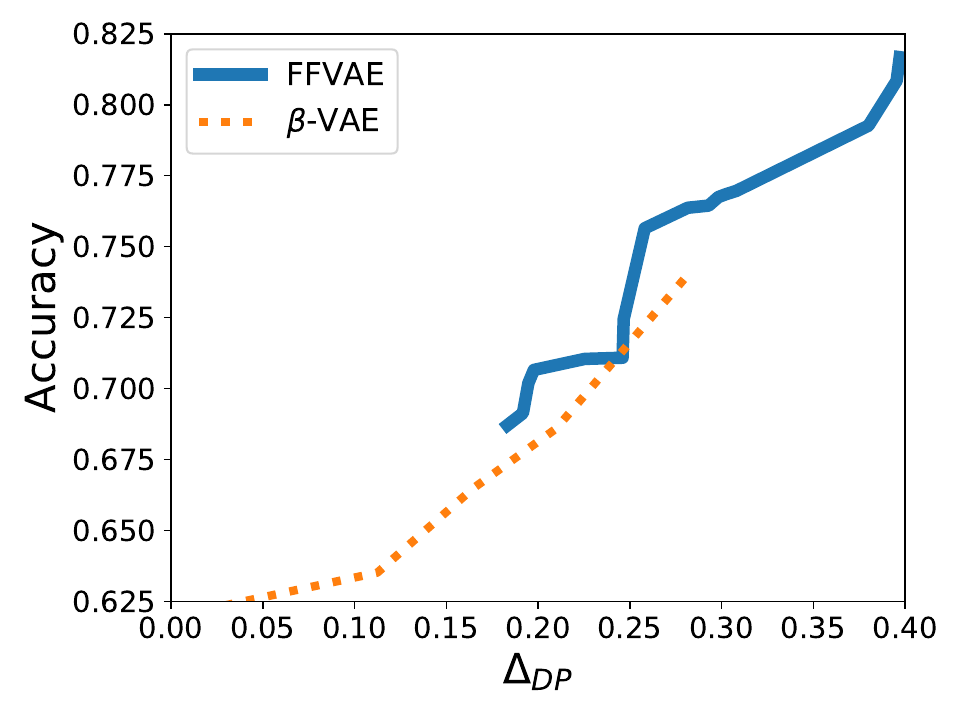}
\caption{$a$ = E}
\label{fig:predict_E}
\end{subfigure}
\hfill
\begin{subfigure}[t]{\celebAFigWidth}
\includegraphics[width=\textwidth]{./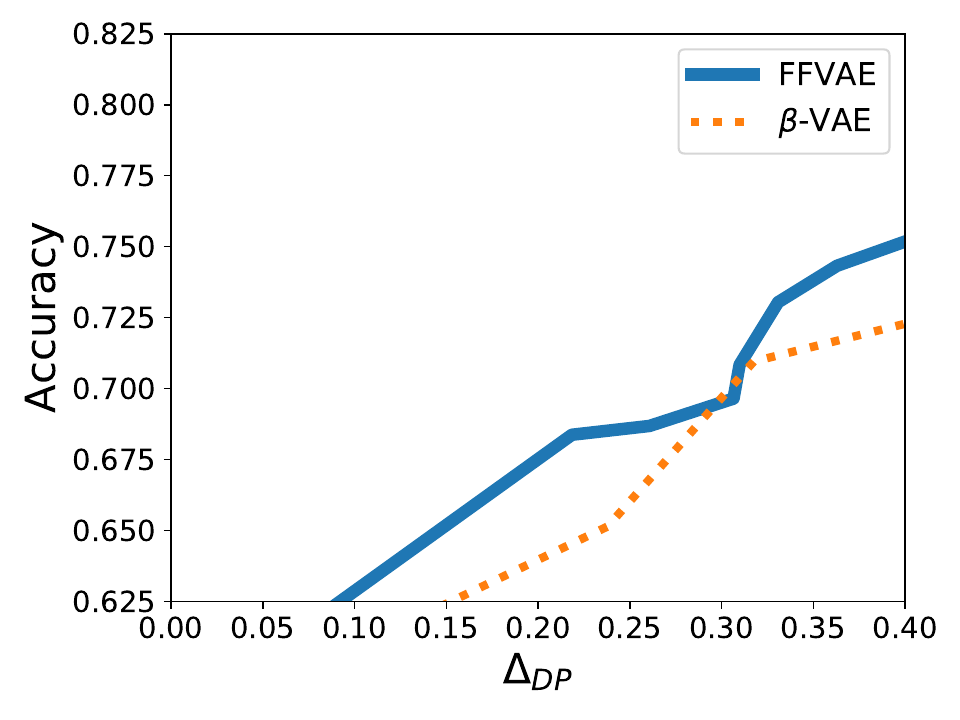}
\caption{$a$ = M}
\label{fig:predict_M}
\end{subfigure}
\hfill
\begin{subfigure}[t]{\celebAFigWidth}
\includegraphics[width=\textwidth]{./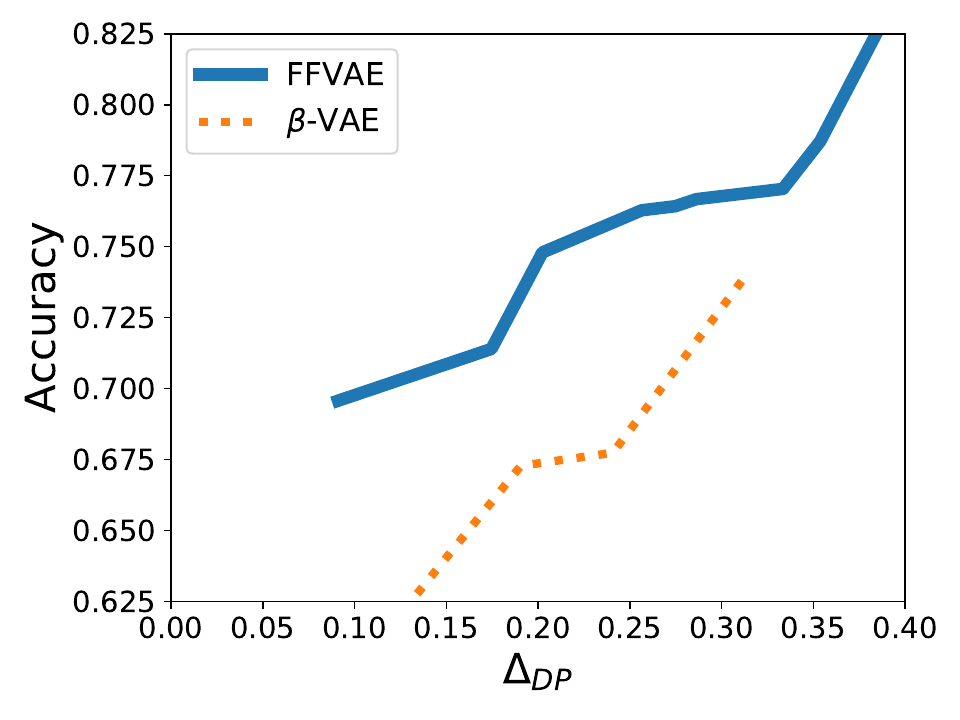}
\caption{$a$ = C $\wedge$ E}
\label{fig:predict_C-AND-E}
\end{subfigure}
\hfill
\begin{subfigure}[t]{\celebAFigWidth}
\includegraphics[width=\textwidth]{./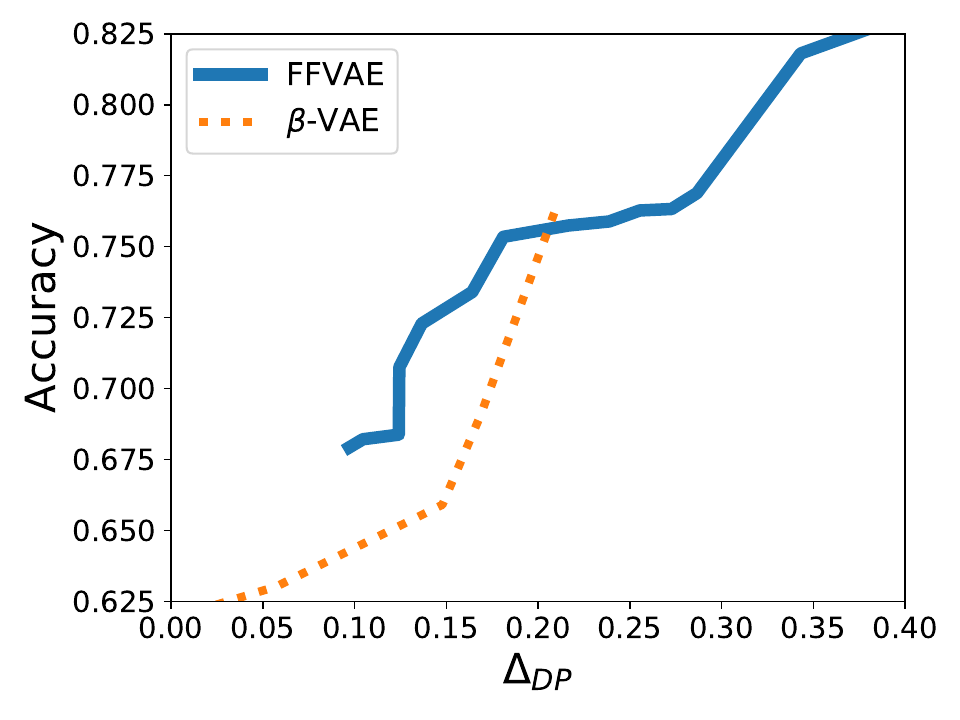}
\caption{$a$ = C $\wedge \neg$ E}
\label{fig:predict_C-AND-NOT-E}
\end{subfigure}
\hfill
\begin{subfigure}[t]{\celebAFigWidth}
\includegraphics[width=\textwidth]{./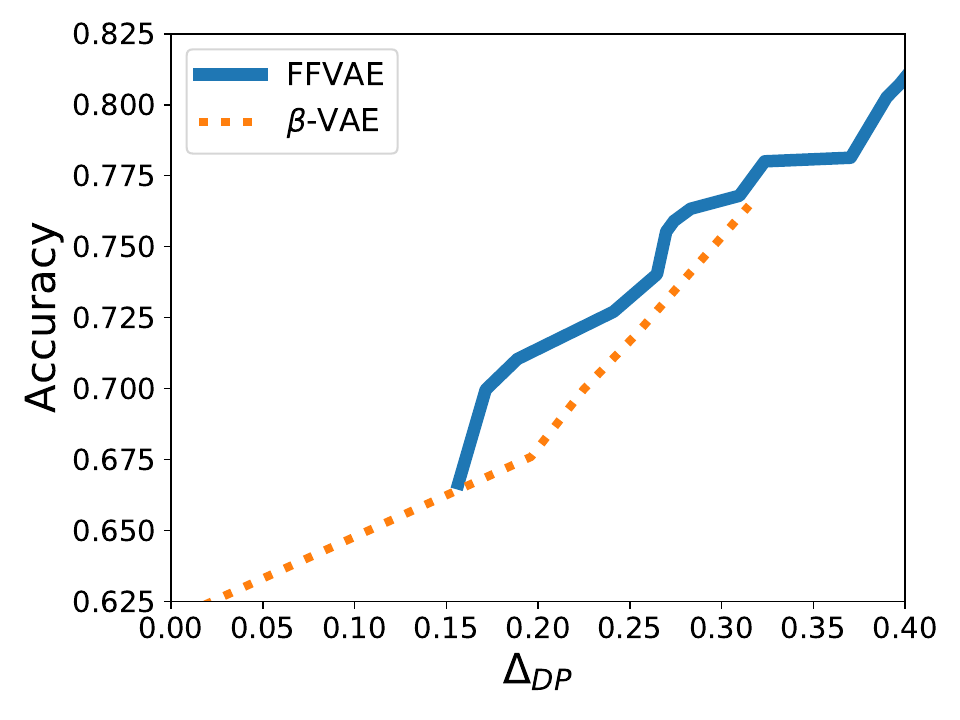}
\caption{$a$ = $\neg$ C $\wedge$ E}
\label{fig:predict_NOT-C-AND-E}
\end{subfigure}
\hfill
\begin{subfigure}[t]{\celebAFigWidth}
\includegraphics[width=\textwidth]{./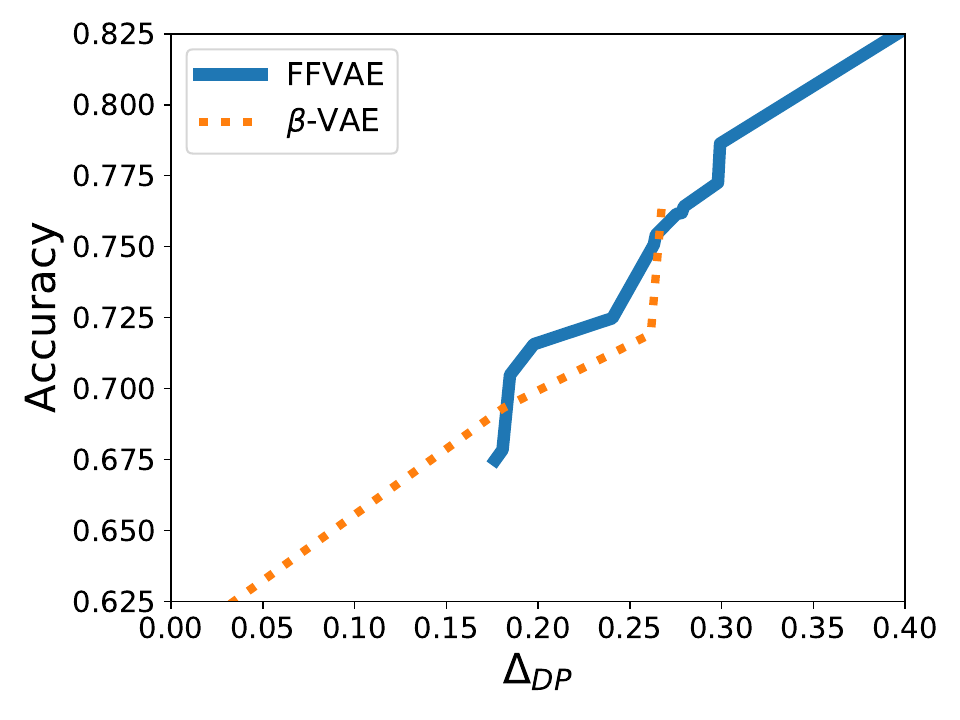}
\caption{$a$ = $\neg$ C $\wedge \neg$ E}
\label{fig:predict_NOT-C-AND-NOT-E}
\end{subfigure}
\hfill
\begin{subfigure}[t]{\celebAFigWidth}
\includegraphics[width=\textwidth]{./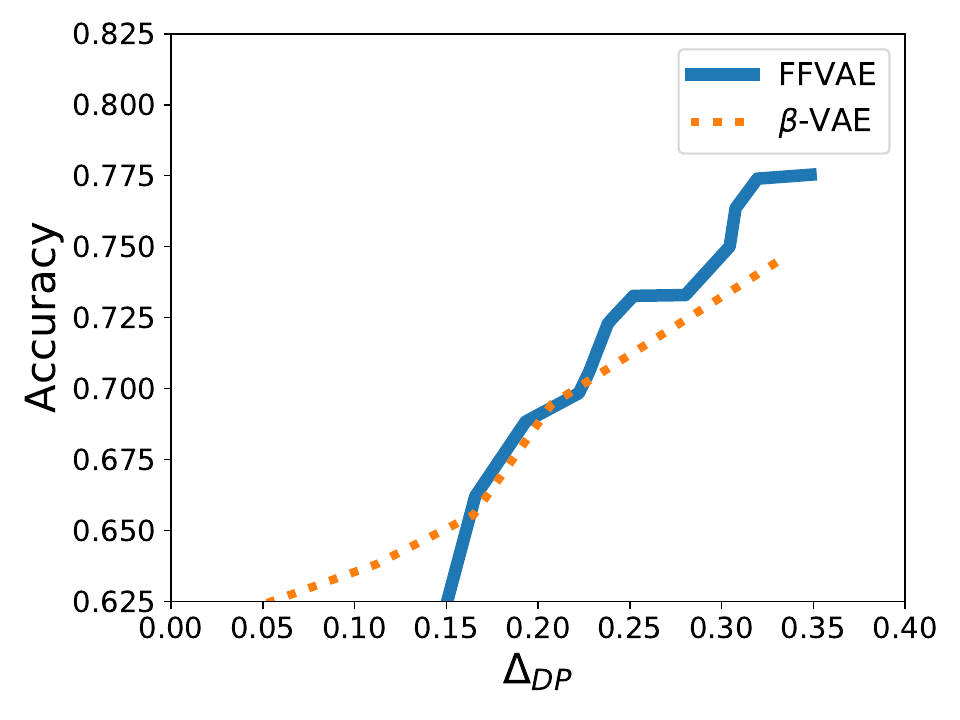}
\caption{$a$ = C $\wedge$ M}
\label{fig:predict_C-AND-M}
\end{subfigure}
\hfill
\begin{subfigure}[t]{\celebAFigWidth}
\includegraphics[width=\textwidth]{./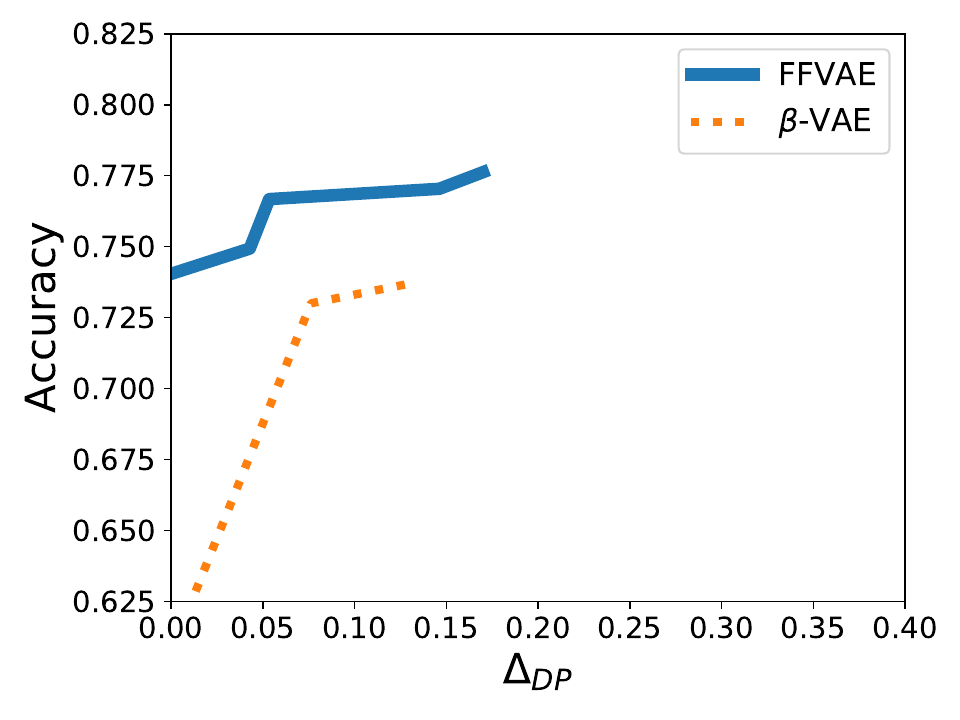}
\caption{$a$ = C $\wedge \neg$ M}
\label{fig:predict_C-AND-NOT-M}
\end{subfigure}
\hfill
\begin{subfigure}[t]{\celebAFigWidth}
\includegraphics[width=\textwidth]{./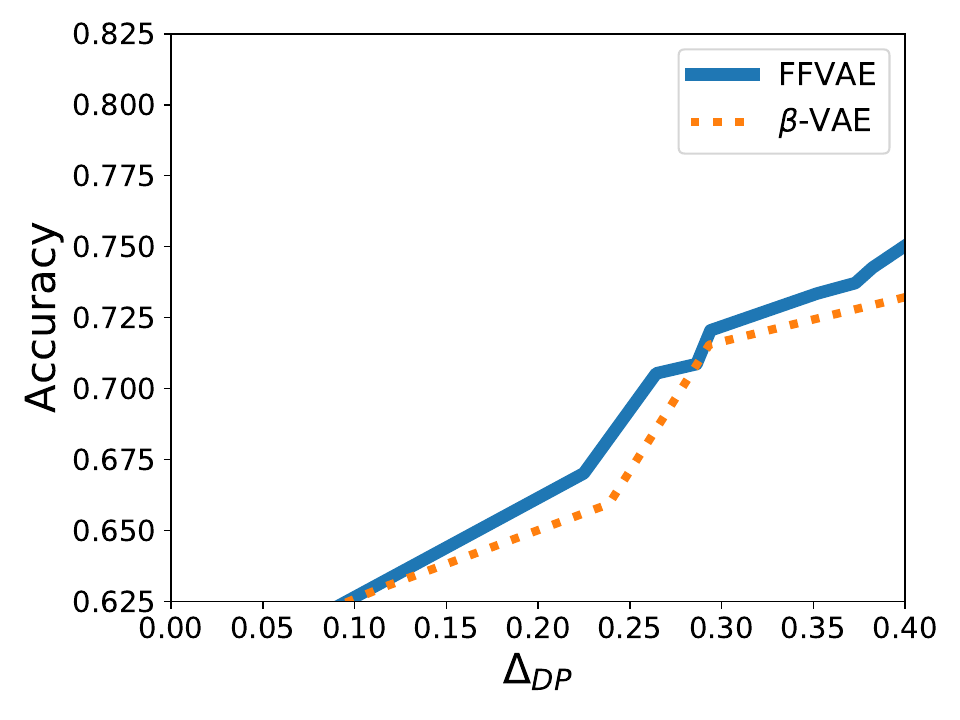}
\caption{$a$ = $\neg$ C $\wedge$ M}
\label{fig:predict_NOT-C-AND-M}
\end{subfigure}
\hfill
\begin{subfigure}[t]{\celebAFigWidth}
\includegraphics[width=\textwidth]{./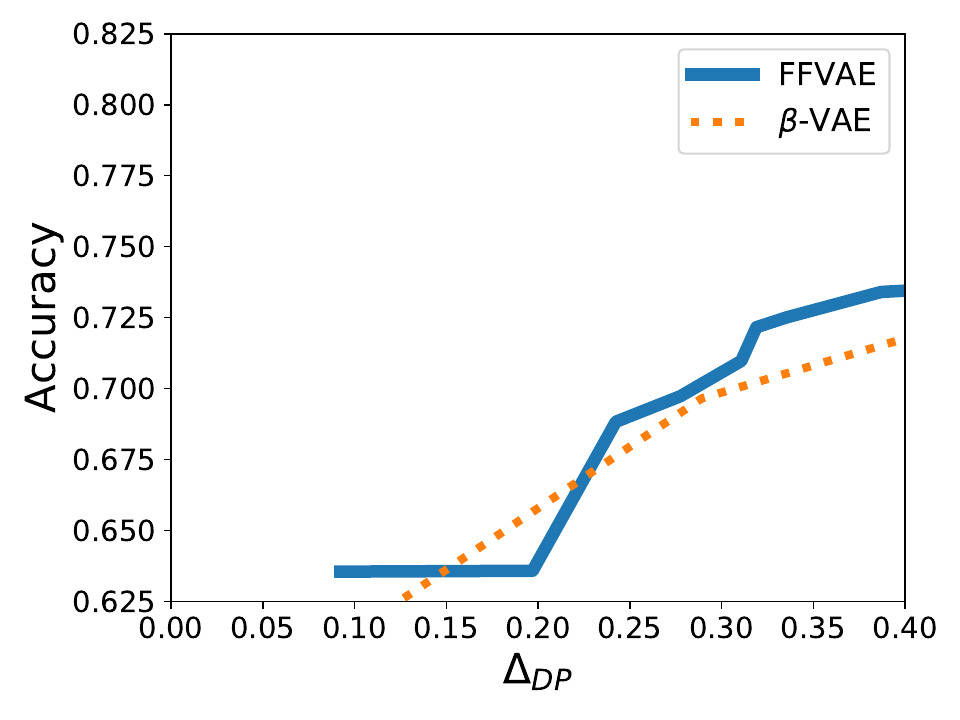}
\caption{$a$ = $\neg$ C $\wedge \neg$ M}
\label{fig:predict_NOT-C-AND-NOT-M}
\end{subfigure}
\hfill
\begin{subfigure}[t]{\celebAFigWidth}
\includegraphics[width=\textwidth]{./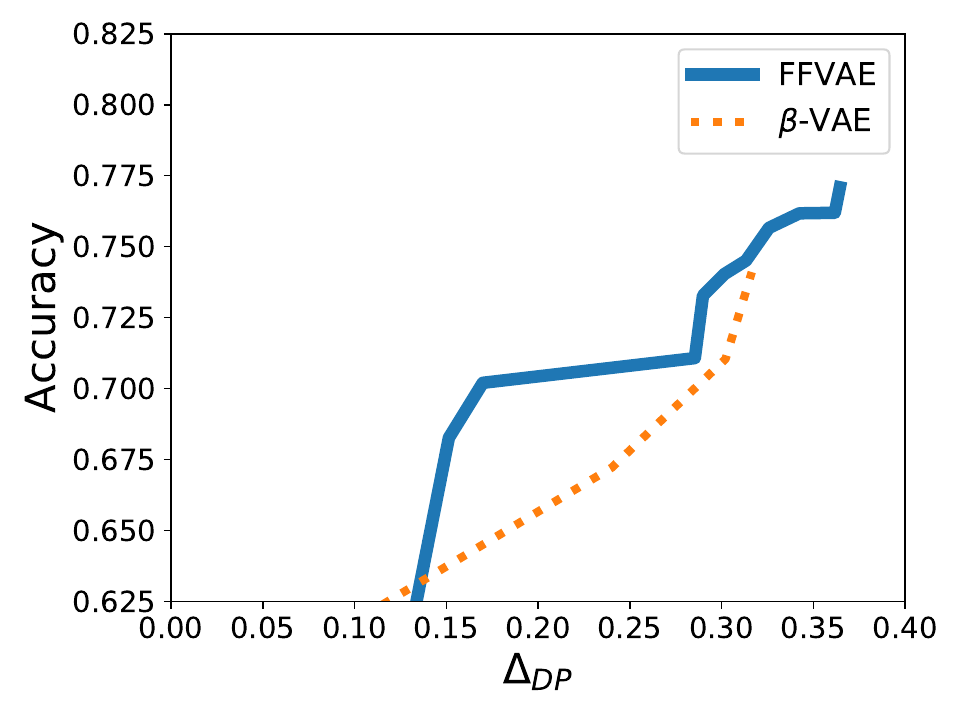}
\caption{$a$ = $\neg$ E $\wedge$ M}
\label{fig:predict_E-AND-M}
\end{subfigure}
\hfill
\begin{subfigure}[t]{\celebAFigWidth}
\includegraphics[width=\textwidth]{./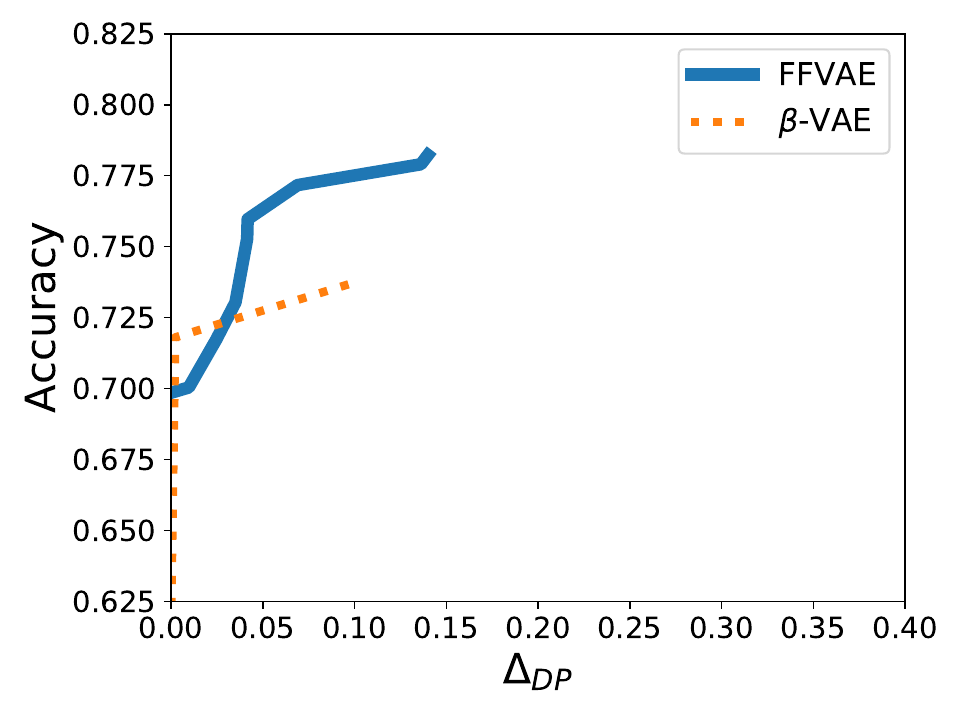}
\caption{$a$ = E $\wedge \neg$ M}
\label{fig:predict_E-AND-NOT-M}
\end{subfigure}
\hfill
\begin{subfigure}[t]{\celebAFigWidth}
\includegraphics[width=\textwidth]{./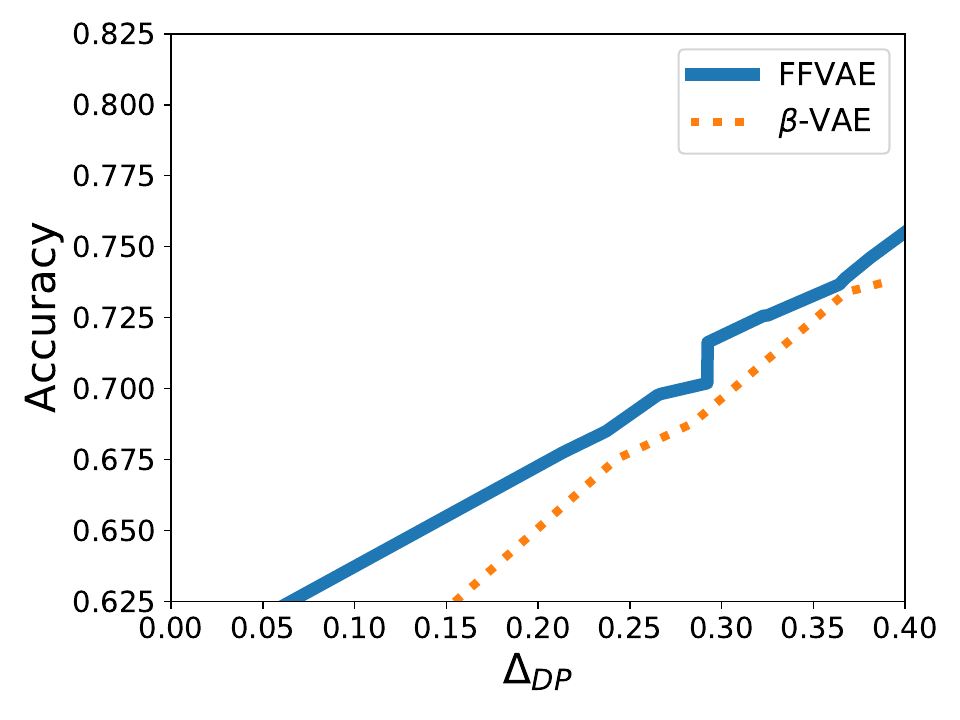}
\caption{$a$ = $\neg$ E $\wedge$ M}
\label{fig:predict_NOT-E-AND-M}
\end{subfigure}
\hfill
\begin{subfigure}[t]{\celebAFigWidth}
\includegraphics[width=\textwidth]{./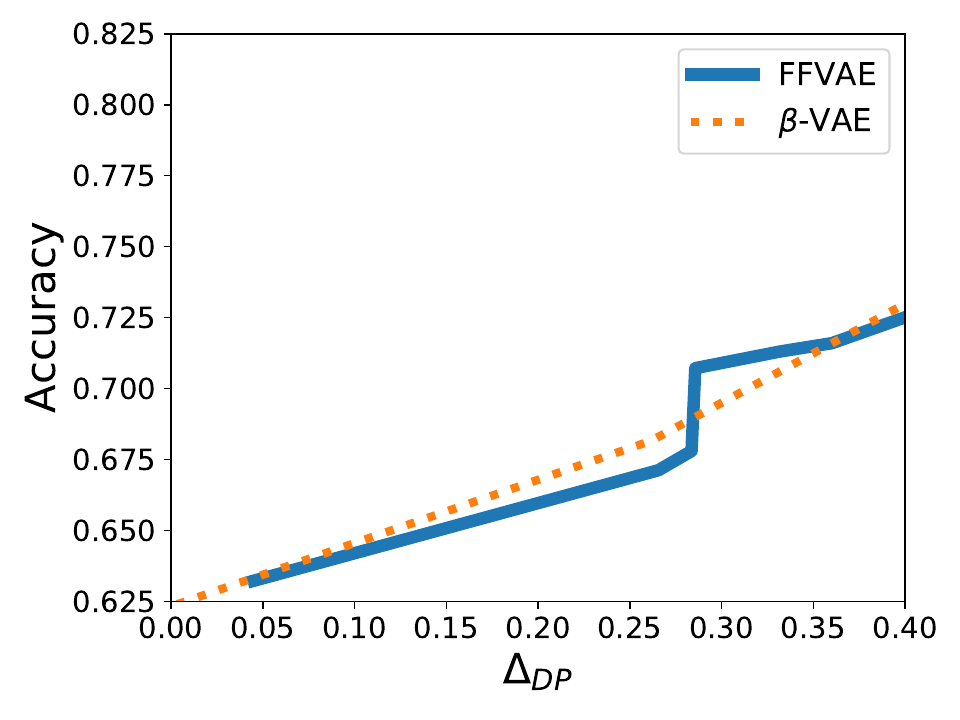}
\caption{$a$ = $\neg$ E $\wedge \neg$ M}
\label{fig:predict_NOT-E-AND-NOT-M}
\end{subfigure}
\hfill
\caption{
    Celeb-A subgroup fair classification results.
    Sensitive attributes: Chubby (C), Eyeglasses (E), and Male (M).
    $y$ = HeavyMakeup. 
    }
    \label{fig:celeba-pareto}
\end{figure}

\subparagraph{Dataset}
The CelebA\footnote{\scriptsize \url{http://mmlab.ie.cuhk.edu.hk/projects/CelebA.html}} dataset contains over $200,000$ images of celebrity faces.
Each image is associated with 40 human-labeled binary attributes (OvalFace, HeavyMakeup, etc.).
We chose three attributes, Chubby, Eyeglasses, and Male as sensitive attributes\footnote{
We chose these attributes because they co-vary relatively weakly with each other (compared with other attribute triplets), but strongly with other attributes.
Nevertheless the rich correlation structure amongst all attributes makes this a challenging fairness dataset; it is difficult to achieve high accuracy and low $\Delta_{DP}$.
}, and report fair classification results on 3 groups and 12 two-attribute-conjunction subgroups only (for brevity we omit three-attribute conjunctions).
To our knowledge this is the first exploration of fair representation learning algorithms on the Celeb-A dataset.
As in the previous sections we train the encoders on the train set, then evaluate performance of MLP classifiers trained on the encoded test set.

\subparagraph{Fair Classification}

We follow the fair classification audit procedure described above, where the held-out label HeavyMakeup---which was not used at encoder train time---is predicted by an MLP from the encoder representations.
When training the MLPs we take a fresh encoder sample for each minibatch (statically encoding the dataset with one encoder sample per image induced overfitting).
We found that training the MLPs on encoder means (rather than samples) increased accuracy but at the cost of very unfavorable $\Delta_{DP}$.
We also found that FactorVAE-style adversarial training does not scale well to this high-dimensional problem, so we instead optimize Equation \ref{eq:ffvae} using the biased estimator from \citet{chen2018isolating}.
Figure \ref{fig:celeba-pareto} shows Pareto fronts that capture the fairness-accuracy tradeoff for FFVAE and $\beta$-VAE.

While neither method dominates in this challenging setting, FFVAE achieves a favorable fairness-accuracy tradeoff across many of subgroups.
We believe that using sensitive attributes as side information gives FFVAE an advantage over $\beta$-VAE in predicting the held-out label.
In some cases (e.g., $a$=$\neg$E$\wedge$M) FFVAE achieves better accuracy at all $\Delta_{DP}$ levels, while in others (e.g., $a$=$\neg$C$\wedge\neg$E)
, FFVAE did not find a low-$\Delta_{DP}$ solution.
We believe Celeb-A--with its many high dimensional data and rich label correlations---is a useful test bed for subgroup fair machine learning algorithms, and we are encouraged by the reasonably robust performance of FFVAE in our experiments.
\section{Related Work}

\paragraph{Algorithmic fairness}
Interest in fair machine learning is burgeoning as researchers seek to define and mitigate unintended harm in automated decision making systems.
Definitional works have been broadly concerned with \textit{group fairness} or \textit{individual fairness}.
\citet{dwork2012fairness} discussed individual fairness within the owner-vendor framework we utilize. 
\citet{zemel2013learning} encouraged elements of both group and individual fairness via a regularized objective.
An intriguing body of recent work unifies the individual-group dichotomy by exploring fairness at the intersection of multiple group identities, and among small subgroups of individuals \cite{kearns2017preventing,hebert2017calibration}.

\citet{calmon2017optimized} and \citet{hajian2015discrimination} explored fair machine learning by pre- and post-processing training datasets.
\citet{mcnamara2017provably} provides a framework where the data producer, user, and regulator have separate concerns, and discuss fairness properties of representations. 
\citet{louizos2015variational} give a method for learning fair representations with deep generative models by using maximum mean discrepancy \citep{gretton2007kernel} to eliminate disparities between the two sensitive groups.

\paragraph{Adversarial training}
Adversarial training for deep generative modeling was popularized by \citet{goodfellow2014generative} and applied to deep semi-supervised learning \citep{salimans2016improved,odena2016semi} and segmentation \citep{luc2016semantic}, although similar concepts had previously been proposed for unsupervised and supervised learning \citep{schmidhuber1992learning,gutmann2010noise}.
\citet{ganin2016domain} proposed adversarial representation learning for domain adaptation, which resembles fair representation learning in the sense that multiple distinct data distributions (e.g., demographic groups) must be expressively modeled by a single representation.

\citet{edwards2015censoring} made this connection explicit by proposing adversarially learning a classifier that achieves demographic parity. 
This work is the most closely related to LAFTR, although our choice of balanced $\ell_1$ learning objective (rather than standard cross entropy loss) is more closely aligned to the goal of ensuring statistical parity in the learned representations.
Recent work has explored the use of adversarial training to other notions of group fairness.
\citet{beutel2017data} explored the particular fairness levels achieved by the algorithm from \citet{edwards2015censoring}, and demonstrated that they can vary as a function of the demographic unbalance of the training data.
In work concurrent to ours, \citet{zhang2018mitigating} use an adversary which attempts to predict the sensitive variable solely based on the classifier output, to learn an equal opportunity fair classifier.
Whereas they focus on fairness in classification outcomes, in our work we allow the adversary to work directly with the learned representation, which we show yields fair and transferable representations that in turn admit fair classification outcomes.
%

%
\paragraph{Disentanglement}
The search of independent latent components that explain observed data has long been a focus on the probabilistic modeling community \cite{comon1994independent,hyvarinen2000independent,bach2002kernel}.
In light of the increased prevalence of neural networks models in many data domains, the machine learning community has renewed its interest in learned features that ``disentangle'' semantic factors of data variation.
The introduction of the $\beta$-VAE \citep{higgins2016beta}, as discussed in Section \ref{sec:ffvae:background-vae}, motivated a number of subsequent studies  that examine why adding additional weight on the KL-divergence of the ELBO encourages disentangled representations \citep{alemi2018fixing, burgess2018understanding}.
\citet{chen2018isolating,kim2018disentangling} and \citet{esmaeili2018structured} argue that decomposing the ELBO and penalizing the total correlation increases disentanglement in the latent representations. 
\citet{locatello2018challenging} conduct extensive experiments comparing existing unsupervised disentanglement methods and metrics. 
They conclude pessimistically that learning disentangled representations requires inductive biases and possibly additional supervision, but identify fair machine learning as a potential application where additional supervision is available by way of sensitive attributes.

\paragraph{Compositional representations}
FFVAE provides multi-attribute fair representation learning, which we accomplish by using sensitive attributes as labels to induce a factorized structure in the aggregate latent code.
\citet{bose2018compositional} proposed a compositional fair representation of graph-structured data.
\citet{kingma2014semi} previously incorporated (partially-observed) label information into the VAE framework to perform semi-supervised classification.
Several recent VAE variants have incorporated label information into latent variable learning for image synthesis \cite{klys2018learning} and single-attribute fair representation learning \cite{song2018learning,botros2018hierarchical,moyer2018invariant}.
\section{Discussion}
This Chapter presented some approaches for mitigating algorithmic discrimination by learning a data representation that serves as a bottleneck, promoting group parity in downstream learning processes that consume the representation.
This approach has found considerable purchase within the research community: firms have implemented fair representation learning in their products and open-source software~\citep{beutel_putting_2019,bellamy_ai_2018}, while researchers have extended or analyzed the adversarial approach proposed in this Chapter (discussed concurrently by other research groups \cite{beutel2017data,zhang2018mitigating}) in a number of directions~\citep{
    agarwal2021towards,
    quadrianto2019discovering,
    adel2019one,
    song2018learning,
    moyer2018invariant,
    samadi2018price,
    zhao2019inherent}.
We orient our discussion here towards articulating known limitations of the representation learning approach, which we feel is especially important given that the goal of fair ML is to ensure that emerging technologies confer social equal benefits regardless of social position.

Fair representation learning has close ties to OOD-robustness~\citep{farnadi2022algorithmic} and causality~\citep{dieng2020algorithmic,madras_fairness_2019}.
In fact, recent work~\citep{makar_causally_2022,makar_fairness_2022} provides a causal interpretation for the group-normalized losses of per-group losses discussed in Section~\ref{gen-model}.
From this view point, a dataset relecting societal bias is thought of as resulting from a \emph{confounding} process on the observations.
Thus, fair learning amounts to using importance weights (i.e. normalizing group losses by their subpopulation size) to emulate training on unconfounded data, which provides robustness to distribution shift.
Thus, when used in an anti-causal prediction setting~\citep{schoelkopf_causal_2012}, LAFTR apparently provides some implicit causal inference!

This also suggests fair representations may naturally deliver a degree of robustness to distribution shift, which is supported by our theory (Section~\ref{gen-model}) and experiments (Section~\ref{sec:laftr:experiments}).
However, this natural OOD-robustness is somewhat limited, as fair representations are now known to be vulnerable to worst-case distribution shifts~\citep{lechner_impossibility_2021, lechner_inherent_2022}.
\footnote{
Such counterexamples can be quite ``worst-case'' relative to the data bias being addressed, and may be unrealistic. For example, if upon deployment there is a \emph{reversal} of which groups are privileged in the sense that their data correlate with ``positive'' labels (like loan approval), then a ``fair'' representation learned on the original data may lose its fairness properties upon deployment.
}
In general, fair regularizers and metrics (like other differentiable aggregate quantities such as cross-entropy loss) can be manipulated adversarially~\citep{aivodji_fairwashing_2019,shamsabadi_washing_2022,solans_poisoning_2020,ghosh_subverting_2022}.
These results provide a cautionary tale: fair representations should be deployed with extreme care, which could include a characterization of which distribution shifts are \emph{likely} during deployment or a security audit to assess vulnerability to adversarial actions like data poisoning.

One obstacle to applying fair representation learning within contemporary workflows is the somewhat onerous reliance on demographic labels during training.
More recent representation learning approaches eschew labeled supervision, instead ``pre-training'' a large network using unsupervised and self-supervised learning objectives over massive corpora of web-scraped data~\citep{radford_learning_2021}.
This pre-training process provides high-fidelity representations (sometimes called ``foundation models'') that can be adapted for use in a variety of downstream prediction tasks.
However, the reliance on web data for pre-training can induce harmful stereotypical associations in the learned representations \cite{hundt_robots_2022,bianchi_easily_2023}.

Addressing representational biases in pre-trained representations remains an important open question;
while the discussions in this Chapter focused on fairness in a (variational) autoencoder representations, which are derived using a relatively simpler training paradigm, they may shed some light on how to proceed.
The key to encouraging fairness with FFVAE (Section~\ref{sec:ffvae:ffvae}) was the use of \emph{supervised} disentanglement priors, which cannot be readily implemented in the pre-training regime due to the scale of data required.
Unforunately, VAEs with unsupervised disentanglement priors can struggle to learn disentangled representations when trained on data with strong correlation patterns~\citep{trauble_disentangled_2021}.
This suggests that updating the learning objective during pre-training (without explicit supervision) is unlikely to mitigate representational bias.
Perhaps representation bias in pre-trained models could be addressed after the pre-training process by using human feedback to inform a more targeted adaptation (along the lines of~\citet{bolukbasi2016man,santurkar2021editing}).
One important obstacle along this research direction is the lack of suitable measures of representation bias~\citep{bommasani2021opportunities}.
There is also a concern that despite adaptation methods providing a superficial improvement on representation fairness metrics, they could still admit performance disparities in downstream prediction~\citep{gonen2019lipstick,zhou-etal-2021-challenges}.

On the positive side, the bottleneck advocated by fair representation learning could ultimately be beneficial for other types of model failures.
This is especially relevant to the setting where models are severely ``over-parameterized'', that is, the number of model parameters (perhaps greatly) exceeds the number of available data points.
By definition, over-parameterized prediction problems are underspecified~\citep{damour_underspecification_2020}.
Despite this issue, neural networks tend to generalize well to \emph{i.i.d.} test data~\citep{bartlett_benign_2020,frei_benign_2022}, although they may generalize poorly for subpopulations that are underrepresented in the training data~\citep{sagawa_investigation_2020, wald_malign_2023}.
In this regard, the bottleneck effect of fair representation learning could be doubly effective, mitigating not only predictive disparities in prediction but also underspecification.
This is because (unlike when models are fine-tuned or trained from scratch), adapting smaller models (say, linear probes) on top of learned representation
mitigates the over-parameterization issue by greatly reducing the number of parameters being optimized.
Future work is needed to more completely characterize this issue.
  \chapter{Leveraging Shortcut Learning to Learn Robust Features}\label{chap:eiil}
\section{Towards Out-of-distribution Generalization}

The previous Chapter discussed the tendency of ML to exhibit disparate performance across demographic groups, even in spite of high aggregate accuracy.
These group disparities in model performance can be interpreted in terms of the model's sensitivity to \emph{distribution shift}: after training the model on a convex mixture of (non-overlapping) subpopulations, group fairness metrics can be expressed in terms of model behavior on shifted distributions where only one mixture component is present.\footnote{
    This specific type of distribution shift is called ``subpopulation shift''.
}
In fact, many types of model failure beyond algorithmic discrimination can be framed in terms of a lack of robustness to distribution shift.
That is precisely the purpose of this Chapter, where we will address the out-of-distribution (OOD) generalization problem more broadly.

The ML research community has demonstrated impressive predictive performance on controlled benchmarks with \emph{i.i.d.} evaluation data~\citep{he_delving_2015,rajpurkar_chexnet_2017}.
However, distribution shift abounds in practical settings, and subsequently model performance can severely degrade during deployment, even to below-chance predictions~\citep{geirhos2020shortcut}.
Tiny perturbations can derail classifiers, as shown by adversarial examples~\citep{szegedy2013intriguing} and common image corruptions~\citep{hendrycks2019benchmarking}.
Even new test sets collected from the same data acquisition pipeline induce distribution shifts that significantly harm
performance~\citep{recht2019imagenet, engstrom2020identifying}. 

\subsection{The Role of Side Information in Learning Robust Features}
Many approaches have been proposed to overcome the brittleness of supervised learning---e.g.  Empirical Risk Minimization (ERM)~\citep{vapnik_principles_1991}---in the face of distribution shifts.
Robust optimization aims to achieve good performance on any distribution close to the training distribution~\citep{goodfellow2014explaining,duchi2021statistics,madry2017towards}.
Invariant learning on the other hand tries to go one step further, to generalize to distributions potentially far away from the training distribution.
Such methods pose OOD generalization in terms of learning ``robust features'' that are reliable in all domains of interest~\citep{peters2016causal}, a framing that we follow in this Chapter.

Unfortunately, common invariant learning methods typically come at a serious disadvantage: they require datasets to be partitioned into multiple domains or environments.\footnote{We use ``domains'', ``environments'' and ``groups''/``subgroups'' interchangeably.}
Environment assignments should implicitly define variation the algorithm should become invariant or robust to, but often such environment labels are unavailable at training time, either because they are difficult to obtain or due to privacy limitations. 
In some cases, relevant side-information or metadata---human annotations, or device ID used to take a medical image, hospital or department ID, etc.---may be abundant, but it remains unclear how best to specify environments based on this information \citep{srivastava2020robustness}.

A similar issue arises in mitigating algorithmic unfairness, where sensitive attributes may be difficult to define in practice \citep{hanna2020towards}, or their values may be impossible to collect.
The focus of this Chapter is to overcome the difficulty of manual environment specification by developing a new method inspired by fairness approaches for unknown group memberships~\citep{kim2019multiaccuracy,lahoti2020fairness}.
Our approach is precisely to leverage the tendency of ERM to prefer ``shortcut'' features~\citep{ferrari_recognition_2018,geirhos_shortcut_2020} (also referred to as ``spurious'' features~\citep{sober_venetian_2001,pearl_causal_2009,lopez-paz_dependence_2016}), which we show can be used to partition the training data into \emph{inferred} environments suitable for use with standard invariant learning approaches.

\section{Environment Inference for Invariant Learning}

\subsection{Background: Invariant Learning}\label{sec:exchanging-lessons}

\paragraph{Notation}
\newcommand{\pobs}{p^{obs}}
\newcommand{\phand}{p^{obs}}
\newcommand{\Xsp}{\mathcal{X}}
\newcommand{\Ysp}{\mathcal{Y}}
\newcommand{\Zsp}{\mathcal{H}}
\newcommand{\Esp}{\mathcal{E}^{obs}}
\newcommand{\RR}{\mathbb{R}}
\newcommand{\Normal}{\mathcal{N}}
\newcommand{\obs}{\mathbf{x}} 
\newcommand{\cf}{\mathbf{v}} 
\newcommand{\ncf}{\mathbf{z}} 
\newcommand{\pcf}{\hat{\mathbf{w}_v}} 
\newcommand{\pncf}{\hat{\mathbf{w}_z}} 
\newcommand{\lbl}{\mathbf{y}} 
\newcommand{\spa}{\RR^{N}} 
\newcommand{\hsp}{\RR^{N/2}} 
\newcommand{\lblnoise}{\theta_y}
Let $\Xsp$ be the input space, $\Esp$ the set of training environments (a.k.a. ``domains''), $\Ysp$ the target space.
Let $x, y, e \sim \pobs(x, y, e)$ 
be observational data, with $x \in \Xsp$, $y \in \Ysp$, and $e \in \Esp$.
$\Zsp$ denotes a representation space, from which a classifier $w \circ \Phi$ (that maps to the logit space of $\Ysp$ via a linear map $w$) can be applied.
${\Phi: \Xsp \rightarrow \Zsp}$ denotes the parameterized mapping or ``model'' that we optimize.
We refer to $\Phi(x) \in \Zsp$ as the ``representation'' of example $x$.
$\hat y \in \Ysp$ denotes a hard prediction derived from the classifier by stochastic sampling or probability thresholding.
$\ell: \Zsp \times \Ysp \rightarrow \R$ denotes a scalar loss, which guides the learning.

The empirical risk minimization (ERM) solution is found by minimizing the global risk, expressed as the expected loss over the observational distribution:
\begin{equation}
    C^{ERM}(\Phi) = \E_{\pobs(x, y, e)}[\ell(\Phi(x), y)].
\end{equation}

\paragraph{Representation Learning with Environment Labels}
Domain generalization is concerned with achieving low error rates on unseen test distributions $p(x, y|e_{test})$ for $e_{test} \notin \Esp$.
Domain adaption is a related problem where model parameters can be adapted at test time using unlabeled data.
\emph{Invariant Learning} approaches such as Invariant Risk Minimization (IRM) \citep{arjovsky2019invariant} and Risk Extrapolation (REx) \citep{krueger2020out} were proposed to overcome the limitations of adversarial domain-invariant representation learning \citep{zhao2019learning} by discovering invariant relationships between inputs and targets across domains.
Invariance serves as a proxy for causality, as features representing ``causes'' of target labels rather than effects will generalize well under intervention.
In IRM, a representation $\Phi(x)$ is learned that performs optimally within each environment---and is thus invariant to the choice of environment $e \in \Esp$---with the ultimate goal of generalizing to an unknown test dataset $p(x,y|e_{test})$.
Because optimal classifiers under standard loss functions can be realized via a conditional label distribution ($f^*(x) = \E[y|x]$), an invariant representation $\Phi(x)$ must satisfy the following \emph{Environment Invariance Constraint}: 
\begin{align}
\label{eq:irm}
    \E[y|\Phi(x) = h,e_1] = \E[y|\Phi(x) = h,e_2] \nonumber\\  \, \, \, \, \forall \, h \in \Zsp \, \, \, \, \forall \,  e_1, e_2 \in  \Esp.
\tag{\textsc{EIC}}
\end{align}
Intuitively, the representation $\Phi(x)$ encodes features of the input $x$ that induce the same conditional distribution over labels across each environment.
This is closely related to the notion of ``group sufficiency'' studied in the fairness literature \citep{liu2018implicit}.

Because trivial representations such as mapping all $x$ onto the same value may satisfy environment invariance, other objectives must be introduced to encourage the predictive utility of $\Phi$.
\citet{arjovsky2019invariant} propose IRM as a way to satisfy (\ref{eq:irm}) while achieving a good overall risk.
As a practical instantiation, the authors introduce IRMv1, a regularized objective enforcing simultaneous optimality of the same classifier $w \circ \Phi$ in all environments;\footnote{
$w \circ \Phi$ yields a classification decision via linear weighting on the representation features.}
here w.l.o.g. $w=\bar w$ is a constant scalar multiplier of 1.0 for each output dimension.
Denoting by $R^e = \E_{\pobs(x, y|e)}[\ell]$ the \mbox{per-environment} risk, the objective to be minimized is
\begin{equation}
    \label{eq:irmv1}
    C^{IRM}(\Phi) = 
    \sum_{e \in \Esp}
    R^e(\Phi)
    + \lambda ||
    \nabla_{\bar w}
    R^e(\bar w \circ \Phi)||.
    \tag{IRMv1}
\end{equation}

\paragraph{Robust Optimization}
Another approach at generalizing beyond the training distribution is robust optimization \citep{ben2009robust}, where one aims to minimize the worst-case loss for every subset of the training set, or other well-defined perturbation sets around the data \citep{duchi2021statistics, madry2017towards}.
Rather than optimizing a notion of invariance, Distributionally Robust Optimization (DRO) \citep{duchi2021statistics} seeks good performance for all nearby distributions by minimizing the worst-case loss: ${\max_q \E_{q}[\ell] \ \text{s.t.} \  D(q||p) < \epsilon}$, where $D$ denotes similarity between two distributions (e.g., $\chi^2$ divergence) and $\epsilon$ is a hyperparameter.
The objective can be computed as an expectation over $p$ via per-example importance weights ${\gamma_i = \frac{q(x_i, y_i)}{p(x_i, y_i)}}$.
GroupDRO operationalizes this principle by sharing importance weights across training examples, using environment labels to define relevant groups for this parameter sharing.
This can be expressed as an expected risk under a worst-case distribution over group proportions:
\[
C^{GroupDRO}(\Phi) = \max_g \E_{g(e)}[R^e(\Phi)]
\]
This is a promising approach towards tackling distribution shift with deep nets \citep{sagawa2019distributionally}, and we show in our experiments how environment inference enables application of GroupDRO to improve over standard learning without requiring group labels.

\paragraph{Limitations of Invariant Learning}
While the use of invariant learning to tackle domain generalization is still relatively nascent, several known limitations merit discussion.
IRM can provably find an invariant predictor that generalizes OOD, but only under restrictive assumptions, such as linearity of the data generative process and access to many environments \citep{arjovsky2019invariant}.
However, most benchmark datasets are in the non-linear regime; \citet{rosenfeld2020risks} demonstrated that for some non-linear datasets, the IRMv1 penalty term induces multiple optima, not all of which yield invariant predictors.
Nevertheless, IRM has found empirical success in some high dimensional non-linear classification tasks (e.g. CMNIST) using just a few environments \citep{arjovsky2019invariant,koh2020wilds}.
On the other hand, it was recently shown that, using careful and fair model selection strategies across a suite of image classification tasks, neither IRM nor other invariant learners consistently beat ERM in OOD generalization \citep{gulrajani2020search}.
This last study underscores the importance of \emph{model selection} in any domain generalization approach, which we discuss further below.
 
\subsection{Invariance Without Environment Labels}\label{sec:methods}
In this Section we propose a novel invariant learning framework that does not require a priori domain/environment knowledge. 
This framework is useful in algorithmic fairness scenarios when demographic makeup is not directly observed; it is also applicable in standard machine learning settings when relevant environment information is either unavailable or not clearly identified.
In both cases, a method that sorts training examples $\mathcal{D}$ into environments that maximally separate the spurious features---i.e. inferring populations $\mathcal{D}_1 \cup \mathcal{D}_2=\mathcal{D}$---can facilitate effective
invariant learning.

\paragraph{Environment Inference for Invariant Learning}

Our aim is to find environments that maximally violate the invariant learning
principle.
We can then evaluate the quality of these inferred environments
by utilizing them in an invariant learning method.
Our overall algorithm EIIL is a two-stage process: (1) Environment Inference (EI): infer the environment assignments; and (2) Invariant Learning (IL): run invariant learning given these assignments.

The primary goal of invariant-learning is to find features that are domain-invariant,
i.e, that reliably predict the true class regardless of the domain.
The EI phase aims to identify domains that help uncover these features.
This phase depends on a reference classifier
$\tilde \Phi$; which maps inputs $X$ to outputs $Y$, and defines a putative
set of invariant features.
This model could be
found using ERM on $\pobs(x,y)$, for example.
Environments are then derived that partition the mapping of the reference
model which maximally violate the invariance principle, i.e.,
where for the reference classifier the same feature vector is associated
with examples of different classes.
While any of the aforementioned invariant learning objectives can be
incorporated into the EI phase, the invariance principle or group-sufficiency---as expressed in (\ref{eq:irm})---is a natural fit, since it explicitly depends on learned feature representations $\Phi$.

To realize an EI phase focused on the invariance principle, we utilize the IRM objective (\ref{eq:irmv1}).
We begin by noting that the per-environment risk $R^e$ depends implicitly on the manual environment labels from the dataset.
For a given environment $e'$, we denote $\indicator{e_i = e'}$ as an indicator that example $i$ is assigned to that environment, and re-express the per-environment risk as:

\begin{align}
    R^{e}(\Phi) &= \frac{1}{\sum_{i'}\indicator{e_{i'} = e}
} \sum_i \indicator{e_i = e}
 \ell(\Phi(x_i), y_i)
\end{align}

Now we relax this risk measure to search over the space of environment assignments.
We replace the manual assignment indicator $\indicator{e_{i} = e'}$, with a probability distribution $\mathbf{q}_i(e'):=q(e'|x_i, y_i)$, representing a soft assignment of the $i$-th example to the $e'$-th environment.
To \emph{infer} environments, we optimize $q(e|x_i,y_i)$ so that it captures the worst-case environments for a fixed classifier $\Phi$.
This corresponds to maximizing w.r.t. $\mathbf{q}$ the following soft relaxation of the regularizer\footnote{
We omit the average risk term as we are focused on maximally violating (\ref{eq:irm}) regardless of the risk.
} from $C^{IRM}$:
\begin{align}
    C^{EI}(\Phi, \mathbf{q}) &= ||\nabla_{\bar w} \tilde R^e(\bar w \circ \Phi, \mathbf{q})||, \label{eq:ei-obj} \\
    \tilde R^e(\Phi, \mathbf{q}) &= \frac{1}{N} \sum_i \mathbf{q}_i(e) \ell(\Phi(x_i), y_i) \label{eq:soft-risk}
\end{align}
where $\tilde R^e$ represents a soft per-environment risk that can pass gradients to the environment assignments $\mathbf{q}$.
See Algorithm \ref{algo:eiil} in Appendix \ref{sec:pseudocode} for pseudocode.
j
To summarize, EIIL involves the following sequential\footnote{
We also tried jointly training $\Phi$ and $\mathbf{q}$ using alternating updates, as in GAN training, but did not find empirical benefits. 
This formulation introduces optimization and conceptual difficulties, e.g. ensuring that invariances apply to all environments discovered throughout learning.
} approach:
\begin{enumerate}
    \item Input \emph{reference model} $\tilde \Phi$;
    \item Fix $\Phi \leftarrow \tilde \Phi$ and optimize the EI objective to infer environments:  ${\mathbf{q}^* = \argmax_\mathbf{q} C^{EI}(\tilde \Phi, \mathbf{q})}$; 
    \item Fix $\mathbf{\tilde q} \leftarrow \mathbf{q}^*$ and optimize the IL objective to yield the new model: $\Phi^* = \argmin_\Phi C^{IL}(\Phi, \mathbf{\tilde q})$ 
\end{enumerate}

In our experiments we consider binary environments and parameterize the $\mathbf{q}$ as a vector of probabilities for each example in the training data.\footnote{Note that under this parameterization, when optimizing the inner loop with fixed $\Phi$ the number of parameters equals the number of data points (which is small relative
to standard neural net training).
We leave amortization of $q$ to future work.}
EIIL is applicable more broadly to any environment-based
invariant learning objective through the choice of $C^{IL}$ in Step 3.
We present experiments using $C^{IL}\in \{C^{IRM}, C^{GroupDRO}\}$, and leave a more complete exploration to future work.

\paragraph{Analyzing the Inferred Environments}
\label{sec:analysis}
To characterize the ability of EIIL to generalize to unseen test data, we now examine the inductive bias for generalization provided by the reference model $\tilde \Phi$.
We state the main result here and defer the proofs to Appendix \ref{sec:proofs}.
Consider a dataset with some feature(s) $z$ which are spurious, and other(s) $v$ which are valuable/invariant/causal w.r.t. the label $y$.
Our proof considers binary features/labels and two environments, but the same argument extends to other cases.
Our goal is to find a model $\Phi$ whose representation $\Phi(v, z)$ is invariant w.r.t. $z$ and focuses solely on $v$.

\begin{proposition}\label{thm:1}
Consider environments that differ in the degree to which the label $y$ agrees with the spurious features $z$: $\mathbb{P}(\indicator{y=z}|e_1) \neq \mathbb{P}(\indicator{y=z}|e_2)$:
then a reference model $\tilde \Phi = \Phi_{Spurious}$ that is invariant to valuable features $v$ and solely focuses on spurious features $z$ maximally violates the invariance principle (\ref{eq:irm}).
Likewise, consider the case with fixed representation $\Phi$ that focuses on the spurious features: then a choice of environments that maximally violates (\ref{eq:irm}) is $e_1 = \{v,z,y|\indicator{y = z}\}$ and $e_2 = \{v,z,y|\indicator{y \neq z}\}$.
\end{proposition}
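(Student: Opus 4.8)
The plan is to first instantiate the hypotheses with the natural structural model: the valuable feature $v$ is \emph{invariant} with respect to $y$, which I read as $y \perp e \mid v$ (the mechanism producing $y$ from $v$, including label noise, does not depend on the environment), while the spurious feature enters only through an environment-dependent relation to $y$, captured concretely by $z = y \oplus \xi_e$ with $\xi_e \sim \mathrm{Bernoulli}(\beta_e)$ independent of $(v,y)$, so that $\mathbb{P}(\indicator{y=z}\mid e) = 1-\beta_e$ and the environments differ iff $\beta_{e_1}\neq\beta_{e_2}$. I would also assume every cell $(y,z)\in\{0,1\}^2$ has positive probability, so $z$ is neither perfectly predictive nor perfectly anti-predictive of $y$. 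Throughout I would measure violation of (\ref{eq:irm}) by $V(\Phi,\{e_1,e_2\}) := \max_h \bigl| \mathbb{E}[y\mid\Phi(x)=h,e_1] - \mathbb{E}[y\mid\Phi(x)=h,e_2] \bigr| \in [0,1]$, which is $0$ exactly when (\ref{eq:irm}) holds, and prove each half as a statement about maximizing $V$.

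\textbf{Part 2 (the easy direction).} Fix $\Phi=\Phi_{Spurious}$; since $\Phi$ is a deterministic function of $z$ alone, conditioning on $\Phi(x)=h$ is the same as conditioning on $z=h$ (relabel $h\in\{0,1\}$). On $e_1=\{v,z,y\mid\indicator{y=z}\}$ we have $y=z$ a.s., so $\mathbb{E}[y\mid z=h,e_1]=h$; on $e_2=\{v,z,y\mid\indicator{y\neq z}\}$ we have $y=1-z$ a.s., so $\mathbb{E}[y\mid z=h,e_2]=1-h$. Hence the per-level discrepancy is $|h-(1-h)|=1$ for each $h$, giving $V=1$, the global maximum of $V$ over all environment partitions. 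Positivity of the four cells makes both $e_1,e_2$ non-empty, so this split is a valid environment assignment attaining the largest possible violation, which is exactly the claim.

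\textbf{Part 1 (the harder direction).} Now the environments are fixed with $\beta_{e_1}\neq\beta_{e_2}$ and I would show $\Phi_{Spurious}$ attains $\max_\Phi V(\Phi,\{e_1,e_2\})$ in two steps. (i) A Bayes computation gives $\mathbb{E}[y\mid z=1,e] = \frac{\pi(1-\beta_e)}{\pi(1-\beta_e)+(1-\pi)\beta_e}$ with $\pi=\mathbb{P}(y=1)$, whose derivative in $\beta_e$ is $-\pi(1-\pi)/(\cdot)^2 < 0$; so it is strictly monotone, and $\beta_{e_1}\neq\beta_{e_2}$ forces $V(\Phi_{Spurious})>0$. (ii) No $\Phi$ does strictly better: in the binary setting every $\Phi$ is a coarsening of the four-cell support of $(v,z)$, so either each of its level sets is a union of $z$-fibers—in which case it reduces to the pure-$z$ computation of (i)—or some level set $L$ mixes cells with different $v$ values, in which case $\mathbb{E}[y\mid\Phi(x)=h,e]$ is a weighted average over the $v$-values appearing in $L$ of terms $\mathbb{E}[y\mid v=v',z=z',e]$ in which the $v$-indexed contribution carries no environment signal because $y\perp e\mid v$; the averaging bound $\bigl|\sum_k\lambda_k(a_k-b_k)\bigr|\le\max_k|a_k-b_k|$ then controls the cross-environment gap at that level by the pure-$z$ value. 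Assembling the cases yields $V(\Phi)\le V(\Phi_{Spurious})$ for all $\Phi$, and the binary two-environment argument extends to finitely many feature values and more environments by taking maxima over levels.

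\textbf{Where the difficulty lies.} The Bayes identities and Part 2 are routine; the crux is step (ii) of Part 1—showing in full generality that folding valuable-feature information into a level set can only \emph{dilute} the environment-induced discrepancy. The subtlety is that the mixture weights $\lambda_{v'}=\mathbb{P}(v=v'\mid\Phi(x)=h,e)$ themselves depend on $e$ through the $z$-channel, so the averaging inequality must be applied on the correct conditional law and likely supplemented by $y\perp e\mid v$ in a second place; verifying that no fine-grained or $v\oplus z$-type partition beats $\Phi_{Spurious}$ (which in some extreme correlation regimes is delicate) is the main obstacle and may require a mild regularity assumption. I would also sanity-check the reading in which ``maximally violates'' refers to the penalty $C^{EI}$ of (\ref{eq:ei-obj}) rather than the (\ref{eq:irm}) gap, reconciling the two via the fact that $C^{EI}(\Phi,\mathbf{q})=0$ iff $\bar w\circ\Phi$ is simultaneously optimal across the (soft) environments encoded by $\mathbf{q}$.
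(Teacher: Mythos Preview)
Your Part 2 is essentially the paper's proof. The paper defines a ``group sufficiency gap'' $\Delta(S,e)=\E\bigl[\,|\E[y\mid S(x),e_1]-\E[y\mid S(x),e_2]|\,\bigr]$ (an expectation over representation levels rather than your $\max_h$, which makes no difference here since the per-level gap turns out to be constant), fixes $S(x)=z$ and $e=\indicator{y=z}$, and observes that at each value of $z$ the two conditionals are $0$ and $1$, so $\Delta=1$, the hard upper bound. That is the entire argument.

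For Part 1 you attempt more than the paper actually proves. The appendix quietly narrows the first claim: rather than treating arbitrary environment pairs with $\mathbb{P}(\indicator{y=z}\mid e_1)\neq\mathbb{P}(\indicator{y=z}\mid e_2)$, it restates the claim with environments \emph{defined by} agreement (the split $e_1=\{y=z\}$, $e_2=\{y\neq z\}$) and then reuses the same computation---under that split the spurious classifier hits $\Delta=1$, and nothing can exceed $1$, so optimality is automatic. No coarsening or averaging argument appears, and none is needed once the claim is read this way. So the paper effectively collapses both halves of the proposition into the single observation you gave in Part 2.

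Your step (ii) in Part 1 is where the real difficulty sits, and your worry is justified: it is not merely a technicality. With $f(p,\beta)=\frac{(1-\beta)p}{(1-\beta)p+\beta(1-p)}$, the cross-environment gap $|f(p,\beta_{e_1})-f(p,\beta_{e_2})|$ is not monotone in $p$; for instance $\beta_{e_1}=0.1,\beta_{e_2}=0.3$ gives gap $\approx 0.29$ at $p=0.3$ versus $0.20$ at $p=0.5$. Thus a finer representation $\Phi=(v,z)$ can have a level $(v',z)$ with $p_{v'}=\mathbb{P}(y=1\mid v')$ at which the gap strictly exceeds the gap at $\pi=\mathbb{P}(y=1)$, so under your $\max_h$ functional $\Phi_{Spurious}$ is not optimal for general $(\beta_{e_1},\beta_{e_2})$. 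The first sentence of the proposition should therefore be read---as the paper's own proof implicitly does---as a statement about the extremal agreement-based split, not about arbitrary environments with differing agreement rates.
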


If environments are split according to agreement of $y$ and $z$,
then the constraint from (\ref{eq:irm}) is satisfied by a representation that ignores $z$: $\Phi(x) \perp z$.
Unfortunately this requires a priori knowledge of either the spurious feature $z$ or a reference model $\tilde \Phi = \Phi_{Spurious}$ that extracts it.
When the sub-optimal solution
$\Phi_{Spurious}$ is not a priori known, it will sometimes be recovered directly from the training data; for example in CMNIST we find that $\Phi_{ERM}$ approximates $\Phi_{Color}$.
This allows EIIL to find environment partitions providing the starkest possible contrast for invariant learning.

Even if environment partitions are available, it may be possible to improve performance by inferring new partitions from scratch.
It can be shown 
that the environments provided in the {CMNIST} dataset \citep{arjovsky2019invariant} do not maximally violate (\ref{eq:irm}) for a reference model $\tilde \Phi = \Phi_{Color}$, and are thus not maximally informative for learning to ignore color.
Accordingly, EIIL improves test accuracy for IRM compared with the hand-crafted environments (Table \ref{tab:table_teaser}).

If $\tilde \Phi = \Phi_{ERM}$ focuses on a mix of $z$ and $v$, EIIL may still find environment partitions that enable effective invariant learning, as we find in the Waterbirds dataset, but they are not guaranteed to maximally violate (\ref{eq:irm}). 

\paragraph{Binned Environment Invariance}

We can derive a heuristic algorithm for EI that maximizes violations of the invariance principle by stratifying examples into discrete bins (i.e. confidence bins for 1-D representations), then sorting them into environments within each bin.
This algorithm provides insight into both the EI task and the relationship between the IRMv1 regularizer and the invariance principle.
We define bins in the space of the learned representation $\Phi(x)$,
indexed by $b$; $s_{ib}$ indicates whether example $i$ is in bin $b$.
The intuition behind the algorithm is that a simple approach can separate the examples in a bin to achieve the maximal value of the (\ref{eq:irm}).

The degree to which the environment assignments violate (\ref{eq:irm}) can be expressed as follows, which can then be approximated in terms of the bins:
{\small
\begin{eqnarray}
\Delta \rm{EIC} & = & (E[y|\Phi(x),e_1] - E[y|\Phi(x),e_2])^2 \label{eq:softEIC} \nonumber\\ 
& \approx & \sum_b (\sum_i s_{ib} y_i \mathbf{q}_i(e=e_1) - \sum_i s_{ib} y_i \mathbf{q}_i(e=e_2))^2 \nonumber\label{eq:binEIC}
\end{eqnarray}
}
Inspection of this objective leads to a simple algorithm: assign all the $y=1$ examples to one
environment, and $y=-1$ examples to the other. This results in the
expected values of $y$ equal to $\pm 1$, which achieves the maximum possible value of $\Delta \rm{EIC}$ per bin.\footnote{For very confident reference models, where few confidence bins are populated, splitting based on $y$ relates to partitioning the error cases. This splitting strategy is not the only possible solution, as multiple global optima exist.
}

This binning leads to an important insight into the relationship 
between the IRMv1 regularizer and (\ref{eq:irm}). Despite the analysis in \citet{arjovsky2019invariant}, this link is not completely
clear \citep{kamath2021does, rosenfeld2020risks}.
However, in the situation considered here, with binary classes, we can use this binning approach to show a tight link between the two objectives:
finding an environment assignment that maximizes the
violation of our softened IRMv1 regularizer (Equation \ref{eq:ei-obj}) also maximizes the violation
of the softened Environment Invariance Constraint ($\Delta \rm{EIC}$); see Appendix \ref{sec:majmin} for the proof.
This binning approach highlights the dependence on the reference model, as the bins are defined in its learned $\Phi$ space; the reference model also played a key role in the analysis above. We analyze it empirically in Section \ref{sec:secondary-results}.

\section{Experiments}\label{sec:experiments}
We proceed by describing the remaining datasets under study in Section \ref{sec:datasets}.
We then present the main results measuring the ability of EIIL to handle distribution shift in Section \ref{sec:primary-results}, and offer a more detailed analysis of the EIIL solution and its dependence on the reference model in Section \ref{sec:secondary-results}.
See \mbox{\url{https://github.com/ecreager/eiil}} for code.

\paragraph{Model selection}
Tuning hyperparameters when train and test distributions differ is a difficult open problem \citep{krueger2020out,gulrajani2020search}.
Where possible, we reuse effective hyperparameters for IRM and GroupDRO found by previous authors.
Because these works allowed limited validation samples for hyperparameter tuning (all baseline methods benefit fairly from this strategy), these results represent an optimistic view on the ability for invariant learning.
As discussed above, the choice of reference classifier is of crucial importance when deploying EIIL; if worst-group performance can be measured on a validation set, this could be used to tune the hyperparameters of the reference model (i.e. model selection subsumes reference model selection).
See Appendix \ref{sec:experimental_details} for further discussion.

\subsection{Datasets}\label{sec:datasets}
\paragraph{CMNIST}
CMNIST is a noisy digit recognition task\footnote{MNIST digits are grouped into $\{0,1,2,3,4\}$ and $\{5,6,7,8,9\}$ so the CMNIST target label $y$ is binary.}
where color is a spurious feature that correlates with the label at train time but anti-correlates at test time, with the correlation strength at train time varying across environments \citep{arjovsky2019invariant}.
In particular, the two training environments have $Corr(color, label) \in \{0.8,0.9\}$ while the test environment has $Corr(color, label)=0.1$.
Crucially, label noise is applied by flipping $y$ with probability $\lblnoise = 0.25$.
This implies that shape (the invariant feature) is marginally less reliable than color in the training data, so ERM ignores shape to focus on color and suffers from below-chance test performance.

\paragraph{Waterbirds}

To evaluate whether EIIL can infer useful environments in a more challenging setting with high-dimensional images, we turn to the Waterbirds dataset \citep{sagawa2019distributionally}.
Waterbirds is a composite dataset that combines $4,795$ bird images from the CUB dataset \citep{wah2011caltech} with background images from the Places dataset \citep{zhou2017places}.
It examines the proposition (which frequently motivates invariant learning approaches) that modern networks often learn spurious background features (e.g. green grass in pictures of cows) that are predictive of the label at train time but fail to generalize in new contexts \citep{beery2018recognition,geirhos2020shortcut}.
The target labels are two classes of birds---``landbirds'' and ``waterbirds'' respectively coming from dry or wet habitats---superimposed on land and water backgrounds.
At training time, landbirds and waterbirds are most frequently paired with land and water backgrounds, respectively, but at test time the 4 subgroup combinations are uniformly sampled.
To mitigate failure under distribution shift, a robust representation should learn primarily features of the bird itself, since these are invariant, rather than features of the background.
Beyond the increase in dimensionality, this task  differs from CMNIST in that the ERM solution does not fail catastrophically at test time, and in fact can achieve 97.3\% average accuracy.
However, because ERM optimizes average loss, it suffers in performance on the worst-case subgroup (waterbirds on land, which has only $56$ training examples).

\paragraph{CivilComments-WILDS}

We apply EIIL to a large and challenging comment toxicity prediction task with important fairness implications \citep{borkan2019nuanced}, where ERM performs poorly on comments associated with certain identity groups.
We follow the procedure and data splits of \citet{koh2020wilds} to finetune DistilBERT embeddings \citep{sanh2019distilbert}.
EIIL uses an ERM reference classifier and its inferred environments are fed to a GroupDRO invariant learner.
Because the large training set ($N_{train}=269,038$) increases the convergence time for gradient-based EI, we deploy the binning heuristic discussed in Section \ref{sec:methods}, 
which in this instance finds environments that correspond to the error and non-error cases of the reference classifier.
While ERM and EIIL do not have access to the sensitive group labels, we note that worst-group validation accuracy is used to tune hyperparameters for all methods.
See Appendix \ref{sec:experimental_details} for details.
We also compare against a GroupDRO (oracle) learner that has access to group labels.

\subsection{Results}\label{sec:primary-results}

\paragraph{CMNIST}

\renewcommand{\cmark}{\color{green}\ding{51}}
\renewcommand{\xmark}{\color{red}\ding{55}}
\begin{table}[h]
\centering
\begin{tabular}{cccc}
\toprule
{Method} & Handcrafted &      Train &       Test \\
 & Environments &      &        \\
\midrule
ERM       & \xmark &  \textbf{86.3 $\pm$ 0.1} &  13.8 $\pm$ 0.6 \\
IRM  & \cmark & 71.1 $\pm$ 0.8 &  65.5 $\pm$ 2.3 \\
\textbf{EIIL}
& \xmark   &  73.7 $\pm$ 0.5 &  \textbf{68.4 $\pm$ 2.7} \\
\bottomrule
\end{tabular}
\caption{
Accuracy (\%) on CMNIST, a digit classification task where color is a spurious feature correlated with the label during training but anti-correlated at test time.
EIIL exceeds test-time performance of IRM \emph{without} knowledge of pre-specified environment labels, by instead finding worst-case environments using aggregated data and a reference classifier.
}
\label{tab:table_teaser}
\end{table}

IRM was previously shown to learn an invariant representation on this dataset, allowing it to generalize relatively well to the held-out test set whereas ERM fails dramatically \citep{arjovsky2019invariant}.
It is worth noting that label noise makes the problem challenging, so even an oracle classifier can achieve at most 75\% test accuracy on this binary classification task.
To realize EIIL in our experiments, we discard the environment labels, and run the procedure described in Section \ref{sec:methods}
with ERM as the reference model and IRM as the invariant learner used in the final stage.
We find that EIIL's environment labels are very effective for invariant learning, ultimately \emph{outperforming} standard IRM using the environment labels provided in the dataset 
(Table \ref{tab:table_teaser}).
This suggests that in this case the EIIL solution approaches the \emph{maximally informative} set of environments discussed in Proposition \ref{thm:1}.

\paragraph{Waterbirds}

\citet{sagawa2019distributionally} demonstrated that ERM suffers from poor worst-group performance on this dataset, and that GroupDRO can mitigate this performance gap if group labels are available.
In this dataset, group labels should be considered as oracle information, meaning that the relevant baseline for EIIL is standard ERM.
The main contribution of \citet{sagawa2019distributionally} was to show \emph{how} deep nets can be optimized for the GroupDRO objective using their online algorithm that adaptively adjusts per-group importance weights.
In our experiment, we combine this insight with our EIIL framework to show that distributionally robust neural nets can be realized without access to oracle information.
We follow the same basic procedure as above,\footnote{For this dataset, environment inference worked better with reference models that were not fully trained. We suspect this is because ERM focuses on the easy-to-compute features like background color early in training, precisely the type of bias EIIL can exploit to learn informative environments.} in this case using GroupDRO as the downstream invariant learner for which EIIL's inferred labels will be used.
        
\begin{table}[ht]
\centering
\begin{tabular}{lccc}
\toprule
{Method} &  Train (avg) & Test (avg) & Test (worst group) \\
\midrule
ERM & 100.0 &  \textbf{97.3} &  60.3 \\
EIIL  & 99.6 &  96.9 & \textbf{78.7} \\
\midrule
GroupDRO (oracle) & 99.1  & 96.6 &  84.6 \\
\bottomrule
\end{tabular}
\caption{Accuracy (\%) on the Waterbirds dataset.
EIIL strongly outperforms ERM on worst-group performance, approaching the performance of the GroupDRO algorithm proposed by \citet{sagawa2019distributionally}, which requires oracle access to group labels.
In this experiment we feed environments inferred  by EIIL into a GroupDRO learner.
}
\label{tab:waterbirds_results}
\end{table}

\begin{figure}[b!]
\centering
\includegraphics[width=.65\textwidth]{./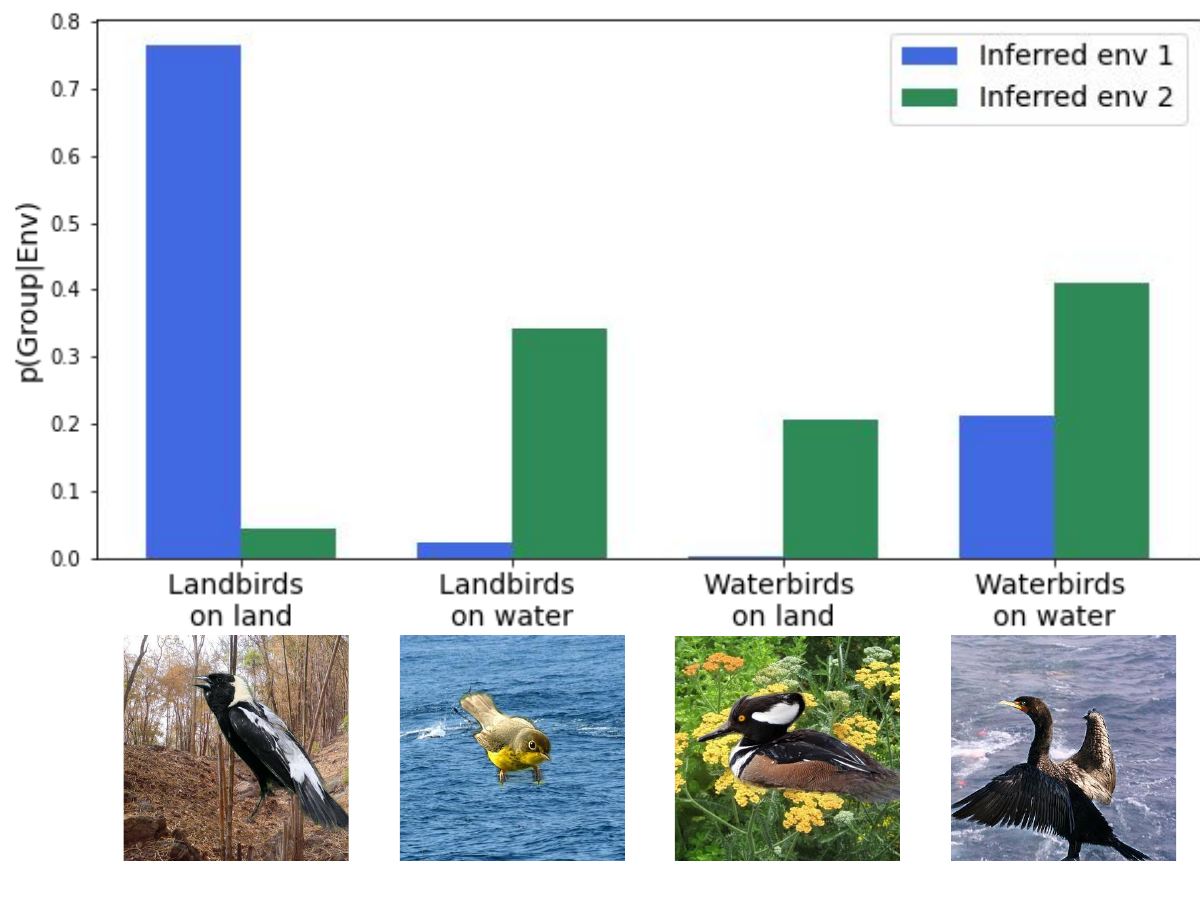}
\caption[]{After using EIIL to directly infer two environments from the Waterbirds dataset, we examine the proportion of each subgroup (available in the original dataset but not used by EIIL) present in the inferred environment.
}
\label{fig:waterbirds_inferred_envs}
\end{figure}

EIIL is significantly more robust than the ERM baseline (Table \ref{tab:waterbirds_results}), raising worst-group test accuracy by 18\% with only a 1\% drop in average accuracy.
In Figure \ref{fig:waterbirds_inferred_envs} we plot the distribution of subgroups for each inferred environment, showing that the minority subgroups (landbirds on water and waterbirds on land) are mostly organized in the same inferred environment.
This suggests the possibility of leveraging environment inference for interpretability to automatically discover a model's performance discrepancies on subgroups, which we leave for future work.

\paragraph{CivilComments-WILDS}

\begin{table}[h]
\vspace{-.1cm}
\centering
\begin{tabular}{lcccccc}
\toprule
{Method} &        Train (avg) & Test (avg) & Test (worst group) \\
\midrule
ERM  &  96.0 $\pm$ 1.5 &   \textbf{92.0 $\pm$ 0.4} &  61.6 $\pm$ 1.3 \\
EIIL &  97.0 $\pm$ 0.8 &   90.5 $\pm$ 0.2 &  \textbf{67.0 $\pm$ 2.4} \\
\midrule
GroupDRO (oracle) &  93.6 $\pm$ 1.3 &  89.0 $\pm$ 0.3 &  69.8 $\pm$ 2.4 \\
\bottomrule
\end{tabular}
\caption{
EIIL improves worst-group accuracy in the CivilComments-WILDS toxicity prediction task,
without access to group labels.
}
\end{table}

Without knowledge of which comments are associated with which groups, EIIL improves worst-group accuracy over ERM with only a modest cost in average accuracy, approaching the oracle GroupDRO solution (which requires group labels).

\subsection{Influence of the Reference Model}\label{sec:secondary-results}
As discussed in Section \ref{sec:analysis}, the ability of EIIL to find useful environments---partitions yielding an invariant representation when used by an invariant learner---depends on its ability to exploit variation in the predictive distribution of a reference classifier.
Here we study the influence of the reference classifier on the final EIIL solution.
We return to the CMNIST dataset, which provides a controlled sampling setup where particular ERM solutions can be induced to serve as reference for EIIL.

EIIL was shown to outperform IRM \emph{without access to environment labels} in the standard CMNIST dataset (Table \ref{tab:table_teaser}), which has label noise of $\lblnoise=0.25$.
Because $Corr(color, label$) is 0.85 (on average) for the train set, this amount of label noise implies that color is the most predictive feature on aggregated training set (although its predictive power varies across environments).
ERM, even with access to infinite data, will focus on color given this amount of label noise to achieve an average train accuracy of 85\%.
However we can implicitly control the ERM solution $\Phi_{ERM}$ by tuning $\theta_y$, an insight that we use to study the dependence of EIIL on the reference model $\tilde \Phi = \Phi_{ERM}$.

\newcommand{\cmnistFigWidth}{0.4\textwidth}
\begin{figure*}[ht!]
\centering
\begin{subfigure}[t]{\cmnistFigWidth}
\includegraphics[width=\textwidth]{./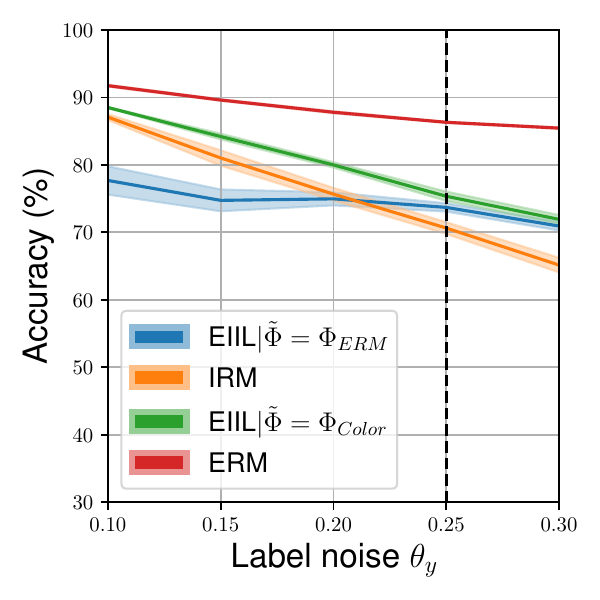}
\caption{Train accuracy}
\label{fig:label_noise_sweep_train_accs}
\end{subfigure}
\begin{subfigure}[t]{\cmnistFigWidth}
\includegraphics[width=\textwidth]{./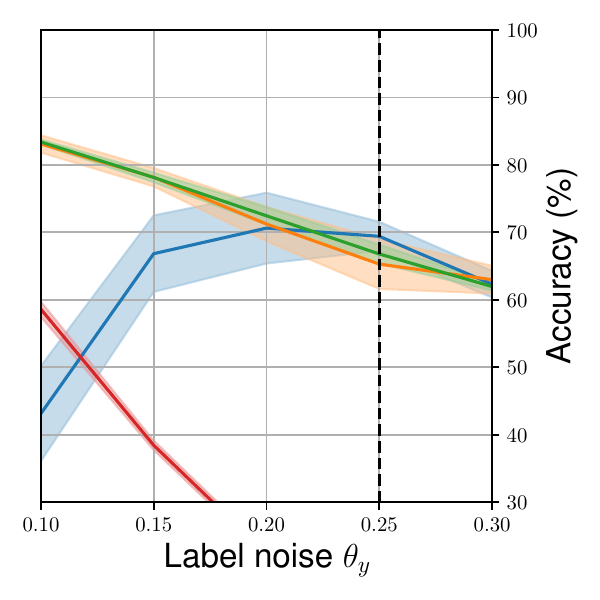}
\caption{Test accuracy}
\label{fig:label_noise_sweep_test_accs}
\end{subfigure}
\caption{
CMNIST with varying label noise $\lblnoise$.
Under high label noise (${\lblnoise > .2}$), where the spurious feature color correlates to label \emph{more} than shape on the train data, EIIL matches or exceeds the test performance of IRM \emph{without relying on hand-crafted environments}.
Under medium label noise (${.1 < \lblnoise < .2}$), EIIL is worse than IRM but better than ERM, the logical approach if environments are not available.
Under low label noise (${\lblnoise < .1}$), where color is \emph{less} predictive than shape at train time, ERM  performs well and EIIL  fails.
The vertical dashed black line indicates the default setting of $\theta_y=0.25$, which we report in Table
\ref{tab:table_teaser}.
}
\label{fig:label_noise_sweep}
\end{figure*}

Figure \ref{fig:label_noise_sweep} shows the results of our study.
We find that EIIL generalizes better than IRM  with sufficiently high label noise $\lblnoise > .2$, but generalizes poorly under low label noise.
This is precisely because ERM learns the color feature under high label noise, and the shape feature under low label noise.
We verify this conclusion by evaluating EIIL when $\tilde \Phi = \Phi_{Color}$, i.e. a hand-coded color-based predictor as reference, which does well across all settings of $\lblnoise$.

We saw in the Waterbirds experiment that it is not a strict requirement that ERM fail completely in order for EIIL to succeed.
However, this controlled study highlights the importance of the reference model in the ability of EIIL to find environments that emphasize the right invariances, which leaves open the question of how to effectively choose a reference model for EIIL in general.
One possible way forward is by using validation data that captures the \emph{kind} of distribution shift we expect at test time, without exactly producing the test distribution, e.g. as in the WILDS benchmark \citep{koh2020wilds}.
In this case we could choose to run EIIL with a reference model that exhibits a large generalization gap between the training and validation distributions.
\section{Related Work}\label{sec:eiil:related-work}

\paragraph{Domain adaptation and generalization}

Beyond the methods discussed above, a variety of recent works have approached the domain generalization problem from the lens of learning invariances in the training data.
Adversarial training is a popular approach for learning representations invariant \citep{zhang2017aspect,hoffman2018cycada,ganin2016domain} or conditionally invariant \citep{li2018deep} to the environment.
However, this approach has limitations in settings where distribution shift affects the marginal distribution over labels \citep{zhao2019learning}.

\citet{arjovsky2019invariant} proposed IRM to mitigate the effect of test-time label shift, which was inspired by applications of causal inference to select invariant features  \citep{peters2016causal}.
\citet{krueger2020out} proposed the related Risk Extrapolation (REx) principle, which dictates a stronger preference to exactly equalize $R^e \ \forall \ e$ (e.g. by penalizing variance across $e$ as in their practical algorithm V-REx), which is shown to improve generalization in several settings.\footnote{
Analogous to V-REx, \citet{williamson2019fairness} adapt Conditional Variance at Risk (CVaR) \citep{rockafellar2002conditional} to equalize risk across demographic groups.
}

Several large-scale benchmarks have been proposed to highlight difficult open problems in this field, including the use of real-world data \citep{koh2020wilds}, handling subpopulation shift \citep{santurkar2020breeds}, and model selection \citep{gulrajani2020search}.

\paragraph{Leveraging a reference classifier}
A number of methods have been proposed that improve performance by exploiting the mistakes of a pre-trained auxiliary model, as we do when inferring environments for the invariant learner using $\tilde \Phi$.
\citet{nam2020learning} jointly train a ``biased'' model $f_B$ and a ``debiased'' model $f_D$, where the relative cross-entropy losses of $f_B$ and $f_D$ on each training example determine their importance weights in the overall training objective for $f_D$.
\citet{sohoni2020no} infer a different set of ``hidden subclasses'' for each class label $y \in \Ysp$, Subclasses computed in this way are then used as group labels for training a GroupDRO model, so the overall two-step process corresponds to certain choices of EI and IL objectives. 

\citet{liu2021just} and \citet{dagaev2021too} concurrently proposed to compute importance weights for the primary model using an ERM reference, which can be seen as a form of distributionally robust optimization where the worst-case distribution only updates once.
\citet{dagaev2021too} use the confidence of the reference model to assign importance weights to each training example.
\citet{liu2021just} split the training examples into two disjoint groups based on the errors of ERM, akin to our EI step, with the per-group importance weights treated as a hyperparameter for model selection (which requires a subgroup-labeled validation set).
We note that the implementation of EIIL using binning, discussed in Section \ref{sec:methods}, can also realize an error splitting behavior. 
In this case, both methods use the same disjoint groups of training examples towards slightly different ends: we train an invariant learner, whereas \citet{liu2021just} train a cross-entropy classifier with fixed per-group importance weights.

\paragraph{Algorithmic fairness}

Our work draws inspiration from a rich body of work on learning fair classifiers in the absence of demographic labels \citep{hebert2017calibration,kearns2018preventing,hashimoto2018fairness,kim2019multiaccuracy,lahoti2020fairness}.
Generally speaking, these works seek a model that performs well for group assignments that are the worst case according to some fairness criterion.
Environment inference serves a similar purpose for our method, but with a slightly different motivation: rather than learn an fair model in an online way that provides favorable in-distribution predictions, we learn discrete data partitions as an intermediary step, which enables use of invariant learning methods to tackle distribution shift.

Adversarially Reweighted Learning (ARL) \citep{lahoti2020fairness} is most closely related to ours, since they emphasize subpopulation shift as a key motivation.
Whereas ARL uses a DRO objective that prioritizes stability in the loss space, we explore environment inference to encourage invariance in the learned representation space.
We see these as complementary approaches that are each suited to different types of distribution shift.

When group labels are available for training, we can establish links between algorithmic fairness and related methods for invariant learning.
Early approaches to learning fair representations \citep{zemel2013learning,edwards2015censoring,louizos2015variational,zhang2018mitigating,madras2018learning} leveraged statistical independence regularizers from domain adaptation
\citep{ben2010theory,ganin2016domain,tzeng2017adversarial,long2018conditional}, noting that marginal or conditional independence from domain to prediction relates to the fairness notions of demographic parity $\hat y \perp e$ \citep{dwork2012fairness} and equal opportunity $\hat y \perp e | y$ \citep{hardt2016equality}.

Recall that (\ref{eq:irm}) involves an environment-specific conditional label expectation given a data representation ${\E[y|\Phi(x) = h,e]}$.
Objects of this type have been closely studied in the fair machine learning literature, where $e$ now denotes a  ``sensitive'' attribute indicating membership in a protected demographic group (age, race, gender, etc.), and the vector representation $\Phi(x)$ is typically replaced by a scalar score\footnote{
For binary classification, score-based and representation-based approaches are closely related since scores are commonly implemented as (or can be interpreted as) as the linear mapping of a data representation:  $S(x) = w \circ \Phi(x)$.
} $S(x) \in \R$. 
Noting that $\sigma(S(x))$ represents the probability of the model prediction, $\E[y|S(x),e]$ can now be interpreted as a \emph{calibration curve} that must be regulated according to some fairness constraint.
\citet{chouldechova2017fair} showed that equalizing this calibration curve across groups is often incompatible with a common fairness constraint, demographic parity, while \citet{liu2018implicit} studied ``group sufficiency'' of classifiers with strongly convex losses, concluding that ERM naturally finds group sufficient solutions without fairness constraints.

\section{Discussion}
This Chapter presented EIIL, a framework that leverages a non-robust reference model to partition the training data, which can enable robust downstream learning.
In general, learning to generalize OOD will require additional assumptions or inductive biases beyond what are typically used in supervised learning.
Therefore, EIIL should be understood as a robust learning approach that uses a different inductive bias than standard Domain Generalization: rather than relying on human-specified side information (environment labels), EIIL assumes access to a reference model that is \emph{not} robust under the distribution shifts of interest.
In this sense, EIIL does not provide any free lunch, as the sensitivity of OOD performance to the reference model is a known issue (see Section~\ref{sec:secondary-results}).
However, since EIIL was published as a conference paper, subsequent efforts have extended EIIL or otherwise addressed these limitations through complementary approaches.

\citet{ahmed2021systematic} demonstrated that EIIL can be extended to effectively handle ``systematic'' generalization \citep{bahdanau2018systematic} on a semi-synthetic foreground/background task similar to the Waterbirds dataset that we study.
To do so, they proposed a novel invariance regularizer for use in the IL stage, which is based on matching class-conditioned average predictive distributions across environments.
We note that this is closely related to the equalized odds criterion commonly used in fair classification \citep{hardt2016equality}.

\citet{mishra_repeated_2022} provide an insightful information-theoretic analysis of environment inference, where EI amounts to choosing a majority group (i.e. the inferred environment with the most examples) such that the reference representation $\Phi(x)$ has more information about the target label $y$ within the majority group than it does in aggregate (in the original training distribution).
They then formulate a procedure that improves the downstream invariant learner through a process of iterative repeated EIIL.
Meanwhile, \citet{zhang_rich_2022} showed that iteratively growing a set of invariant features (similar to repeated EIIL) is helpful in finding a good initialization for invariant learning.
In early prototyping efforts, we tried a rudimentary version of repeated EIIL without the information theoretic interpretation, but found it to be unstable.
The practical success of repeated EIIL demonstrated by \citet{mishra_repeated_2022} suggests connections between environment-free invariant learning and boosting~\citep{kearns_toward_1992,goos_agnostic_2001,kalai_agnostic_2008}.
This connection is quite intuitive---boosting was a key inspiration for works on fair learning without demographic labels~\citep{hebert2017calibration,kearns2018preventing}, which in turn inspired EIIL---and could be further explored in the future.

All neural networks, even ones trained using unconstrained supervised learning (e.g. ERM), have internal representations.
The arguments presented in this Chapter (like those in Chapter~\ref{chap:fair-reps}) follow the intuition that because ERM learns shortcuts, then using its internal representations could cause unintended downstream model failures, thus motivating more specialized representation learning algorithms.
It is worth pausing to note that this position is not a universal consensus, especially in light of the fact that specialized representation learners often underperform relative to ERM on in-distribution data.\footnote{This is analogous to the fairness-accuracy tradeoffs discussed in Chapter~\ref{chap:fair-reps}} 

One noteworthy countervailing argument is that that representation learning should be done by ERM or other unconstrained representation learners---e.g. self-supervised learners like CLIP~\citep{radford_learning_2021}---then adapted for the task at hand using data from the target distribution~\citep{menon_long-tail_2021,kirichenko_last_2022}.
It would seem that while ERM predictions rely on shortcut features~\citep{geirhos2020shortcut}, their internal representations contain more than just shortcut features~\citep{menon_long-tail_2021}.
Indeed, ~\citet{eyre_towards_2022} found that even invariant representation learners like IRM~\citep{arjovsky2019invariant} can absorb shortcuts (in addition to robust features) into their internal representations, although these are typically projected out by the subsequent classification layer.
Researchers will continue to study the properties of invariant and unconstrained representations.
In the meantime, practitioners wondering whether one should be preferred over the other should consider whether labeled data can be acquired from the target domain, and also whether distribution shifts can be expected after model deployment.

  \chapter{Data Augmentation for Robust Reinforcement Learning}\label{chap:coda}

\section{Contemporary RL Often Lacks Data Coverage}

Chapters \ref{chap:fair-reps} and \ref{chap:eiil} provided variations on the theme of realizing robust prediction by addressing troublesome correlation patterns between observed inputs and targets in the training data.
In this Chapter, we shift our focus from the static setting of supervised learning to \emph{sequential decision making}, where our decision-making model is deployed in a dynamically changing world.
In particular, we will follow the approach of Reinforcement Learning (RL)~\citep{sutton2018reinforcement} by requiring our decision-making model---hereafter referred to as the ``agent''---to learn a mapping from observed states to pre-defined actions that maximizes a reward signal.

Let us formalize this setup using a Markov Decision Process (MDP), described by tuple $\langle \S, \A, P, R, \gamma \rangle$ consisting of the state space, action space, transition function, reward function, and discount factor, respectively~\citep{puterman2014markov,sutton2018reinforcement}. 
We denote individual states and actions using lowercase $s \in \S$ and $a \in \A$, and variables using the uppercase $S$ and $A$ (e.g., $s \in \textrm{range}(S) \subseteq \S$). 
A policy $\pi : \S \times \A \to [0, 1]$ defines a probability distribution over the agent's actions at each state, and an agent is typically tasked with learning 
a parameterized policy that maximizes the expected reward in the long run.

In order to act optimally in a given observed state $s$, it is thus necessary to reason about how candidate actions $a \in \A$ will affect rewards accrued in the distant future.
For a given tuple $(s, a)$ within the state-action space, the long-term prospects for a policy $\pi$ are encapsulated by its value function
\begin{equation}
\label{eq:state-action-value}
Q_\pi(s, a)	= \mathbb{E}_{P,\pi}[
	R(s_{t}, s_{a}) + 
	\gamma R(s_{t+1}, s_{a+1}) + 
	\gamma^2 R(s_{t+2}, s_{a+2}) + 
	...|s_t=s, a_t=a
	].
\end{equation}
The infinite summand in Equation \ref{eq:state-action-value} presents an apparent computational hurdle.
One popular approach to resolving this issue is dynamic programming~\citep{bellman1957dynamic,kirk2004optimal}, which exploits the recursive structure in the summands.
For example, the Bellman optimality condition provides a relation~\citep{bellman1957dynamic} provides a relation between the value at the current state-action pair $(s_t, a_t)$ and its successor state $s_{t+1}$ under the optimal policy $\pi^*$:
\[
	Q_{\pi^*}(s_t, a_t) = \E[r_{t} + \gamma \max_{a'} Q_{\pi^*}(s_{t+1}, a')|s_t,a_t].
\]
The Q-learning algorithm~\citep{watkins1992q} exploits this relation to derive a fixed-point update to an approximation of $Q$:
\[
	Q(s_t, a_t) \leftarrow Q(s_t, a_t) + \alpha [r_{t+1} + \gamma \max_{a'} Q(s_{t+1},a') - Q(s_t, a_t)]
\]
where $\alpha$ is a hyperparameter representing the step size.
For appropriate settings of $\alpha$, this approximation is known to converge to $Q_{\pi^*}(s,a)$, from which the optimal policy $\pi^*$ can be derived~\citep{watkins1992q}.

At first glance this suggests an optimistic view on robust RL.
Because the agent will converge to an optimal policy \emph{regardless} of the training data distribution, it would seem that previously discussed concerns about fairness and shortcut learning---where models were seen to be highly sensitive to particularities in how training data is collected---simply do not apply in the RL setting.
However, in practice, we find that RL algorithms are indeed sensitive to training data.
This is because the available data typically do not satisfy the key assumptions underpinning classic convergence guarantees.
In particular, with most dynamic programming algorithms,  ``all that is required for correct convergence is that all (state-action) pairs continue to be updated''~\citep[Sec. 6.5]{sutton2018reinforcement}.
In other words, the training data must have sufficient \emph{coverage}: all vales of $(s, a)$ must be visited with reasonable frequency.
This is a tall order in real-world settings for two reasons:
first, the state space grows exponentially in the number of observed features, of which there are many in contemporary tasks of interest;
second, in safety critical applications like clinical health care and robotics, it may be unethical to deploy an exploratory policy (e.g. prescribing random treatments to patients) in order to realize this coverage condition~\citep{gottesman_guidelines_2019,brunke_safe_2021}.

This Chapter addresses the data coverage issue in RL through the lens of data augmentation:
by leveraging causal priors (typically specified by domain experts, although sometimes learned directly from data) we can improve the quantity and quality of available training data.

\section{The Role of Data Augmentation in RL}
Data augmentation has become a go-to tool in the arsenal of RL methods, especially in settings with large state spaces, or when using neural networks to approximate value functions during learning.
Take for example Hindsight Experience Replay (HER)~\citep{andrychowicz2017hindsight}, which is designed for goal-conditioned RL, where 
the agent must learn to act optimally for any number of distinct goals $g \in \G$.
Including the goal---perhaps a continuous vector indicating the target position where an object must be placed---as a special type of state in the MDP amplifies the size of the overall state-action space, and in fact \citet{andrychowicz2017hindsight} show that many goal-conditioned continuous control tasks simply cannot be solved using this naive approach.
To resolve this difficulty, they propose an application of \emph{goal relabeling}~\citep{kaelbling1993learning}, which effectively reframes the observed data so far.
For experiences where the agent fails to achieve the desired goal $g \in \G$, the experience is \emph{relabeled} as a success under an alternate goal $g' \neq g$, where $g'$ could indicate the coordinate where the object was dropped in reality (regardless of the agent's original goal).

Learning to act from images presents an analogous coverage issue, as the number of pixels in images space is huge.
In this case, training the agent on replicated observed experiences with superficial perturbations on the images (random crops, etc.) has been shown to improve the sample efficiency of RL~\citep{laskin2020reinforcement,kostrikov2020image}.

RL methods are often categorized as being either ``model-free'' or ``model-based''.
Here the ``model'' in question is a learned dynamics model $P_\theta: \S \times \A \rightarrow \S$ trained to approximate the MDP's true transition function $P$ (which may be inaccessible to the agent).
Model-free methods directly consume observed $(s, a, s')$ data to learn values and policies.
On the other hand, model-based methods like Dyna~\citep{sutton1991dyna} first fit $P_\theta: \S \times \A \rightarrow \S$ fit to the observed data $(s, a, s')$ then add samples from it to the observed experience (thus augmenting the available data) during the learning process.

\section{Amplifying the Experience Buffer Using Causal Priors}
\subsection{Why is Data Augmentation a Useful Heuristic in RL?}\label{sec:coda:why_does_data_aug_work}
Data augmentation for RL is evidently effective, but why does it work?
The key insight of this Chapter is that existing data augmentation methods can be interpreted as implicit counterfactual samplers.
This involves formulating a causal graphical model~\citep{pearl2009causal}
over single-step transition diagrams
We then observe that whenever this graph composes two or more \emph{independent mechanisms}, then this independence justifies resampling one mechanism's values to those consistent with the data distribution.
For example, with HER~\citep{andrychowicz2017hindsight} despite the policies being goal-conditioned, once the data is collected, the goals are independent of the other next-state dynamics (e.g. trajectories of objects) in the scene
 (See Fig.~\ref{fig_coda_examples}).

\begin{figure}[!t]
	\centering
	\includegraphics[width=\textwidth]{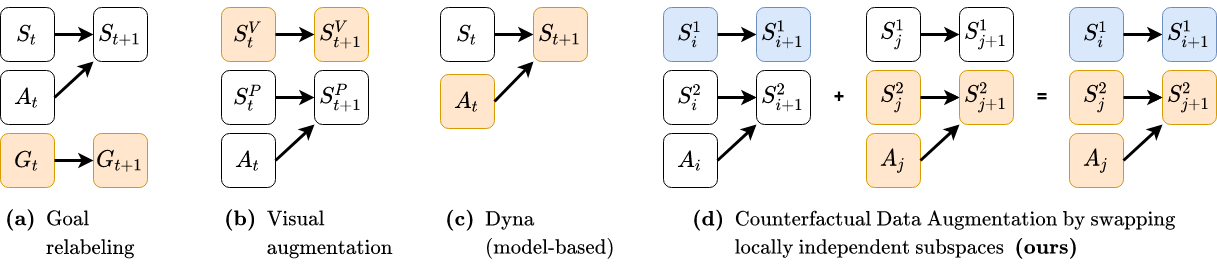}
	\vspace{-\baselineskip}
	\caption{\textit{Four instances of CoDA; orange nodes are relabeled, noise variables omitted for clarity.} \textbf{First:} Goal relabeling~\citep{kaelbling1993learning}, including HER~\citep{andrychowicz2017hindsight}, augments transitions with counterfactual goals. \textbf{Second:} Visual feature augmentation~\citep{andrychowicz2020learning,laskin2020reinforcement} uses domain knowledge to change visual features $S_t^V$ (such as textures, lighting, and camera positions) that the designer knows do not impact the physical state $S^P_{t+1}$. \textbf{Third:} Dyna~\citep{sutton1991dyna}, including MBPO~\citep{janner2019trust}, augments real states with new actions and resamples the next state using a learned dynamics model. \textbf{Fourth (ours):} Given two transitions that share local causal structures, we propose to swap connected components to form new transitions.}
	\label{fig_coda_examples}
\end{figure}

This Chapter formalizes and discusses a generic recipe for Counterfacutual Data Augmentation (CoDA), and applies this recipe to improve RL in settings where the independent mechanisms correspond to objects in the underlying MDP.

\subsection{An Opportunity to Augment: Local Independence of Objects}
High-dimensional dynamical systems are often composed of simple subprocesses that affect one another through sparse interaction. 
If the subprocesses \emph{never} interacted, an agent could realize significant gains in sample efficiency by globally factoring the dynamics and modeling each subprocess independently~\citep{guestrin2003efficient,hallak2015off}.
In most cases, however, the subprocesses do \emph{eventually} interact and so the prevailing approach is to model the entire process using a monolithic, unfactored model. 
In this Section, we take advantage of the observation that \textit{locally---during the time between their interactions---the subprocesses are causally independent}. By locally factoring dynamic processes in this way, we are able to capture the benefits of factorization even when their subprocesses interact on the global scale. 

Consider a game of billiards, where each ball can be viewed as a separate physical subprocess.
Predicting the opening break is difficult because all balls are mechanically coupled by their initial placement.
Indeed, a dynamics model with dense coupling amongst balls may seem sensible when considering the expected outcomes over the course of the game, as each ball has a non-zero chance of colliding with the others. But at any given timestep, interactions between balls are usually sparse.

One way to take advantage of sparse interactions between otherwise disentangled entities is to use a structured state representation together with a graph neural network or other message passing transition model that captures the local interactions~\citep{goyal2019recurrent, kipf2019contrastive}. When it is tractable to do so, such architectures can be used to model the world dynamics directly, producing transferable, task-agnostic models. In many cases, however, the underlying processes are difficult to model precisely, and model-free~\citep{laskin2020reinforcement,wang2019benchmarking} or task-oriented model-based ~\citep{farahmand2017value,oh2017value} approaches are less biased and exhibit superior performance. In this Section we argue that \emph{knowledge of whether or not local interactions occur is useful in and of itself}, and can be used to generate causally-valid counterfactual data even in absence of a forward dynamics model. 
In fact, if two trajectories have the same local factorization in their transition dynamics, then under mild conditions we can produce new counterfactually plausible data using our proposed \textbf{Counterfactual Data Augmentation (CoDA)} technique, wherein factorized subspaces of observed trajectory pairs are swapped (Figure \ref{fig_coda_intro_diagram}). This lets us sample from a
counterfactual data distribution by stitching together subsamples from observed transitions. Since CoDA acts only on the agent's training data, it is compatible with any agent architecture (including unfactored ones). 

In the remainder of this Section, we formalize this data augmentation strategy and discuss how it can improve performance of model-free RL agents in locally factored tasks. 
\begin{figure}[!t]
	\centering
	\hspace{-0.02\textwidth}
	\includegraphics[width=0.98\textwidth]{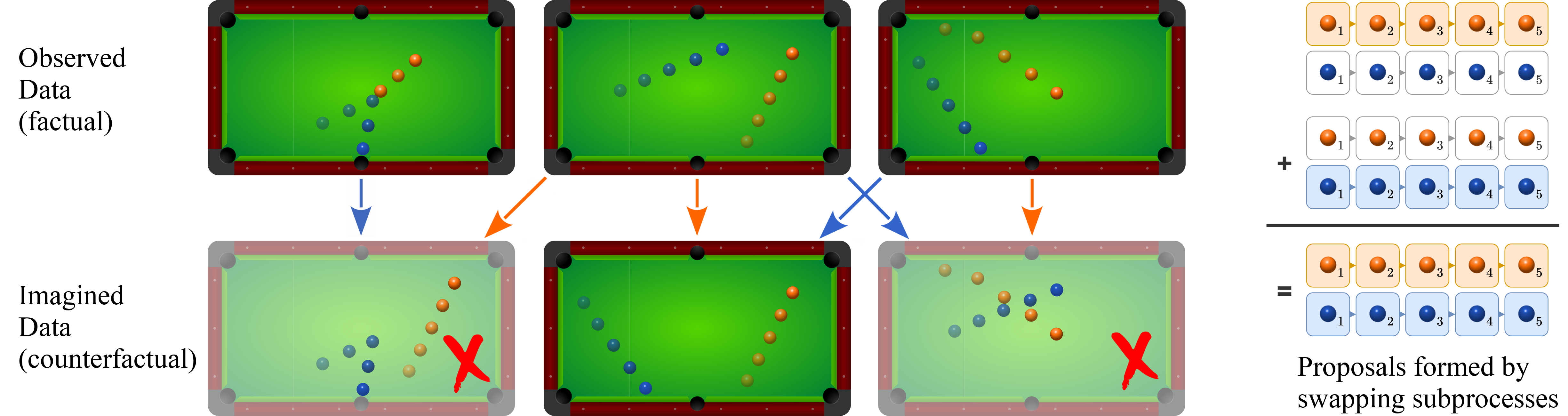}
	\hspace{0.02\textwidth}
	\caption{\textbf{Counterfactual Data Augmentation (CoDA)}. Given 3 factual samples, knowledge of the local causal structure lets us mix and match factored subprocesses to form counterfactual samples. The first proposal is rejected because one of its factual sources (the blue ball) is not locally factored.	The third proposal is rejected because it is not itself factored. The second proposal is accepted, and can be used as additional training data for a reinforcement learning agent.
	}
	\label{fig_coda_intro_diagram}
\end{figure}
\subsection{Background: Factored MDPs}\label{sec:coda:factored-mdps}
The basic model for decision making in a controlled dynamic process is a Markov Decision Process (MDP), described by tuple $\langle \S, \A, P, R, \gamma \rangle$ consisting of the state space, action space, transition function, reward function, and discount factor, respectively~\citep{puterman2014markov,sutton2018reinforcement}. Note that MDPs generalize uncontrolled Markov processes (set $A = \emptyset$), so that our work applies also to sequential prediction. 
We denote individual states and actions using lowercase $s \in \S$ and $a \in \A$, and variables using the uppercase $S$ and $A$ (e.g., $s \in \textrm{range}(S) \subseteq \S$). 
A policy $\pi : \S \times \A \to [0, 1]$ defines a probability distribution over the agent's actions at each state, and an agent is typically tasked with learning a parameterized policy $\pi_\theta$ that maximizes value $\mathbb{E}_{P,\pi}\sum_t\gamma^tR(s_t, a_t)$. 

In most non-trivial cases, the state $s \in \S$ can be described as an object hierarchy together with global context. 
For instance, this decomposition will emerge naturally in any simulated process or game that is defined using a high-level programming language (e.g., the commonly used Atari~\citep{bellemare2013arcade} or Minecraft~\citep{johnson2016malmo} simulators). 
In this Chapter we consider MDPs with a single, known top-level decomposition of the state space $\S = \S^1 \oplus \S^2 \oplus \dots \oplus \S^n$ for fixed $n$, leaving extensions to hierarchical decomposition and multiple representations~\citep{higgins2018scan,esmaeili2019structured}, dynamic factor count $n$~\citep{zaheer2017deep}, and (learned) latent representations~\citep{burgess2019monet} to future work. The action space might be similarly decomposed: $\A = \A^1 \oplus \A^2 \oplus \dots \oplus \A^m$. 
Given such state and action decompositions, we may model time slice $(t, t\!+\!1)$ using a structural causal model (SCM) $\M_t = \langle V_t, U_t, \F \rangle$~\citep[Ch. 7]{pearl2009causal} with directed acyclic graph (DAG) $\G$, where:
\begin{itemize}[leftmargin=0.3in,labelsep=0.1in]
\item $V_t = \{V^i_{t[+1]}\}_{i=0}^{2n+m}\!=\!\{S^1_t \mydots S^n_t\!, A^1_t \mydots A^m_t\!, S^1_{t+1} \mydots S^n_{t+1}\}$ are the nodes (variables) of $\G$.
\item $U_t = \{U^i_{t[+1]}\}_{i=0}^{2n+m}$ is a set of noise variables, one for each $V^i$, determined by the initial state, past actions, and environment stochasticity. We assume that noise variables at time $t+1$ are independent from other noise variables: $U^i_{t+1} \indep U^j_{t[+1]} \forall i,j$. The instance $u = (u^1, u^2, \dots, u^{2n + m})$ of $U_t$ denotes an individual realization of the noise variables. 
\item $\F = \{f^i\}_{i=0}^{2n+m}$ is a set of functions (``structural equations'') that map from $U^i_{t[+1]} \times \parents(V^i_{t[+1]})$ to $V^i_{t[+1]}$, where $\parents(V^i_{t[+1]}) \subset V_t\setminus V^i_{t[+1]}$ are the parents of $V^i_{t[+1]}$ in $\G$; hence each $f^i$ is associated with the set of incoming edges to node $V^i_{t[+1]}$ in $\G$; see, e.g., Figure \ref{fig_local_factorization} (center). 
\end{itemize} 

Note that while $V_t$, $U_t$, and $\M_t$ are indexed by $t$ (their distributions change over time), the structural equations $f^i \in \F$ and causal graph $\G$ represent the global transition function $P$ and apply at all times $t$. To reduce clutter, we drop the subscript $t$ on $V$, $U$, and $\M$ when no confusion can arise. 

Critically, we require the set of edges in $\G$ (and thus the number of inputs to each $f^i$) to be \emph{structurally minimal} \citep[Remark 6.6]{peters2017elements}.

\begin{assumption}[Structural Minimality]
$V^j \in \parents(V^i)$ if and only if there exists some $\{u^i, v^{-ij}\}$ with ${u^i \in \textrm{range}(U^i), v^{-ij} \in \textrm{range}(V\setminus\{V^i, V^j\})}$ and pair ${(v^j_1, v^j_2)}$ with ${v^j_1, v^j_2 \in \textrm{range}(V^j)}$ such that $v^i_1 = f^i(\{u^i, v^{-ij}, v^j_1\}) \not= f^i(\{u^i, v^{-ij}, v^j_2 \}) = v^i_2$.
\end{assumption}

Intuitively, structural minimality says that $V^j$ is a parent of $V^i$ if and only if setting the value of $V^j$ can have a nonzero \emph{direct} effect\footnote{Thus parentage does describe knock-on effects, e.g. $V_1$ on $V_3$ in the Markov chain ${V_1 \rightarrow V_2 \rightarrow V_3}$.} on the child $V^i$ through the structural equation $f^i$. The structurally minimal representation is unique~\citep{peters2017elements}.

Given structural minimality, we can think of edges in $\G$ as representing global causal dependence.
The probability distribution of $S^i_{t+1}$ is fully specified by its parents $\parents(S^i_{t+1})$ together with its noise variable $U_i$; that is, we have $P(S^i_{t+1} \given S_t, A_t) = P(S^i_{t+1} \given \parents(S^i_{t+1}))$ so that $S^i_{t+1} \indep V^j\given \parents(S^i_{t+1})$ for all nodes $V^j \not\in  \parents(S^i_{t+1})$. 
We call an MDP with this structure a \textit{factored MDP}~\citep{kearns1999efficient}. %
When edges in $\G$ are sparse, factored MDPs admit more efficient solutions than unfactored MDPs~\citep{guestrin2003efficient}.
\subsection{Local Causal Models (LCMs)}\label{subsection_locally_factored}
\paragraph{Limitations of Global Models}
Unfortunately, even if states and actions can be cleanly decomposed into several nodes, in most practical scenarios the DAG $\G$ is fully connected (or nearly so): since the $f^i$ apply globally, so too does structural minimality, and edge $(S^i_k, S^j_{k+1})$ at time $k$ is present so long as there is a single instance---at any time $t$, no matter how unlikely---in which $S^i_t$ influences $S^j_{t+1}$. In the words of Andrew Gelman, ``\textit{there are (almost) no true zeros}''~\citep{gelman2011causality}. As a result, the factorized causal model $\M_t$, based on globally factorized dynamics, rarely offers an advantage over a simpler causal model that treats states and actions as monolithic entities (e.g.,~\citep{buesing2018woulda}).

\paragraph{LCMs}
Our key insight is that for each pair of nodes $(V^i_t, S^j_{t+1})$ with $V^i_t \in \parents(S^j_{t+1})$ in $\G$, there often exists a large subspace $\localSA \subset \S \times \A$ for which $S^j_{t+1} \indep 
V^i_t \given \parents(S^j_{t+1})\setminus V^i_t, (s_t, a_t) \in \localSA$.
For example, in case of a two-armed robot (Figure \ref{fig_local_factorization}), there is a large subspace of states in which the two arms are too far apart to influence each other physically. 
Thus, if we restrict our attention to $(s_t, a_t) \in \localSA$, we can consider a \textit{local} causal model $\M^{\localSA}_t$ whose local DAG $\G^{\localSA}$ is strictly sparser than the global DAG $\G$, as the structural minimality assumption applied to $\G^{\localSA}$ implies that there is no edge from $V^i_t$ to $S^j_{t+1}$. More generally, for any subspace $\L \subseteq S \times A$, we can induce the Local Causal Model (LCM) $\M^\L_t = \langle V^\L_t, U^\L_t, \F^\L \rangle$ with DAG $\G^\L$ from the global model $\M_t$ as:
\begin{itemize}[leftmargin=0.3in,labelsep=0.1in]
\item $V^\L_t = \{V^{\L,i}_{t[+1]}\}_{i=0}^{2n+m}$, where $P(V^{\L,i}_{t[+1]}) = P(V^i_{t[+1]} \given (s_t, a_t) \in \L)$.
\item $U^\L_t = \{U^{\L,i}_{t[+1]}\}_{i=0}^{2n+m}$, where $P(U^{\L,i}_{t[+1]}) = P(U^i_{t[+1]}\given (s_t, a_t) \in \L)$.
\item $\F^\L = \{f^{\L,i}\}_{i=0}^{2n+m}$, where $f^{\L,i} = f^i\restrict{\L}$ ($f^i$ with range of input variables restricted to $\L$). 
Due to structural minimality, the signature of $f^{\L,i}$ may shrink (as the range of the relevant variables is now restricted to $\L$), and corresponding edges in $\G$ will not be present in $\G^\L$.\footnote{As a trivial example, if $f^i$ is a function of binary variable $V^j$, and $\L = \{(s, a)\given V^j = 0\}$, then $f^{\L,i}$ is not a function of $V^j$ (which is now a constant), and there is no longer an edge from $V^j$ to $V^i$ in $\G^\L$.}
\end{itemize} 
\begin{figure}[!t]
	\centering
	 \begin{minipage}{0.02\textwidth}
			\hfill
	 \end{minipage}%
    \begin{minipage}{0.16\textwidth}
		\includegraphics[width=\textwidth]{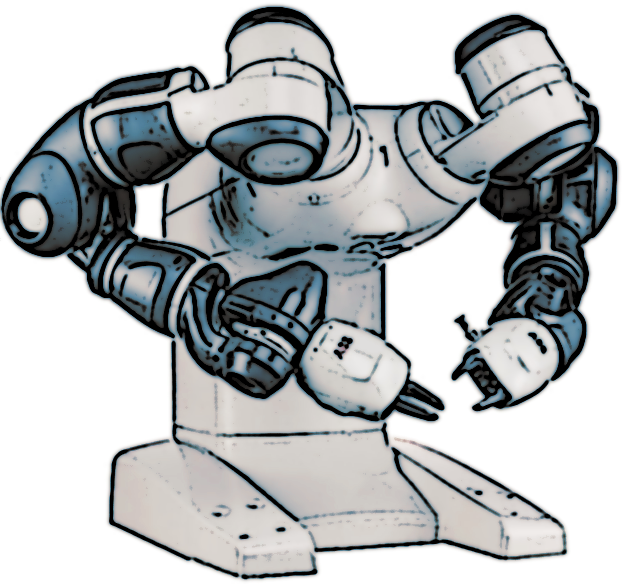}
    \end{minipage}%
    \begin{minipage}{0.05\textwidth}
		\hfill
    \end{minipage}%
    \begin{minipage}{0.77\textwidth}
		\includegraphics[width=\textwidth]{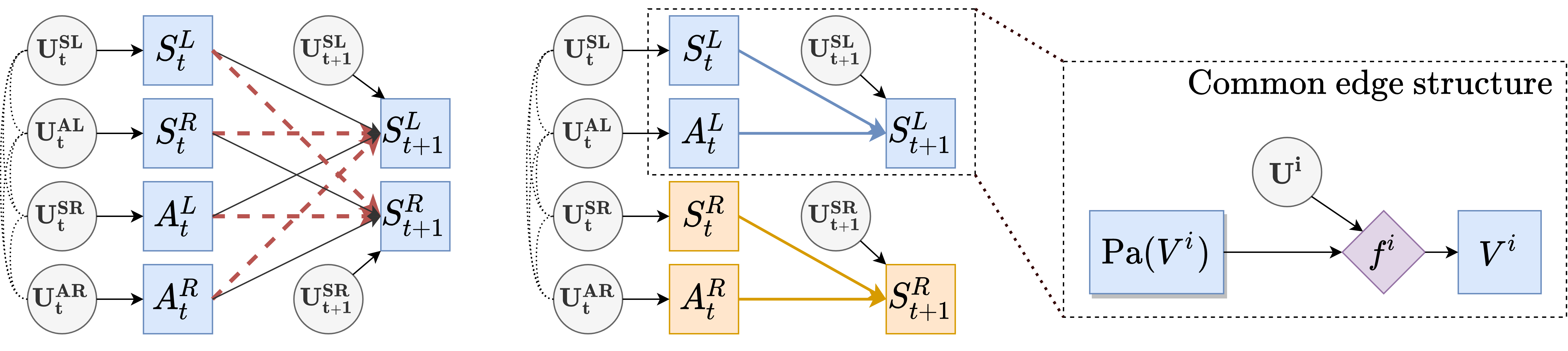}
	\end{minipage}
	\caption{A two-armed robot (\textbf{left}) might be modeled as an MDP whose state and action spaces decompose into left and right subspaces: $\mathcal{S} = \mathcal{S}^L \oplus \mathcal{S}^R, \mathcal{A} = \mathcal{A}^L \oplus \mathcal{A}^R$. Because the arms can touch, the global causal model (\textbf{center left}) between time steps is fully connected, even though left-to-right and right-to-left connections (dashed red edges) are rarely active. By restricting our attention to the subspace of states in which left and right dynamics are independent we get a local causal model (\textbf{center right}) with two components that can be considered separately for training and inference.}
	\label{fig_local_factorization}
\end{figure}

In case of the two-armed robot, conditioning on the arms being far apart simplifies the global DAG to a local DAG with two connected components (Figure \ref{fig_local_factorization}). This can make counterfactual reasoning considerably more efficient: given a factual situation in which the robot's arms are far apart, we can carry out separate counterfactual reasoning about each arm.

\paragraph{Leveraging LCMs} To see the efficiency therein, consider a general case with global causal model $\M$.
To answer the counterfactual question, ``\textit{what might the transition at time $t$ have looked like if component $S_t^i$ had value $x$ instead of value $y$?}'', we would ordinarily apply Pearl's do-calculus to $\M$ to obtain submodel $\M_{\DO(S_t^i = x)} = \langle V, U, \F_x \rangle$, where $\F_x = \F\setminus f^i \cup \{S_t^i = x\}$ and incoming edges to $S_t^i$ are removed from $\G_{\DO(S_t^i = x)}$~\citep{pearl2009causal}. The component distributions at time $t+1$ can be computed by reevaluating each function $f^j$ that depends on $S_t^i$. When $S_t^i$ has many children (as is often the case in the global $\G$), this requires one to estimate outcomes for many structural equations 
$\{f^j | V^j \in \text{Children}(V^i_t)\}$. 
But if both the original value of $S_t$ (with $S_t^i = y$) and its new value (with $S_t^i = x$) are in the set $\L$, the intervention is ``within the bounds'' of local model $\M^\L$ and we can instead work directly with local submodel $\M^\L_{\DO(S_t^i = x)}$ (defined accordingly). The validity of this follows from the definitions: since $f^{\L,j} = f^j\restrict{\L}$ for all of $S_t^i$'s children, the nodes $V_t^k$ for $k \not= i$ at time $t$ are held fixed, and the noise variables at time $t+1$ are unaffected, the distribution at time $t+1$ is the same under both models. When $S_t^i$ has fewer children in $\M^\L$ than in $\M$, this reduces the number of structural equations that need to be considered. 

\subsection{A Generic Recipe for Counterfactual Data Augmentation}
We hypothesize that local causal models will have several applications, and potentially lead to improved agent designs, algorithms, and interpretability. In this Section we focus on improving off-policy learning in RL by exploiting causal independence in local models for \textbf{Counterfactual Data Augmentation (CoDA)}. CoDA augments real data by making counterfactual modifications to a subset of the causal factors at time $t$, leaving the rest of the factors untouched. Following the logic outlined in the Subsection \ref{sec:coda:factored-mdps}, this can understood as manufacturing ``fake'' data samples using the counterfactual model $\M^{[\L]}_{\DO(S_t^{i\mydots j} = x)}$, where we modify the causal factors $S_t^{i\mydots j}$ and resample their children. While this is always possible using a model-based approach if we have good models of the structural equations, it is particularly nice when the causal mechanisms are independent, as we can do counterfactual reasoning 
directly by reusing subsamples from observed trajectories. 

\begin{definition} 
The causal mechanisms represented by subgraphs $\G_i, \G_j \subset \G$ are \textbf{independent} when $\G_i$ and $\G_j$ are disconnected in $\G$.
\end{definition}

When $\G$ is divisible into two (or more) connected components, we can think of each subgraph as an independent causal mechanism that can be reasoned about separately. 

As briefly mentioned in Section~\ref{sec:coda:why_does_data_aug_work}, existing data augmentation techniques can be interpreted as specific instances of CoDA (Figure \ref{fig_coda_examples}). For example, goal relabeling~\citep{kaelbling1993learning}, as used in Hindsight Experience Replay (HER)~\citep{andrychowicz2017hindsight} and Q-learning for Reward Machines~\citep{icarte2018using}, exploits the independence of the goal dynamics $G_t \mapsto G_{t+1}$ (identity map) and the next state dynamics $S_t \times A_{t} \mapsto S_{t+1}$ in order to relabel the goal variable $G_t$ with a counterfactual goal. While the goal relabeling is done model-free, we typically assume knowledge of the goal-based reward mechanism $G_t \times S_t \times A_t \times S_{t+1} \mapsto R_{t+1}$ to relabel the reward, ultimately mixing model-free and model-based reasoning. Similarly, visual feature augmentation, as used in reinforcement learning from pixels~\citep{laskin2020reinforcement,kostrikov2020image} and sim-to-real transfer~\citep{andrychowicz2020learning}, exploits the independence of the physical dynamics $S^P_t \times A_t \mapsto S^P_{t+1}$ and visual feature dynamics $S^V_t \mapsto S^V_{t+1}$ such as textures and camera position, assumed to be static ($S^V_{t+1} = S^V_t$), to counterfactually augment visual features. Both goal relabeling and visual data augmentation rely on \textit{global} independence relationships.  

We propose a novel form of data augmentation. In particular, we observe that whenever an environment transition is within the bounds of some local model $\M^\L$ whose graph $\G^\L$ has the locally independent causal mechanism $\G_i$ as a disconnected subgraph (note: $\G_i$ itself need not be connected), that transition contains an unbiased sample from $\G_i$. Thus, 
given two transitions in $\L$, 
we may mix and match the samples of $\G_i$ to generate counterfactual data, 
so long as the resulting transitions are themselves in $\L$. 

\begin{remark} How much data can we generate using our CoDA algorithm? If we have $n$ independent samples from subspace $\L$ whose graph $\G^\L$ has $m$ connected components, we have $n$ choices for each of the $m$ components, for a total of $n^m$ CoDA samples---an \textbf{exponential} increase in data! One might term this the ``blessing of independent subspaces.''
\end{remark}
\begin{remark} Our discussion has been at the level of a single transition (time slice $(t, t+1)$), which is consistent with the form of data that RL agents typically consume. But we could also use CoDA to mix and match locally independent components over several time steps (see, e.g., Figure \ref{fig_coda_intro_diagram}).
\end{remark}
\begin{remark} \label{remark_distributional_shift} As is typical, counterfactual reasoning changes the data distribution. While off-policy agents are typically robust to distributional shift, future work might explore different ways to control or prioritize the counterfactual data distribution~\citep{schaul2015prioritized,kumar2020discor}. We note, however, that certain prioritization schemes may introduce selection bias~\citep{hernan2020causal}, effectively entangling otherwise independent causal mechanisms (e.g., HER's ``future'' strategy~\citep{andrychowicz2017hindsight} may introduce ``hindsight bias''~\citep{lanka2018archer,schroecker2020universal}).
\end{remark}
\begin{remark} The global independence relations relied upon by goal relabeling and image augmentation are incredibly general, as evidenced by their wide applicability. We posit that certain local independence relations are similarly general. For example, the physical independence of objects separated by space (the billiards balls of Figure \ref{fig_coda_intro_diagram}, the two-armed robot of Figure \ref{fig_local_factorization}, and the environments used in Section \ref{sec:coda:experiments}), and the independence between an agent's actions and the truth of (but not belief about) certain facts the agent is ignorant of (e.g., an opponent's true beliefs).
\end{remark}

\paragraph{Implementing CoDA}

\newcommand{\Var}[1]{\texttt{#1}}
\begin{algorithm*}[t]\small
	\caption{Mask-based Counterfactual Data Augmentation (CoDA)} \label{alg_coda}
	\begin{small}\vspace{-0.15in}
	\begin{multicols}{2}\noindent
		\begin{algorithmic}
			\Function{Coda}{transition $\Var{t1}$, transition $\Var{t2}$}:
			\vspace{0.01in}
    		\State $\Var{s1}, \Var{a1}, \Var{s1'} \gets \Var{t1}$
	    	\State $\Var{s2}, \Var{a2}, \Var{s2'} \gets \Var{t2}$
	    	\State $\Var{m1}, \Var{m2} \gets \textsc{Mask}(\Var{s1}, \Var{a1}), \textsc{Mask}(\Var{s2}, \Var{a2})$
	    	\State $\Var{D1} \gets  \textsc{Components}(\Var{m1})$
 		    \State $\Var{D2} \gets  \textsc{Components}(\Var{m2})$
		    \State $\Var{d} \gets$ random sample from $(\Var{D1}$ $\cap$ $\Var{D2})$
		    \State $\Var{\~s}, \Var{\~a}, \Var{\~s'} \gets \textrm{copy}(\Var{s1}, \Var{a1}, \Var{s1'})$
		    \State $\Var{\~s[d]}, \Var{\~a[d]}, \Var{\~s'[d]} \gets \Var{s2[d]}, \Var{a2[d]}, \Var{s2'[d]}$
		    \State $\Var{\~D} \gets \textsc{Components}(\textsc{Mask}(\Var{\~s}, \Var{\~a}))$
		    \State \textbf{return} $(\Var{\~s},\Var{\~a},\Var{\~s'})$ \textbf{if} $\Var{d} \in \Var{\~D}$ \textbf{else} $\emptyset$
			\EndFunction
		\end{algorithmic}
		\columnbreak
		\begin{algorithmic}
		\Function{Mask}{state \Var{s}, action \Var{a}}:
			\begin{adjustwidth}{\algorithmicindent}{0pt}
			Returns $(n+m)\times(n)$ matrix indicating if the $n$ next state components (columns) locally depend on the $n$ state and $m$ action components (rows). 
			\end{adjustwidth}
		\EndFunction
		\Statex
		\vspace{-0.01in}
		\Function{Components}{mask \Var{m}}:
			\begin{adjustwidth}{\algorithmicindent}{0pt}
			Using the mask as the adjacency matrix for $\G^\L$ (with dummy columns for next action), finds the set of connected components $C = \{C_j\}$, and returns the set of independent components \newline $D = \{\G_i = \bigcup_k \mathcal{C}^i_{k}\given\mathcal{C}^i \subset \textrm{powerset}(C)\}$.
			\end{adjustwidth}
    	\EndFunction
		\end{algorithmic}
		\end{multicols}\vspace{-0.12in}
	\end{small}
\end{algorithm*}

We implement CoDA, as outlined above and visualized in Figure \ref{fig_coda_examples}(d), as a function of two factual transitions and a mask function $M(s_t, a_t): \S \times \A \to \{0, 1\}^{(n+m)\times n}$ that represents the adjacency matrix of the sparsest local causal graph $\G^\L$ such that $\L$ is a neighborhood of $(s_t, a_t)$.%
\footnote{If Jacobian $\partial P / \partial x$ exists at $x = (s_t, a_t)$, the ground truth $M(s_t, a_t)$ equals $|(\partial P / \partial x)^T| > 0$.} 
We apply $M$ to each transition to obtain local masks $\texttt{m}_1$ and $\texttt{m}_2$, compute their connected components, and 
swap independent components $\G_i$ and $\G_j$ (mutually disjoint and collectively exhaustive groups of connected components)
between the transitions to produce a counterfactual proposal. We then apply $M$ to the counterfactual $(\tilde s_t, \tilde a_t)$ to validate the proposal---if the counterfactual mask $\texttt{\~ m}$ shares the same graph partitions as $\texttt{m}_1$ and $\texttt{m}_2$, we accept the proposal as a CoDA sample. See Algorithm \ref{alg_coda}. 

Note that masks $\texttt{m}_1$, $\texttt{m}_2$ and $\texttt{\~ m}$ correspond to different neighborhoods $\L_1, \L_2$ and $\tilde\L$, so it is not clear that we are ``within the bounds'' of any model $\M^\L$ as was required in Subsection \ref{sec:coda:factored-mdps} for valid counterfactual reasoning. 
To correct this discrepancy we use the following proposition and additionally require the causal mechanisms (subgraphs) for independent components $\G_i$ and $\G_j$ to share structural equations in each local neighborhood: $f^{\L_1,i} = f^{\L_2,i} = f^{\tilde\L,i}$ and $f^{\L_1,j} = f^{\L_2,j} = f^{\tilde\L,j}$.\footnote{To see why this is not trivially true, imagine there are two rooms, one of which is icy. In either room the ground conditions are locally independent of movement dynamics, but not so if we consider their union.}  This makes our reasoning valid in the local subspace $\L^* = \L_1 \cup \L_2 \cup \tilde\L$.
See Appendix \ref{app:coda:proposition} for proof. 

\begin{proposition}\label{proposition_union_of_local_sets}
The causal mechanisms represented by $\G_i, \G_j \subset \G$ are independent in $\G^{\L_1 \cup \L_2}$ if and only if  $\G_i$ and $\G_j$ are independent in both $\G^{\L_1}$ and $\G^{\L_2}$, and $f^{\L_1,i} = f^{\L_2,i}, f^{\L_1,j} = f^{\L_2,j}$.
\end{proposition}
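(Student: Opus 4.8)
\textbf{Proof proposal for Proposition~\ref{proposition_union_of_local_sets}.}

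The plan is to unwind the definition of the induced LCM on a union of subspaces and to apply structural minimality to each of the three local models $\G^{\L_1}$, $\G^{\L_2}$, and $\G^{\L_1 \cup \L_2}$. First I would observe that, for any subspace $\L$, the edge set of $\G^\L$ is pinned down by structural minimality applied to the restricted structural equations $f^{\L,i} = f^i\restrict{\L}$: there is an edge $V^j \to V^i$ in $\G^\L$ if and only if $f^{\L,i}$ genuinely depends on $V^j$ somewhere in $\L$. Since $\L_1 \cup \L_2 \supseteq \L_1$ and $\L_1 \cup \L_2 \supseteq \L_2$, the restricted function $f^{\L_1 \cup \L_2, i}$ agrees with $f^{\L_1, i}$ on $\L_1$ and with $f^{\L_2, i}$ on $\L_2$; this is exactly where the hypothesis $f^{\L_1,i} = f^{\L_2,i}$ (and likewise for $j$) on the overlapping components is needed, so that the three restrictions are genuinely restrictions of one common function on $\L_1 \cup \L_2$ and no ``new'' dependence is created by gluing.

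For the forward direction, suppose $\G_i$ and $\G_j$ are disconnected in $\G^{\L_1 \cup \L_2}$. Then no structural equation associated with a node of $\G_i$ depends (anywhere in $\L_1 \cup \L_2$) on a node of $\G_j$, and vice versa; a fortiori this holds when we further restrict to $\L_1$ alone or $\L_2$ alone, so the two mechanisms are disconnected in $\G^{\L_1}$ and in $\G^{\L_2}$. The equality of structural equations on the shared components is then immediate from the common-restriction observation above. For the converse, suppose $\G_i$ and $\G_j$ are disconnected in both $\G^{\L_1}$ and $\G^{\L_2}$ and that $f^{\L_1,i} = f^{\L_2,i}$, $f^{\L_1,j} = f^{\L_2,j}$. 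I would argue by contradiction: an edge between $\G_i$ and $\G_j$ in $\G^{\L_1 \cup \L_2}$ would, by structural minimality, correspond to some instantiation in $\L_1 \cup \L_2$ witnessing a direct effect of a $\G_j$-node on a $\G_i$-node (or vice versa); but every such instantiation lies in $\L_1$ or in $\L_2$, and by the hypothesis the relevant structural equation, which is the \emph{same} function on both pieces, exhibits no such dependence there --- contradiction. Hence no cross-edge exists and the mechanisms are independent in $\G^{\L_1 \cup \L_2}$.

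The main obstacle I anticipate is the bookkeeping around whether ``dependence of $f^{\L,i}$ on $V^j$'' is really inherited cleanly from piece to union and back: structural minimality is an \emph{existential} statement over instantiations of the noise and the other variables, and one must be careful that restricting the domain to $\L_1 \cup \L_2$ does not spuriously revive an edge that was absent on each $\L_k$ separately --- this is precisely why the shared-structural-equation hypothesis is not redundant (the two-rooms-one-icy example in the text shows what goes wrong without it, since there the restrictions $f^{\L_1,i}$ and $f^{\L_2,i}$ differ). I would make this precise by noting that $\textrm{range}(V \setminus \{V^i,V^j\})$ restricted to $\L_1 \cup \L_2$ is the union of the corresponding restricted ranges, so any witnessing tuple for a cross-edge in the union lies in one of the two pieces, reducing the union case to the piecewise case. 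Once that reduction is stated carefully the remainder is a direct application of the definitions, and I would relegate the detailed instantiation-chasing to Appendix~\ref{app:coda:proposition} as the text indicates.
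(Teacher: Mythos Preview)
Your forward direction is fine and matches the paper's argument (the paper packages the ``a fortiori'' step as a lemma that $\G^\L$ is sparser than $\G^{\mathcal X}$ whenever $\L \subset \mathcal X$, but the content is identical).

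The reverse direction has a real gap. A witness for structural minimality is not a single point but a \emph{pair} $(s_1,a_1),(s_2,a_2) \in \L_1 \cup \L_2$ that differ only in the value of $V^j$ and satisfy $f^i(s_1,a_1,u^i) \neq f^i(s_2,a_2,u^i)$. Your claim that ``any witnessing tuple for a cross-edge in the union lies in one of the two pieces'' does not follow from the range observation about $V \setminus \{V^i,V^j\}$: even if the shared coordinates $v^{-ij}$ are realizable in both $\L_1$ and $\L_2$, nothing prevents $(s_1,a_1) \in \L_1 \setminus \L_2$ and $(s_2,a_2) \in \L_2 \setminus \L_1$. In that straddling case neither piece by itself contains the witnessing pair, so you cannot reduce to the piecewise hypothesis in the way you describe.

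The paper closes this gap differently. In the straddling case it observes that $(s_1,a_1)$ and $(s_2,a_2)$ differ only in $V^j \in \G_j$, and since $\G_i,\G_j$ are independent in both $\G^{\L_1}$ and $\G^{\L_2}$, the node $V^j$ is not a parent of $V^i$ in either local graph. Hence the actual parents of $V^i$ (all lying in $\G_i$) take identical values at both points. Now the hypothesis $f^{\L_1,i} = f^{\L_2,i}$ is applied not to argue ``no dependence on $V^j$ in each piece'' but to equate the two evaluations across pieces: $f^i(s_1,a_1,u^i) = f^{\L_1,i}(\cdot) = f^{\L_2,i}(\cdot) = f^i(s_2,a_2,u^i)$, contradicting the witness. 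This transport-across-pieces step is the substantive use of the equal-function hypothesis, and it is missing from your sketch.
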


Since CoDA only modifies data within local subspaces, this biases the resulting replay buffer to have more factorized transitions.
In our experiments below, we specify the ratio of observed-to-counterfactual data heuristically to control this selection bias, but find that off-policy agents are reasonably robust to large proportions of CoDA-sampled trajectories.
We leave a full characterization of the selection bias in CoDA to future studies, noting that knowledge of graph topology was shown to be useful in mitigating selection bias for causal effect estimation \citep{bareinboim2012controlling, bareinboim2014recovering}.

\paragraph{Inferring local factorization} 
While the ground truth mask function $M$ may be available in rare cases as part of a simulator, the general case either requires a domain expert to specify an approximate causal model (as in goal relabeling and visual data augmentation) or requires the agent to learn the local factorization from data.
Given how common independence due to physical separation of objects is, the former option will often be available. 
In the latter case, we note that the same data could also be used to learn a forward model. Thus, there is an implicit assumption in the latter case that learning the local factorization is easier than modeling the dynamics. We think this assumption is rather mild, as an accurate forward dynamics model would subsume the factorization, and we provide some empirical evidence of its validity in Section \ref{sec:coda:experiments}.

Learning the local factorization is similar to conditional causal structure discovery~\citep{spirtes2000causation, runge2019inferring, peters2017elements}, conditioned on neighborhood
$\L$ of $(s_t, a_t)$, 
except that the same structural equations must be applied globally (if the structural equations were conditioned on 
$\L$, 
Proposition \ref{proposition_union_of_local_sets} would fail). 
As there are many algorithms for general structure discovery~\citep{spirtes2000causation,runge2019inferring}, and the arrow of time simplifies the inquiry~\citep{granger1969investigating,peters2017elements}, there may be many ways to approach this problem.
In the first of our two experiments we train a single-head set transformer architecture~\citep{vaswani2017attention,lee2018set}.
to do next-state prediction, then derive a locally conditioned network mask $M(s_t, a_t)$ as the product of the learned (input-specific) attention masks (see Appendix~\ref{app:coda:batch_rl_details} for details.)
The second experiment uses heuristic LCM that is specified by hand.
We expect that CoDA will see an increase in performance an applicability alongside future improvements in (local) causal discovery in dynamic settings.

\subsection{Experiments}
\label{sec:coda:experiments}
Our experiments evaluate CoDA in the online, batch, and goal-conditioned settings, in each case finding that CoDA significantly improves agent performance as compared to non-CoDA baselines. Since CoDA only modifies an agent's training data, we expect these improvements to extend to other off-policy task settings in which the state space can be accurately disentangled. Below we outline our experimental design and results, deferring specific details and additional results to Appendix \ref{app:coda:training_details}.

\begin{figure}[!b]
	\centering
	\includegraphics[width=\textwidth]{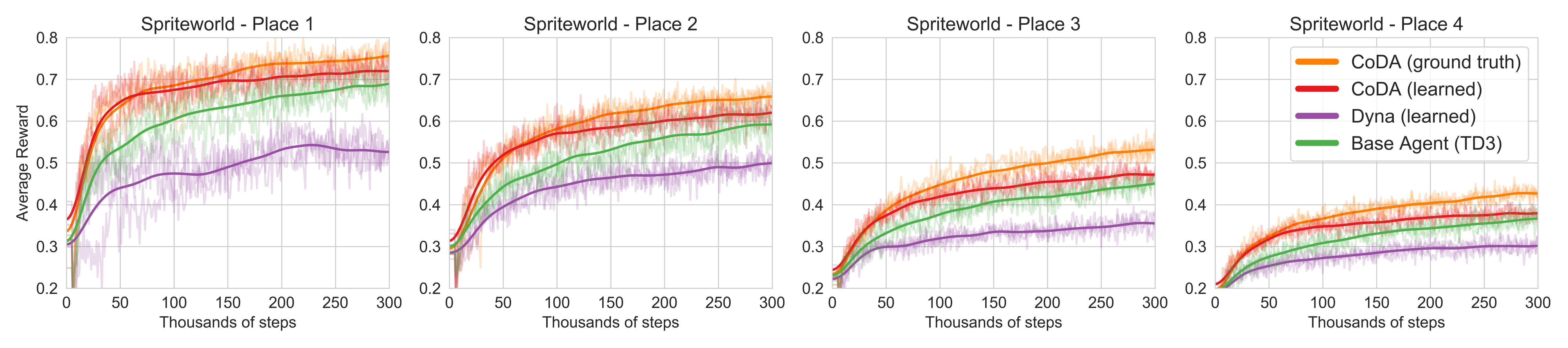}
	\caption{\textbf{Standard online RL} (3 seeds): CoDA with the ground truth mask always performs the best, validating our basic idea. CoDA with a pre-trained model also offers a significant early boost in sample efficiency and maintains its lead over the base TD3 agent throughout training. Using the same model to generate data directly (a la Dyna~\citep{sutton1991dyna}) performs poorly, suggesting significant model bias.}
	\label{fig_rl_results_plot}
\end{figure}

{\small
\begin{table*}[t]
\newcommand{\smol}[1]{{\scriptsize\texttt{#1}}}
\centering\small
\begin{tabular}{c@{\hskip 0.2in}cc@{\hskip 0.16in}|@{\hskip 0.16in}cccc}
	\toprule
$|\mathcal{D}|$	&Real data& MBPO&
\multicolumn{4}{c}{Ratio of Real:CoDA [:MBPO] data (ours)}\\
$(1000s)$ &   1\smol{R}  &   1\smol{R}:1\smol{M} &  1\smol{R}:1\smol{C}  &  1\smol{R}:3\smol{C}  &  1\smol{R}:5\smol{C} & 1\smol{R}:3\smol{C}:1\smol{M} \\
	\midrule
$\hphantom{0}25$ & $13.2 \pm 0.7$  & $18.5 \pm 1.5$  & $43.8 \pm 2.8$  & $40.9 \pm 2.5$  & $38.4 \pm 4.9$  & $\mybm{46.8} \pm 3.1$  \\
$\hphantom{0}50$ & $22.8 \pm 3.0$  & $36.6 \pm 4.3$  & $66.6 \pm 3.8$  & $64.4 \pm 3.1$  & $62.5 \pm 3.5$  & $\mybm{70.4} \pm 3.8$  \\
$\hphantom{0}75$ & $43.2 \pm 4.9$  & $46.0 \pm 4.7$  & $73.4 \pm 2.8$  & $\mybm{76.7} \pm 2.6$  & $75.0 \pm 3.4$  & $74.6 \pm 3.2$  \\
$100$ & $63.0 \pm 3.1$  & $66.4 \pm 4.9$  & $77.8 \pm 2.0$  & $\mybm{82.7} \pm 1.5$  & $76.6 \pm 3.0$  & $73.7 \pm 2.9$  \\
$150$ & $77.4 \pm 1.2$  & $72.6 \pm 5.6$  & $82.2 \pm 1.8$  & $\mybm{85.8} \pm 1.4$  & $84.2 \pm 1.0$  & $79.7 \pm 3.6$  \\
$250$ & $78.2 \pm 2.7$  & $77.9 \pm 2.4$  & $85.0 \pm 2.9$  & $\mybm{87.8} \pm 1.8$  & $87.0 \pm 1.0$  & $78.3 \pm 4.9$  \\
	\bottomrule
	\vspace{-0.12in}
\end{tabular}
\caption{\textbf{Batch RL} (10 seeds): Mean success ($\pm$ standard error, estimated using 1000 bootstrap resamples) on \texttt{Pong} environment. CoDA with learned masking function more than doubles the effective data size, resulting in a 3x performance boost at smaller data sizes. Note that a 1\smol{R}:5\smol{C} Real:CoDA ratio performs slightly worse than a 1\smol{R}:3\smol{C} ratio due to distributional shift (Remark \ref{remark_distributional_shift}).
}\label{fig_batch_rl_and_fetch}
\vspace{-\baselineskip}
\end{table*}
}

\paragraph{Batch RL}

\begin{wrapfigure}{R}{0.17\textwidth}
	\centering
    \captionsetup{width=0.16\textwidth}
	\includegraphics[width=0.16\textwidth,height=0.15\textwidth]{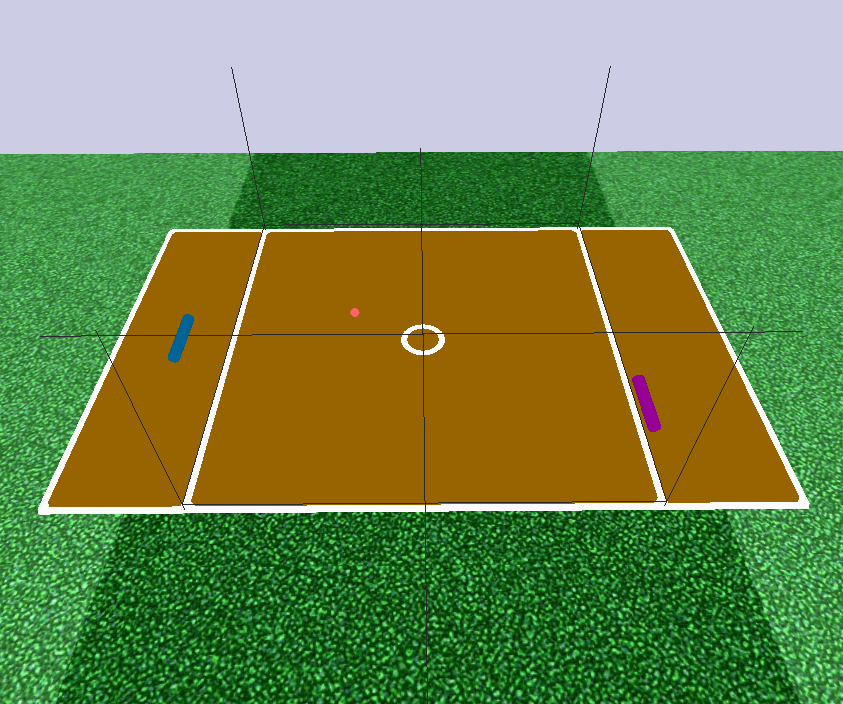}
	\label{fig_env_pong}
    \vspace{-\baselineskip}
\end{wrapfigure}

A natural setting for CoDA is batch-constrained RL, where an agent has access to an existing transition-level dataset, but cannot collect more data via exploration~\citep{fujimoto2018off,levine2020offline}.  Another reason why CoDA is attractive in this setting is that there is no \textit{a priori} reason to prefer the given batch data distribution to a counterfactual one.  
For this experiment we use a continuous control \texttt{Pong} environment based on \texttt{RoboschoolPong}~\citep{klimov2017roboschool}. The agent must hit the ball past the opponent, receiving reward of +1 when the ball is behind the opponent's paddle, -1 when the ball is behind the agent's paddle, and 0 otherwise. 
Since our transformer model performed poorly when used as a dynamics model, our Dyna baseline for batch RL adopts a state-of-the-art architecture~\citep{janner2019trust} that employs a 7-model ensemble (MBPO).
We collect datasets of up to 250,000 samples from an pre-trained policy with added noise. For each dataset, we train both mask and reward functions (and in case of MBPO, the dynamics model) on the provided data and use them to generate different amounts of counterfactual data. We also consider combining CoDA with MBPO, by first expanding the dataset with MBPO and then applying CoDA to the result. We train the same TD3 agent on the expanded datasets in batch mode for 500,000 optimization steps. The results in Table \ref{fig_batch_rl_and_fetch} show that with only 3 state factors (two paddles and ball), applying CoDA is approximately equivalent to doubling the amount of real data.
\begin{figure*}[!h]
    \centering
    \begin{minipage}{0.24\textwidth}
    \centering
    \includegraphics[width=\textwidth]{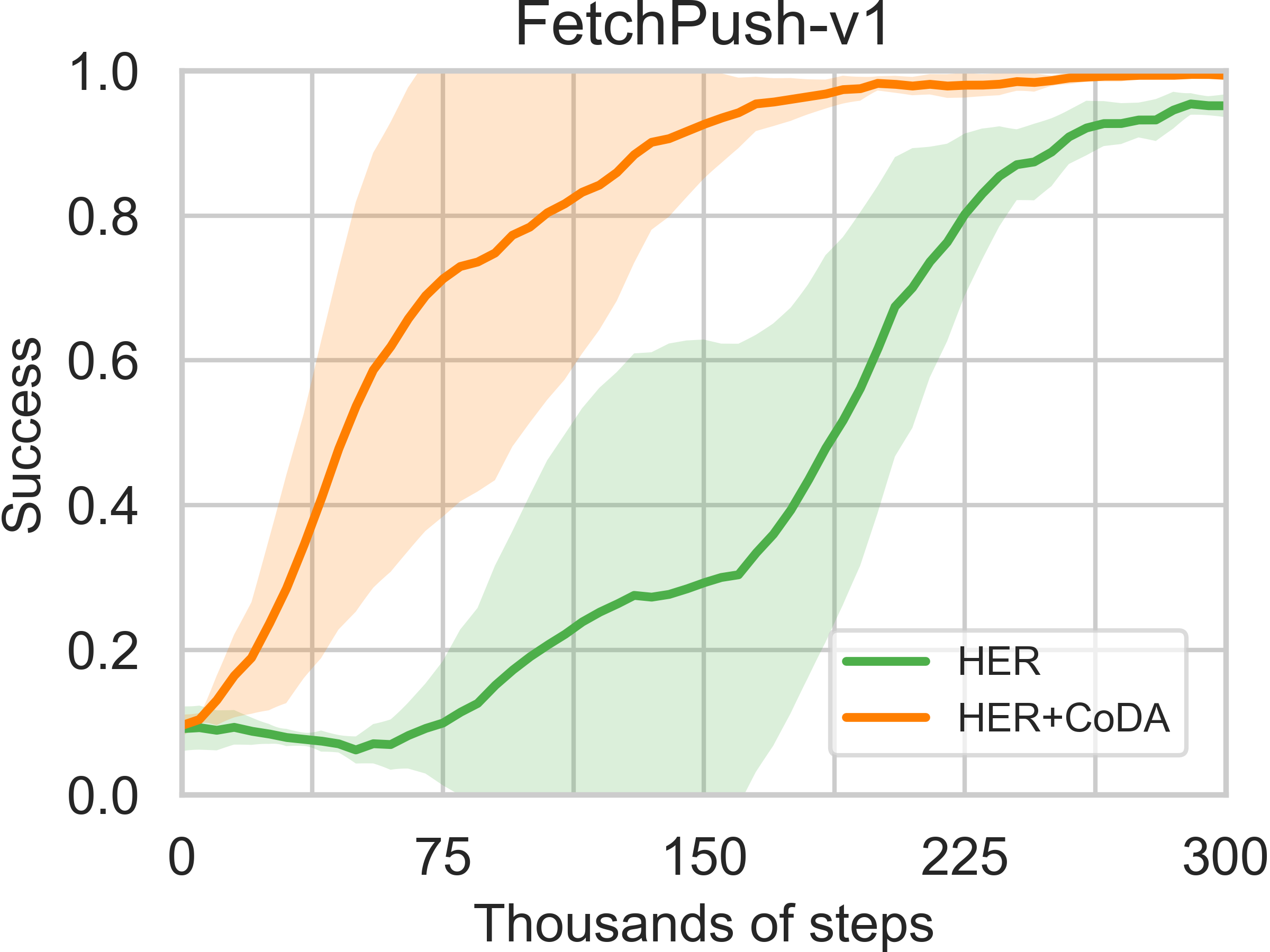}
    \end{minipage}%
    \begin{minipage}{0.01\textwidth}
   	\hphantom{X}
    \end{minipage}%
    \begin{minipage}{0.24\textwidth}
    \centering
    \includegraphics[width=\textwidth]{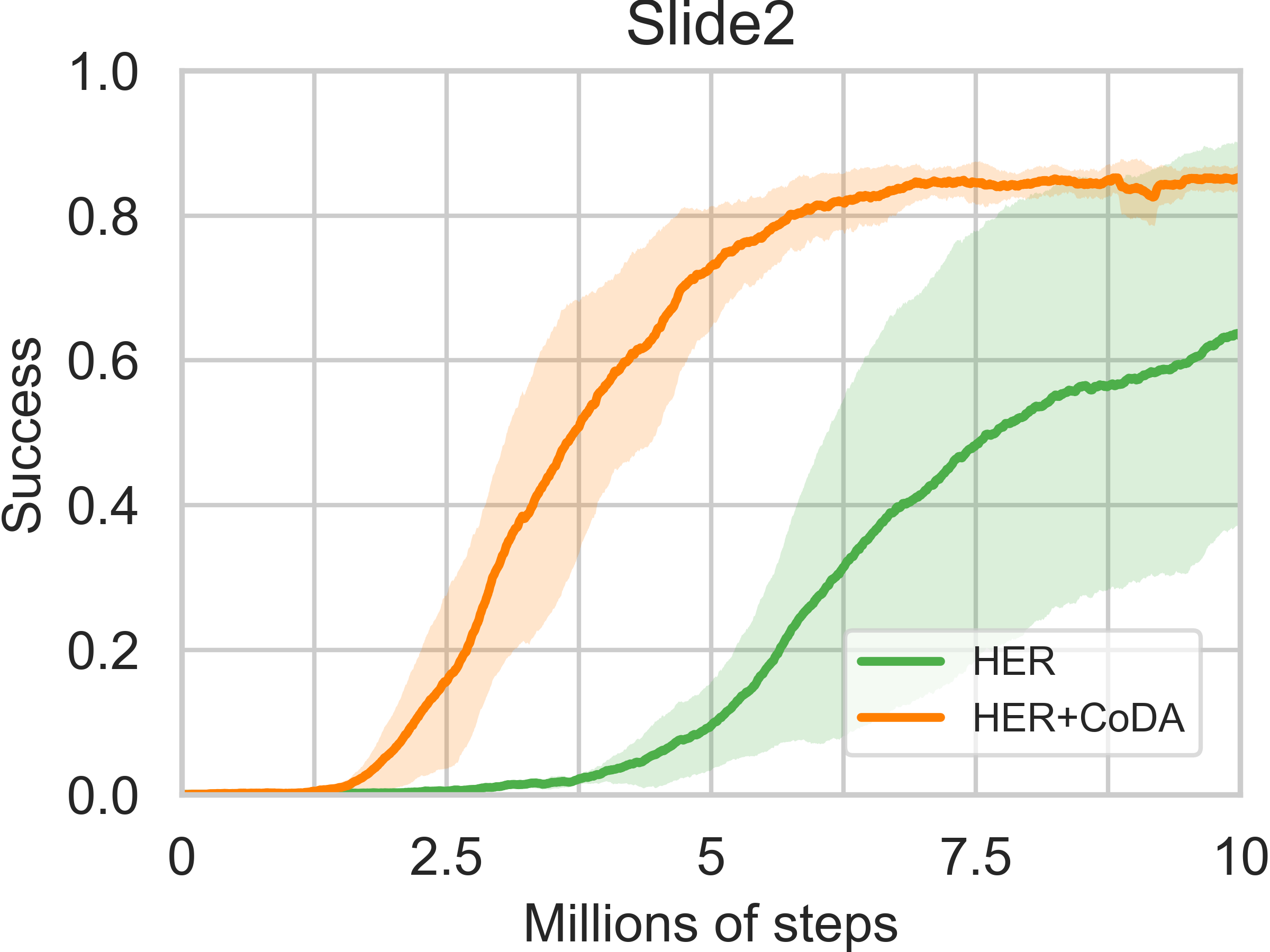}
    \end{minipage}%
    \begin{minipage}{0.04\textwidth}
	\hphantom{X}
    \end{minipage}%
    \begin{minipage}{0.47\textwidth}
    \vspace{0.08in}
    \renewcommand{\figurename}{Fig.}
    \caption{\textbf{Goal-conditioned RL} (5 seeds): In \texttt{FetchPush} and the challenging \texttt{Slide2} environment, a HER agent whose dataset has been enlarged with CoDA approximately doubles the sample efficiency of the base HER agent.}\label{fig_goal_conditioned_results}
    \end{minipage}
    \vspace{-\baselineskip}
\end{figure*}
\begin{wrapfigure}{r}{0.17\textwidth}
    \vspace{-\baselineskip}
	\centering
    \captionsetup{width=0.16\textwidth}
	\includegraphics[width=0.16\textwidth,height=0.15\textwidth]{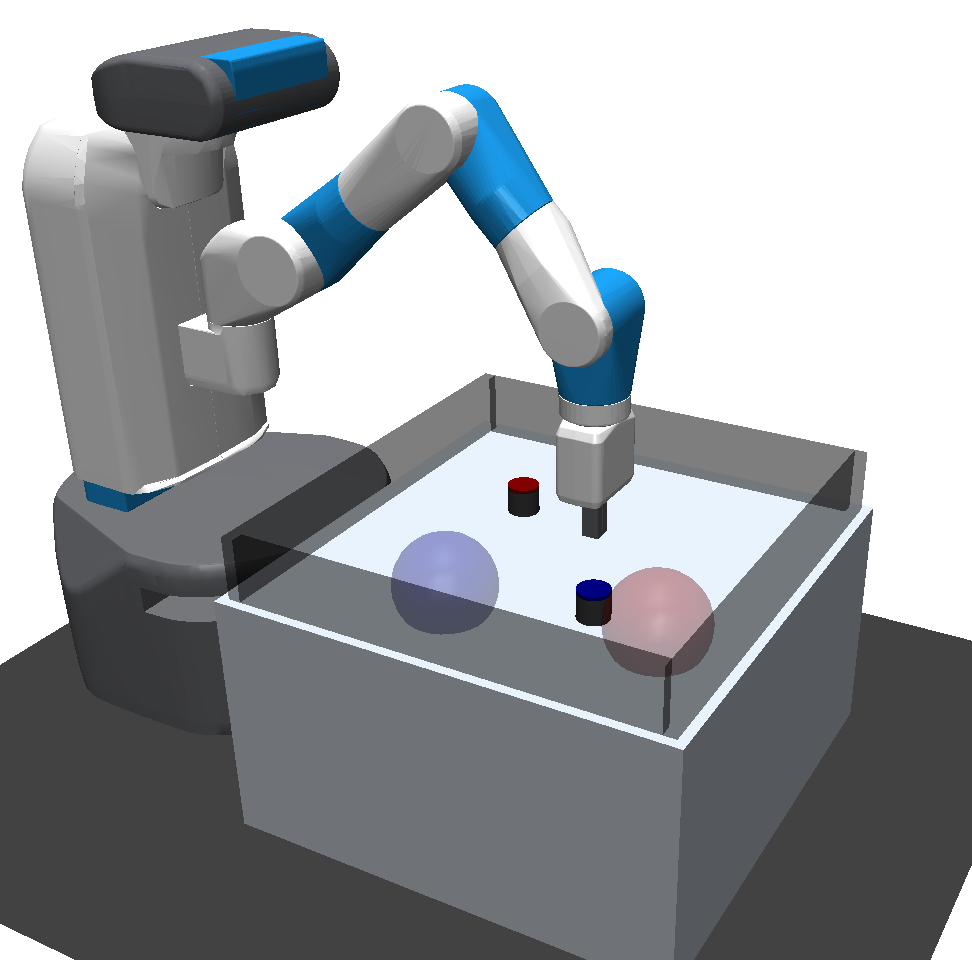}
	\label{fig_env_fetch}
    \vspace{-\baselineskip}
\end{wrapfigure}
\paragraph{Goal-conditioned RL} As HER~\citep{andrychowicz2017hindsight} is an instance of prioritized CoDA that greatly improves sample efficiency in sparse-reward tasks, can our unprioritized CoDA algorithm further improve HER agents? We use HER to relabel goals on real data only, relying on random CoDA-style goal relabeling for CoDA data.
After finding that CoDA obtains state-of-the-art results in \texttt{FetchPush-v1}~\citep{plappert2018multi}, we show that CoDA also accelerates learning in a novel and significantly more challenging \texttt{Slide2} environment, where the agent must slide two pucks onto their targets (Figure \ref{fig_goal_conditioned_results}). For this experiment, we specified a heuristic mask using domain knowledge (``objects are disentangled if more than 10cm apart'') that worked in both \texttt{FetchPush} and \texttt{Slide2} despite different dynamics.
\section{Extending the Experience Buffer Using Dynamics Models that Generalize OOD}
%
\newcommand{\methodName}{MoCoDA\xspace}
\newcommand{\methodNameFull}{Model-based Counterfactual Data Augmentation\xspace}
\subsection{CoDA as an Implicit Generative Model}\label{sec:coda:implicit_generative_model}
CoDA is a process of stitching together sub-trajectories of real experience to generate novel but plausible trajectories, thus amplifying the number of available training data.
Stitching together data samples in this way is a form of implicit generative modeling that will interpolate the available training distribution. 
However, in more challenging settings there may be \emph{zero} support in crucial subspaces of the state-action space.
In this case, CoDA is unlikely to address the data coverage issue.
This Section describes an extension to CoDA capable of \emph{extrapolating} the training distribution.
We do so modeling the transition dynamics explicitly using neural networks, then use object-centric causal priors to sample from a modified model with expanded support.

\subsection{Motivating Model-based CoDA}

Modern reinforcement learning (RL) algorithms have demonstrated remarkable success in several different domains such as games~\cite{mnih2015human, silver2017mastering} and robotic manipulation~\cite{kalashnikov2018scalable, andrychowicz2020learning}. By repeatedly attempting a single task through trial-and-error, these algorithms can learn to collect useful experience and eventually solve the task of interest. 
However, designing agents that can generalize in \textit{off-task} and \textit{multi-task} settings remains an open and challenging research question. 
This is especially true in the offline and zero-shot settings, in which the training data might be unrelated to the target task, and may lack sufficient coverage over possible states.

One way to enable generalization in such cases is through structured representations of states, transition dynamics, or task spaces. These representations can be directly learned, sourced from known or learned abstractions over the state space, or derived from causal knowledge of the world. Symmetries present in such representations enable compositional generalization to new configurations of states or tasks, either by building the structure into the function approximator or algorithm \citep{kipf2019contrastive,veerapaneni2020entity,goyal2019recurrent,nangue2020boolean}, or by using the structure for data augmentation \citep{andrychowicz2017hindsight,laskin2020reinforcement,pitis2020counterfactual}.

Here we extend past work on structure-driven data augmentation by using a locally factored model of the transition dynamics to generate counterfactual training distributions.
This enables agents to generalize beyond the support of their original training distribution, including to novel tasks where learning the optimal policy requires access to states never seen in the experience buffer.
Our key insight is that a learned dynamics model that accurately captures local causal structure (a ``locally factored'' dynamics model) will predictably exhibit good generalization performance outside the empirical training distribution.
We propose \methodNameFull (\methodName), which generates an augmented state-action distribution  where its locally factored dynamics model is likely to perform well, then applies its dynamics model to generate new transition data.
By training the agent's policy and value modules on this augmented dataset, they too learn to generalize well out-of-distribution. 

\subsection{Object-centric Models of Transition Dynamics}\label{sec:mocoda:theory}
\paragraph{Sample Complexity of Training a Locally Factored Dynamics Model}
We now provide an original adaptation of an elementary result from model-based RL to the \textit{locally} factored setting, to show that factorization can exponentially improve sample complexity. We note that several theoretical works have shown that the Factored MDP (FMDP) structure can be exploited to obtain similarly strong sample complexity bounds in the FMDP setting. Our goal here is not to improve upon these results, but to adapt a small part (model-based generalization) to the significantly more general locally factored setting and show that local factorization is enough for (1) \textit{exponential gains in sample complexity} and (2) \textit{out-of-distribution generalization} with respect to the empirical joint, to a set of states and actions that may be exponentially larger than the empirical set. Note that the following discussion applies to tabular RL, but we apply our method to continuous domains. 

\textbf{Notation}.\ \ We work with finite state and action spaces ($|\S|, |\A| < \infty$) and assume that there are $m$ local subspaces $\L$ of size $|\L|$, such that $m|\L| = |\S||\A|$. For each subspace $\L$, we assume transitions factor into $k$ causal mechanisms $\{P_i\}$, each with the same number of possible children, $|c_i|$, and the same number of possible parents, $|\parents_i|$. Note $m\Pi_i|c_i| = |\S|$  (child sets are mutually exclusive) but $m\Pi_i|\parents_i| \geq |\S||\A|$ (parent sets may overlap).

\begin{theorem}\label{theorem_one}
Let $n$ be the number of empirical samples used to train the model of each local causal mechanism, $P_{i, \theta}^\L$ at each configuration of parents $\parents_i = x$. There exists constant $c$ such that, if
$$
n \geq \frac{ck^2|c_i|\log(|\S||\A|/\delta)}{\epsilon^2},
$$
then, with probability at least $1-\delta$, we have:
$$\max_{(s, a)} \Vert P(s, a) - P_{\theta}(s, a) \Vert_1 \leq  \epsilon.$$
\end{theorem}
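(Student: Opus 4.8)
The argument recombines three standard ingredients: (i) the folklore $\ell_1$ sample-complexity bound for learning a discrete distribution, applied \emph{per local mechanism} rather than to the whole transition kernel; (ii) a ``product decomposition'' inequality bounding the $\ell_1$ error of a factored kernel by the sum of the errors of its factors; and (iii) a union bound over all (local subspace, mechanism, parent-configuration) triples. The only genuinely new element relative to the classical unfactored ``simulation lemma'' (Kearns--Singh style) is the counting: the union bound now ranges over local factors, so $|c_i|$ and the number of local parent configurations take the place of $|\S|$, which is precisely where the exponential savings come from.

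First I would fix a per-factor accuracy $\epsilon' := \epsilon/k$ and a per-factor failure probability $\delta'$ to be chosen later. For a fixed $\L$, mechanism $i$, and parent configuration $x$, the learner sees $n$ i.i.d. draws of the child variable from $P_i^\L(\cdot \mid x)$, a distribution on a set of size $|c_i|$. I would invoke the bound $\Pr\!\big[\,\|\widehat P_i^\L(\cdot\mid x) - P_i^\L(\cdot\mid x)\|_1 > \epsilon'\,\big] \le 2^{|c_i|} e^{-n\epsilon'^2/2}$ (Weissman et al.), so that $n \ge 2(|c_i|\ln 2 + \ln(1/\delta'))/\epsilon'^2$ makes this event have probability at most $\delta'$; substituting $\epsilon' = \epsilon/k$ gives $n \ge c\,k^2(|c_i| + \log(1/\delta'))/\epsilon^2$. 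Here I take $P_{i,\theta}^\L$ to be the count-based estimator (or any estimator meeting the same guarantee), which is the usual assumption in model-based sample-complexity results.

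Next I would establish the decomposition inequality: for product distributions $P = \bigotimes_{i=1}^k P_i$ and $Q = \bigotimes_{i=1}^k Q_i$ over disjoint coordinate blocks, $\|P - Q\|_1 \le \sum_{i=1}^k \|P_i - Q_i\|_1$, proved by a $k$-step hybrid argument (swap one factor at a time, use the triangle inequality and $\|p\otimes r - p'\otimes r\|_1 = \|p - p'\|_1$). Since any $(s,a)$ lies in some local subspace $\L$ on which the kernel factors, $P(s,a) = \bigotimes_{i=1}^k P_i^\L(\cdot\mid \parents_i(s,a))$ and likewise for $P_\theta$, so the inequality yields $\|P(s,a) - P_\theta(s,a)\|_1 \le \sum_{i=1}^k \|P_i^\L(\cdot\mid x_i) - P_{i,\theta}^\L(\cdot\mid x_i)\|_1$ with $x_i = \parents_i(s,a)$. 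Hence if every conditional factor is learned to $\ell_1$ accuracy $\epsilon/k$, then $\max_{(s,a)}\|P(s,a) - P_\theta(s,a)\|_1 \le \epsilon$. Finally I would union-bound over all conditional factors to be learned, i.e. over triples $(\L, i, x)$; its cardinality satisfies $N \le \sum_\L \sum_{i=1}^k |\parents_i^\L| \le m\,k\,|\S||\A|$, and since $m \le |\S||\A|$ and $k$ is at most a polynomial in $\log|\S|$, one gets $N \le \mathrm{poly}(|\S||\A|)$ and thus $\log(N/\delta) = O(\log(|\S||\A|/\delta))$. Taking $\delta' = \delta/N$, combining with the per-factor bound, and using $|c_i| + \log(N/\delta) \le |c_i|\cdot\log(|\S||\A|/\delta)$ (absorbing constants, and WLOG $|c_i|\ge 2$) gives the stated $n \ge c\,k^2|c_i|\log(|\S||\A|/\delta)/\epsilon^2$; the conclusion then follows from steps one and three.

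I expect the main obstacle to be the bookkeeping in the final step: bounding $N$ by a polynomial in $|\S||\A|$ is not immediate because parent sets of distinct mechanisms may overlap (so $N$ is \emph{not} simply $|\S||\A|$), and one must verify that splitting the error budget as $\epsilon/k$ per factor is both what the decomposition inequality forces and what the product structure allows. The probabilistic estimates themselves (the concentration bound and the union bound) are entirely routine.
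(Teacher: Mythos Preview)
Your proposal is correct and follows essentially the same three-step skeleton as the paper: per-factor $\ell_1$ concentration for a discrete distribution over $|c_i|$ outcomes, the product/hybrid decomposition $\|P-P_\theta\|_1 \le \sum_i \|P_i - P_{i,\theta}\|_1$ (which the paper imports as a lemma of Strehl rather than reproving), and a final union bound. The only notable difference is the last step: the paper union-bounds directly over the $|\S||\A|$ state--action pairs (each pair determines all $k$ parent configurations, so accuracy at every $(s,a)$ is the event of interest), which sidesteps the triple-counting you flag as the main obstacle and yields the $\log(|\S||\A|/\delta)$ factor immediately without needing to bound $N$ and then coarsen $|c_i| + \log(N/\delta)$ to $|c_i|\log(|\S||\A|/\delta)$.
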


\begin{proof}[Sketch of Proof]
We apply a concentration inequality to bound the $\ell_1$ error for fixed parents and extend this to a bound on the $\ell_1$ error for a fixed $(s, a)$ pair. The conclusion follows by a union bound across all states and actions. See Appendix \ref{appdx_proposition} for details. 
\end{proof}

To compare to full-state dynamics modeling, we can translate the sample complexity from the per-parent count $n$ to a total count $N$. Recall $m\Pi_i|c_i| = |\S|$, so that $|c_i| = (|\S|/m)^{1/k}$, and $m\Pi_i|\parents_i| \geq |\S||\A|$. We assume a small constant overlap factor $v \geq 1$, so that $|\parents_i| = v(|\S||\A|/m)^{1/k}$. We need the total number of component visits to be $n|\parents_i|km$, for a total of $nv(|\S||\A|/m)^{1/k}m$ state-action visits, assuming that parent set visits are allocated evenly, and noting that each state-action visit provides $k$ parent set visits. This gives:

\begin{corollary}\label{corollary_one}
To bound the error as above, we need to have
$$N \geq \frac{cmk^2(|\S|^2|\A|/m^2)^{1/k}\log(|\S||\A|/\delta)}{\epsilon^2},$$
total train samples, where we have absorbed the overlap factor $v$ into constant $c$.
\end{corollary}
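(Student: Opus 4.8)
\textbf{Proof proposal for Corollary~\ref{corollary_one}.}

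The plan is to treat Corollary~\ref{corollary_one} as a pure accounting exercise on top of Theorem~\ref{theorem_one}: the theorem already pins down the per-parent-configuration sample requirement $n$, so all that remains is to convert this into a requirement on the total number $N$ of observed state-action transitions, using the structural identities $m\prod_i|c_i| = |\S|$ and $m\prod_i|\parents_i| \geq |\S||\A|$ together with the stated uniformity assumptions (the $|c_i|$ equal across $i$, the $|\parents_i|$ equal across $i$, and a constant overlap factor $v\geq 1$ controlling how much the parent sets over-count $|\S||\A|$).

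First I would solve the structural identities for the per-mechanism quantities: since there are $k$ equal child-set sizes with $m\,|c_i|^k = |\S|$, we get $|c_i| = (|\S|/m)^{1/k}$, and likewise $m\,|\parents_i|^k = v^k|\S||\A|$ gives $|\parents_i| = v(|\S||\A|/m)^{1/k}$. Next I would count ``parent-configuration visits.'' Training each of the $k$ mechanisms, in each of the $m$ local subspaces, at each of its $|\parents_i|$ parent configurations, to the accuracy of Theorem~\ref{theorem_one} demands $n$ samples per configuration, for a total of $n\,k\,m\,|\parents_i|$ such visits. On the supply side, one observed transition $(s,a,s')$, restricted to its local subspace, simultaneously exhibits the parent configuration of all $k$ mechanisms, so $N$ transitions supply $kN$ parent-configuration visits. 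Equating supply and demand (assuming visits spread evenly over configurations) gives $kN = n\,k\,m\,|\parents_i|$, i.e. $N = n\,m\,|\parents_i| = n\,m\,v\,(|\S||\A|/m)^{1/k}$.

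Then I would substitute the bound on $n$ from Theorem~\ref{theorem_one}, namely $n \geq ck^2|c_i|\log(|\S||\A|/\delta)/\epsilon^2 = ck^2(|\S|/m)^{1/k}\log(|\S||\A|/\delta)/\epsilon^2$, and collapse the two radicals via $(|\S||\A|/m)^{1/k}(|\S|/m)^{1/k} = (|\S|^2|\A|/m^2)^{1/k}$. Absorbing $v$ into the universal constant $c$ yields exactly
\[
N \;\geq\; \frac{c\,m\,k^2\,(|\S|^2|\A|/m^2)^{1/k}\log(|\S||\A|/\delta)}{\epsilon^2}.
\]
As a sanity check I would specialize to $k=1,\ m=1$ and recover the familiar full-state rate $N \geq c\,|\S|^2|\A|\log(|\S||\A|/\delta)/\epsilon^2$, which exposes the source of the exponential savings: the $(\cdot)^{1/k}$ exponent on the dominant $|\S|^2|\A|$ term.

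The main obstacle is not algebraic but the ``visits are allocated evenly'' hypothesis, which is doing genuine work: an off-policy (or adversarial) data stream could concentrate its mass on a few parent configurations and starve the rest, so the clean division of $kN$ visits into $n$ per configuration is not automatically realizable. To make the argument fully rigorous one would either (i) assume the data-collecting policy induces a near-uniform visitation distribution over local parent configurations and pay an extra coupon-collector-style $\log$ factor, or (ii) restate the corollary in terms of an \emph{effective} sample count under whatever visitation distribution is given. A secondary subtlety is that overlapping parent sets ($v>1$) mean a single transition does not furnish $k$ genuinely independent samples toward $k$ distinct requirements; folding $v$ into $c$ is the pragmatic fix, and I would note in the write-up that the corollary is meant as an order-of-magnitude comparison against unfactored dynamics modeling rather than a tight bound.
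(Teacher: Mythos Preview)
Your proposal is correct and matches the paper's own derivation essentially step for step: the paper likewise solves the structural identities for $|c_i|$ and $|\parents_i|$, counts the required component visits as $n\,|\parents_i|\,k\,m$, divides by the $k$ parent-set visits furnished per transition under the even-allocation assumption, substitutes Theorem~\ref{theorem_one}'s bound on $n$, and absorbs $v$ into $c$. Your added discussion of the even-allocation caveat and the $k=1,m=1$ sanity check go slightly beyond what the paper spells out, but the core argument is identical.
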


Comparing this to the analogous bound for full-state model learning (\citet{agarwal2019reinforcement}, Prop. 2.1): 

$$N \geq \frac{c|\S|^2|\A|\log(|\S||\A|/\delta)}{\epsilon^2},$$

we see that we have gone from super-linear $O(|\S|^2|\A|\log(|\S||\A|))$ sample complexity in terms of $|S||A|$, to the exponentially smaller $O(mk^2(|\S|^2|\A|/m^2)^{1/k}\log(|\S||\A|))$. 

This result implies that \textit{for large enough $|\S||\A|$ our model \textit{must} generalize to unseen states and actions}, since the number of samples needed ($N$) is exponentially smaller than the size of the state-action space ($|\S||\A|$). In contrast, if it did not, then sample complexity would be $\Omega(|\S||\A|)$. 

\begin{remark}
The global factorization property of FMDPs is a strict assumption that rarely holds in reality. Although local factorization is broadly applicable and significantly more realistic than the FMDP setting, it is not without cost. In FMDPs, we have a single subspace ($m=1$). In the locally factored case, the number of subspaces $m$ is likely to grow exponentially with the number of factors $k$, as there are exponentially many ways that $k$ factors can interact. To be more precise, there are $k2^k$ possible bipartite graphs from $k$ nodes to $k$ nodes. Nevertheless, by comparing bases ($2 \ll |\S||\A|$), we see that we still obtain exponential gains in sample complexity from the locally factored approach.
\end{remark}

\paragraph{Training Value Functions and Policies for Out-of-Distribution Generalization}

In the previous subsection, we saw that a locally factored dynamics model provably generalizes outside of the empirical joint distribution. A natural question is whether such \textit{local factorization can be leveraged to obtain similar results for value functions and policies}?

We will show that the answer is \textit{yes}, but perhaps counter-intuitively, it is not achieved by directly training the value function and policy on the empirical distribution, as is the case for the dynamics model. The difference arises because learned value functions, and consequently learned policies, involve the long horizon prediction $\mathbb{E}_{P,\pi}\sum_{t=0}^{\infty}\gamma^tr(s_t, a_t)$, which may not benefit from the local sparsity of $\G^\L$. When compounded over time, sparse local structures can quickly produce an entangled long horizon structure (cf. the ``butterfly effect''). 
Intuitively, even if several pool balls are far apart and locally disentangled, future collisions are central to planning and the optimal policy depends on the relative positions of all balls. This applies even if rewards are factored (e.g., rewards in most pool variants) \citep{sodhani2022improving}. 

We note that, although temporal entanglement may be exponential in the branching factor of the unrolled causal graph, it's possible for the long horizon structure to stay sparse (e.g., $k$ independent factors that never interact, or long-horizon disentanglement between decision relevant and decision irrelevant variables~\citep{huang2022action}). It's also possible that other regularities in the data will allow for good out-of-distribution generalization. Thus, we cannot claim that value functions and policies will never generalize well out-of-distribution (see \citet{veerapaneni2020entity} for an example when they do). Nevertheless, we hypothesize that exponentially fast entanglement does occur in complex natural systems, making direct generalization of long horizon predictions difficult. 

Out-of-distribution generalization of the policy and value function can be achieved, however, by leveraging the generalization properties of a locally factored dynamics model. We propose to do this by generating out-of-distribution states and actions (the ``parent distribution''), and then applying our learned dynamics model to generate transitions that are used to train the policy and value function. We call this process \methodNameFull (\methodName). 
\subsection{Model-based CoDA Using Neural Networks}
\label{sec:mocoda:method}

In the previous section, we discussed how locally factored dynamics model can generalize beyond the empirical dataset to provide accurate predictions on an augmented state-action distribution we call the ``parent distribution''.
We now seek to leverage this out-of-distribution generalization in the dynamics model to bootstrap the training of an RL agent.
Our approach is to control the agent's training distribution $P(s, a, s')$ via the locally factored dynamics $P_\phi(s'|s,a)$ and the  parent distribution $P_\theta(s,a)$ (both trained using experience data).
This allows us to sample \emph{augmented} transitions (perhaps unseen in the experience data) for consumption by a downstream RL agent.
We call this framework \methodName, and summarize it using the following three-step process:

\begin{enumerate}
    \item[\textbf{Step 1}\ ] Given known parent sets, generate appropriate parent distribution $P_\theta(s, a)$.
    \item[\textbf{Step 2}\ ] Apply a learned dynamics model $P_\phi(s'|s,a)$ to parent distribution to generate ``augmented dataset'' of transitions $(s, a, s')$.
    \item[\textbf{Step 3}\ ] Use augmented dataset $s,a,s' \sim P_\theta P_\phi$ (alongside experience data, if desired) to train an off-policy RL agent on the (perhaps novel) target task. 
\end{enumerate}

Figure \ref{fig:block-diagram} illustrates this framework in a block diagram. An instance of \methodName is realized by specific choices at each step. For example, the original CoDA method \citep{pitis2020counterfactual} is an instance of \methodName, which (1) generates the parent distribution by uniformly swapping non-overlapping parent sets, and (2) uses subsamples of empirical transitions as a locally factored dynamics model. CoDA works when local graphs have non-overlapping parent sets, but it does not allow for control over the parent distribution and does not work in cases where parent sets overlap. \methodName generalizes CoDA, alleviating these restrictions and allowing for significantly more design choices, discussed next. 

\subsection{Generating the Parent Distribution}\label{subsection_generating_ad}

\begin{figure}[!t]
  \centering
  \includegraphics[width=0.72\textwidth]{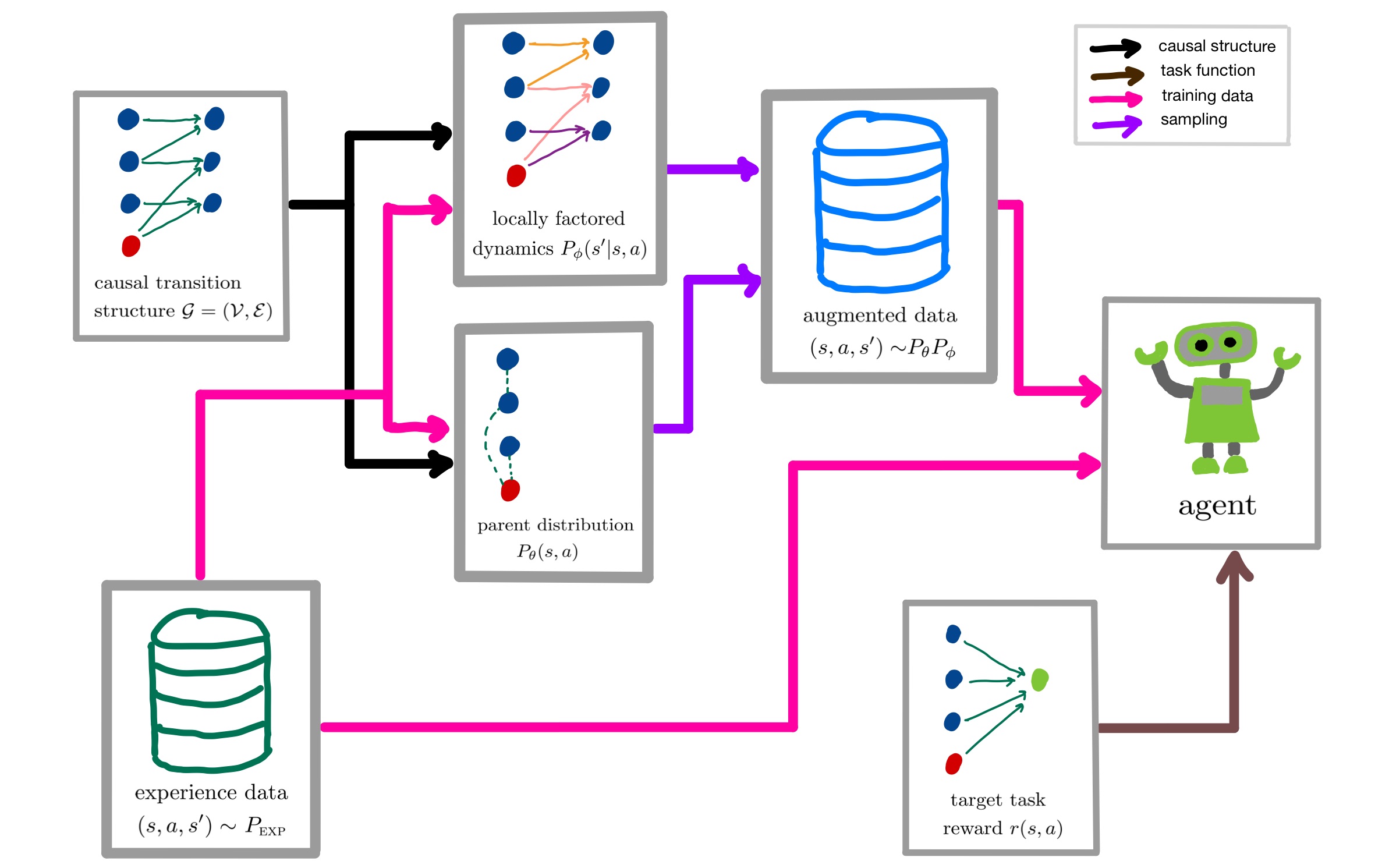}
\caption{\textbf{Training an RL Agent with \methodName}: We use the empirical dataset to train parent distribution model, $P_\theta(s, a)$ and locally factored dynamics model $P_\phi(s'\given s, a)$, both informed by the local structure. The dynamics model is applied to the parent distribution to produce augmented dataset $P_\theta P_\phi$. The augmented \& empirical datasets are labeled with the target task reward, $r(s, a)$ and fed into the RL algorithm as training data.}
    \label{fig:block-diagram}
\vspace{0.3\baselineskip}
\end{figure}

What parent distribution (Step 1) should be used to generate the augmented dataset? We describe some options below, noting that our proposals (\ad{Mocoda}, \ad{Mocoda-U}, \ad{Mocoda-P}) rely on knowledge of (possibly local) parent sets---i.e., they require the state to be decomposed into objects. 

\textbf{Baseline Distributions.}\ \ \ \ If we restrict ourselves to states and actions in the empirical dataset (\textbf{\ad{Emp}}) or short-horizon rollouts that start in the empirical state-action distribution (\textbf{\ad{Dyna}}), as is typical in Dyna-style approaches \citep{sutton2018reinforcement,janner2019trust}, we limit ourselves to a small neighborhood of the empirical state-action distribution. This forgoes the opportunity to train our off-policy RL agent on out-of-distribution data that may be necessary for learning the target task.

Another option is to sample random state-actions from $\S\times\A$ (\textbf{\ad{Rand}}). While this provides coverage of all state-actions relevant to the target task, there is no guarantee that our locally factorized model generalizes well in \ad{Rand}. The proof of Theorem \ref{theorem_one} shows that our model only generalizes well to a particular $(s, a)$ if each component generalizes well on the configurations of each parent set in that $(s, a)$. In context of Theorem \ref{theorem_one}, this occurs only if the empirical data used to train our model contained at least $n$ samples for each set of parents in $(s, a)$. This suggests focusing on data whose parent sets have sufficient support in the empirical dataset. 

\textbf{The \ad{Mocoda} distribution.}\ \ \ \ We do this by constraining the marginal distribution of each parent set (within local neighborhood $\L$) in the augmented distribution to match the corresponding marginal in the empirical dataset. As there are many such distributions, in absence of additional information, it is sensible to choose the one with maximum entropy \citep{jaynes1957information}. We call this maximum entropy, marginal matching distribution the \textbf{\ad{Mocoda}} augmented distribution.
 We propose an efficient way to generate the $\ad{Mocoda}$ distribution using a set of Gaussian Mixture Models, one for each parent set distribution. We sample parent sets one at a time, conditioning on any previous partial samples due to overlap between parent sets. This process is detailed in Appendix \ref{app:coda:mocoda-implementation-details}.

\textbf{Weaknesses of the \ad{Mocoda} distribution.}\ \ \ \ Although our locally factored dynamics model is likely to generalize well on \ad{Mocoda}, there are a few reasons why training our RL agent on $\ad{Mocoda}$ in Step 3 may yield poor results. First, if there are empirical imbalances within parent sets (some parent configurations more common than others), these imbalances will appear in $\ad{Mocoda}$. Moreover, multiple such imbalances will compound exponentially, so that $(s, a)$ tuples with rare parent combinations will be extremely rare in \ad{Mocoda}, even if the model generalizes well to them. 
Second, $\textrm{Support}(\ad{Mocoda})$ may be so large that it makes training the RL algorithm in Step 3 inefficient. Finally, the cost function used in RL algorithms is typically an expectation over the training distribution, and optimizing the agent in irrelevant areas of the state-action space may hurt performance. The above limitations suggest that rebalancing $\ad{Mocoda}$ might improve results.

\textbf{\ad{Mocoda-U} and \ad{Mocoda-P}.}\ \ \ \ To mitigate the first weakness of \ad{Mocoda} we might skew $\ad{Mocoda}$ toward the uniform distribution over its support, $\mathcal{U}(\textrm{Support}(\ad{Mocoda}))$. Although this is possible to implement using rejection sampling when $k$ is small, exponential imbalance makes it impractical when $k$ is large. A more efficient implementation reweights the GMM components used in our \ad{Mocoda} sampler. We call this approach (regardless of implementation) $\textbf{\ad{Mocoda-U}}$. To mitigate the second and third weaknesses of \ad{Mocoda}, we need additional knowledge about the target task---e.g., domain knowledge or expert trajectories. We can use such information to define a prioritized parent distribution \textbf{\ad{Mocoda-P}} with support in \textrm{Support}({\ad{Mocoda}}), which can also be obtained via rejection sampling (perhaps on \ad{Mocoda-U} to also relieve the initial imbalance). 

\subsection{The Choice of Dynamics Model and RL Algorithm}

Once we have a parent distribution, $P_\theta(s, a)$, we generate our augmented dataset by applying dynamics model $P_\phi(s'\given s, a)$. The natural choice in light of the discussion in Section \ref{sec:mocoda:theory} is a locally factored model. This requires knowledge of the local factorization, which is more involved than the parent set knowledge used to generate the \ad{Mocoda} distribution and its reweighted variants. We note, however, that a locally factored model may not be strictly necessary for \methodName, so long as the underlying dynamics are factored. Although unfactored models do not perform well in our experiments, we hypothesize that a good model with enough in-distribution data and the right regularization might learn to implicitly respect the local factorization. The choice of model architecture is not core to our work, and we leave exploration of this possibility to future work. 

\textbf{Masked Dynamics Model.}\ \ \ \ In our experiments, we assume access to a mask function $M: \S \times \A \to \{0, 1\}^{(|\S|+|\A|)\times |\S|}$ (perhaps learned \citep{kipf2018neural,pitis2020counterfactual}), which maps states and actions to the adjacency map of the local graph $\G^\L$. Given this mask function, we design a dynamics model $P_\phi$ that accepts $M(s, a)$ as an additional input and respects the causal relations in the mask (i.e., mutual information $I(X^i_t; X^j_{t+1} \given (S_t, A_t)\setminus X^i_t) = 0$ if $M(s_t, a_t)_{ij} = 0$). There are many architectures that enforce this constraint. In our experiments we opt for a simple one, which first embeds each of the $k$ parent sets: $f = [f_i(\parents_i)]_{i=1}^k$, and then computes the $j$-th child as a function of the sum of the masked embeddings, $g_j(M(s, a)_{\cdot,j}\cdot f)$. See Appendix \ref{app:coda:mocoda-implementation-details} for further implementation details.

\textbf{The RL Algorithm.}\ \ \ \ After generating an augmented dataset by applying our dynamics model to the augmented distribution, we label the data with our target task reward and use the result to train an RL agent. \methodName works with a wide range of algorithms, and the choice of algorithm will depend on the task setting. For example, our experiments are done in an offline setup, where the agent is given a buffer of empirical data, with no opportunity to explore. For this reason, it makes sense to use offline RL algorithms, as this setting has proven challenging for standard online algorithms \citep{levine2020offline}. 

\begin{remark}
 The rationales for (1) regularizing the policy toward the empirical distribution in offline RL algorithms, and (2) training on the \ad{Mocoda} distribution, are compatible: in each case, we want to restrict ourselves to state-actions where our models generalize well. By using \ad{Mocoda} we expand this set \textit{beyond} the empirical distribution. Thus, when we apply offline RL algorithms in our experiments, we train their offline component (e.g., the action sampler in \ad{BCQ}~\citep{fujimoto2019off} or the BC constraint in \ad{TD3-BC}~\citep{fujimoto2021minimalist}) on the expanded \ad{Mocoda} training distribution.
\end{remark}
\subsection{Experiments}
\label{sec:mocoda:empirical}

\textbf{Hypotheses}\ \ \ \ Our experiments are aimed at finding support for two critical hypotheses:

\begin{enumerate}
    \item[\textbf{H1}\ \ ] Dynamics models, especially ones sensitive to the local factorization, are able to generalize well in the \ad{Mocoda} distribution.
    \item[\textbf{H2}\ \ ] This out-of-distribution generalization can be leveraged via data augmentation to train an RL agent to solve out-of-distribution tasks.
\end{enumerate}

Note that support for \textbf{H2} provides implicit support for \textbf{H1}.

\textbf{Domains}\ \ \ \ We test \methodName on two continuous control domains. First is a simple, but controlled, \env{2D Navigation} domain, where the agent must travel from one point in a square arena to another. States are 2D $(x, y)$ coordinates and actions are 2D $(\Delta x, \Delta y)$ vectors. In most of the state space, the sub-actions $\Delta x$ and $\Delta y$ affect only their respective coordinate. In the top right quadrant, however, the $\Delta x$ and $\Delta y$ sub-actions each affect \textit{both} $x$ and $y$ coordinates, so that the environment is locally factored. The agent has access to empirical training data consisting of left-to-right and bottom-to-top trajectories that are restricted to a \lowerl shape of the state space (see the \ad{Emp} distribution in Figure \ref{fig_toy_visualization}). We consider a target task where the agent must move from the bottom left to the top right. In this task there is sufficient empirical data to solve the task by following the $\lowerl$ shape of the data, but learning the optimal policy of going directly via the diagonal requires out-of-distribution generalization.

\begin{wrapfigure}{r}{0.23\textwidth}
    \vspace{-\baselineskip}
	\centering
    \captionsetup{width=0.22\textwidth}
	\includegraphics[width=0.22\textwidth,height=0.17\textwidth]{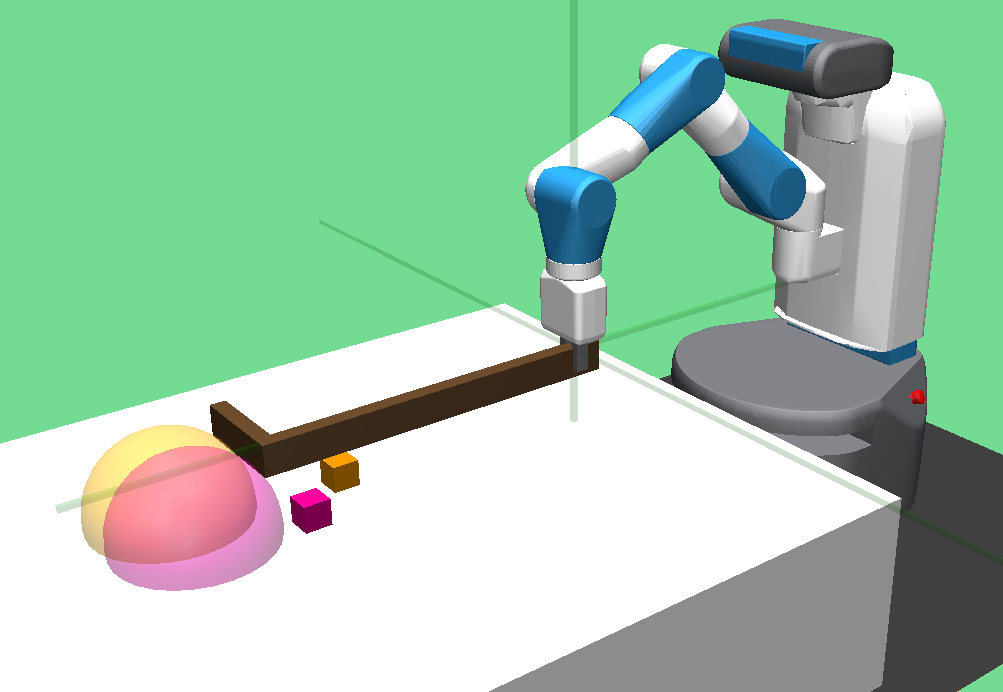}
	\label{fig_env_fetch}
    \vspace{-\baselineskip}
\end{wrapfigure}
Second, we test \methodName in a challenging \env{HookSweep2} robotics domain based on Hook-Sweep \citep{kurenkov2020ac}, in which a Fetch robot must use a long hook to sweep two boxes to one side of the table (either toward or away from the agent). The boxes are initialized near the center of the table, and the empirical data contains trajectories of the agent sweeping exactly one box to one side of the table, leaving the other in the center. The target task requires the agent to generalize to states that it has never seen before (both boxes together on one side of the table). This is particularly challenging because the setup is entirely offline (no exploration), where poor out-of-distribution generalization typically requires special offline RL algorithms that constrain the agent's policy to the empirical distribution \citep{levine2020offline,agarwal2020optimistic,kumar2020conservative,fujimoto2021minimalist}.

\newcommand{\buffer}{\hspace{1.7cm}}
\begin{figure}[!b]
	\centering
	\includegraphics[width=\textwidth]{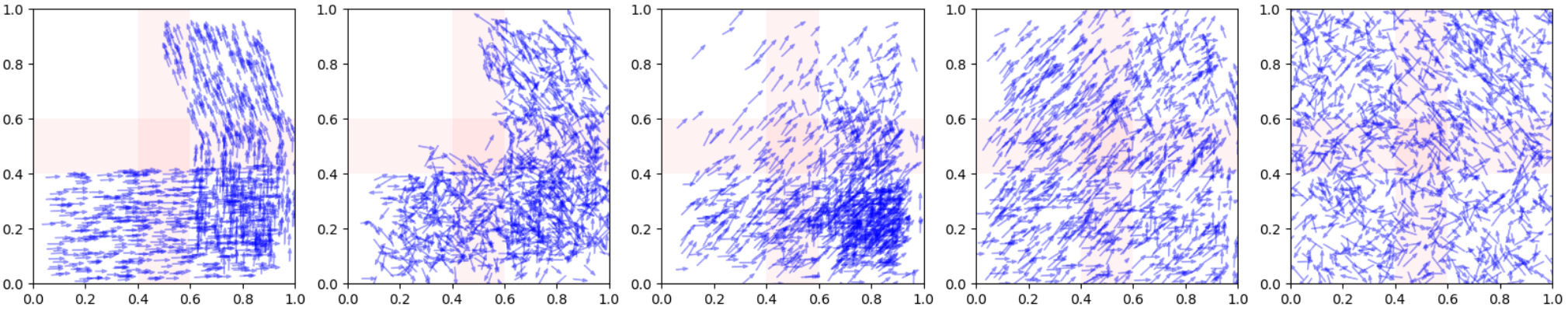}\\
	{\small\hspace{0.4cm}\ad{Emp}\hspace{0.4cm}\buffer\ad{Dyna}\buffer\ad{Mocoda}\hspace{-0.4cm}\buffer\ad{Mocoda-U}\hspace{0.05cm}\buffer\ad{Rand}\hfill}\\[6pt]
	\caption{\textbf{2D Navigation Visualization.} (Best viewed with 2x zoom) Blue arrows represent transition samples as a vector from $(x_t, y_t)$ to $(x_{t+1}, y_{t+1})$. Shaded red areas mark the edges of the initial states of empirical trajectories and the center of the square. We see that 5-step rollouts (\ad{Dyna}) do not fill in the center (needed for optimal policy), and fail to constrain actions to those that the model generalizes well on. For \ad{Mocoda}, we see the effect of compounding dataset imbalance discussed in Subsection \ref{subsection_generating_ad}, which is resolved by \ad{Mocoda-U}.}
	\label{fig_toy_visualization}
\vspace{-0\baselineskip}
\end{figure}

{\small
\tabcolsep=0pt\def\arraystretch{1.3}
\begin{table*}[!t]
\caption{\textbf{2D Navigation Dynamics Modeling Results:} Mean squared error $\pm$ 
\label{tab:toy-mse}
std. dev. over 5 seeds, scaled by 1e2 for clarity (best model boldfaced). The locally factored model experienced less performance degradation out-of-distribution, and performed better on all distributions, except for the empirical distribution (\ad{Emp}) itself.}\label{fig_toy_mse_results}
\newcommand{\smol}[1]{{\scriptsize\texttt{#1}}} 
\centering\small
\begin{tabularx}{\textwidth}{>{\centering\arraybackslash}p{3.2cm}| *5{>{\Centering}X}}\toprule	

&\multicolumn{5}{c}{\textbf{Generalization Error (MSE $\times 1e2$)} (lower is better)}\\
\cline{2-6}
Model Architecture&   \ad{Emp}  &  \ad{Dyna} &  \ad{Rand} & \ad{\textbf{MoCoDA}}  &  \ad{\textbf{MoCoDA-U}}   \\
	\midrule
Not Factored & \textbf{0.14 $\pm$ 0.04} &  2.41 $\pm$ 0.29 &  4.4 $\pm$ 0.31  & 0.95 $\pm$ 0.06 &  1.29 $\pm$ 0.15  \\
Globally Factored & 0.36 $\pm$ 0.01 &  2.09 $\pm$ 0.28  &  3.17 $\pm$ 0.3 &  0.41 $\pm$ 0.02 &  0.51 $\pm$ 0.02 \\
Locally Factored & 0.23 $\pm$ 0.1 &  \textbf{1.47 $\pm$ 0.2}7   &  \textbf{2.03 $\pm$ 0.19} &  \textbf{0.33 $\pm$ 0.11} &  \textbf{0.46 $\pm$ 0.11} \\
	\bottomrule
\end{tabularx}
\vspace{0.5\baselineskip}
\end{table*}
}

{\small
\tabcolsep=0pt\def\arraystretch{1.3}
\begin{table*}[!t]
\caption{\textbf{2D Navigation Offline RL Results:} Average steps to completion $\pm$ std. dev. over 5 seeds for various RL algorithms (best distribution in each row boldfaced), where average steps was computed over the last 50 training epochs. Training on \ad{Mocoda} and \ad{Mocoda-U} improved performance in all cases. Interestingly, even using \ad{Rand} improves performance, indicating the importance of training on out-of-distribution data. Note that this is an offline RL task, and so \ad{SAC} (an algorithm designed for online RL) is not expected to perform well.}
\label{fig_toy_batchrl_results}

\newcommand{\smol}[1]{{\scriptsize\texttt{#1}}}
\centering\small
\begin{tabularx}{\textwidth}{>{\centering\arraybackslash}p{3.1cm}| *5{>{\Centering}X}}\toprule	

\multicolumn{5}{c}{\textbf{Average Steps to Completion} (lower is better)}\\
\cline{2-6}
RL Algorithm&   \ad{Emp}  &  \ad{Rand} & \ad{\textbf{MoCoDA}}  &  \ad{\textbf{MoCoDA-U}}  & \ad{CoDA}~\citep{pitis2020counterfactual} \\
	\midrule
\env{SAC} (online RL) & 53.1 $\pm$ 9.8  &  \textbf{27.6 $\pm$ 1.1} &  38.8 $\pm$ 18.3 &  41.3 $\pm$ 17.7 & 35.1 $\pm$ 18.1\\
\env{BCQ} & 58.5 $\pm$ 10.1  &  31.7 $\pm$ 2.4 &   \textbf{22.8 $\pm$ 0.4} &  24.8 $\pm$ 4.2 & 25.0 $\pm$ 0.4 \\
\env{CQL} & 45.8 $\pm$ 4.0 &  27.6 $\pm$ 1.3 &  22.8 $\pm$ 0.2 &  \textbf{22.7 $\pm$ 0.3} & 23.6 $\pm$ 0.5 \\
\env{TD3-BC} & 40.0 $\pm$ 16.1 &  26.1 $\pm$ 0.8 &  21.0 $\pm$ 0.7 &  \textbf{20.7 $\pm$ 0.8} & 21.4 $\pm$ 0.6 \\
	\bottomrule
\end{tabularx}
\vspace{1.2\baselineskip}
\end{table*}
}

\textbf{Directly comparing model generalization error.}\ \ \ \ In the \env{2D Navigation} domain we have access to the ground truth dynamics, which allows us to directly compare generalization error on variety of distributions, visualized in Figure \ref{fig_toy_visualization}. We compare three different model architectures: unfactored, globally factored (assuming that the $(x, \Delta x)$ and $(y, \Delta y)$ causal mechanisms are independent everywhere, which is not true in the top right quadrant), and locally factored. The models are each trained on a empirical dataset of 35000 transitions for up to 600 epochs, which is early stopped using a validation set of 5000 transitions. The results are shown in Table \ref{fig_toy_mse_results}. We find strong support for \textbf{H1}: even given the simple dynamics of \env{2d Navigation}, it is clear that the locally factored model is able to generalize better than a fully connected model, particularly on the \ad{Mocoda} distribution, where performance degradation is minimal. We note that the \ad{Dyna} distribution was formed by starting in \ad{Emp} and doing 5-step rollouts with \textit{random} actions. The random actions produce out-of-distribution data to which no model (not even the locally factored model) can generalize well to. 

\vspace{0.2\baselineskip}
\textbf{Solving out-of-distribution tasks.}\ \ \ \ We apply the trained dynamics models to several base distributions and compare the performance of RL agents trained on each dataset. To ensure improvements are due to the augmented dataset and not agent architecture, we train several different algorithms, including: \env{SAC} \citep{haarnoja2018soft}, \env{BCQ} \citep{fujimoto2019off} (with DDPG \citep{lillicrap2015continuous}), \env{CQL} \citep{kumar2020conservative} and \env{TD3-BC} \citep{fujimoto2021minimalist}. 

The results on \env{2D Navigation} are shown in Table \ref{fig_toy_batchrl_results}. We see that for all algorithms, the use of the \ad{Mocoda} and \ad{Mocoda-U} augmented datasets greatly improve the average step count, providing support for \textbf{H2} and suggesting that using these datasets allows the agents to learn to traverse the diagonal of the state space, even though it is out-of-distribution with respect to \ad{Emp}. This is consistent with a qualitative assessment of the learned policies, which confirms that agents trained on the \lowerl\hspace{-0.12cm}-shaped \ad{Emp} distribution learn a \lowerl\hspace{-0.12cm}-shaped policy, whereas agents trained on \ad{Mocoda} and \ad{Mocoda-U} learn the optimal (diagonal) policy.

The results on the more complex \env{HookSweep2} environment, shown in Table \ref{table_fetch_batchrl_results}, provide further support for \textbf{H2}. On this environment, only results for \env{BCQ} and \env{TD3-BC} are shown, as the other algorithms failed on all datasets. For \env{HookSweep2} we used a prioritized \ad{Mocoda-P} parent distribution, as follows: knowing that the target task involves placing two blocks, we applied rejection sampling to \ad{Mocoda} to make the marginal distribution of the joint block positions approximately uniform over its support. The effect is to have good representation in all areas of the most important state features for the target task (the block positions). The visualization in Figure \ref{fig_fetch_parent_distributions} makes clear why training on \ad{Mocoda} or \ad{Mocoda-P} was necessary in order to solve this task: the base \ad{Emp} distribution simply does not have sufficient coverage of the goal space.  

\vfill

\begin{figure}[!t]
	\vspace{-0.5\baselineskip}
	\caption{\textbf{HookSweep2 Visualization:} Stylized visualization of the distributions \ad{Emp} (left), \ad{Mocoda} (center), and \ad{Mocoda-P} (right). Each figure can be understood as a top down view of the table, where a point is a plotted if the two blocks are close together on the table. The distribution \ad{Emp} does not overlap with the green goal areas on the left and right, and so the agent is unable to learn. In the \ad{Mocoda} distribution, the agent gets some success examples. In the \ad{Mocoda-P} distribution, state-actions are reweighted so that the joint distribution of the two block positions is approximately uniform, leading to more evenly distributed coverage of the table.}
	\label{fig_fetch_parent_distributions}
	\vspace{-0\baselineskip}
\end{figure}

{\small
\tabcolsep=0pt\def\arraystretch{1.3}
\begin{table*}[!t]
\newcommand{\smol}[1]{{\scriptsize\texttt{#1}}}
\centering\small
\begin{tabularx}{\textwidth}{>{\centering\arraybackslash}p{3cm}| *5{>{\Centering}X}}
\toprule	
&\multicolumn{5}{c}{\textbf{Average Success Rate} (higher is better)}\\
\cline{2-6}
RLAlgorithm&  \vspace{-3px} \ad{Emp}  &\vspace{-3px}  \ad{\textbf{MoCoDA}}  & \vspace{-3px}  \ad{\textbf{MoCoDA-P}} & \ad{{MoCoDA}} (not factored) & \ad{{MoCoDA-P}} (not factored) \\
\midrule
\env{BCQ} & 2.0 $\pm$ 1.6 &  20.7 $\pm$ 4.1 &  \textbf{64.7 $\pm$ 4.1} & 14.0 $\pm$ 3.3 & 15.3 $\pm$ 4.1 \\
\env{TD3-BC} & 0.7 $\pm$ 0.9 &  38.7 $\pm$ 7.5 &  \textbf{84.0 $\pm$ 2.8} & 29.3 $\pm$ 3.8 & 26.0 $\pm$ 1.6 \\
	\bottomrule
\end{tabularx}
	\vspace{0.02in}
	\caption{\textbf{HookSweep2 Offline RL Results:} Average success percentage ($\pm$ std. dev. over 3 seeds), where the average was computed over the last 50 training epochs. 
	\env{SAC} and \env{CQL} (omitted) were unsuccessful with all datasets.
	We see that \ad{Mocoda} was necessary for learning, and that results improve drastically with \ad{Mocoda-P}, which re-balances \ad{Mocoda} toward a uniform distribution in the box coordinates (see Figure \ref{fig_fetch_parent_distributions}).
	Additionally, we show results from an ablation, which generates the MoCoDA datasets using a fully connected dynamics model. While this still achieves some success, it demonstrates that using a locally-factored model is important for OOD generalization. In this case the more OOD MoCoDA-P distribution does not help, suggesting that the fully connected model is failing to produce useful OOD transitions.
	}
	\label{table_fetch_batchrl_results}
	\vspace{\baselineskip}
\end{table*}
}
\section{Related Work}
\paragraph{Exploiting structure in MDPs}
Factored MDPs~\citep{guestrin2003efficient,hallak2015off,weber2019credit} consider MDPs where state variables are only influenced by a fixed subset of ``parent'' variables at the previous timestep.
The notion of ``context specific independence'' (CSI), which was used to compactly represent single factors of a Bayes net \citep{boutilier2013context} or MDP \citep{boutilier1995exploiting} for efficient inference and model storage,\footnote{Note that these methods were proposed to efficiently encode conditional probability tables at the graph nodes, requiring that all variables considered be discrete; CoDA on the other hand works in continuous tasks.} is closely related to the local factorizations we study in this Section.
CSI can be understood as going one step beyond CoDA, exploiting not only knowledge of the local factorization, but also the structural equations at play in the local factorization; CSI could be leveraged for model-based RL approaches where faithful models of factored dynamics can be realized.
Object-oriented and relational approaches to RL and prediction~\citep{diuk2008object,goyal2019recurrent,kipf2019contrastive,li2019towards,zambaldi2018relational,zhu2018object} represent the dynamics as a set of interacting entities. Factored actions and policies have been used to formulate dimension-wise policy gradient baselines in standard and multi-agent settings~\citep{foerster2018counterfactual, lowe2017multi,wu2018variance}. 

\paragraph{Causality and RL}
A growing body of work applies causal reasoning the RL setting to improve sample efficiency, interpretability and learn better representations~\citep{lu2018deconfounding,madumal2019explainable,rezende2020causally}. Particularly relevant is the work by \citet{buesing2018woulda}, which improves sample efficiency by using a causal model to sample counterfactual trajectories, thereby reducing variance of off-policy gradient estimates in a guided policy search framework. 
These counterfactuals use coarse-grained representations at the trajectory level, while our approach uses factored representations within a single transition.
Batch RL~\citep{levine2020offline,fujimoto2018off,mandel2014offline} and  more generally off-policy RL~\citep{watkins1992q,munos2016safe} are counterfactual by nature, and are particularly important when it is costly or dangerous to obtain on-policy data~\citep{thomas2016data}. The use of counterfactual goals to accelerate learning goal-conditioned RL~\citep{kaelbling1993learning,schaul2015universal,plappert2018multi} is what inspired our local CoDA algorithm.

\section{Discussion}
This Chapter presented some promising initial results within the CoDA framework.
Areas for future work include improved methods for learning LCM structure directly from data, which is closely related to causal structure discovery~\citep{seitzer_causal_2021,ke_learning_2020}.
CoDA has so far only been applied to fully observed MDP where $s$ informs the agent which objects are in the scene as well as the relevant state observations for each.
Scaling up CoDA so that it can handle images would be a valuable extension, and some recent work on object-centric representation learning~\citep{wu_slotformer_2023} could be helpful in this endeavor.

At the time of writing, commercial interest in generative models is waxing, mostly due to recent improvements to the fidelity of text and image sampling~\citep{brown_language_2020,ho_denoising_2020,radford_learning_2021,ouyang_training_2022}.
This Chapter provides a complementary view on the benefits of generative models.  
As discussed in Section~\ref{sec:coda:implicit_generative_model} CoDA and MoCoDA are generative models (implicit and explicit, respectively) that function as an auxiliary module to help a different type of learning algorithm (an RL agent, in this case).
When turning our focus to other types of machine learning such classification with fairness~\citep{denton_image_2019,zhang_towards_2020} or expalainability~\citep{chang_explaining_2019} requirements, we can likewise see the benefit of an indirect application of generative models.
We expect methods like MoCoDA (which, unlike CoDA, relies on explicit generative modeling) to get better over time as high-quality generative models of RL environments become more widely available.
Some preliminary success in applying diffusion models to augment RL datasets has already been realized~\citep{yu_scaling_2023}.

The CoDA framework for causally-grounded data augmentation was developed for RL, but in other domains where data coverage is a concern---recommender systems, fair prediction, and robust learning, for example---CoDA could also be applied.
Part of the reason for working in RL first is that the forward progression of time makes the question of causal discovery much easier.
Prior work has shown the merit of using SCMs~\citep{pearl2009causal} to model causal processes in dynamic fairness settings~\citep{creager_causal_2020}.
However, there are some noteworthy objections to the use of causal graphs to model sociotechnial processes.
Firstly, demographic group membership does not represent a truly manipulable variable, which calls into questions whether it is possible to meaningfully (statistically) reason about ``interventions'' on attributes like sex or race~\citep{greiner_causal_2011,kohler-hausmann_eddie_2019}.
Secondly, sociotechnical systems are incredibly sophisticated and nuanced.
Computer scientists may be ill-equipped to model them at the whiteboard using simple graphs~\citep{hu_whats_2020,kasirzadeh_use_2021}.
We remain optimistic about the potential to apply CoDA more broadly, but remind researchers that causal priors are typically untestable.
We found in our experiments that CoDA was relatively insensitive to mis-specification of the LCM, but in high-stakes decision-making settings such observations may not hold.

  \chapter{Conclusion}\label{chap:conc}

\section{Summary: All Data are Imperfect Data}
A favored mantra among statisticians is that \emph{``all models are wrong (but some are useful)''}~\citep{box1976science,box_robustness_1979}.
This can be repeated periodically (or during times of scientific-spiritual turmoil!) as a helpful reminder that the modeler's assumptions (made explicitly or otherwise) are inherited by any inferences made using the model.
The less realistic the assumptions, the less bearing the resulting inferences have on reality.
In other words, your mileage may vary.

We framed this thesis at the outset by describing ML as a process of turning data into computer programs (models).
To continue along this theme, we will conclude by humbly offering an additional mantra: \emph{``all data are imperfect (full stop)''}.
While this is admittedly a weaker statement---``imperfections'' being perhaps easier to recover from than ``wrongness'', while ``usefulness'' is entirely left out of the equation---it is no less informative.
It can help us to understand \emph{why} ML models fail in unexpected ways: 
learning algorithms compound data imperfections by producing wrong models.
The only perfect data are synthetic data.
But this is the exception that proves the rule, as synthetic data are typically used to demonstrate proof-of-concept for a newly proposed method, and do not reflect what we expect to see in the real world.
Yes, machine learning is a practice of data-driven---and thus, by definition, evidence-based---decision making.
But we should not conclude that it provides a detached ``view from nowhere''~\citep{haraway1988situated,d2020data} from which \emph{impartial} decisions can be automated.
Rather, ML researchers and practitioners regularly make assumptions (explicitly or otherwise) about how data is generated and measured~\citep{paullada_data_2021,birhane2022values,scheuerman_datasets_2021}, and these assumptions are inherited by any models trained on the data.

This new mantra also suggests that the aspiration of \emph{Robust ML} is in fact a process rather than a destination.
Not only are available training data imperfect, but so too are datasets used for evaluation.
The process of building reliable models should be an adaptive one, where data imperfections are identified and addressed in perpetuity, with the overall effort keeping the wrongness of the downstream models in reasonable check (and hopefully shrinking over time).

This manuscript has discussed several approaches to addressing data imperfections, which may be of use in the process of robust machine learning.
To some extent, detecting data imperfections may be possible to automate using machine learning (Chapter~\ref{chap:eiil}).
However, useful auxiliary labels pertinent to model failure (Chapter~\ref{chap:fair-reps}) and human-specified prior knowledge (Chapter~\ref{chap:coda}) are nevertheless key components within this process.

\section{What is Next?}

\newcommand{\contradiction}[1]{

  \begin{centering}
    \emph{#1}
  \end{centering}
}
\newcommand{\contritem}[1]{\item[$\circlearrowright$]\textbf{#1}}
\newcommand{\questitem}[1]{\item[$\looparrowright$]\textbf{#1}}

In each of the preceding Chapters, the proposal of new methods to address data imperfections was punctuated by a brief discussion Section, which suggested some ideas for immediate extension of these ideas.
For the remainder of this manuscript, we set aside short-term directions for future research and turn to the question of what lies further ahead.
Our final discussion is thus more speculative, but it also provides a vantage with which to assess some broader dynamics within the field.
We begin by outlining several contradictions characteristic of the contemporary state of ML research.

\subsection{Some Contradictions Within Contemporary Machine Learning}
New ML methods have developed at a staggering pace in recent years.
And yet, it is difficult to identify a north star, or single unifying goal, guiding the research field towards a specific direction.
How should we make sense of a scientific field that moves quickly, but with no apparent direction or purpose?
In order to add some detail (and hopefully shed some light on the situation) let us begin by unpacking this initial paradox, and listing some follow-on contradictions within the field.
Unlike the self-evident mantra discussed earlier, these observations act more like \emph{koans}: self-refuting propositions that are nevertheless grounded in apparent truth.
To be clear, it is extremely unlikely that these contradictions will be resolved by research activity alone.
However, we hope that their recitation would serve as a murmur of background incoherence that drowns out the loudest and seemingly most urgent research  questions---Which prompt to choose? How to string together APIs to ``solve'' benchmark tasks?---and allows us to formulate a more daring research agenda that will change the shape of the field in the long run.
\begin{itemize}
  \contritem{Is this a scientific or engineering discipline?}
  Research concerning the social impacts of ML has recently been institutionalized, with teams devoted to ``AI Ethics'', ``Technology and Society'', and ``Human Centered AI'' proliferating in academic and corporate spheres.\footnote{
    Admittedly this is more of a re-organization than an institutionalization, given that these new teams typically constitute researchers with participatory some previous institutional credential (or new trainees).
    The failure to include stakeholders from outside institutions in the design of new technology is indeed a known and pressing issue~\citep{abdurahman2019fat,kulynych2020participatory}.
  } 

  When reflecting on the research output from these communities in the past five years, a strong argument can be made that the most impactful projects were those advocating for the adoption of systematic design protocols such as ``datasheets''~\citep{gebru2021datasheets} and ``model cards''~\citep{mitchell2019model}.
  In other words, moving AI and ML research from a scientific endeavor to an engineering discipline is seen by many as an important step towards characterizing existing harms, and preventing future ones.
  Meanwhile, the pressure to commercialize emerging research can be felt most acutely in precisely those specific research subfields (e.g. adapting generative models for use as chatbots or text-to-image samplers) that are most technically challenging to characterize.
  Thus, we arrive at the following contradiction:
  \contradiction{As we move to systematize our field, the object of our focus becomes more difficult to systematize.}
  \contritem{Reproducibility crises}
  In spite of recent progress in ML research, individual results from the literature can be difficult to reproduce~\citep{haibe2020transparency,kapoor2022reproducibility}.
  This could be interpreted as just one example of a broader crisis of reproducibility in the sciences.
  However, certain aspects of ML methodology exacerbate this tendency.
  The fast pace of the field intensifies the publish-or-perish mentality.
  Coupled with a benchmark culture that highly values empirical performance as a measure of scientific validity~\citep{raji_ai_2021}, this can create incentives for researchers to take shortcuts.
  For ML results to be legitimate, careful care must be taken to keep evaluation data out of the training process.
  Because ML is being applied broadly to a variety of scientific disciplines, a practical failure to adhere to these protocols can artificially inflate the numbers.
  To make matters worse, large firms have become less forthcoming in their recent technical notes and public papers, apparently due to a heightened sense of competition surrounding the emergent market for generative models.
  This makes models difficult to reproduce independently, and raises whether some of ML's recent empirical successes---such as scoring highly on standardized testing primarily through language modeling---could be due to leakage of the test data into the training set~\citep{narayanan2023gpt}.
  Thus, we arrive at the following contradiction:
  \contradiction{Machine learning's crescendo of relevance in (economic and cultural) production is accompanied by a diminished capacity for (scientific) reproduction.}
  \contritem{Double underspecification}
  Many prediction problems are underspecified in the sense that they admit a plurality of solutions of equal ``quality'' according to the training objective~\citep{damour_underspecification_2020,fisher2019all}
  The learning algorithm, if trained to convergence, could choose any of these solutions; moreover, it is not obvious which one it \emph{should} choose in light of the fact that they may have very different behaviors under distribution shift.
  This observation, coupled with legal and regulatory concerns surrounding deployment (e.g. algorithmic discrimination) have motivated research on how to learn models that satisfy additional requirements: we search for ML \emph{with} fairness (or explainability, privacy, safety etc.) added on. 
  One might hope that additional requirements would further constrain the solution space and mitigate the underspecification issue.
  However, these requirements are themselves often underspecified: what it means for an algorithm to be ``fair'' or ``explainable'' is fundamentally contested.
  We find ourselves in a sticky situation, attempting to solve learning problems that are \emph{doubly underspecified}, both at the technical and conceptual level.
  Thus, we arrive at the following contradiction:
  \contradiction{Resolving an underspecification in machine terms necessitates introducing an underspecification in human terms.}
  \contritem{Organizing research subfields by adjectives.}
  Aside from the underspecification issue, there are many reasons to seek models that do \emph{more} than just predict accurately.
  Indeed, many ML researchers are committed to working towards ensuring that emerging technologies are socially beneficial.
  We tend to organize these efforts by through the use of qualifying adjectives, organizing new ML subfields such as ``Fair ML'', ``Explainable ML'', and so on.
  This tendency is both productive and counterproductive.
  Organizing subfields by adjectives is an intuitive way to bring people together and catalyze research activity, and indeed tends to be the social norm across scientific and humanistic disciplines~\citep{whitley2000intellectual,abbott2010chaos}
  The trade-off is that when we promote adjectives, we tend to demote verbs.
  This is especially important for ML research given the broad applicability of the methods we work on.
  What does it mean that big tent of ``Fair ML'' stretches over algorithms governing the legal allocation of resources (say, lending) and also those encouraging group parity in facial recognition and law enforcement? 
  What the algorithm \emph{does} (in the real world) is paramount, and yet this is often reduced to a mere implementation detail barely worthy of discussion.
  Thus, we arrive at the following contradiction:
  \contradiction{Organizing subfields by adjectives promotes research activity while simultaneously mystifying the actual uses of the resulting research artifacts.}
  \contritem{``Scaling up'': power laws and the concentration of power.}
  Contemporary approaches for modeling text have benefited considerably by combining neural architectures that enable parallel training with the massive data source of web-scraped text~\citep{brown_language_2020}.
  Reaching state-of-the-art performance requires an ever-increasing number of model parameters and training data~\citep{kaplan_scaling_2020,bender_dangers_2021}.
  This has led to a steep barrier-to-entry in the research space: access to hardware and compute infrastructure now traces a dividing line between those who can train contemporary methods and those who cannot.
  This trend serves as a consolidation of power, with large firms standing to benefit the most ~\citep{luitse_great_2021}.
  However, open-source model sharing and adaptation has also thrived in this new training paradigm.
  This is because of the compute asymmetry inherent in the two stages of pre-training and fine-tuning.
  Once pre-training has been completed, model weights can be distributed just as freely on the web as their original training data.
  Unlike in the data owner/vendor paradigm of Chapter~\ref{chap:fair-reps}, where data representations served as an interface between firms, here the model weights are directly shared.
  Models can then be adapted during fine-tuning under a moderate compute and data budget.
  Thus, we arrive at the following contradiction:
  \contradiction{``Scaling up'' the learning process proliferates high-quality open-source generative models while simultaneously entrenching centralized power among firms.}
  \contritem{The internet as a data source.}
  While the pre-training/fine-tuning recipe is currently the special du jour for ML practitioners (having been successfully applied to image and multi-modal domains~\citep{radford_learning_2021}), there are reasons to believe that the web is a fundamentally limited data source in terms of quantity~\citep{villalobos_will_2022} and quality~\citep{bender_dangers_2021,hanna2020against} of data.
  As more of the web is scraped as a data source, so too increases the likelihood of harmful content being used during training~\citep{birhane2021large}
  Furthermore, as firms race to deploy generative models in massive consumer markets, the text output sampled from these models will become a non-neglible portion of the text available on the web.
  Including today's generative model output in tomorrow's training data creates a new data-to-model-to-data circuit that is very poorly understood.
  Thus, we arrive at the following contradiction:
  \contradiction{Freely accessible internet artifacts serve as ``weak supervision'' for powerful contemporary generative models, but may ultimately undercut their reliability in the long run.}
  \contritem{Statics and dynamics.}
  To complement the data-to-model-to-data circuit, we can also take note of a model-to-data-to-model circuit completed when ML is embedded in a social context and induces a social response.
  People who interact with automated decision-making systems may alter their behavior, and perhaps even attempt to game the system towards their advantage\citep{hardt2016strategic}.
  This type of strategic behavior on part of the human subjects could be realized when the model makes predictions during deployment, or perhaps even to create a strong training signal to anticipate the development of future models~\citep{biggio_poisoning_2013,demontis_why_2019}.
  While the research community has begun to model these processes~\citep{ensign2018runaway,perdomo2020performative}, the presence of data feedback makes the learning considerably more challenging.
  Even approaches such as Reinforcement Learning that tackle feedback head-on typically require limiting assumptions such as Markovness and full observability, and even so are relatively sample inefficient.
  Thus, we arrive at the following contradiction:
  \contradiction{Deploying ML in the real world induces feedback loops, which increases the difficulty of the learning process and undermines the credibility of deployed ML systems.}
\end{itemize}

\subsection{Open Research Questions}
These contradictions need not be read with an air of hopelessness.
Rather, studying the internal tensions within the field may lead us to some promising directions for long-term research that represent an important departure from conventional thinking of how data should be turned into computer programs.
We draw this manuscript to a close with the following (non-exhaustive) list of open research questions worthy of further study:
\begin{itemize}
  \questitem{Characterizing imperfect data.}
  ML methods have been remarkably influential across the academy.
  One notable cluster of activity is referred to as the ``Digital Humanities'', where ML methods such as topic models~\citep{blei2012topic} serve as crucial tools in the effort to extract new scholarly insights from the archives.
  In light of the sensitivities of ML to data imperfections discussed in this thesis, it is time to think more seriously about how to catalog and analyze our own data sources.
  There is a need to go beyond data sheets~\citep{gebru2021datasheets} and thoroughly catalog data origins and known issues.
  Current practices around dataset curation within the ML field are also worthy of scrutiny~\citep{scheuerman_datasets_2021,paullada_data_2021,raji_ai_2021}.
  Fortunately, there is a flurry of ongoing effort in this area~\citep{mitchell_measuring_2023, siddiqui_metadata_2022,jo2020lessons,chang2023speak} demonstrating the potential promise of applying both ML and conventional archiving procedures towards these goals.
  Future work in this direction could also shed light on the question of what role synthetic data (e.g. sampled from language models) in ML.
  \questitem{Participatory priors.}
  Chapter~\ref{chap:coda} discussed how domain knowledge, in the form of causal priors~\citep{pearl2009causal}, can help answer difficult computational questions with limited data.
  We also discussed some critiques of how causal inference is typically implemented, namely that these crucial and often untestable assumptions about social systems tend to be made by computer scientists working at a whiteboard.
  Given the importance causal priors in algorithm design, can we source them in a more participatory way?
  Recent work from~\citet{martin2020extending} suggests a way forward, highlighting the role of collaborative causal theory formulation in “extending the ML abstraction boundary” to realize more inclusive and equitable development processes.
  They focus in particular on techniques from Community-based System Design (CBSD)~\citep{hovmand2014group} for collectively generating causal theories underlying ML systems, drawing on an established literature of participatory design that includes rigorous ways to model the social context in which a model is embedded.
  Perhaps causal inference can be synthesized with CBSD to arrive at participatory models that provide rigorous causal inferences.
  This may unearth new technical problems: as mentioned above, feedback loops can be difficult to model from an ML perspective---we tend to roll them out, but in CBSD graphs this could involve unpacking tens of thousands of nodes---so it remains to be seen whether sourcing causal priors via community engagement would result in practically useful learning algorithms.
  \questitem{Collective action.}
  What is the role of community action in guiding the behavior of ML systems that are unaligned to community values?
  From the perspective of the model, all data subjects are \emph{i.i.d.} and thus act independently. 
  Of course, data subjects need not adhere to the model's assumption about them.
  In settings where the model owner cannot be trusted to ensure socially beneficial properties such as fairness, data subjects may wish to take matters into their own hands by coordinating to change the overall data distribution.
  This type of collective action could empower data subjects to have a more democratic control over how machine learning systems function in their broader social context.
  For example, \emph{algorithmic recourse} considers what changes to an individual's data profile will improve their future prospects following an adverse automated decision (say, a loan denial)~\citep{karimi2022survey}.
  The typical framing for this problem assumes data subjects must each act alone and in their own self interests.
  However, in online settings---where the decision maker's policy is updated over time---collective action can be a more effective strategy~\citep{creager2021online}.
  Exploring the question of collective action more broadly---for example in recommender systems where myopic models can shape long-term user behavior~\citep{mladenov2020optimizing,carroll2022estimating}---is a nascent research direction.
  \citet{hardt2023algorithmic} recently provided a general-purpose analytic framework that could prove helpful here;
  they also provide compelling theoretical results suggesting that, through coordination, small collectives can have an outsides influence on learned models.
  \questitem{Politics of refusal.}
  In theory, expressive models like deep neural networks can fit to any function given a sufficient number of examples~\citep{hornik_multilayer_1989}.
  Meanwhile, in practice, datasets of increasing size and scope are being amassed~\citep{kaplan_scaling_2020,bender_dangers_2021}.
  Perhaps the most consequential research question, then, is what predictive models \emph{should} we build using data~\citep{barocas_when_2020}.
  While policy advocates and organizers have frequently raised the alarm over dubious applications of data-driven prediction, computer science researchers have been quiet by comparison.

  ML research tends to novel methodology above other possible contributions~\citep{sambasivan2021everyone,birhane2022values}, although dataset contributions have recently been more widely accepted as well~\citep{vanschoren2021announcing}.
  \emph{Task definition} is an even more central topic that should be elevated as an equally important area for future contribution and contestation.
  Reorganizing research efforts accordingly is unlikely to produce a consensus, so methods to respectively adjudicate internal conflicts would be needed~\cite{mouffe2013agonistics, young2022confronting}.
\end{itemize}

  \cleardoublepage
  \clearpage
  \appendix
  \chapter{Notes for Chapter \ref{chap:fair-reps} }

\section{Notes on Section \ref{sec:laftr:laftr} (LAFTR)}
\subsection{Bounding Equalized Odds}\label{app:laftr:theory-eq-odds}
We now turn our attention to equalized odds. 
First we extend our shorthand to denote $p(Z| A=a, Y=y)$ as $\mathcal{Z}_{a}^{y}$, the representation of group $a$ conditioned on a specific label $y$.
The equalized odds distance of classifier $g: \Omega_{\mathcal{Z}} \rightarrow \{0, 1\}$ is
\begin{equation}\label{eq:delta-eo}
    \begin{aligned}
        \Delta_{EO}(g) &\triangleq | \E_{\mathcal{Z}_0^0}[g] - \E_{\mathcal{Z}_1^0}[g] | \\
        &+ | \E_{\mathcal{Z}_0^1}[1 - g] - \E_{\mathcal{Z}_1^1}[1 - g] |
,
    \end{aligned}
\end{equation}
which comprises the absolute difference in false positive rates plus the absolute difference in false negative rates.
$\Delta_{EO}(g) = 0$ means $g$ satisfies equalized odds. 
Note that $\Delta_{EO}(g) \leq \Delta(\mathcal{Z}_0^0, \mathcal{Z}_1^0) + \Delta(\mathcal{Z}_0^1, \mathcal{Z}_1^1)$. 

We can make a similar claim as above for equalized odds: given an optimal adversary trained on $Z$ with the appropriate objective, if the adversary also receives the label $Y$, the adversary's loss will upper bound $\Delta_{EO}(g)$ for any function $g$ learnable from $Z$.

\textbf{Theorem.} \textit{
Let the classifier $g: \Omega_{\mathcal{Z}} \rightarrow \Omega_{\mathcal{Y}}$ and the adversary $h: \Omega_{\mathcal{Z}} \times \Omega_{\mathcal{Y}} \rightarrow \Omega_{\mathcal{Z}}$, as binary functions, i.e., $\Omega_{\mathcal{Y}} = \Omega_{\mathcal{A}} = \{0, 1\}$.
Then $L_{Adv}^{EO}(h^*) \geq \Delta_{EO}(g)$: the equalized odds distance of $g$ is bounded above by the optimal objective value of $h$.
}    

\textbf{Proof.} Let the adversary $h$'s objective be
\begin{equation}
\begin{aligned}
    L_{Adv}^{EO}(h) &= \E_{\mathcal{Z}_0^0}[1 - h] + \E_{\mathcal{Z}_1^0}[h]\\
&+ \E_{\mathcal{Z}_0^1}[1 - h] + \E_{\mathcal{Z}_1^1}[h] - 2\\
\end{aligned}
\end{equation}

By definition $\Delta_{EO}(g) \geq 0$.
Let  $| \E_{\mathcal{Z}_0^0}[g] - \E_{\mathcal{Z}_1^0}[g] | = \alpha \in [0, \Delta_{EO}(g)]$ and $| \E_{\mathcal{Z}_0^1}(1 - g) - \E_{\mathcal{Z}_1^1}[1 - g] | = \Delta_{EO}(g) - \alpha$. 
WLOG, suppose $\E_{\mathcal{Z}_0^0}[g] \geq \E_{\mathcal{Z}_1^0}[g]$ and $\E_{\mathcal{Z}_0^1}[1 - g] \geq \E_{\mathcal{Z}_1^1}[1 - g]$. 
Thus we can partition (\ref{eq:delta-eo}) as two expressions, which we write as 
\begin{equation}
\begin{aligned}
\E_{\mathcal{Z}_0^0}[g] + \E_{\mathcal{Z}_1^0}[1 - g] &= 1 + \alpha,\\
\E_{\mathcal{Z}_0^1}[1 - g] + \E_{\mathcal{Z}_1^1}[g] &= 1 + (\Delta_{EO}(g) - \alpha),\\
\end{aligned}
\end{equation}
which can be derived using the familiar identity $\E_{p}[\eta] = 1-\E_{p}[1-\eta]$ for binary functions.

Now, let us consider the following adversary $h$
\begin{equation}
\begin{aligned}
    h(z) = \left\{\begin{array}{lr}
        g(z), & \text{if } y = 1\\
        1 - g(z), & \text{if } y = 0
        \end{array}\right\} 
        .
\end{aligned}
\end{equation}
Then the previous statements become
\begin{equation}
\begin{aligned}
\E_{\mathcal{Z}_0^0}[1 - h] + \E_{\mathcal{Z}_1^0}[h] &= 1 + \alpha\\
\E_{\mathcal{Z}_0^1}[1 - h] + \E_{\mathcal{Z}_1^1}[h] &= 1 + (\Delta_{EO}(g) - \alpha)\\
\end{aligned}
\end{equation}
Recalling our definition of $L_{Adv}^{EO}(h)$, this means that 
\begin{equation}
\begin{aligned}
    L_{Adv}^{EO}(h) &= \E_{\mathcal{Z}_0^0}[1 - h] + \E_{\mathcal{Z}_1^0}[h] + \E_{\mathcal{Z}_0^1}[h] + \E_{\mathcal{Z}_1^1}[h] - 2 \\
&= 1 + \alpha + 1 + (\Delta_{EO}(g) - \alpha) - 2 = \Delta_{EO}(g)\\
\end{aligned}
\end{equation}
That means that for the optimal adversary $h^\star = \sup_h L_{Adv}^{EO}(h)$, we have $L_{Adv}^{EO}(h^\star) \geq L_{Adv}^{EO}(h) = \Delta_{EO}$.
\hfill$\blacksquare$

An adversarial bound for equal opportunity distance, defined as $\Delta_{EOpp}(g) \triangleq | \E_{\mathcal{Z}_0^0}[g] - \E_{\mathcal{Z}_1^0}[g] |$, can be derived similarly.

\subsection{Experimental Details}\label{sec:training-details}

We used single-hidden-layer neural networks for each of our encoder, classifier and adversary, with 20 hidden units for the Health dataset and 8 hidden units for the Adult dataset. 
We also used a latent space of dimension 20 for Health and 8 for Adult. 
We train with $L_C$ and $L_{Adv}$ as absolute error, as discussed in Section \ref{gen-model}, as a more natural relaxation of the binary case for our theoretical results. 
Our networks used leaky rectified linear units and were trained with Adam \citep{kingma2014adam} with a learning rate of 0.001 and a minibatch size of 64, taking one step per minibatch for both the encoder-classifier and the discriminator. 
When training \textsc{ClassLearn} in Algorithm \ref{alg:laftr} from a learned representation we use a single hidden layer network with half the width of the representation layer, i.e., g.	
\textsc{ReprLearn} (i.e., LAFTR) was trained for a total of 1000 epochs, and \textsc{ClassLearn} was trained for at most 1000 epochs with early stopping if the training loss failed to reduce after 20 consecutive epochs.

To get the fairness-accuracy tradeoff curves in Figure \ref{results:pareto-fairness}, we sweep across a range of fairness coefficients $\gamma \in [0.1, 4]$. To evaluate, we use a validation procedure. For each encoder training run, model checkpoints were made every 50 epochs; $r$ classifiers are trained on each checkpoint (using $r$ different random seeds), and epoch with lowest median error $+ \Delta$ on validation set was chosen. We used $r = 7$.
Then $r$ more classifiers are trained on an unseen test set. The median statistics (taken across those $r$ random seeds) are displayed.

For the transfer learning experiment, we used $\gamma=1$ for models requiring a fair regularization coefficient.

\section{Notes on Section \ref{sec:ffvae:ffvae} (FFVAE)}
\subsubsection{Discriminator approximation of total correlation} \label{sec:disc_approx}
This section describes how density ratio estimation is implemented to train the FFVAE encoder.
We follow the approach of \citet{kim2018disentangling}.
\paragraph{Generating Samples}
The binary classifier adversary seeks to discriminate between
\begin{itemize}
    \item $[z,b] \sim q(z,b)$, ``true'' samples from the aggregate posterior; and
    \item $[z',b'] \sim q(z)\prod_j q(b_j)$, ``fake'' samples from the product of the marginal over $z$ and the marginals over each $b_j$.
\end{itemize}
At train time, after splitting the latent code $[z^i,b^i]$ of the $i$-example along the dimensions of $b$ as $[z^i, b_0^i ... b_j^i]$, the minibatch index order for each subspace is then randomized, simulating samples from the product of the marginals; these dimension-shuffled samples retain the same marginal statistics as ``real'' (unshuffled) samples, but with joint statistics between the subspaces broken.
The overall minibatch of encoder outputs contains twice as many examples as the original image minibatch, and comprises equal number of ``real'' and ``fake'' samples.

As we describe below, the encoder output minibatch is used as training data for the adversary, and the error is backpropagated to the encoder weights so the encoder can better fool the adversary.
 If a strong adversary can do no better than random chance, then the desired independence property has been achieved.

\paragraph{Discriminator Approximation}
Here we summarize the approximation of the 
\[D_{KL}(q(z,b)||q(z)\prod_{j} q(b_j))\]
term from equation \ref{eq:ffvae}.
Let $u \in \{0, 1\}$ be an indicator variable with $u=1$ indicating $[z,b] \sim q(z,b)$ comes from a minibatch of ``real'' encoder distributions, while $u=1$ indicating $[z',b'] \sim q(z)\prod_j(b_j)$ is drawn from a ``fake'' minibatch of shuffled samples, i.e., is drawn from the product of the marginals of the aggregate posterior.
The discriminator network outputs the probability that vector $[z,b]$ is a ``real'' sample, i.e., 
$d(u|z,b)=\text{Bernoulli}(u|\sigma(\theta_{d}(z,b))$ 
where $\theta_d(z,b)$ 
is the discriminator and $\sigma$ is the sigmoid function.
If the discriminator is well-trained to distinguish between ``real'' and ``fake'' samples then we have 
\begin{align}
    \log d(u=1|z,b) - \log d(u=0|z,b) &\approx \nonumber \\
    \log q(z,b) - \log q(z)\prod_j q(b_j).
\end{align}
We can substitute this into the KL divergence as 
\begin{align}
    &D_{KL}(q(z,b)||q(z)\prod_{j} q(b_j)) = \nonumber \\
    &\quad\quad \E_{q(z,b)}[ \log q(z,b) - \log q(z)\prod_j q(b_j) ] \approx \nonumber \\
    &\quad\quad \E_{q(z,b)}[ \log d(u=1|z,b) - \log d(u=0|z,b)].
\end{align}

Meanwhile the discriminator is trained by minimizing the standard cross entropy loss
\begin{align}
    L_{\text{Disc}}(d) &= \E_{z,b\sim q(z,b)} [\log d(u=1|z,b)] \nonumber \\
    &\quad\quad + \E_{z',b'\sim q(z)\prod_j q(b_j)} [\log ( 1 - d(u=0|z',b') )],
\end{align}
w.r.t. the parameters of $d(u|z,b)$.
This ensures that the discriminator output $\theta_d(z,b)$ is a calibrated approximation of the log density $\log\frac{q(z,b)}{q(z)\prod_j q(b_j)}$.

$L_{\text{Disc}}(d)$ and $L_{\text{FFVAE}}(p,q)$ (Equation \ref{eq:ffvae}) are then optimized in a min-max fashion.
In our experiments we found that single-step alternating updates using optimizers with the same settings sufficed for stable optimization.

\subsection{Experimental Details}\label{sec:dsprites-training-details}
\paragraph{DSpritesUnfair data generation}
The original DSprites dataset has six ground truth factors of variation (FOV): 
\begin{itemize}
    \item Color: white
    \item Shape: square, ellipse, heart
    \item Scale: 6 values linearly spaced in $[0.5, 1]$
    \item Orientation: 40 values in $[0, 2\pi]$
    \item XPosition: 32 values in $[0, 1]$
    \item YPosition: 32 values in $[0, 1]$
\end{itemize}

In the original dataset the joint distribution over all FOV factorized; each FOV was considered independent.
In our dataset, we instead sample such that the FOVs Shape and X-position correlate.
We associate an index with each possible value of each attribute, and then sample a (Shape, X-position) pair with probability proportional to $(\frac{i_S}{n_S})^{q_S} + (\frac{i_X}{n_X})^{q_X}$, where $i, n, q$ are the indices, total number of options, and a real number for each of Shape and X-position ($S, X$ respectively).
We use $q_S = 1, q_X = 3$. 
All other attributes are sampled uniformly, as in the standard version of DSprites.

We binarized the factors of variation by using the Boolean outputs of the following operations:
\begin{itemize}
    \item Color $\geq 1$
    \item Shape $\geq 1$
    \item Scale $\geq 3$
    \item Rotation $\geq 20$
    \item XPosition $\geq 16$
    \item YPosition $\geq 16$
\end{itemize}

\paragraph{Training details}
All network parameters were optimized using the Adam \citep{kingma2014adam}, with learning rate 0.001.
Our encoders trained $3 \times 10^5$ iterations with minibatch size 64 (as in \citet{kim2018disentangling}).
Our MLP classifier has two hidden layers with 128 units each, and is trained with patience of 5 epochs on validation loss.

\subsection{Mutual Information Gap} \label{sec:mig}
\paragraph{Evaluation Criteria}
Here we analyze the encoder mutual information in the synthetic setting of the DSpritesUnfair dataset, where we know the ground truth factors of variation.
In Fig. \ref{fig:dsprites-mig}, we calculate the \textit{Mutual Information Gap (MIG)} \citep{chen2018isolating} of FFVAE across various hyperparameter settings.
With $J$ latent variables $z_j$ and $K$ factors of variation $v_k$, MIG is defined as 
\begin{equation}
    \frac{1}{K} \sum_{k=1}^{K} \frac{1}{H(v_k)} (MI (z_{j_k}; v_k) - \max_{j \neq j_k} MI (z_j; v_k))
\end{equation}
where $j_k = \underset{j}{\argmax} MI (z_j; v_k)$, $MI(\cdot; \cdot)$ denotes mutual information, and $K$ is the number of factors of variation.
Note that we can only compute this metric in the synthetic setting where the ground truth factors of variation are known.
MIG measures the difference between the latent variables which have the highest and second-highest $MI$ with each factor of variation, rewarding models which allocate one latent variable to each factor of variation.
We test our disentanglement by training our models on a biased version of DSprites, and testing on a balanced version (similar to the ``skewed'' data in \citet{chen2018isolating}).
This allows us to separate out two sources of correlation --- the correlation existing across the data, and the correlation in the model's learned representation.

\paragraph{Results}

\begin{figure}[ht!]
\centering
\begin{subfigure}[t]{0.33\textwidth}
\includegraphics[width=\textwidth]{./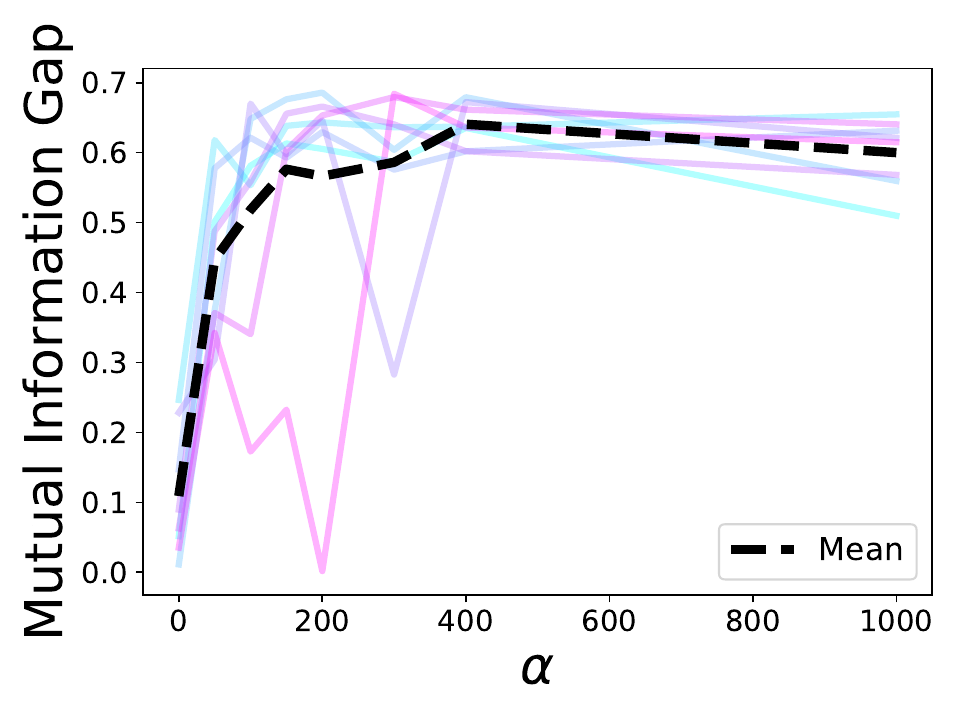}
\caption{Color is $\gamma$, brighter colors $\longrightarrow$ higher values}
\label{fig:dsprites-mig-lines}
\end{subfigure}
\hspace{1cm}
\begin{subfigure}[t]{0.33\textwidth}
\includegraphics[width=\textwidth]{./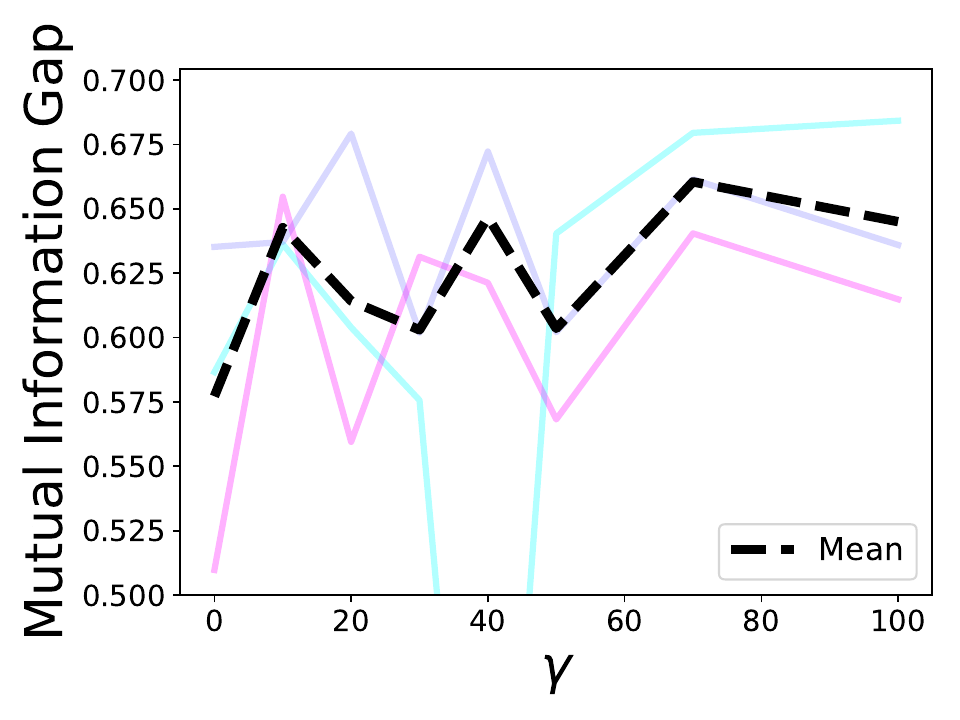}
\caption{Color is $\alpha$, brighter colors $\longrightarrow$ higher values}
\label{fig:dsprites-mig-lines-high-alpha}
\end{subfigure}

\caption{
    Mutual Information Gap (MIG) for various $(\alpha, \gamma)$ settings of the FFVAE. 
    In Fig. \ref{fig:dsprites-mig-lines}, each line is a different value of
    $\gamma \in [10, 20, 30, 40, 50, 70, 100]$, with brighter colors indicating larger values of $\gamma$.
    In Fig. \ref{fig:dsprites-mig-lines-high-alpha}, each line is a different value of
    $\alpha \in [300, 400, 1000]$, with brighter colors indicating larger values of $\alpha$.
    Models trained on DspritesUnfair, MIG calculated on Dsprites.
    Higher MIG is better.
    Black dashed line indicates mean (with outliers excluded).
    $\alpha = 0$ is equivalent to the FactorVAE.
    }
    \label{fig:dsprites-mig}
\end{figure}

In Fig. \ref{fig:dsprites-mig-lines}, we show that MIG increases with $\alpha$, providing more evidence that the supervised structure of the FFVAE can create disentanglement.
This improvement holds across values of $\gamma$, except for some training instability for the highest values of $\gamma$.
It is harder to assess the relationship between $\gamma$ and MIG, due to increased instability in training when $\gamma$ is large and $\alpha$ is small.
However, in Fig. \ref{fig:dsprites-mig-lines-high-alpha}, we look only at $\alpha \geq 300$, and note that in this range, MIG improves as $\gamma$ increases.
See Appendix \ref{sec:mig} for more details.

\begin{figure}[ht!]
\centering
\begin{subfigure}[t]{0.33\textwidth}
\includegraphics[width=\textwidth]{./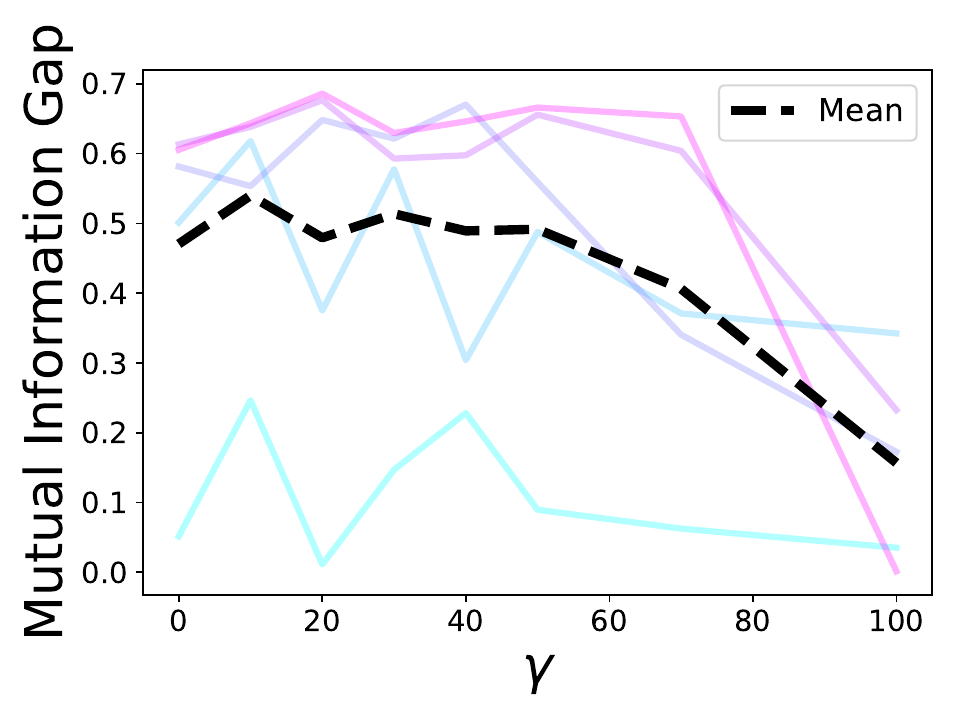}
\caption{Color is $\alpha$}
\label{fig:dsprites-mig-lines-low-alpha}
\end{subfigure}
\hspace{1cm}
\begin{subfigure}[t]{0.33\textwidth}
\includegraphics[width=\textwidth]{./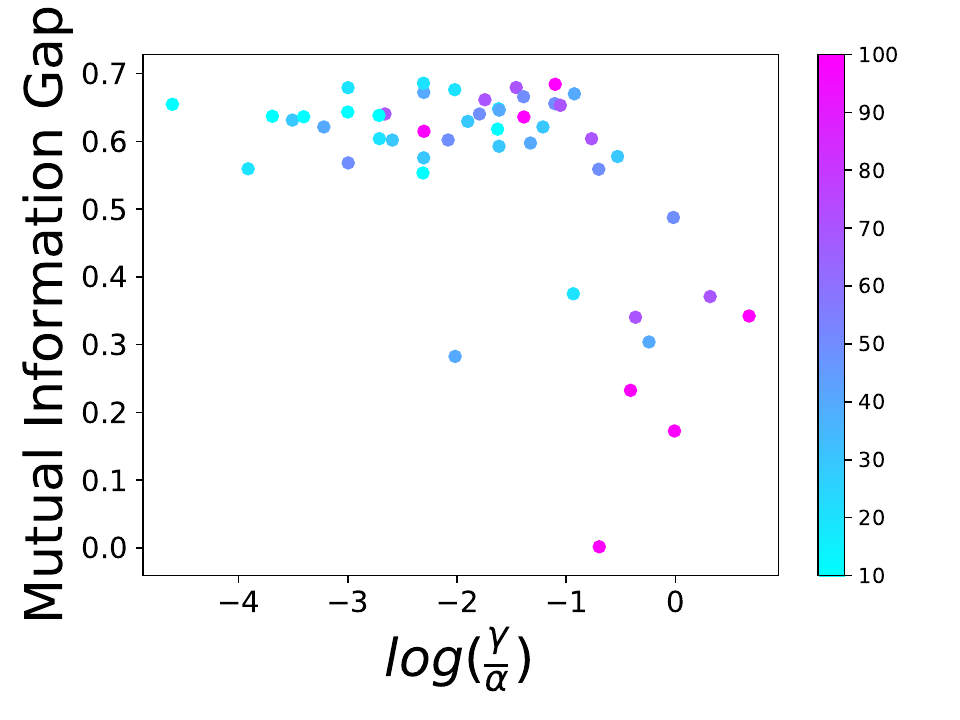}
\caption{Color is $\gamma$}
\label{fig:dsprites-mig-ratio}
\end{subfigure}
\caption{
    Mutual Information Gap (MIG) for various $(\alpha, \gamma)$ settings of the FFVAE. 
    In Fig. \ref{fig:dsprites-mig-lines-low-alpha}, each line is a different value of
    $\alpha \in [0, 50, 100, 150, 200]$, with brighter colors indicating larger values of $\alpha$.
    In Fig. \ref{fig:dsprites-mig-ratio}, all combinations with $\alpha, \gamma > 0$ are shown.
    Models trained on DspritesUnfair, MIG calculated on Dsprites.
    Higher MIG is better.
    Black dashed line indicates mean (outliers excluded).
    $\alpha = 0$ is equivalent to the FactorVAE.
    }
    \label{fig:dsprites-mig-appendix}
\end{figure}

In Fig. \ref{fig:dsprites-mig-lines-low-alpha}, we show that for low values of $\alpha$, increasing $\gamma$ leads to worse MIG, likely due to increased training instability.
This is in contrast to Fig. \ref{fig:dsprites-mig-lines-high-alpha}, which suggests that for high enough $\alpha$, increasing $\gamma$ can improve MIG.
This leads us to believe that $\alpha$ and $\gamma$ have a complex relationship with respect to disentanglement and MIG.

To better understand the relationship between these two hyperparameters, we examine how MIG varies with the ratio $\frac{\gamma}{\alpha}$ in Fig. \ref{fig:dsprites-mig-ratio}.
In We find that in general, a higher ratio yields lower MIG, but that the highest MIGs are around $\log\frac{\gamma}{\alpha} = -2$, with a slight tailing off for smaller ratios.
This indicates there is a dependent relationship between the values of $\gamma$ and $\alpha$.

\paragraph{Discussion}
What does it mean for our model to demonstrate disentanglement on test data drawn from a new distribution?
For interpretation, we can look to the causal inference literature, where one goal is to produce models that are robust to certain interventions in the data generating process \citep{rothlenhauser2018anchor}.
We can interpret Figure \ref{fig:dsprites-mig} as evidence that our learned representations are (at least partially) invariant to \textit{interventions} on $a$.
This property relates to counterfactual fairness, which requires that models be robust with respect to counterfactuals along $a$ \citep{kusner2017counterfactual}.

  \chapter{Notes for Chapter \ref{chap:eiil}}

\section{Environment Inference Psuedocode}\label{sec:pseudocode}

\begin{algorithm}[tb]
   \caption{Pseudocode for environment inference (EI) with the invariance principle (realized via relaxed IRMv1 penalty) as the EI objective.}
   \label{alg:example}
\begin{algorithmic}
   \State {\bfseries Input:} Reference model $\Phi$, dataset $\mathcal{D} = \{x_i, y_i\}$, loss $\ell$, duration $N_{steps}$
   \State {\bfseries Output:} Worst case data splits $\mathcal{D}_1$, $\mathcal{D}_2$ for use with an invariant learner.
   \vspace{.2cm}
   \State \textbf{def} $\tilde R^e(\Phi, \mathbf{q})$:
   \State \ \ \textbf{return} $\frac{1}{N} \sum_i \mathbf{q}_i(e) \ell(\Phi(x_i), y_i)$ \hfill\COMMENT{Equation \ref{eq:soft-risk}}
   \vspace{.2cm}
   \State Randomly init. $\mathbf{q} \in [0, 1]^N$ environment posterior ($\mathbf{q}_i(e) := q(e|x_i, y_i)$)
   \FOR{$n \in 1 \ldots N_{steps}$}
     \State $SoftVari = \sum_{e\in\{1,2\}}||\nabla_{\bar w} \tilde R^e(\bar w \circ \Phi, \mathbf{q})||$
     \hfill \COMMENT{Aggregate reference model variances across soft envs}
     \State $Loss = -1 \cdot SoftVari$
     \hfill \COMMENT{Maximize the EI objective by minimizing this loss}
     \State $\mathbf{q} \leftarrow OptimUpdate(\mathbf{q}, \nabla_{\mathbf{q}} Loss)$
   \ENDFOR
   \State $\hat{ \mathbf{q} }\sim {Bernoulli}(\mathbf{q})$ \hfill\COMMENT{sample splits}
   \State $\mathcal{D}_1 \leftarrow \{x_i, y_i | \hat {\mathbf{q}}_i = 1\}$,
          $\mathcal{D}_2 \leftarrow \{x_i, y_i | \hat {\mathbf{q}}_i = 0\}$ \hfill\COMMENT{split data}
   \State {\bfseries return} $\mathcal{D}_1, \mathcal{D}_2$  
\end{algorithmic}
\label{algo:eiil}
\end{algorithm}

Algorithm \ref{algo:eiil} provides pseudocode for the environment inference procedure used in our experiments.

\section{Proofs}\label{sec:proofs}
\subsection{Proof of Proposition \ref{thm:1}}\label{sec:proof_of_thm1}
Consider a dataset with some feature(s) $z$ which are spurious, and other(s) $v$ which are valuable/causal w.r.t. the label $y$.
This includes data generated by models where
$v \rightarrow y \rightarrow z$, such that
$P(y|v,z) = P(y|v)$.
Assume further that the observations $x$ are functions
of both spurious and valuable features: $x := f(v, z)$.
The aim of invariant learning is to form a classifier that predicts $y$ from $x$ that focuses solely on the causal features, i.e., is invariant to $z$ and focuses solely on $v$.

Consider a classifier that produces a score $S(x)$ for example $x$.
In the binary classification setting $S$ is analogous to the model $\Phi$, while the score $S(x)$ is analogous to the representation $\Phi(x)$.
To quantify the degree to which the constraint in the Invariant Principle (\ref{eq:irm}) holds, we introduce a measure called the \emph{group sufficiency gap}\footnote{
This was previously used in a fairness setting by \citet{liu2018implicit} to measure differing calibration curves across groups.
}:
\[\Delta(S,e) = 
\E[
\E[(y|S(x),e_1)] - \E[(y|S(x),e_2)]
]
\]

Now consider the notion of an environment: some setting in which the $x \rightarrow y$ relationship varies (based on spurious features).
Assume a single binary spurious feature $z$.
We restate Proposition 1 as follows:

Claim: If environments are defined based on the agreement of the spurious feature $z$ and the label $y$, then a classifier that predicts based on $z$ alone maximizes the group-sufficiency gap (and vice versa -- if a classifier predicts $y$ directly by predicting $z$, then defining two environments based on agreement of label and spurious feature---$e_1 = \{v,z,y|\indicator{y = z}\}$ and $e_2 = \{v,z,y|\indicator{y \neq z}\}$---maximizes the gap).

We can show this by first noting that if the environment is based on spurious feature-label agreement, then with $e \in \{0, 1\}$ we have $e = \indicator{y=z}$.
If the classifier predicts $z$, i.e. $S(x) = z$, then we have
\[
 \Delta(S,e) 
 = 
 \E[
 \E[y|z(x),\indicator{y=z}] - \E[y|z(x),\indicator{y\neq z}]
 ]
\]

For each instance of $x$ either $z=0$ or $z=1$.
Now we note that when $z=1$ we have ${\E(y|z,\indicator{y=z}) = 1}$ and ${\E(y|z,\indicator{y\neq z}) = 0}$, while when $z=0$  ${E(y|z,\indicator{y=z}] = 0}$ and \\
${\E[y|z,\indicator{y \neq z}] = 1}$.
Therefore for each example ${|E(y|z(x),\indicator{y=z}) - E[y|z(x),\indicator{y\neq z}| = 1]}$, contributing to an overall $\Delta(S,e)=1$, which is the maximum value for the sufficiency gap.

\subsection{
Analysis of the Binning Heuristic
}\label{sec:majmin}
Here we analyze the heuristic discussed in Section \ref{sec:methods}.
We want to show that finding environment assignments in this way both maximizes the violation of the softened version of the regularizer (Equation \ref{eq:ei-obj}), and also also maximally violates the invariance principle (\ref{eq:irm}).

Because the invariance principle $\E[Y|\Phi(X), e] = \E[Y|\Phi(X), e'] \forall e, e'$ is difficult to quantify for continuous $\Phi(X)$, we consider a binned version of the representation, with $b$ denoting the discrete index of the bin in representation space.
Let $q_i \in [0, 1]$ denote the soft assignment of example $i$ to environment 1, and $1 - q_i$ denote its converse, the assignment of example $i$ to environment 2.
Denote by $y_i \in \{0, 1\}$ the binary target for example $i$, and $\hat y \in [0, 1]$ as the model prediction on this example.
Assume that $\ell$ represents a cross entropy or squared error loss so that $\nabla_w \ell(\hat y, y) = (\hat y - y)\Phi(x)$.

Consider the IRMv1 regularizer with soft assignment, expressed as
\begin{align}
    D(q) &= \sum_e ||\nabla_{w|w=1.0} \frac{1}{N_e} \sum_i q_i(e) \ell (w \circ \Phi(x_i), y_i)||^2 \nonumber \\
    &= \sum_e || \frac{1}{N_e} \sum_i q_i(e)(\hat y_i - y_i)\Phi(x_i)||^2 \nonumber \\
    &= ||\frac{1}{\sum_i' q_i'} \sum_{i} q_i(\hat y_i - y_i) \Phi(x_i)||^2 + ||\frac{1}{\sum_i' 1 - q_i'} \sum_{i} (1 - q_i)(\hat y_i - y_i) \Phi(x_i)||^2 \nonumber \\
    &= ||\frac{\sum_i q_i \hat y_i \Phi(x_i)}{\sum_{i'} q_{i'}}
    - \frac{\sum_i q_i y_i \Phi(x_i)}{\sum_{i'} q_{i'}}
    ||^2 + 
    ||\frac{\sum_i (1 - q_i) \hat y_i \Phi(x_i)}{\sum_{i'} 1 - q_{i'}}
    - \frac{\sum_i (1 - q_i) y_i \Phi(x_i)}{\sum_{i'} 1 - q_{i'}}
    ||^2.
\end{align}

Now consider that the space of $\Phi(X)$ is discretized into disjoint bins $b$ over its support, using $z_{i,b} \in \{0, 1\}$ to indicate whether example $i$ falls into bin $b$ according to its mapping $\Phi(x_i)$.
Thus we have 

\begin{align}
    D(q) = &\sum_b (||\frac{\sum_i z_{i,b} q_i \hat y_i \Phi(x_i)}{\sum_{i'} z_{i,b} q_{i'}}
    - \frac{\sum_i z_{i,b} q_i y_i \Phi(x_i)}{\sum_{i'} z_{i,b} q_{i'}}
    ||^2 \nonumber\\
    & + 
    ||\frac{\sum_i z_{i,b} (1 - q_i) \hat y_i \Phi(x_i)}{\sum_{i'} z_{i,b} (1 - q_{i'})}
    - \frac{\sum_i z_{i,b} (1 - q_i) y_i \Phi(x_i)}{\sum_{i'} z_{i,b} (1 - q_{i'})}
    ||^2) 
\end{align}

The important point is that within a bin, all examples have roughly the same $\Phi(x_i)$ value, and the same value for $\hat y_i$ as well.
So denoting $K_b^{(1)} := \frac{\sum_i z_{i,b} q_i \hat y_i \Phi(x_i)}{\sum_{i'} z_{i,b} q_{i'}}$ and $K_b^{(2)} := \frac{\sum_i z_{i,b} (1 - q_i) \hat y_i \Phi(x_i)}{\sum_{i'} z_{i,b} (1 - q_{i'})}$ as the relevant constant within-bin summations, we have the following objective to be maximized by EIIL: 
\[
D(q) = \sum_b( ||K_b^{(1)} -  \frac{\sum_i z_{i,b} q_i y_i \Phi(x_i)}{\sum_{i'} z_{i,b} q_{i'}}||^2 + ||K_b^{(2)} - \frac{\sum_i z_{i,b} (1 - q_i) y_i \Phi(x_i)}{\sum_{i'} z_{i,b} (1 - q_{i'})}||^2.
\]

One way to maximize this is to assign all $y_i=1$ values to environment $1$ ($q_i = 1$ for these examples) and all $y_i=0$ to the other environment $(q_i = 0)$.
We can show this is maximized by considering all of the examples except the $i$-th one have been assigned this way, and then that the loss is maximized by assigning the $i$-th example according to this rule.

Now we want to show that the same assignment maximally violates the invariance principle (showing that this soft EIIL solution provides maximal non-invariance). Intuitively within each bin the difference between $\E[y|e=1]$ and
$\E[y|e=2]$ is maximized (within the bin) if one of these expected label distributions is $1$ while the other is $0$. This can be achieved by assigning all the $y_i=1$ values to the first environment and the $y_i=0$ values to the second.

Thus a global optimum for the relaxed version of EIIL (using the IRMv1 regularizer) also maximally violates the invariance principle.

\section{Dataset Details}\label{sec:dataset_details}
\paragraph{CMNIST}
This dataset was provided by \citet{arjovsky2019invariant}\footnote{\url{https://github.com/facebookresearch/InvariantRiskMinimization}}.
The two training environments comprise $25,000$ images each, with 
$Corr(color, label) = 0.8$ for the firs training environment and $Corr(color, label) = 0.8$ for the second.
A held-out test set with $Corr(color, label)=0.1$ is used for evaluation.
Label noise is applied by flipping the binary target $y$ with probability $\lblnoise = 0.25$, with color correlation applied w.r.t. the noisy label.
Given that only two color channels are used, we follow \citet{arjovsky2019invariant} in downsampling the digit images to $14 \times 14$ pixels and $2$ channels.

\paragraph{Waterbirds}
We follow the procedure outlined by \citet{sagawa2019distributionally} to reproduce the Waterbirds dataset. 
As noted by the authors, due to random seed differences our version of the dataset may differ slightly from the one originally used by the paper.
The train/validation/test splits are of size $4,795$/$1,200$/$5,794$.
As noted in the Appendix of \citep{sagawa2019distributionally}, the validation and test distributions represent upweight the minority groups so that the number of examples coming from each habitat is equal (although there are still marginally more landbirds than waterbirds). For example on train set the subgroup sizes are $3,498$/$184$/$56$/$1,057$ while on the test set the sizes are $467$/$466$/$133$/$133$.

\paragraph{CivilComments-WILDS}
We use the train/validation/test splits from \citet{koh2020wilds}; we refer the interested reader the Appendix of their paper for a detailed description of this version of the dataset, including how it differs from the original dataset \citep{borkan2019nuanced}.

\section{Experimental Details}\label{sec:experimental_details}
\paragraph{Model selection}
\citet{krueger2020out} discussed the pitfalls of achieving good test performance on CMNIST by using test data to tune hyperparameters.
Because our primary interest is in the properties of the inferred environment rather than the final test performance, we sidestep this issue in the Synthetic Regression and CMNIST experiments by using the default parameters of IRM without further tuning.

We refer the interested reader to \citet{gulrajani2020search} for an extensive discussion of possible model selection strategies. They also provide a large empirical study showing that ERM is difficult baseline to beat when all methods are put on equal footing w.r.t. model selection.

In our case, we use the most relaxed model selection method proposed by \citet{gulrajani2020search}, which amounts to allowing each method a $20$ test evaluations using hyperparameter chosen at random from a reasonable range, with the best hyperparameter setting selected for each method.
While none of the methods is given an unfair advantage in the search over hyperparameters, the basic model selection premise does not translate to real-world applications, since information about the test-time distribution is required to select hyperparameters.
Thus these results can be understood as being overly optimistic for each method, although the relative ordering between the methods can still be compared.

\paragraph{Training times}

Because EIIL requires a pre-trained reference model and optimization of the EI objective, overall training time is longer than standard invariant learning.
It depends primarily on the number of steps used to train the reference model and number of steps used in EI optimization. 
The extra training time incurred is manageable and varies from dataset to dataset.

In CMNIST, we train the ERM reference model for $1,000$ steps, which is the same duration as the downstream invariant learner that eventually uses the inferred environments.
In this setting the $10,000$ steps required to optimize the EI objective is actually more than used for representation learning.
The overall EIIL train time is $6.6$ minutes to run 10 restarts on a NVIDIA Tesla P100, compared with $2.18$ minutes for ERM and $2.20$ minutes for IRM.

However, as the problem size scales, the relative overhead cost of EIIL becomes progressively discounted.
On Waterbirds, training GroupDRO takes $4.716$ hours on a NVIDIA Tesla P100.
Our reference model trains for $1$ epoch, so taking this into account along with the $20,000$ steps of EI optimization, EIIL runs at 4.737 hours.
This is a relative increase of 0.4\%.

\paragraph{Batch environment inference}
As mentioned in Chapter~\ref{chap:eiil}, we aggregate logits for the entire training set and optimize the EI objective using the entire training batches.
This can be done by cycling through the train set once in minibatches, computing logits per minibatch, and aggregating the logits only (discarding network activations) prior to EI.
We leave minibatched environment inference and amortization of the soft environment assignments to future work.

\paragraph{Experimental infrastructure}
Our experiments were run on a cluster of NVIDIA Tesla P100 machines.

\paragraph{CMNIST}
IRM is trained on the two training environments and tested on a holdout environment constructed from $10,000$ test images in the same way as the training environments, where color is predictive of the noisy label 10\% of the time. So using color as a feature to predict the label will lead to an accuracy of roughly 10\% on the test environment, while it yields 80\% and 90\% accuracy respectively on the training environments. 

To evaluate EIIL we remove the environment identifier from the training set and thus have one training set comprised of $50,000$ images from both original training environments. We then train an MLP with binary cross-entropy loss on the training environments, freeze its weights and use the obtained model to learn environment splits that maximally violate the IRM penalty.
When optimizing the inner loop of EIIL, we use Adam with learning rate 0.001 for $10,000$ steps with full data batches used to computed gradients.

The obtained environment partitions are then used to train a new model from scratch with IRM.
Following \citet{arjovsky2019invariant}, we allow the representation to train for several hundred annealing steps before applying the IRMv1 penalty.

We used the default architecture---an MLP with two hidden layers of $390$ neurons---and hyperparameter values\footnote{\url{https://github.com/facebookresearch/InvariantRiskMinimization/blob/master/code/colored_mnist/reproduce_paper_results.sh}}----learning rate, weight decay, and penalty strength---from \citep{arjovsky2019invariant}.
We do not use minibatches as the entire dataset fits into memory.

\paragraph{Waterbirds}
Following \citet{sagawa2019distributionally}, we use the default \texttt{torchvision} ResNet50 models, using the pre-trained weights as the initial model parameters, and train without any data augmentation using the 
For GroupDRO and ERM, we use hyperparameters reported by the authors\footnote{\url{https://worksheets.codalab.org/worksheets/0x621811fe446b49bb818293bae2ef88c0}}, and note that the authors make use of the the validation set (whose distribution contains less group imbalance than the training data), to select hyperparameters in their experiments (all methods benefit equally from this strategy).
We train for $300$ epochs without any early stopping (to avoid any further influence from the validation data).
For EIIL, we optimize the EI objective of EIIL with learning rate $0.01$ for $20,000$ steps using the Adam optimizer, and use GroupDRO (using the same hyperparameters as the GroupDRO baseline) as the invariant learner.
An ERM model trained for $1$ epoch was used as the reference model.
We also tried using reference modeled trained for longer, but found that EIIL did not perform as well in this case.
We hypothesize that this is because the reference ERM model focuses on background features early in training, leading to stark performance discrepancies across subgroups, which in turn provides a strong learning signal for EIIL to infer effective environments.
While subgroup disparities are present for more well-trained models, the learning signal in the EI phase will weaken.

\paragraph{CivilComments-WILDS}
Following \citep{koh2020wilds}, we finetune DistilBERT embeddings \citep{sanh2019distilbert} using the default HuggingFace implementation and default weights \citep{wolf2019huggingface}.
EIIL uses an ERM reference classifier and its inferred environments are fed to a GroupDRO invariant learner.
During prototyping the EI step, we noticed that the binning heuristic described in Section \ref{sec:methods}
consistently split the training examples into environments according to the error cases of the reference classifier.
Because error splitting is even simpler to implement than confidence binning, we used this heuristic for the EI step; we believe this is a promising approach for scaling EI to large datasets, and note its equivalence to the first stage of the method independently proposed by \citet{liu2021just}, which is published concurrently to ours.
We experimented with gradient-based EI on this dataset, but did not find any improvement over the (faster) heuristic EI.

On this dataset, we treat reference model selection as part of the overall model selection process, meaning that the hyperparameters of the ERM reference model are treated as a subset of the overall hyperparameters tuned during model selection.
Specifically we used a grid search to tune the reference model learning rate (1e-5, 1e-4), optimizer type (Adam, SGD) and scheduler (linear, plateau), and gradient norm clamping (off, clamped at 1.0), as well as the invariant learner (GroupDRO) learning rate (1e-5, 1e-4).
Moreover, we allow all methods to evaluate worst-group validation accuracy to tune these hyperparameters; such validation data will not be available in most settings, so this result can be seen as an optimistic view of the performance of all methods, including EIIL.
We train all methods (including the reference model) for $5$ epochs, with the best epoch chosen according to validation performance.
Interestingly, the reference model chosen in this way was a constant classifier, so the overall EIIL solution is equivalent to GroupDRO using the class label as the environment label.

The oracle GroupDRO method trains on two environments, with one containing comments where \emph{any} of the $8$ sensitive groups was mentioned, and other environment containing the remaining comments.
We experimented with allowing the oracle method access to more fine-grained environment labels by evaluating all $2^8$ combinations of binary group labels, but did not find any significant performance boost (consistent with observations from \citet{koh2020wilds}).

\section{Additional Empirical Results}\label{sec:additional_results}

\subsection{ColorMNIST}\label{sec:cmnist_extra_results}

\begin{table}[ht]
\centering
\begin{tabular}{lll}
\toprule
{}                                                  &     Train accs &      Test accs \\
\midrule
Grayscale (oracle)                                  & 75.3 $\pm$ 0.1 & 72.6 $\pm$ 0.6 \\
IRM (oracle envs)                                   & 71.1 $\pm$ 0.8 & 65.5 $\pm$ 2.3 \\
\midrule
ERM                                                 & 86.3 $\pm$ 0.1 & 13.8 $\pm$ 0.6 \\
EIIL                                                & 73.7 $\pm$ 0.5 & 68.4 $\pm$ 2.7 \\
Binned EI heuristic                                   & 73.9 $\pm$ 0.5 & 69.0 $\pm$ 1.5 \\
$\Phi_{Color}$                                      & 85.0 $\pm$ 0.1 & 10.1 $\pm$ 0.2 \\
EIIL$|\tilde \Phi = \Phi_{Color}$                   & 75.9 $\pm$ 0.4 & 68.0 $\pm$ 1.2 \\
ARL                                                 & 88.9 $\pm$ 0.2 & 20.7 $\pm$ 0.9 \\
GEORGE& 84.6 $\pm$ 0.3 & 12.8 $\pm$ 2.0 \\
LFF;$\mathcal{L}_{bias}=\text{GCE}_{q\rightarrow0}$ & 96.6 $\pm$ 1.3 & 30.6 $\pm$ 1.0\\
LFF;$\mathcal{L}_{bias}=\text{GCE}_{q=0.7}$         & 15.0 $\pm$ 0.1 & 90.0 $\pm$ 0.3\\
\bottomrule
\end{tabular}
\caption{
Additional baselines for the CMNIST experiment reported in Table \ref{tab:table_teaser}.
The mean and standard deviation of accuracy across ten runs ($\lblnoise=0.25$) are reported.
See text for description of the baseline methods.
}
\label{tab:cmnist_acc}
\end{table}
Table \ref{tab:cmnist_acc} expands on the results from Table \ref{tab:table_teaser} by adding the following baselines that do not require environment labels:
\begin{itemize}
    \item Grayscale: a classifier that removes color via pre-processing, which represents an oracle solution
    \item EIIL$|\tilde \Phi = \Phi_{ERM}$ (reported as EIIL in Table \ref{tab:table_teaser}
    \item Binned EI heuristic: the binning heuristic for environment inference described in Section \ref{sec:methods}.
    \item $\Phi_{Color}$: a hard-coded classifier that predicts \emph{only} based on the digit color
    \item EIIL$|\tilde \Phi = \Phi_{Color}$: EIIL using color-based classifier (rather than $\Phi_{ERM}$) as reference.
    \item GEORGE \citep{sohoni2020no}: This two-stage method seeks to learn the ``hidden subclasses'' by fitting a latent cluster model to the (per-class) distribution of logits of a reference model. The inferred hidden subclasses are fed to a GroupDRO learner, so this approach can be seen as an instance of EIIL under particular choices of (unsupervised) EI and (robust optimization) IL objectives.
    \item ARL \citep{lahoti2020fairness}: A variant of DRO that uses an adversary/auxiliary model to learn worst-case per-example importance weights. Unlike with EIIL, the auxiliary model and main model are trained jointly.
    \item LFF \citep{nam2020learning} jointly trains a ``biased'' model $f_B$ and ``debiased'' model $f_D$. $f_B$ is similar to our ERM reference model, but is trained with $\text{GCE}_q(p(x;\theta), y) = \frac{1-p_y(x;\theta)^q}{q}$ with hyperparameter $q \in (0, 1]$,\footnote{as $q \rightarrow 0$ GCE becomes standard cross entropy}
and its per-example losses determine importance weights for $f_D$.
\end{itemize}

When expanding this study we find that, unlike EIIL, the new baselines fail to find an invariant classifier that predicts based on shape rather than color.
Given that GEORGE does a type of unsupervised EI, it is perhaps surprising that it cannot uncover optimal environments for use with its GroupDRO learner. 
We hypothesize that this is due to assumption of the relevant latent environment labels being ``hidden subclasses'', meaning that all examples in an optimal environment must share the same class label value.
In the CMNIST dataset, this assumption does not hold due to label noise.

We find that, on this dataset, LFF is very sensitive to the hyperparameter $q$, which shapes the GCE loss of $f_B$.
Interestingly, using the default value of $q=0.7$, LFF performs optimally on the test set, but this is \emph{not} because the method has learned an invariant classifier based on the digit shape.
The below-chance train set performance reveals that LFF has learned an \emph{anti-color} classifier, exactly the opposite of what ERM does.
When $q$ approaches zero (GCE approaches standard cross entropy), LFF fails to generalizes to the OOD test distribution.

Finally, we found that because the reference classifier predicts with high confidence on the training set, there are only two populated bins in practice.
Consequentially, the binned EI heuristic is equivalent to splitting errors into one environment and correct predictions into the other.

  \chapter{Notes for Chapter \ref{chap:coda} }

\section{Proof of Proposition 1}\label{app:coda:proposition}

\begin{lemma}
If $V^j \in \parents^\L(V^i)$ in DAG $\G^\L$ of (local) causal model $\M^\L$, and $\L \subset \mathcal{X}$, then $V^j \in \parents^\mathcal{X}(V^i)$ in DAG $\G^\mathcal{X}$ corresponding to causal model $\M^\mathcal{X}$. 
\end{lemma}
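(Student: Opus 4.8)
The statement is a contrapositive-friendly claim about how the parent structure of a local causal model relates to that of a coarser (global, or at least larger-subspace) model. The plan is to argue directly from the definition of $\M^\L$ given in the excerpt together with the Structural Minimality assumption. Recall that $\M^\L$ is constructed from $\M^{\mathcal{X}}$ by restricting the range of the structural equations: $f^{\L,i} = f^i\restrict{\L}$, and an edge $V^j \to V^i$ appears in $\G^\L$ precisely when $f^{\L,i}$ genuinely depends on $V^j$ (structural minimality applied to the restricted model). So the key observation is that dependence of a restricted function implies dependence of the unrestricted function.

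The main step is therefore: suppose $V^j \in \parents^\L(V^i)$. By structural minimality applied to $\M^\L$, there exist $u^i \in \textrm{range}(U^{\L,i})$, $v^{-ij} \in \textrm{range}(V^\L \setminus \{V^i, V^j\})$, and a pair $v^j_1, v^j_2 \in \textrm{range}(V^{\L,j})$ such that $f^{\L,i}(\{u^i, v^{-ij}, v^j_1\}) \neq f^{\L,i}(\{u^i, v^{-ij}, v^j_2\})$. Since $\L \subset \mathcal{X}$, every value appearing in this witness tuple also lies in the corresponding (larger) ranges associated with $\M^{\mathcal{X}}$, and on $\L$ the restricted equation agrees with the global one: $f^{\L,i} = f^i\restrict{\L}$. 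Hence the same witness tuple shows $f^i(\{u^i, v^{-ij}, v^j_1\}) \neq f^i(\{u^i, v^{-ij}, v^j_2\})$ with all arguments in the appropriate global ranges. By structural minimality applied to $\M^{\mathcal{X}}$ (which yields a unique structurally minimal representation), this forces $V^j \in \parents^{\mathcal{X}}(V^i)$ in $\G^{\mathcal{X}}$, completing the argument.

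The one subtlety to handle carefully — and the place I expect to spend the most care rather than effort — is matching up the noise variables and the ``other variables'' $v^{-ij}$ between the two models. In the excerpt, $U^{\L,i}$ has distribution $P(U^i \mid (s_t,a_t)\in\L)$, i.e. it is the \emph{same} random variable $U^i$ conditioned on the event $\L$; similarly $V^{\L}$ is $V$ conditioned on $\L$. So $\textrm{range}(U^{\L,i}) \subseteq \textrm{range}(U^i)$ and $\textrm{range}(V^{\L,k}) \subseteq \textrm{range}(V^k)$, which is exactly what the witness-transfer step needs. I would state this containment of ranges explicitly as the first line of the proof so the rest goes through mechanically. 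A secondary point: one should note that the existence of \emph{a} witness tuple in the restricted ranges automatically gives a witness tuple in the unrestricted ranges (no need to extend or modify it), since $\L \subset \mathcal{X}$ means restricted ranges are subsets of unrestricted ones — there is nothing to construct. This makes the proof essentially a one-paragraph unwinding of definitions, with Structural Minimality (Assumption in the excerpt, ``Remark 6.6'' of Peters et al.) doing the real work in both directions.

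There is no genuine obstacle here; the ``hard part'' is purely expository — being precise that ``$\L \subset \mathcal{X}$'' is being used in the sense of $\L$ being a subset of the state-action space $\S \times \A$ (so conditioning shrinks supports), and that the structural equations $f^i$ are the \emph{same} global functions merely with restricted input domains, so equality on $\L$ is literally definitional rather than something to be verified. I would close by remarking that this lemma is what licenses the phrase ``the signature of $f^{\L,i}$ may shrink'' in the main text: edges can only be \emph{deleted} when passing to a subspace, never added, which is precisely the content of the lemma and what Proposition~\ref{proposition_union_of_local_sets} will build on.
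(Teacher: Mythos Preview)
Your proposal is correct and follows essentially the same argument as the paper: invoke structural minimality in $\M^\L$ to obtain a witness tuple showing $f^{\L,i}$ depends on $V^j$, then use $\L \subset \mathcal{X}$ (so the witness lies in the larger ranges and $f^{\L,i}=f^i\restrict{\L}$) to conclude via structural minimality in $\M^{\mathcal{X}}$ that $V^j \in \parents^{\mathcal{X}}(V^i)$. The paper's version is terser and phrases the subset step as ``expand to $(s_1,a_1),(s_2,a_2)\in\L\subset\mathcal{X}$,'' but the logic is identical.
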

\vspace{-\baselineskip}
\begin{proof}
By minimality, there exist $\{u^i, v^{-j}, v^j_1\}$ and $\{u^i, v^{-j}, v^j_2 \}$ with $v^{-j} \in \parents^\L(V^i)\setminus V^j$ for which $f^i(\{u^i, v^{-j}, v^j_1\}) \not= f^i(\{u^i, v^{-j}, v^j_2 \})$. Expand $\{v^{-j}, v^j_1\}$ and $\{v^{-j}, v^j_2\}$ to $(s_1, a_1), (s_2, a_2) \in \L$ (with any values of other components). But $\L \subset \mathcal{X}$, so $(s_1, a_1), (s_2, a_2) \in \mathcal{X}$ and it follows from minimality in $\mathcal{X}$ that $V^j \in \parents^\mathcal{X}(V^i)$.
\end{proof}

\begin{corollary}\label{corollary_sparser}
If $\L \subset \mathcal{X}$, $\G^\L$ is sparser (has fewer edges) than $\G^\mathcal{X}$. 
\end{corollary}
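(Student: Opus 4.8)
The plan is to read this off the preceding Lemma almost immediately. The Lemma establishes that parentage is monotone under restriction: if $V^j \in \parents^\L(V^i)$ and $\L \subset \mathcal{X}$, then $V^j \in \parents^\mathcal{X}(V^i)$. Since the edge set of a structural causal model's DAG is by definition the collection of incoming parent edges at every node, this says exactly that every directed edge of $\G^\L$ is also a directed edge of $\G^\mathcal{X}$. So the corollary is essentially a restatement of the Lemma at the level of edge sets.

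Concretely, first I would fix notation so the comparison "fewer edges" is meaningful: both $\G^\L$ and $\G^\mathcal{X}$ are DAGs on the common vertex set $V_t = \{V^i_{t[+1]}\}_{i=0}^{2n+m}$, since the local model $\M^\L$ inherits its nodes $V^\L_t = \{V^{\L,i}_{t[+1]}\}_{i=0}^{2n+m}$ from $\M^\mathcal{X}$ and only its structural equations (hence its edges) change under restriction. Writing $E^\L = \{(V^j, V^i) : V^j \in \parents^\L(V^i)\}$ and $E^\mathcal{X}$ analogously, I would apply the Lemma edge-by-edge to conclude $E^\L \subseteq E^\mathcal{X}$, and therefore $|E^\L| \le |E^\mathcal{X}|$, which is the claimed sparsity.

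For the strict reading of "fewer edges" I would add one remark: the inclusion is proper precisely when restricting to $\L$ renders at least one global parent inactive, i.e., when there is some $V^j \in \parents^\mathcal{X}(V^i)$ whose direct effect on $V^i$ (witnessed by the minimality condition over $\mathcal{X}$) can only be realized via inputs lying outside $\L$. This is exactly the local-independence situation motivating the whole construction (the two-armed robot of Figure \ref{fig_local_factorization}, where conditioning on the arms being far apart deletes the cross-arm edges), so in the cases of interest the edge set strictly shrinks.

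I do not anticipate a genuine obstacle here; the only thing requiring care is stating the result at the right strength. The honest content is the set inclusion $E^\L \subseteq E^\mathcal{X}$, and I would phrase the proof so that "sparser" is explicitly understood as "no more edges, with strictly fewer whenever local independence is non-vacuous," rather than silently asserting strict inequality in general.
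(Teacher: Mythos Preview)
Your proposal is correct and matches the paper's approach: the paper states this as an immediate corollary of the preceding Lemma with no separate proof, and your edge-by-edge application of the Lemma to obtain $E^\L \subseteq E^\mathcal{X}$ is exactly the intended argument. Your added care in distinguishing the weak inclusion from strict sparsity is a refinement the paper does not make explicit.
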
\vspace{\baselineskip}

\begin{appdxProp}{1}
The causal mechanisms represented by $\G_i, \G_j \subset \G$ are independent in $\G^{\L_1 \cup \L_2}$ if and only if  $\G_i$ and $\G_j$ are independent in both $\G^{\L_1}$ and $\G^{\L_2}$, and $f^{\L_1,i} = f^{\L_2,i}, f^{\L_1,j} = f^{\L_2,j}$.
\end{appdxProp}
\vspace{-\baselineskip}
\begin{proof}

($\Rightarrow$)
If $\G_i$ and $\G_j$ are independent in $\G^{\L_1 \cup \L_2}$, independence in $\G^{\L_1}$ and $\G^{\L_2}$ follows from Corollary \ref{corollary_sparser}. That $f^{\L_1,i} = f^{\L_2,i}$ (and $f^{\L_1,j} = f^{\L_2,j}$), on their shared domain, follows since each is a restriction of the same function $f^{\L_1 \cup \L_2,i}$ (or $f^{\L_1 \cup \L_2,j}$).

($\Leftarrow$)
Suppose $\G_i$ and $\G_j$ are independent in $\G^{\L_1}$ and $\G^{\L_2}$ but not $\G^{\L_1 \cup \L_2}$. By the definition of independence applied to $\G^{\L_1 \cup \L_2}$, we have that, without loss of generality, there is a $V_i \in \G_i, V_j \in \G_j$ with $V_j \in \parents^{\L_1 \cup \L_2}(V_i)$. Then, from the definition of minimality, it follows that there exist $(s_1, a_1), (s_2, a_2) \in \L_1 \cup \L_2$ that differ only in the value of $V_j$, and $u_i \in \textrm{range}(U_i)$ for which $f^i(s_1, a_1, u_i) \not= f^i(s_2, a_2, u_i)$. 

Clearly, if $(s_1, a_1)$ and $(s_2, a_2)$ are both in $\L_1$ (or $\L_2$), there will be an edge from $V_j$ to $V_i$ in $\G^{\L_1}$ (or $\G^{\L_2}$) and the claim follows by contradiction. Thus, the only interesting case is when, without loss of generality, $(s_1, a_1) \in \L_1$ and $(s_2, a_2) \in \L_2$. 
The key observation is that $(s_1, a_1)$ and $(s_2, a_2)$ differ only in the value of node $V_j \not\in \G_i$: since $\G_i$ is an independent causal mechanism in both $\G^{\L_1}$ and $\G^{\L_2}$ and the parents of $V_i$ take on the same values in each, we have that $f^i(s_1, a_1, u_i) = f^{i,\L_1}(s_1, a_1, u_i) = f^{i,\L_2}(s_2, a_2, u_i) = f^i(s_2, a_2, u_i)$ and the claim follows by contradiction. 
\end{proof}

\section{Additional Experiment Details}\label{app:coda:training_details}
This section provides training details for the experiments discussed in Section \ref{sec:coda:experiments} as well as some additional results. Code is available at \url{https://github.com/spitis/mrl} \cite{mrl}.

\subsection{Batch RL}\label{app:coda:batch_rl_details}
Here we detail the procedure used in the Batch RL experiments in the \texttt{Pong} environment.

\paragraph{Implementation}
Our experiment first builds the agent's dataset (consisting of real data, dyna data and/or CoDA data), then instantiates a TD3 agent by filling its replay buffer with the dataset. The replay buffer is always expanded to include the entire enlarged dataset (for the 5x CoDA ratio at 250,000 data size this means the buffer has 1.5E6 experiences). The agent is run for 500,000 optimization steps.

\paragraph{Hyperparameters} We used similar hyperparameters to the original TD3 codebase, with the following differences:
\begin{itemize}
\item We use a discount factor of $\gamma = 0.98$ instead of 0.99. 
\item Since Pong is a sparse reward task with $\gamma = 0.98$, we clip critic targets to $(-50, 50)$.
\item We use networks of size $(128, 128)$ instead of $(256, 256)$. 
\item We use a batch size of $1000$. 
\end{itemize}

\paragraph{Environment} We base our \texttt{Pong} environment on \texttt{RoboSchoolPong-v1} \cite{klimov2017roboschool}. The original environment allowed the ball to teleport back to the center after one of the players scored, offered a small dense reward signal for hitting the ball, and included a stray ``timeout'' feature in the agent's state representation. We fix the environment so that the ball does not teleport, and instead have the episode reset every 150 steps, and also 10 steps after either player scores. The environment is treated as continuous and never returns a done signal that is not also accompanied by a \texttt{TimeLimit.truncated} indicator \cite{brockman2016openai}. We change the reward to be strictly sparse, with reward of $\pm 1$ given when the ball is behind one of the players' paddles. Finally, we drop the stray ``timeout'' feature, so that the state space is 12-dimensional, where each set of 4 dimensions is the x-position, y-position, x-velocity, and y-velocity of the corresponding object (2 paddles and one ball). 

The opponent in the environment uses a small pre-trained policy that was included in the original environment \cite{klimov2017roboschool} (the effectiveness of this policy was unaffected by the removal of the timeout feature). To collect the batch dataset, we use the same policy as an ``expert'' policy, but replace 50\% of its actions by random actions.

\paragraph{Training the CoDA model}
Without access to a ground truth mask, we needed to train a masking function \ref{app:coda:inferring_local_factorization} to identify local disentanglement. We also forewent the ground truth reward, instead training our own reward classifier. In each case we used the batch dataset given, and so we trained different models for each random seed. For our masking model, we stacked two single-head transformer blocks (without positional encodings) and used the product of their attention masks as the mask. 
Each block consists of query $Q$, key $K$, and value $V$ networks that each have 3-layers of 256 neurons each, with the attention computed as usual \cite{vaswani2017attention,lee2018set}. 
The transformer is trained to minimize the L2 error of the next state prediction given the current state and action as inputs. 
The input is disentangled, and so has shape \texttt{(batch\_size, num\_components, num\_features)}. In each row (component representation) of each sample, features corresponding to other components are set to zero. 
The transformer is trained for 2000 steps with a batch size of 256, Adam optimizer \cite{kingma2014adam} with learning rate of 3e-4 and weight decay of 1e-5. 
For our reward function we use a fully-connected neural network with 1 hidden layer of 128 units. 
The reward network accepts an $(s, a, s')$ tuple as input (not disentangled) and outputs a softmax over the possible reward values of $[-1, 0, 1]$. 
It is trained for 2000 steps with a batch size of 512, Adam optimizer \cite{kingma2014adam} with learning rate of 1e-3 and weight decay of 1e-4. 
All hyperparameters were rather arbitrary (we used the default setting, or in case it did not work, the first setting that gave reasonable results, as was determined by inspection). 
To ensure that our model and reward functions are trained appropriately (i.e., do not diverge) for each seed, we confirm that the average loss of the CoDA model is below 0.005 at the training and that the average loss of the reward model is below 0.1, which values were found by inspection of a prototype run. These conditions were met by all seeds.

When used to produce masks, we chose a threshold of $\tau = 0.02$ by inspection, which seemed to produce reasonable results. A more principled approach would do cross-validation on the available data.

\paragraph{Tested configurations}

Table \ref{fig_batch_rl_and_fetch} reports a subset of our tested configurations. We report our results in full below. We considered the following configurations:
\begin{enumerate}
    \item \textbf{Real data only}.
    \item \textbf{CoDA + real data}: after training the CoDA model, we expand the base dataset by either 2, 3, 4 or 6 times.
    \item \textbf{Dyna (using CoDA model)}: after training the CoDA model, we use it as a forward dynamics model instead of for CoDA; we use 1-step rollouts with random actions from random states in the given dataset to expand the dataset by 2x. We also tried with 5-step rollouts, but found that this further hurt performance (not shown).  Note that Dyna results use only 5 seeds.
    \item \textbf{Dyna (using MBPO model)}: as the CoDA model exhibits significant model bias when used as a forward dynamics model, we replicate the state-of-the-art model-based architecture used by MBPO \cite{janner2019trust} and use it as a forward dynamics model for Dyna; we experimented with 1-step and 5-step rollouts with random actions from random states in the given dataset to expand the dataset by 2x. This time we found that the 5-step rollouts do better, which we attribute to the lower model bias together with the ability to create a more diverse dataset (1-step not shown). The MBPO model is described below. We use the same reward model as CoDA to relabel rewards for the MBPO model, which only predicts next state. 
    \item \textbf{MBPO + CoDA}: as MBPO improved performance over the baseline (real data only) at lower dataset sizes, we considered using MBPO together with CoDA. We use the base dataset to train the MBPO, CoDA, and reward models, as described above. We then use the MBPO model to expand the base dataset by 2x, as described above. We then use the CoDA model to expand \textit{the expanded dataset} by 3x the original dataset size. Thus the final dataset is 5x as large as the original dataset (1 real : 1 MBPO : 3 CoDA). 
\end{enumerate}

All configurations alter only the training dataset, and the same agent architecture/hyperparameters (reported above) are used in each case. 

\paragraph{MBPO model}

Since using the CoDA model for Dyna harms rather than helps, we consider using a stronger, state-of-the-art model-based approach. In particular, we adopt the model used by Model-Based Policy Optimization \cite{janner2019trust}. This model consists of a size 7 ensemble of neural networks, each with 4 layers of 200 neurons. We use ReLU activations, Adam optimizer \cite{kingma2014adam} with weight decay of 5e-5, and have each network output a the mean and (log) diagonal covariance of a multi-variate Gaussian. We train the networks with a maximum likelihood loss. To sample from the model, we choose an ensemble member uniformly at random and sample from its output distribution, as in \cite{janner2019trust}. 

\subsection{Goal-conditioned RL}

Here we detail the procedure used in the Goal-conditioned RL experiments on the \texttt{Fetchpush-v1} and \texttt{Slide2} environments.

\paragraph{Hyperparameters}
For \texttt{Fetchpush-v1} we use the default hyperparameters from the codebase (see \cite{pitis2020mega} for details on how they were selected), which outperform the original HER agents of \cite{andrychowicz2017hindsight,plappert2018multi} and follow-up works. We do not tune the CoDA agent (but see additional CoDA hyperparameters below). They are as follows:
\begin{itemize}
\item Off-policy algorithm: DDPG \cite{lillicrap2015continuous}
\item Hindsight relabeling strategy:  \texttt{futureactual\_2\_2} \cite{pitisprotoge}, using exclusively \texttt{future} \cite{andrychowicz2017hindsight} relabeling for the first 25,000 steps
\item Optimizer: Adam \cite{kingma2014adam} with default hyperparameters
\item Batch size: 2000
\item Optimization frequency: 1 optimization step every 2 environment steps after the 5000th environment step
\item Target network updates: update every 10 optimization steps with a Polyak averaging coefficient of 0.05
\item Discount factor: 0.98
\item Action l2 regularization: 0.01
\item Networks: 3x512 layer-normalized \cite{ba2016layer} hidden layers with ReLU activations
\item Target clipping: (-50, 0)
\item Action noise: 0.1 Gaussian noise
\item Epsilon exploration \cite{plappert2018multi}: 0.2, with an initial 100\% exploration period of 10,000 steps
\item Observation normalization: yes
\item Buffer size: 1M
\end{itemize}

On \texttt{Slide2} we tried to tune the baseline hyperparameters somewhat, but note that this is a fairly long experiment (10M timesteps) and so only a few settings were tested due to constraints. In particular, we considered the following modifications:
\begin{itemize}
\item Expanding the replay buffer to 2M (effective)
\item Reducing the batch size to 1000 (effective)* (used for results)
\item Using the \texttt{future\_4} strategy (agent fails to learn in 10M steps)
\item Reducing optimization step frequency to 1 step every 4 environment steps (about the same performance)
\end{itemize}

We tried similar adjustments to our CoDA agent, but found the default hyperparameters (used for results) performed well. We found that the CoDA agent outperforms the base HER agent on all tested settings.

For CoDA, we used the following additional  hyperparameters:
\begin{itemize}
\item CoDA buffer size: 3M
\item Make CoDA data every: 250 environment steps
\item Number of source pairs from replay buffer used to make CoDA data: 2000
\item Number of CoDA samples per source pair: 2
\item Maximum ratio of CoDA:Real data to train on: 3:1
\end{itemize}
\paragraph{Environment} 

\begin{wrapfigure}{r}{0.23\textwidth}
    \vspace{-\baselineskip}
	\centering
    \captionsetup{width=0.22\textwidth}
	\includegraphics[width=0.22\textwidth,height=0.2\textwidth]{chapters/coda/coda_figs/env_fetch.png}
	\caption{The \texttt{Slide2} environment.}
	\label{fig_env_fetch_appendix}
    \vspace{-\baselineskip}
\end{wrapfigure}

On \texttt{FetchPush-v1} the standard state features include the relative position of the object and gripper, which entangles the two. While this could be dealt with by dynamic relabeling (as used for HER's reward), we simply drop the corresponding features from the state. 

\texttt{Slide2} has two pucks that slide on a table and bounce off of a solid railing. Observations are 40-dimensional (including the 6-dimensional goal), and actions are 4-dimensional. Initial positions and goal positions are sampled randomly on the table. During training, the agent gets a sparse reward of 0 (otherwise -1) if \textit{both} pucks are within 5cm of their ordered target. At test time we count success as having both picks within 7.5cm of the target on the last step of the episode. Episodes last 75 steps and there is no done signal (this is intended as a continuous task). 

\paragraph{CoDA Heuristic} For these experiments we use a hand-coded heuristic designed with domain knowledge. In particular, we assert that the action is always entangled with the gripper, and that gripper/action and objects (pucks or blocks) are disentangled whenever they are more than 10cm apart. This encodes independence due to physical separation, which we hypothesize is a very generally heuristic that humans implicitly rely on all the time.  The pucks have a radius of 2.5cm and height of 4cm, and the blocks are 5cm x 5cm x 5cm, so this heuristic is quite generous / suboptimal. Despite being suboptimal, it demonstrates the ease with which domain knowledge can be injected via the CoDA mask: we need only a high precision (low false positive rate) heuristic---the recall is not as important. It is likely that an agent could learn a better mask that also takes into account velocity.

\paragraph{Results plot} The plot shows mean $\pm 1$ standard deviation of the smoothed data over 5 seeds.

\section{Proof of Theorem 1}\label{appdx_proposition}
The dynamics model assumed is a maximum-likelihood, count-based model that has separate parameters for each causal mechanism, $P_{i, \theta}^\L$, in each local neighborhood.
That is, for a given configuration of the parents $\parents_i = x$ in $P_{i, \theta}^\L$, we define count parameter $\theta_{ij}$ for the $j$-th possible child, $c_{ij}$, so that $P_{i, \theta}^\L(c_{ij}\,|\,x) = \theta_j/\sum_{k=1}^{|c_i|}\theta_k$.

We use the following two lemmas (see source material for proof):
\vspace{\baselineskip}

\begin{lemma}[Proposition A.8 of \citet{agarwal2019reinforcement}]\label{lemma_concentration_l1}
Let $z$ be a discrete random variable that takes values in $\{1, \dots, d\}$, distributed according to $q$.
We write $q$ as a vector where $\vec{q} = [\textrm{Pr}(z=j)]_{j=1}^d$.
Assume we have $n$ i.i.d. samples, and that our empirical estimate of $\vec{q}$ is $[\vec{q}]_j = \sum_{i=1}^n\mathbf{1}[z_i=j]/n$.\\
We have that $\forall\epsilon > 0$:
$$\textrm{Pr}(\Vert\hat{q} - \vec{q}\Vert_2 \geq 1 / \sqrt{n} + \epsilon) \leq e^{-n\epsilon^2}$$
which implies that:
$$\textrm{Pr}(\Vert\hat{q} - \vec{q}\Vert_1 \geq \sqrt{d}(1/\sqrt{n} + \epsilon)) \leq e^{-n\epsilon^2}$$

\end{lemma}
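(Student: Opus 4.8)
The plan is to establish the $\ell_2$ tail bound first through a bounded-differences (McDiarmid) argument, and then obtain the $\ell_1$ statement as an immediate corollary via Cauchy--Schwarz. Write $\hat q_j = \frac{1}{n}\sum_{i=1}^n \mathbf{1}[z_i = j]$ so that $\mathbb{E}[\hat q_j] = q_j$, and define $f(z_1,\dots,z_n) = \|\hat q - \vec q\|_2$, viewed as a function of the $n$ independent samples. The two ingredients I need are a bound on $\mathbb{E}[f]$ (which will produce the ``$1/\sqrt n$'' offset) and a uniform bound on how much $f$ can change when a single coordinate is altered.

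First I would control the mean. Since $n\hat q_j$ is Binomial$(n, q_j)$, each coordinate has variance $q_j(1-q_j)/n$, so
\[
\mathbb{E}\big[\|\hat q - \vec q\|_2^2\big] = \sum_{j=1}^d \frac{q_j(1-q_j)}{n} = \frac{1 - \|\vec q\|_2^2}{n} \le \frac{1}{n}.
\]
By Jensen's inequality, $\mathbb{E}[f] \le \sqrt{\mathbb{E}[f^2]} \le 1/\sqrt n$, which is exactly where the additive $1/\sqrt n$ in the statement comes from. Next I would verify bounded differences: changing one sample $z_i$ from category $a$ to category $b$ perturbs $\hat q$ by $-\tfrac{1}{n}e_a + \tfrac{1}{n}e_b$, a vector of $\ell_2$ norm $\sqrt 2/n$, so by the reverse triangle inequality $|f - f'| \le \|\hat q - \hat q'\|_2 = \sqrt 2/n$, giving difference constants $c_i = \sqrt 2/n$.

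Applying McDiarmid's inequality with $\sum_{i=1}^n c_i^2 = 2/n$ then yields
\[
\Pr\big(f \ge \mathbb{E}[f] + \epsilon\big) \le \exp\!\Big(-\frac{2\epsilon^2}{\sum_i c_i^2}\Big) = \exp(-n\epsilon^2).
\]
Because $\mathbb{E}[f] \le 1/\sqrt n$, the event $\{f \ge 1/\sqrt n + \epsilon\}$ is contained in $\{f \ge \mathbb{E}[f] + \epsilon\}$, which establishes the first displayed inequality. The $\ell_1$ bound requires no new probabilistic content: Cauchy--Schwarz gives $\|\hat q - \vec q\|_1 \le \sqrt d\,\|\hat q - \vec q\|_2$, so $\{\|\hat q - \vec q\|_1 \ge \sqrt d(1/\sqrt n + \epsilon)\} \subseteq \{\|\hat q - \vec q\|_2 \ge 1/\sqrt n + \epsilon\}$ and the tail transfers directly.

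The main delicate point is pinning down the bounded-differences constant as exactly $\sqrt 2/n$ rather than a looser $2/n$: it is precisely this value that makes $\sum_i c_i^2 = 2/n$ and causes the McDiarmid exponent to collapse to the clean $-n\epsilon^2$, so a careless estimate here would leave a spurious constant factor in the exponent and fail to match the stated bound. By comparison, the variance identity $\sum_j q_j(1-q_j) = 1 - \|\vec q\|_2^2$, the Jensen step, and the Cauchy--Schwarz reduction are all routine.
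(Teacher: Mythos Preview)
Your argument is correct and is the standard route to this bound: control $\mathbb{E}\|\hat q - \vec q\|_2$ via the exact variance computation and Jensen, establish bounded differences with constant $\sqrt{2}/n$, apply McDiarmid, and then pass to $\ell_1$ by Cauchy--Schwarz. Note, however, that the paper does not actually prove this lemma; it is quoted verbatim as Proposition~A.8 of \citet{agarwal2019reinforcement} with the remark ``see source material for proof,'' and is used only as an input to the proof of Theorem~1. Your proof matches what one finds in that source.
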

\vspace{\baselineskip}

\begin{lemma}[Corollary 1 of \citet{strehl2007model}]\label{lemma_aggregate_l1}
If for all states and actions, each model $P_{i,\theta}$ of $P_i$ is $\epsilon/k$ close to the ground truth in terms of the $\lone$ norm: $\Vert P_i(s, a) - P_{i,\theta}(s, a) \Vert_1 < \epsilon/k$, then the aggregate transition model $P_\theta$ is $\epsilon$ close to the ground truth transition model: $\Vert P(s, a) - P_{\theta}(s, a) \Vert_1 < \epsilon$.
\end{lemma}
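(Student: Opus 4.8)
The plan is to prove the aggregation inequality directly, as a statement that the $\ell_1$ distance between two product distributions is subadditive in the factors, via a telescoping (hybrid) argument. Fix an arbitrary state--action pair $(s,a)$ and let $\L$ be a local neighborhood containing it, so that the next--state distribution factorizes across the $k$ mechanisms, $P(s'\given s,a)=\prod_{i=1}^k P_i(c_i\given\parents_i)$ and likewise $P_\theta(s'\given s,a)=\prod_{i=1}^k P_{i,\theta}(c_i\given\parents_i)$. The child sets $\{c_i\}$ are mutually exclusive and collectively exhaustive (recall $m\prod_i|c_i|=|\S|$), so each next state $s'$ is identified with a tuple $(c_1,\dots,c_k)$ and a sum over $s'$ decomposes as an independent sum over each $c_i$. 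Since $(s,a)$ is fixed, the parent configurations $\parents_i$ are constants, and I would treat each $P_i, P_{i,\theta}$ as a fixed probability vector over the possible values of $c_i$.

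First I would introduce the hybrid distributions $H_j(s')=\big(\prod_{i\le j}P_{i,\theta}(c_i)\big)\big(\prod_{i>j}P_i(c_i)\big)$ for $j=0,\dots,k$, which interpolate between $H_0=P$ and $H_k=P_\theta$. By the triangle inequality, $\Vert P(s,a)-P_\theta(s,a)\Vert_1=\Vert H_0-H_k\Vert_1\le\sum_{j=1}^k\Vert H_{j-1}-H_j\Vert_1$. The key observation is that $H_{j-1}$ and $H_j$ differ only in their $j$-th factor ($P_j$ versus $P_{j,\theta}$), all other factors being shared; hence the summand of $\Vert H_{j-1}-H_j\Vert_1$ at a given $s'$ equals the product of the shared factors times $|P_j(c_j)-P_{j,\theta}(c_j)|$.

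The central computation then uses the decomposition of the sum over $s'$ into independent sums over the $\{c_i\}$: because $\sum_{c_i}P_{i,\theta}(c_i)=\sum_{c_i}P_i(c_i)=1$ for every $i\ne j$ (each factor is a normalized distribution), all the shared factors sum to one, leaving $\Vert H_{j-1}-H_j\Vert_1=\sum_{c_j}|P_j(c_j)-P_{j,\theta}(c_j)|=\Vert P_j(s,a)-P_{j,\theta}(s,a)\Vert_1$. Invoking the hypothesis $\Vert P_i(s,a)-P_{i,\theta}(s,a)\Vert_1<\epsilon/k$ for each $i$ yields $\Vert P(s,a)-P_\theta(s,a)\Vert_1\le\sum_{j=1}^k\epsilon/k=\epsilon$. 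Since the per-mechanism bound holds uniformly over $(s,a)$, taking the maximum over $(s,a)$ preserves the inequality and completes the argument.

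The step I expect to require the most care is justifying the factorization of the sum over $s'$ into independent sums over the child sets; this is exactly where mutual exclusivity and collective exhaustiveness of $\{c_i\}$ is used, and it is what lets the $k-1$ shared factors in each hybrid difference collapse to $1$. A secondary point to state carefully is that the parent sets may overlap across mechanisms (recall $m\prod_i|\parents_i|\ge|\S||\A|$), but because $(s,a)$ is held fixed the parent configurations are constants, so this overlap causes no difficulty at a single $(s,a)$.
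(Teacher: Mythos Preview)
Your telescoping (hybrid) argument is correct and is the standard way to prove subadditivity of total variation for product measures. Note, however, that the paper does not supply its own proof of this lemma: it is quoted verbatim as Corollary~1 of \citet{strehl2007model} with the remark ``see source material for proof,'' and is then invoked as a black box inside the proof of Theorem~\ref{theorem_one}. So there is no in-paper argument to compare against; your proposal effectively fills in what the paper defers to the cited reference. One cosmetic point: since each summand is \emph{strictly} less than $\epsilon/k$, the final bound is $<\epsilon$ (matching the lemma's strict inequality), not merely $\le\epsilon$ as you wrote.
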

\vspace{\baselineskip}

\begin{appdxTheorem}{1}
Let $n$ be the number of empirical samples used to train the model of each local causal mechanism $P_{i, \theta}^\L$ at each configuration of parents $\parents_i\! =\! x$.
There exists positive constant $c$ such that, if
$$
n \geq \frac{ck^2|c_i|\log(|\S||\A|/\delta)}{\epsilon^2},
$$
then, with probability at least $1-\delta$, we have:
$$\max_{(s, a)} \Vert P(s, a) - P_{\theta}(s, a) \Vert_1 \leq  \epsilon.$$
\end{appdxTheorem}

\begin{proof}
Applying Lemma \ref{lemma_concentration_l1}, we have that for fixed parents $\parents_i = x$, wp. at least $1-\delta$,

$$\Vert P_i(x) - P_{i,\theta}(x) \Vert_1 \leq c\sqrt{\frac{|c_i|\log(1/\delta)}{n}},$$

where $n$ is the number of independent samples used to train $P_{i,\theta}$ and $c$ is a positive constant.
Now consider a fixed $(s,a)$, consisting of $k$ parent sets.
Applying Lemma \ref{lemma_aggregate_l1} we have that, wp. at least $1\!-\!\delta$, 

$$\Vert P(s,a) - P_{\theta}(s,a) \Vert_1 \leq ck\sqrt{\frac{|c_i|\log(1/\delta)}{n}}.$$

We apply the union bound across all states and actions to get that wp. at least $1-\delta$,

$$\max_{(s, a)} \Vert P(s, a) - P_{\theta}(s, a) \Vert_1 \leq ck\sqrt{\frac{|c_i|\log(|S||A|/ \delta)}{n}}.$$

The result follows by rearranging for $n$ and relabeling $c$.
\end{proof}

To compare to full-state dynamics modeling, we can translate the sample complexity from the per-parent count $n$ to a total count $N$.
Recall $m\Pi_i|c_i| = |\S|$, so that $|c_i| = (|\S|/m)^{1/k}$, and $m\Pi_i|\parents_i| \geq |\S||\A|$.
We assume a small constant overlap factor $v \geq 1$, so that $|\parents_i| = v(|\S||\A|/m)^{1/k}$.
We need the total number of component visits to be $n|\parents_i|km$, for a total of $nv(|\S||\A|/m)^{1/k}m$ state-action visits, assuming that parent set visits are allocated evenly, and noting that each state-action visit provides $k$ parent set visits.
This gives:

\begin{appdxCorollary}{1}
To bound the error as above, we need to have
$$N \geq \frac{cmk^2(|\S|^2|\A|/m^2)^{1/k}\log(|\S||\A|/\delta)}{\epsilon^2},$$
total train samples, where we have absorbed the overlap factor $v$ into constant $c$.
\end{appdxCorollary}

To extend this and adapt other results to our setting, we could now apply the Simulation Lemma \cite{agarwal2019reinforcement} to bound the value difference given the model error, or alternatively, develop the theory in the direction of \cite{strehl2007model} and related work.
However, we believe the core insights are already contained in Theorem 1 and Corollary 1. 

\section{Implementation Details}\label{app:coda:mocoda-implementation-details}

Code is available at https://github.com/spitis/mocoda (using https://github.com/spitis/mrl for RL algorithms).
There are numerous components involved that each have several different settings that were mostly just taken ``as-is'' or picked as reasonable defaults (e.g., using a layer size of 512 in most neural networks, or having 5 components in the MDN, or the specific implementation of rejection sampling for \texttt{Mocoda-U}).
The best documentation for specific details is the code itself.
As such, the implementation details below cover the broad strokes so that a reader might understand the general pipeline, and we refer the reader to the provided code for precise details.

\subsection{Causal Transition Structure and Parent Set Definitions}

We implement the local causal model as a mask function $M$ that maps (state, action) tuples to an adjacency matrix of the causal structure.
For example, in \texttt{2d Navigation}, the mask function was implemented as follows:

{\footnotesize
\begin{verbatim}
  def Mask2dNavigation(input_tensor):
    """
    accepts B x num_sa_features, and returns B x num_parents x num_children
    """

    # base local mask
    mask = torch.tensor(
      [[1, 0],
       [0, 1],
       [1, 0],
       [0, 1]]).to(input_tensor.device)

    # change local mask in top right quadrant
    mask = mask[None].repeat((input_tensor.shape[0], 1, 1))
    mask[torch.logical_and(input_tensor[:,0] > 0.5, input_tensor[:, 1] > 0.5)] = 1
    
    return mask
\end{verbatim}
}
As an example, the causal graph for the base local mask, which applies for most of the state space is shown in the figure ~\ref{fig:cg-local-mask}.
We used the base local graph to select the parent sets, in this case, $(x, \Delta x)$ and $(y, \Delta y)$. 

\begin{figure}[!h]
	\centering
	\includegraphics[width=0.2\textwidth]{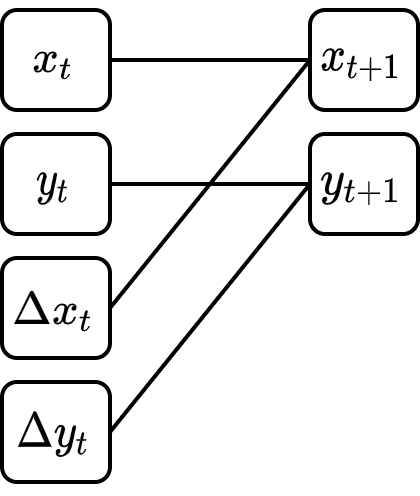}
	\caption{Causal graph for local mask}
	\label{fig:cg-local-mask}
\end{figure}

\subsection{Parent Distribution}\label{appdx_parentdist}

To sample the parent distribution in Step 1 of \methodName, we use the Gaussian Mixture Model (GMM) based approach described in the main text.
The advantage of this approach is that we can easily do conditional sampling in case of overlapping parent sets.
For a given local subset $\L$, we fit a separate GMM to the marginal of each parent set, as it appears in the empirical distribution for $\L$.
To generate a new sample, we optionally shuffle the GMMs, and then sample from one GMM at a time, conditioning on any already generated features.
This process eliminates any spurious correlations between features that are not part of the same parent set, and thus results in the maximum-entropy, marginal matching distribution.

\begin{figure}[!t]
	\centering
	\includegraphics[width=0.8\textwidth]{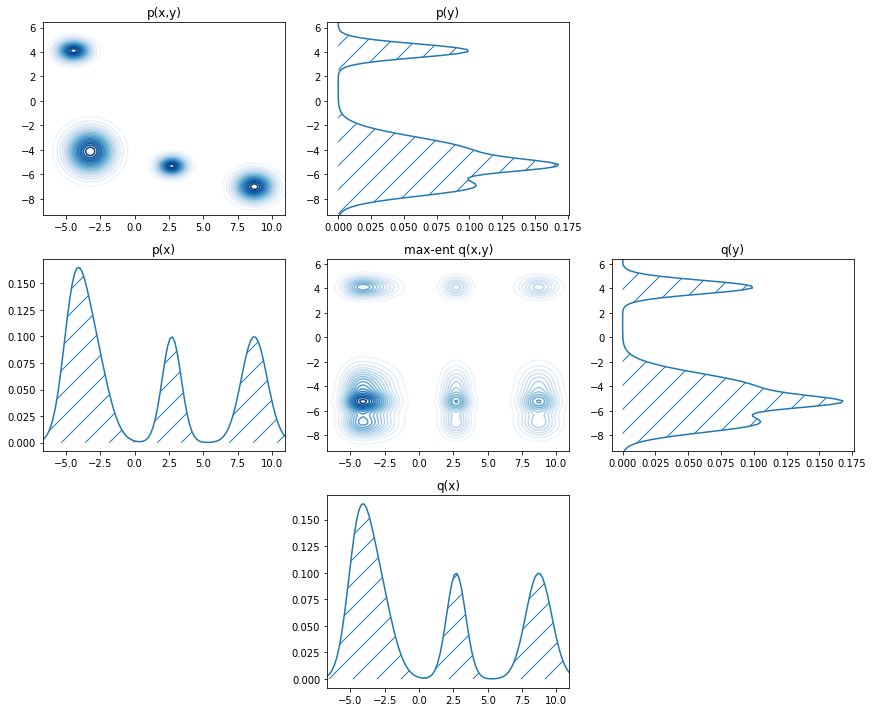}
	\caption{Hypothetical 2D illustration of the GMM-based parent set sampler.
  It is assumed that there are two non-overlapping parent sets $\{x\}$ and $\{y\}$, but that $x$ and $y$ exhibit dependence in the empirical data.
  We fit a GMM to each marginal $P(x)$ and $P(y)$ and sample from them independently to get $Q(x, y)$, which has the same marginal distributions (so that the components in a locally factored dynamics model will generalize), but eliminates the spurious dependence in the empirical data.}
\end{figure}

In cases of multiple local neighborhoods, $\L_1, \L_2, \dots$, one should respect the boundaries of the current local subset $\L$ during both training \textit{and} generation.
If a sample generated with the GMM for $\L$ falls outside of $\L$, that sample should be rejected, as the local causal structure is no longer valid, and the generalization guarantee for the locally factored model no longer holds.

As the local factorization in our experiments is quite simple, we did not stratify the GMM generator, and instead used a single GMM generator for the sparsest local causal structure.
In the case of \texttt{2d Navigation} this did not generate any data that was out-of-distribution for the locally factored model components (as the agent's policy was consistent in all local neighborhoods).
In the case of \texttt{HookSweep2}, there was a bit of locally out-of-distribution data in the local subspace in which there is a block collision; however, most of this data is unreachable as it involves overlapping blocks, and we obtained strong results even with this shortcut.

\subsection{Dynamics Models}\label{sec:dynamics-models}

Our experiments used three different dynamics models.
In each case, we used an ensemble of 5 base models, described below.
The base models output a Gaussian mean and variance for each output variable and are trained independently via a negative log likelihood loss.
All models are trained using Adam Optimizer~\cite{kingma2014adam}.

\begin{enumerate}
    \item \textbf{Unfactored:} The base model is a fully connected neural network with ReLU activations (MLP).
    \item \textbf{Globally Factored:} The base model has one MLP for each causal mechanism in the sparsest local graph.
    For both \texttt{2d Navigation} and \texttt{HookSweep2} the sparest local graph has two components, so the base global model is composed of two MLPs.
    \item \textbf{Locally factored:} The base model is designed as follows.
    For each child node, $c_i$, there is a separately parameterized single MLP that is preceded by a ``Masked Composer'' module.
    The Masked Composer applies a single layer MLP (linear transform followed by ReLU) to each root node, $r_i$ (each parent set has several nodes), to obtain embeddings $\varepsilon_i(r_i)$.
    The $i$-th column of the mask is used to zero out the corresponding embeddings which are then summed, $\sum_i M_{ij}\varepsilon_i(r_i)$, and the result is passed as an input to the MLP.
    
    This architecture works (and enforces local factorization), but is likely poor, because it does not take advantage of potentially useful shared representations between parent nodes across children (since there is a separately parameterized Masked Composer for each child).
    A better architecture would likely use a single parameterization for a single, possibly deeper Masked Composer.
    As this is not the focus of our contribution, we stuck with simple model, as it ``just worked'' for purposes of our experiments. 
\end{enumerate}

\subsection{Training Data for the RL Algorithm}

This varied by experiment, and is described in the next Section.
Notably, we divided the standard deviation returned by our dynamics models by a factor of three when generating data to avoid data that was too far out of distribution.

\subsection{Reinforcement Learning Algorithms}

We use Modular RL \cite{mrl}, adding three offline RL \cite{levine2020offline} algorithms: 
{BCQ} \citep{fujimoto2019off}, {CQL} \citep{kumar2020conservative} \& {TD3-BC} \citep{fujimoto2021minimalist}.

The {BCQ} implementation uses DDPG \cite{lillicrap2015continuous}.
For the generative model we use a Mixture Density Network (MDN) \cite{bishop1994mixture} with 5 components, that produces 20 action samples at each call (both during test rollouts and when creating critic targets).
The MDN was trained for 1000 batches with batch size of 2000.
We did not use a perturbation model. 

The {CQL} implementation uses SAC \cite{haarnoja2018soft}.
Rewards in our environments are sparse, and so value targets can be accurately clipped between two values (depends on the discount factor).
CQL balances two losses: a penalty for Q-values of some non-behavioral distribution/policy (we use a random policy), and a bonus for the Q-values behavioral actions.
We use an L1 penalty toward the lower end of the value target clipping range, and an L1 bonus toward the higher end of the value target clipping range.
We then multiply that by a minimum Q coefficient, as in the original CQL implementation. 

The TD3-BC implementation follows \citet{fujimoto2021minimalist}. 

\section{Experimental Details}\label{exp_details}

\subsection{2D Navigation}  In this environment, the agent must travel from one point in a square arena to another.
States are 2D $(x, y)$ coordinates and actions are 2D $(\Delta x, \Delta y)$ vectors. 

{\footnotesize
\begin{verbatim}
    observation_space = spaces.Box(np.zeros((2,)), np.ones((2,)), dtype=np.float32)
    action_space = spaces.Box(-np.ones((2,)), np.ones((2,)), dtype=np.float32)
\end{verbatim}
}

Episodes run for up to 70 steps.
Rewards are sparse, with a -1 reward everywhere except the goal, where reward is 0.
In most of the state space, the sub-actions $\Delta x$ and $\Delta y$ affect only their respective coordinate.
In the top right quadrant, however, the $\Delta x$ and $\Delta y$ sub-actions each affect \textit{both} $x$ and $y$ coordinates, so that the environment is locally factored.
The two causal graphs are as follows:

\newpage 

\begin{figure}[!h]
	\centering
	\includegraphics[width=0.45\textwidth]{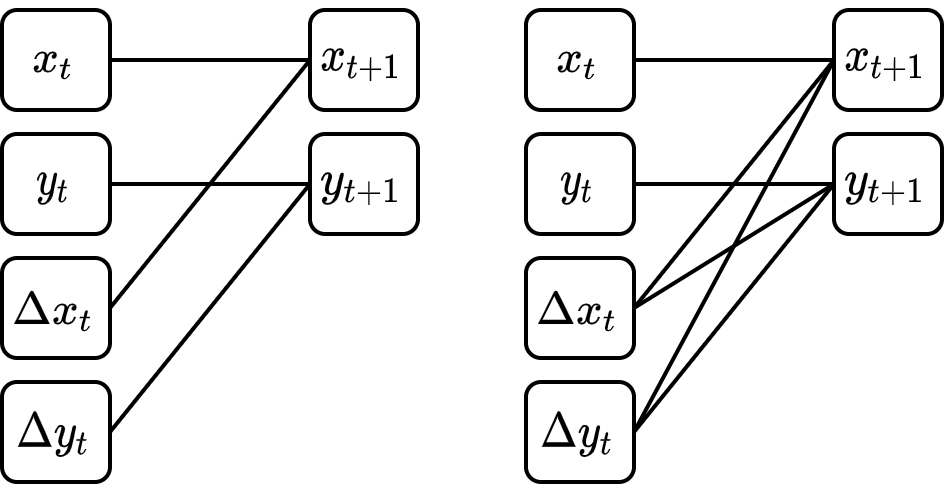}
\end{figure}

The graph on the right applies only in the top-right quadrant; otherwise the graph on the left applies.
The graph on the left has non-overlapping parent sets $(x, \Delta x)$ and $(y, \Delta y)$.
The graph on the right has overlapping parent sets $(x, \Delta x, \Delta y)$ and $(y, \Delta x, \Delta y)$.

The agent has access to an empirical dataset consisting of left-to-right \& bottom-to-top trajectories (20,000 transitions of each type):

\begin{figure}[!h]
	\centering
	\includegraphics[width=0.8\textwidth]{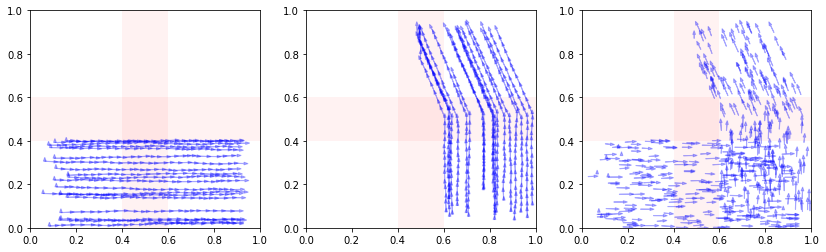}
	\caption{\textbf{Left/Middle:} Random samples of the two types of trajectories the agent has access to. \textbf{Right:} Random sample of transitions from this empirical dataset.}
\end{figure}

We consider a target task where the agent must move from the bottom left to the top right.
In this task there is sufficient empirical data to solve the task by following the ${\Large \textbf{$\lrcorner$} }$ shape of the data, but learning the optimal policy of going directly via the diagonal requires out-of-distribution generalization.

For \env{2d Navigation}, we generated the \ad{Mocoda} distribution by fitting a GMM generator as described in the previous Section.
Each GMM (one for each parent set) had 32 components, and was fit using expectation maximization.
To obtain \ad{Mocoda-U}, we implemented rejection sampling by using a KDE density estimator is as follows:

{\footnotesize
\begin{verbatim}
def prune_to_uniform(proposals, target_size=12000.):
  from sklearn.neighbors import KernelDensity
  sample = proposals[-10000:]

  fmap = lambda s: s[:, :2]
  K = KernelDensity(bandwidth=0.05)
  K.fit(fmap(sample))
  scores = K.score_samples(fmap(proposals))
  scores = np.maximum(scores, np.log(0.01))
  scores = (1. / np.exp(scores))
  scores = scores / scores.mean()  * (target_size / len(proposals))
  
  return proposals[np.random.uniform(size=scores.shape) < scores]   
\end{verbatim}
}

The dynamics models each had 2 layers of 256 neurons and were trained with a batch size of 512 and learning rate of 1e-4.
Hyperparameters were not tuned once a working setting was found.
Of the 40K empirical samples, 35K were used for training, and 5000 for validation.
The models were trained for 600 epochs, with early stopping used in the last 50 epochs to find a locally optimal stopping point. 

Augmented datasets of 200K samples were generated.
In each case except \ad{Emp}, 40K were the original empirical dataset (thus 160K new samples were generated by applying the dynamics model to samples from the augmented distribution).
In case of \ad{Emp}, the 40K original samples were simply repeated 5 times to get a size 200K dataset.
The locally factored network was used to generate the augmented datasets. 

These augmented distributions were then used to train the downstream RL agents.
The agent algorithms used a discount factor of 0.98, a target cutoff range of (-50, 0), batch size of 500, and used 2 layers of 512 neurons in both actor and critic networks.
The agents were trained for 25K batches (for a total of 62.5 passes over the dataset). 

For \env{2D Navigation} we ran 5 seeds, which all yielded similar results.
For each seed we trained new parent set samplers and generated new augmented datasets.

\subsection{HookSweep2}

\env{HookSweep2} is a challenging robotics domain based on Hook-Sweep~\citep{kurenkov2020ac}, in which a Fetch robot must use a long hook to sweep two boxes to one side of the table (either toward or away from the agent).
States, excluding the goal, are 16 dimensional continuous vectors.
Goals are 6 dimensions.
The agents all concatenate the goal to the state, and so operate on 22 dimensional states.
The action space is a 4 dimensional continuous vector. 

The environment contains two boxes that are initialized near the center of the table.

The empirical data contains 1M transitions from trajectories of an expert agent sweeping exactly one box to one side of the table, leaving the other in the center. The target task requires the agent to sweep \textit{both} boxes together to one side of the table.
This is particularly challenging because the setup is entirely offline (no exploration), where poor out-of-distribution generalization typically requires special offline RL algorithms that constrain the agent's policy to the empirical distribution~\citep{levine2020offline,agarwal2020optimistic,kumar2020conservative,fujimoto2021minimalist}.

Episodes run for 75 steps.
Rewards are dense, but structured similarly to a sparse reward, with a base reward of -1 everywhere except the goal and a reward of 0 at the goal.
Additional small rewards are given if the agent keeps the hook near the table (this was required to obtain natural movements from the trained expert agent).

In this environment, \textit{we did not have the ground truth causal graph}, and so a heuristic was used.
The heuristic (wrongly) assumes that the agent/hook \textit{always} causes each of the next object position (hook and objects are always entangled), even though this is only true when the hook and the objects are touching.
The heuristic considers the two boxes to be separate whenever they are further than 5cm from each other.
Here is the implementation of the heuristic:

{\scriptsize
\begin{verbatim}
  def MaskHookSweep2(input_tensor):
  
    # base local mask for when boxes are far apart
    mask = torch.tensor(
      [[1, 1, 1],
      [1, 1, 0],
      [1, 0, 1],
      [1, 1, 1]]
      ).to(input_tensor.device)
    mask = mask[None].repeat((input_tensor.shape[0], 1, 1))
    
    # change local mask when boxes are close to each other
    mask[torch.sum(torch.abs(input_tensor[:,O1X:O1X+2] -\ 
          input_tensor[:,O2X:O2X+2]), axis=1) < 0.05] = 1
    
    return mask
\end{verbatim}
}

where the state-action components are (gripper, box1, box2, action).
This heuristic returns the following two causal graphs (note that goals are not part of the dynamics, and are separately labeled using random goal samples from the environment):

\begin{figure}[!h]
	\centering
	\includegraphics[width=0.4\textwidth]{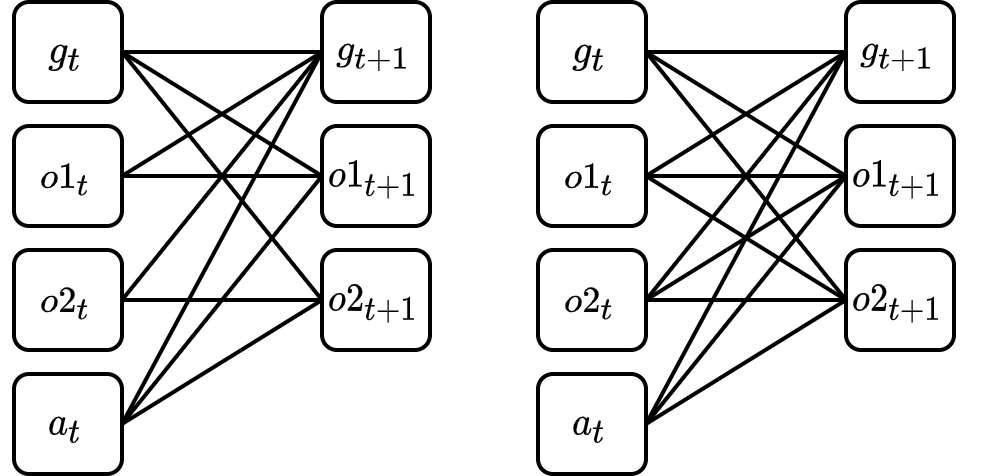}
\end{figure}

The parent sets are (g, o1, a) and (g, o2, a) for the first graph, and (g, o1, o2, a) in the second graph. 

For \env{HookSweep2}, the generation of the \ad{Mocoda} distribution is identical to how it was generated in \env{2d Navigation} (see previous subsection).
To obtain \ad{Mocoda-P}, we implemented rejection sampling as follows:

{\scriptsize
\begin{verbatim}
  def prune_to_uniform2(proposals, target_size=12000., smaller=True):
    proposals = proposals[np.linalg.norm(proposals[:,O1X:O1X+2] - proposals[:,O2X:O2X+2], axis=-1) < 0.3]
    sample = proposals[-5000:]
    
    fmap = lambda s: s[:,[O1X,O1X+1,O2X,O2X+1]]
    K = KernelDensity(bandwidth=0.05)
    K.fit(fmap(sample))
    scores = K.score_samples(fmap(proposals))
    scores = np.maximum(scores, np.log(0.05))
    scores = (1. / np.exp(scores))
    if np.minimum(scores, 1).sum() > 10000:
    while np.minimum(scores, 1).sum() > 10000:
      scores = scores * 0.99
    else:
    while np.minimum(scores, 1).sum() < 10000:
      scores = scores / 0.99
    
    return proposals[np.random.uniform(size=scores.shape) < scores]   
\end{verbatim}
}

The key difference to the \env{2d Navigation} is the definition of the \texttt{fmap} function, which defines the feature map under which the density is computed for rejection sampling.

The dynamics models for \env{HookSweep2} each had 2 layers of 512 neurons and were trained with a batch size of 512 and learning rate of 2e-4.
Hyperparamters were not tuned once a working setting was found (learning rate was increased to make training slightly faster).
Of 1M empirical samples, 5000 were used for validation.
The models were trained for 4000 epochs, where each epoch involved 40K random samples, with early stopping used in the last 50 epochs to find a locally optimal stopping point.

Augmented datasets of 5M samples were generated.
In each case except \ad{Emp}, 1M were the original empirical dataset (thus 4M new samples were generated).
In the case of \ad{Emp}, the 1M original samplers were simply repeated 5 times to get the full augmented dataset.
The locally factored network was used to generate the augmented datasets. 

These augmented distributions were then used to train the downstream RL agents.
The agent algorithms were the same as for \env{2d Navigation}, except that they used 3 layers of 512 neurons in both actor and critic networks.
The agents were trained for 1M steps with batch size 500 (for a total of 100 passes over the dataset). 

\section{\methodName sampling pseudocode}

%
\algnewcommand{\LeftComment}[1]{\Statex \(\triangleright\) #1}
\newcommand{\Pai}{\text{Pa}(i)}
\newcommand{\already}{\mathcal{AS}}
\begin{algorithm}\footnotesize
  \centering
  \begin{minipage}{1.\linewidth}
	\caption{\methodName for FMDPs, assuming hand-specified dynamics factorization}
  \label{algo:mocoda-fmdp}
	\begin{algorithmic}[1]
	\Function{GenerateMocodaData}{N}:
	\State \textbf{Input:} observed transition dataset $(s,a,s') \in \mathcal{D}$
	\State \textbf{Input:} causal structure of transition dynamics $\mathcal{G} := \{(i, \Pai) \ \forall \ i \in N_s\}$
	a.k.a. ``parent sets''
	\State \hphantom{\textbf{Input:}} NOTE: $\text{Pa}(i)$ is shorthand for $\{j : s_j \in \text{CausalParent}(s_i')\}$
	\State \hphantom{\textbf{Input:}} NOTE: $\text{Pa}(i) \subset [N_s + N_a]$ can index into states or actions
	\vspace{0.5\baselineskip}
	\State $\mathcal{D}_{tr}, \mathcal{D}_{va} = \texttt{train\_val\_split}(\mathcal{D})$
	\Comment{Split data}
  \State $\theta := \textsc{TrainGMMParentsModel}(\mathcal{D}_{tr}, \mathcal{D}_{va}, \mathcal{G})$
	\Comment{train GMM parent distribution $P_\theta(s,a)$}
  
  \State $\phi := \textsc{TrainFactoredDynamics}(\mathcal{D}_{tr}, \mathcal{D}_{va}, \mathcal{G})$
	\Comment{train factored dynamics model $P_\phi(s'|s,a)$}
	
  \State \textbf{return} \textsc{SampleAugmentedDataset}(N, $\theta$, $\phi$, $\mathcal{G}$)
  \EndFunction
  
  \vspace{\baselineskip}
  \Function{TrainGMMParentsModel}{$\mathcal{D}_{tr}, \mathcal{D}_{va}, \mathcal{G}$}:
    \For{ $(i, \Pai) \in \mathcal{G}:$ } \Comment iterate over parent sets for each child
      \State $\{\mu^k_{\Pai},\Sigma^k_{\Pai},\gamma^k_{\Pai}\} = \texttt{init\_gmm\_params}(N_k)$
      \Comment{NOTE: \emph{only} for this parent set}
      \State $\mathcal{D}^{tr}_{\Pai}(s,a) := \{(s[\Pai], a[\Pai \% N_s]) \forall (s, a, s') \in \mathcal{D}^{tr}\}$
      \State $\mathcal{D}^{va}_{\Pai}(s,a) := \{(s[\Pai], a[\Pai \% N_s]) \forall (s, a, s') \in \mathcal{D}^{va}\}$ \Comment subsample relevant dims
      \While{$\texttt{not\_converged}(\text{GMM}(\cdot;\mu^k_{\Pai},\Sigma^k_{\Pai},\gamma^k_{\Pai}), \mathcal{D}^{va}_{\Pai}(s,a))$}
          \State $\{\mu^k_{\Pai},\Sigma^k_{\Pai},\gamma^k_{\Pai}\} \leftarrow \texttt{update\_gmm\_params}(\{\mu^k_{\Pai},\Sigma^k_{\Pai},\gamma^k_{\Pai}\}, \mathcal{D}^{tr}_{\Pai}(s,a))$
      \EndWhile
      \State $\theta \texttt{.append}(\{\mu^k_{\Pai},\Sigma^k_{\Pai},\gamma^k_{\Pai}\})$
  \EndFor
  \State \textbf{return} $\theta$ 
  \EndFunction
  
  \vspace{\baselineskip}
  \Function{TrainFactoredDynamics}{$\mathcal{D}_{tr}, \mathcal{D}_{va}, \mathcal{G}$}:
  \For{ $(i, \Pai) \in \mathcal{G}:$ } \Comment iterate over parent sets for each child
      \State $\phi_i = \texttt{init\_mlp\_params}()$ 
      \Comment{NOTE: \emph{only} for this parent set}
      \State $\mathcal{D}^{tr}_{\Pai}(s,a,s') := \{(s[\Pai], a[\Pai \% N_s], s'[i]) \forall (s, a, s') \in \mathcal{D}^{tr}\}$
      \State $\mathcal{D}^{va}_{\Pai}(s,a,s') := \{(s[\Pai], a[\Pai \% N_s], s'[i]) \forall (s, a, s') \in \mathcal{D}^{va}\}$ \Comment subsample relevant dims
      \While{$\texttt{not\_converged}(\text{MLP}(\cdot|\cdot;\phi_i), \mathcal{D}^{va}_{\Pai}(s,a,s'))$}
          \State $\phi_i \leftarrow \texttt{update\_mlp\_params}(\phi_i, \mathcal{D}^{tr}_{\Pai}(s,a,s'))$
      \EndWhile
  \EndFor
  \State \textbf{return} $\phi$ 
  \EndFunction
  
  \vspace{\baselineskip}
  \Function{SampleAugmentedData}{N, $\theta$, $\phi$, $\mathcal{G}$}
    \For{$\_ \in \texttt{range}(N)$:}
	    \LeftComment{\ \ \ sample parent data, i.e. $(\tilde s, \tilde a)$: sequentially sample the parent set GMMs, conditioning}
      \LeftComment{\ \ \ each GMM on previous samples to handle any overlap between parent sets}
      \State $\already := \{ \ \}$
      \Comment{define an ``already sampled'' set to track any parent set overlap}
      \State $\tilde s := [ \ ]$;
             $\tilde a := [ \ ]$
      \For{$i \in \texttt{range}(N_s)$:}
         \If{$\Pai \cap \already = \emptyset:$} \Comment{no vars in this parent set already sampled}
            \State $(\tilde s_{\Pai}, \tilde a_{\Pai}) \sim \text{GMM}(\cdot;\mu^k_{\Pai},\Sigma^k_{\Pai},\gamma^k_{\Pai})$
            \State $\tilde s \texttt{.extend}(\tilde s_{\Pai})$
            \State $\tilde a \texttt{.extend}(\tilde a_{\Pai})$
         \Else \Comment{some vars in this parent set already sampled and must be conditioned on}
	          \LeftComment{\ \ \ \ \ \ \ \ \ \ \ \ \ \ \ \ condition this GMM already-sampled vars, then sample remaining vars}
            \State $(\tilde s_{\Pai \backslash \already}, \tilde a_{\Pai \backslash \already}) \sim \text{GMM}(\cdot|\already;\mu^k_{\Pai},\Sigma^k_{\Pai},\gamma^k_{\Pai})$
	          \LeftComment{\ \ \ \ \ \ \ \ \ \ \ \ \ \ \ \ NOTE: this sampling is easily realized by conditioning each Gaussian component}
            \LeftComment{\ \ \ \ \ \ \ \ \ \ \ \ \ \ \ \ and updating mixture components in proportion to density of already-sampled vars}
            \State $\tilde s \texttt{.extend}(\tilde s_{\Pai \backslash \already})$
            \State $\tilde a \texttt{.extend}(\tilde a_{\Pai \backslash \already})$
         \EndIf
         \State $\already \leftarrow \already \cup \Pai$
      \EndFor
	    \LeftComment{\ \ \ sample next states, i.e. $\tilde s'|(\tilde s, \tilde a)$: sequentially sample each ``factor'' in the factorized dynamics}
      \State $\tilde s' := [ \ ]$
      \For{$i \in \texttt{range}(N_s)$:}
          \State $\tilde s_i' \sim \text{MLP}(\cdot|\tilde s, \tilde a;\phi_i)$
          \State $\tilde s' \texttt{.append}(\tilde s_i')$
      \EndFor
      \LeftComment{\ \ \ assemble transition}
      \State $\tilde s  = \texttt{array}(\tilde s )$;
             $\tilde a  = \texttt{array}(\tilde a )$;
             $\tilde s' = \texttt{array}(\tilde s')$
      \State $\mathcal{\tilde D}\texttt{.append}((\tilde s, \tilde a, \tilde s'))$
  \EndFor
  \State \textbf{return} $\mathcal{\tilde D}$
  \EndFunction
	\end{algorithmic} 
  \end{minipage}
\end{algorithm}

Algorithm \ref{algo:mocoda-fmdp} shows the pseudocode for \methodName sampling for a FMDP, 
where the causal structure of the transition dynamics is assumed known.
For simplicity of exposition we describe the case where each ``factor'' in the 
factorized dynamics is modeled using an MLP, which corresponds to the ``Globally Factored''
model architecture referred to in Table \ref{tab:toy-mse}.
Realizing the ``Locally Factored'' architecture is simply a matter of replacing $\text{MLP}(\cdot)$
in the pseudocode with $(\text{MLP} \circ \text{MaskedComposer})(\cdot)$ described in Section \ref{sec:dynamics-models}.

Implementing \methodName sampling for a Local Causal Model rather than an FMDP is also a straightforward extension. The dynamics modeling is the same (but for the dynamics being conditioned on $\mathcal{L}$).
The parent sampling procedure is described in \ref{appdx_parentdist}.

  \backmatter
  \bibliographystyle{plainnat-no-links} 
  \bibliography{
    qualifying-oral-exam,
    phd-thesis,
    original/laftr/refs,
    original/ffvae/refs,
    original/eiil/refs,
    original/eirs/spurious_workshop,
    original/coda/refs,
    original/mocoda/refs,
    scratch,
  }
  
\end{document}